\newtheorem{theorem}{Theorem}
\newtheorem{corollary}{Corollary}
\newtheorem{lemma}{Lemma}
\theoremstyle{definition}
\newtheorem{prop}{Proposition}
\newtheorem{definition}{Definition}
\newtheorem{example}{Example}
\tikzstyle{mybox} = [draw=gray, fill=gray!20, very thick,
\newcommand{\G}{\mathcal{G}}
\newcommand{\C}{\mathcal{C}}
\newcommand{\cmark}{\ding{51}}%
\newcommand{\xmark}{\ding{55}}%
\newcommand{\chkmark}{\color{Green}{\cmark}}
\newcommand{\crsmark}{\color{Red}{\xmark}}
\newcommand{\trimark}{\color{Black}{\ding{115}}}
\newcommand{\trimarkblue}{\color{ProcessBlue}{\ding{115}}}
\title{Testing Causal Models with Hidden Variables in Polynomial Delay \protect\\via Conditional Independencies}
\title{Testing Causal Models with Hidden Variables in Polynomial Delay \protect\\via Conditional Independencies}
\author {
    Hyunchai Jeong\equalcontrib\textsuperscript{\rm 1},
    Adiba Ejaz\equalcontrib\textsuperscript{\rm 2},
    Jin Tian\textsuperscript{\rm 3},
    Elias Bareinboim\textsuperscript{\rm 2}
}
\begin{document}

\maketitle

\begin{abstract}
    Testing a hypothesized causal model against observational data is a key prerequisite for many causal inference tasks. A natural approach is to test whether the conditional independence relations (CIs) assumed in the model hold in the data. While a model can assume exponentially many CIs (with respect to the number of variables), testing all of them is both impractical and unnecessary. Causal graphs, which encode these CIs in polynomial space, give rise to local Markov properties that enable model testing with a significantly smaller subset of CIs.
    Model testing based on local properties requires an algorithm to list the relevant CIs. However, existing algorithms for realistic settings with hidden variables and non-parametric distributions can take exponential time to produce even a single CI constraint.
    In this paper, we introduce the c-component local Markov property (C-LMP) for causal graphs with hidden variables. 
    Since C-LMP can still invoke an exponential number of CIs, we develop a \emph{polynomial delay} algorithm to list these CIs in poly-time intervals. To our knowledge, this is the first algorithm that enables poly-delay testing of CIs in causal graphs with hidden variables against arbitrary data distributions. Experiments on real-world and synthetic data demonstrate the practicality of our algorithm.
\end{abstract}

\begin{links}
    \link{Code}{https://github.com/CausalAILab/ListConditionalIndependencies}
\end{links}

\section{Introduction}
\label{section:intro}

Causal models are the daily bread of many fields of research \citep{pearl:2k,Spirtes2001}, but tools for testing them are lacking.
In various studies, researchers posit a causal model and use it to compute causal effects from data \cite{tennant:etal2021,hoover1990logic,king2004functional,sverchkov2017,robins2000marginal,rotmensch2017learning}.
The model imposes testable constraints on the statistics of the data collected.
Before using the model for causal inference, it's crucial to test if these constraints are met, and adjust the model as needed \citep{pearl:95a,pearl:2k,bareinboim:pea16, malinsky:2024, ankan:2022}.

Causal directed acyclic graphs (DAGs) are one popular model for causal assumptions \citep{pearl:2k,Spirtes2001}.
Conditional independencies (CIs) are the most basic constraint that a causal DAG imposes on observational data.
The study of CIs in the context of graphical models dates back to at least the 1980's \citep{pearl:88a,dawid:79,spirtes:etal98,pearl:98a,pearl:mes99,pearl:2k}.
A classic problem in this line of research is: \emph{given observational data and a hypothesized causal graph, do all the CIs implied by this graph hold in the data?}
If the answer is no, the DAG may be revised.

A key idea in the early literature of graphical models was to use a DAG to represent the constraints of probability distributions.
A multivariate probability distribution may encode exponentially many CIs with respect to the number of variables.
A DAG can encode these CIs in polynomial space.
The \emph{d-separation} criterion allows us to derive the CIs encoded in a DAG \cite{pearl:88a}.
The \emph{global Markov property} of a DAG is the set of all CIs encoded in it \cite{pearl:88a}.
There is also a well-known \emph{local Markov property} for DAGs \cite{pearl:88a,lauritzen:etal90}, which states that each variable must be conditionally independent of its non-descendants given its parents.
Since the CI relation is a semi-graphoid, the linearly many CIs of the local Markov property together imply the exponentially many CIs of the global Markov property.
This means that to test a DAG against observational data, it suffices to perform a linear number of CI tests as given by the local Markov property.
For concreteness, consider the DAG \(\G^1\) in Fig.~\ref{fig:intro_local_nonmarkov} and assume all variables \(\{A,B,\dots,H,U_1,U_2, U_3\}\) are observed.
Though \(\G^1\) encodes 35787 CIs, only 11 need testing by the local Markov property. For example, if we test that \(F \indep \{A,B\} \mid \{C\}\), we do not need to test that \(F \indep \{A\} \mid \{B,C\}\), since the former implies the latter by the weak union axiom.

Unobserved confounding is a widespread phenomenon in real-world settings \cite{fisher:36}. 
It occurs when a hidden variable causally affects two or more observed variables.
The local Markov property can be used to test \emph{Markovian} causal DAGs, which represent models without unobserved confounding.
However, it cannot be used to test \emph{non-Markovian} DAGs, which represent models with unobserved confounding.
This is because if the parents of a variable are partially unobserved, we cannot test CIs that require conditioning on these parents (Fig.~\ref{fig:intro_local_nonmarkov}).
Since the assumption of no unobserved confounding rarely holds in practice, alternative ways to test non-Markovian DAGs have been developed \citep{tian:pea02testable-implications,kang2009markov,geiger:98,geiger:99, richardson2003markov, richardson:2009, huandevans:2023}.
Despite their power, these works either (a) make strong assumptions on the DAG or probability distribution, or (b) do not provide an algorithm to query their required CI tests in poly-time intervals, with naive algorithms taking exponential time to output a single CI constraint.

\vspace{+0.05in}
\noindent  \textbf{Summary of contributions.} 
We give the first efficient algorithm for testing causal DAGs with hidden variables via conditional independencies.
This enables researchers to test their causal assumptions using observational data prior to inference.
Importantly, our approach extends to arbitrary data distributions and networks of unobserved confounding.

This result builds on a newer, fine-grained characterization of CIs in graphs based on a new construct called ancestral c-components (i.e., connected components in the bidirected skeleton).
In particular, we show that \(O(n2^s)\) CI tests (Prop.~\ref{prop:lmpsize}) are required to test a DAG on \(n\) variables whose largest c-component  has size \(s\).
This is an exponential improvement over naively testing all \(\Theta(4^n)\) CI constraints encoded in the DAG.
The upshot is largest for DAGs with many variables but small c-components.
For instance, the DAG \(\G^2\) in Fig.~\ref{fig:intro_local_semimarkov} implies 753 CIs, but only 5 really need testing.
More specifically, our contributions are as follows: 
\begin{enumerate}
    \item We introduce the c-component local Markov property, or C-LMP (Def.~\ref{def:lmpplus}).
    We show that C-LMP and the global Markov property are equivalent, admitting the same set of probability distributions for a given DAG. We then show an important property of C-LMP: a one-to-one mapping between the CI constraints it invokes and \emph{ancestral c-components} (Thm.~\ref{thm:equivalence:clmpplus}).
    \item Building on this characterization, we develop the first algorithm (\textsc{ListCI}) capable of listing all testable CI constraints of C-LMP in \emph{polynomial delay} (Thm.~\ref{thm:listci}).
    On a DAG with \(n\) nodes and \(m\) edges, \textsc{ListCI} takes \(O(n^2(n+m))\) time  to return each new CI constraint, if one exists, or exit when it has exhausted all CI constraints.
\end{enumerate}
Experiments with synthetic data and a real-world protein signaling dataset \cite{sachs2005causal} corroborate the theoretical findings. For the sake of space, proofs are provided in Appendix C. 

\section{Preliminaries} \label{sec:prelim}

\begin{figure}[t]
    \centering
    \begin{subfigure}{0.35\textwidth}
        \centering
        \includegraphics[width=\textwidth]{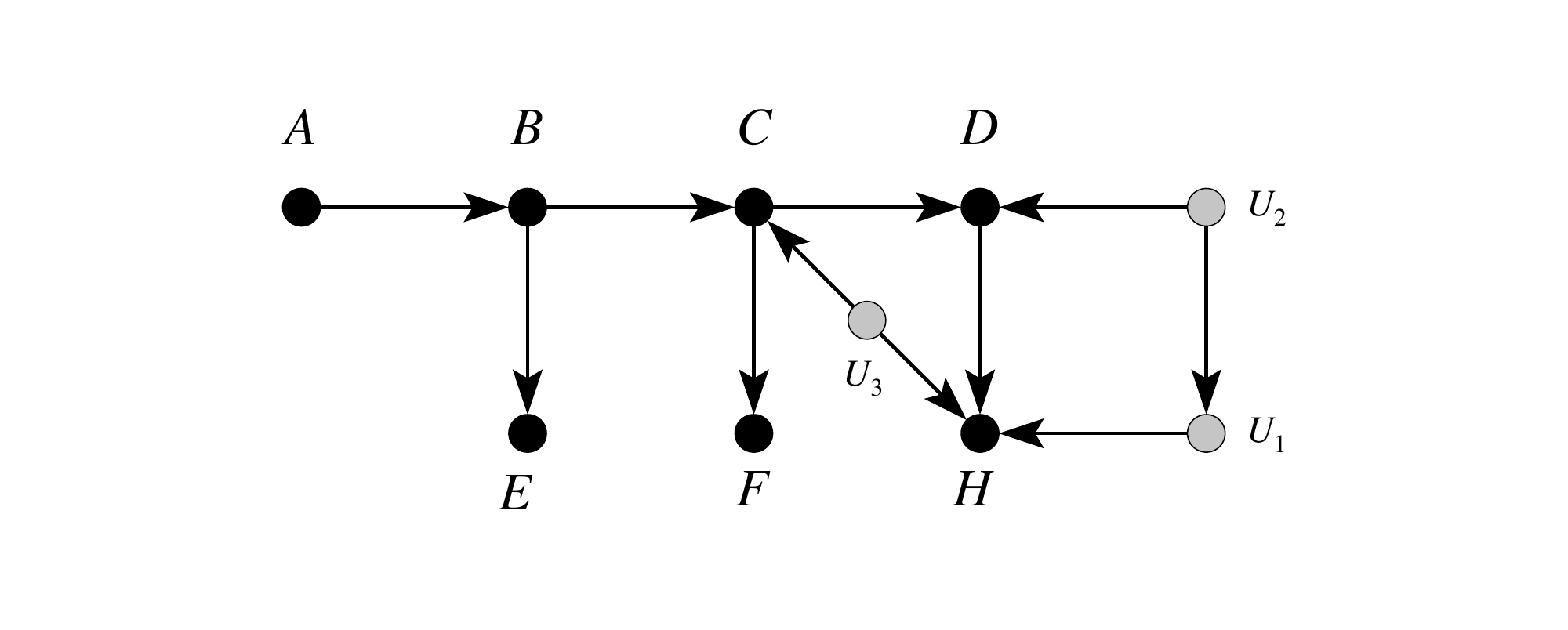}
        \caption{\(\G^1\)}
        \label{fig:intro_local_nonmarkov}
    \end{subfigure}
    \hfill
    \begin{subfigure}{0.29\textwidth}
        \centering
        \includegraphics[width=\textwidth]{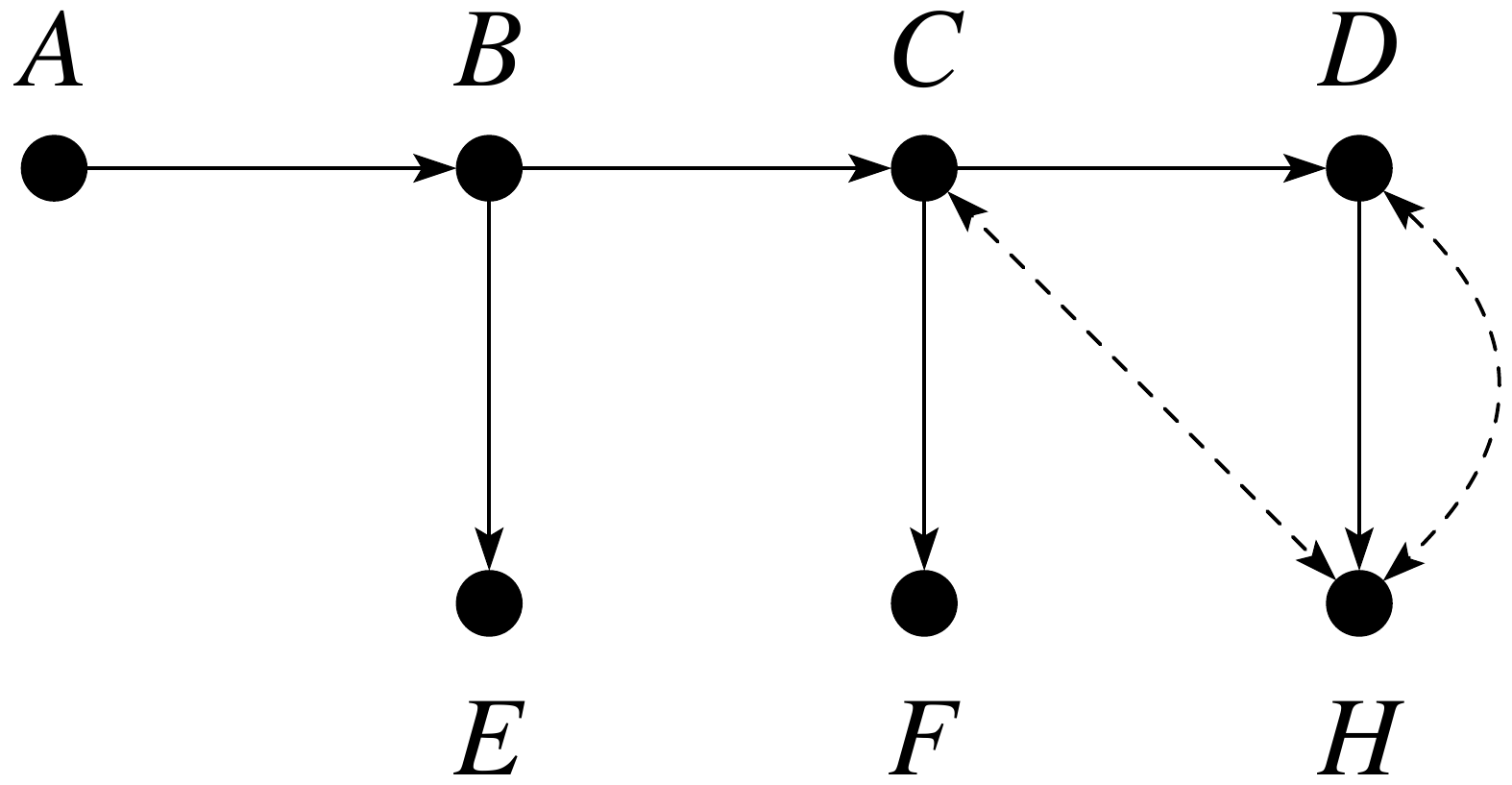}
        \caption{\(\G^2\)}
        \label{fig:intro_local_semimarkov}
    \end{subfigure}
\caption{
(a) A causal DAG \(\G^1\) in which the local Markov property implies the CI: \(H \indep \{A,B,C,E,F\} \mid \{D, U_1, U_3\}\).
If \(U_1\) and \(U_3\) are unobserved, we cannot test this CI.
(b) We project \(\G^1\) onto its observed variables to get \ \(\G^2\). In \(\G^2\), the c-component local Markov property invokes the testable CI: \(H \indep \{A,E,F\} \mid \{B,C,D\}\).
}
\label{fig:intro_local}
\end{figure}

\textbf{Notation.} We use capital letters to denote variables $(X)$, small letters for their values $(x)$, and bold letters for sets of variables $(\*{X})$ and their values $(\*{x})$. The probability distribution over a set of variables $\*{X}$ is denoted by $P(\*{X})$. We consistently use $P(\*x)$ as abbreviations for probabilities $P(\*X = \*x)$.  For disjoint sets of variables \(\*X,\*Y, \*Z\), we use \(\*X \indep \*Y \mid \*Z\) to denote that \(\*X\) and \(\*Y\) are conditionally independent given \(\*Z\).

\textbf{Structural causal models.} The basic framework of our analysis rests on \emph{structural causal models} (SCMs) \citep[Def.~7.1.1]{pearl:2k}. An SCM $\&M$ is a quadruple $\&M = \langle \*V, \*U, \&F, P(\*u)\rangle$ where $\*V$ and $\*U$ are sets of endogeneous and exogeneous variables, respectively. $\mathcal{F}$ is a set of functions: each \(V \in \*V\) is a function \(f_V(\*{PA_V, U_V})\) of its endogeneous and exogeneous parents, $\*{PA_V} \subseteq \*V$ and $\*{U_V} \subseteq \*U$ respectively.
\(P(\*u)\) is a joint distribution over \(\*U\).
Each SCM $\&M$ induces an observed distribution \(P(\*v)\) over $\*V$.
An SCM is said to be \emph{Markovian} if \(\*{U}_V, \*{U}_W\) are independent for every distinct \(V,W \in \*{V}\), and \emph{non-Markovian} otherwise.
For a more detailed survey on SCMs, we refer to \cite{pearl:2k,bareinboim:etal20}.

\textbf{Causal graphs.}
The causal graph \(\G\) for an SCM  $\&M = \langle \*V, \*U, \&F, P(\*u)\rangle$ is constructed as follows: (1) add a vertex for every \(V \in \*V\) (2) add an edge \(V_i \to V_j\) for every \(V_i, V_j \in \*V\) if \(V_i \in \*{PA_{V_j}}\) (3) add a dashed bidirected edge between \(V_i, V_j\) if \(\*U_i, \*U_j\) are correlated or \(\*U_i \cap \*U_j \neq \emptyset\). \(\G\) is said to be Markovian if it contains only directed edges, and semi-Markovian otherwise.

We denote the sets of parents, ancestors, and descendants of \(\*X\) (including \(\*X\) itself) in \(\G\) as \(\Pa{\*X}, \An{\*X}\), and \(\De{\*X}\), respectively.
The set of non-descendants of \(\*X\) in \(\G\) is denoted \(\Nd{\*X} = \*{V}\setminus \De{\*X}\), which does not include \(\*X\) itself.
The set of spouses of \(\*X\) in \(\G\) is \(\Spo{\*X} = \bigcup_{X \in \*X} \{Y \mid Y \leftrightarrow X\}\).
\(\*X\) is said to be an \textit{ancestral set} if it contains its own ancestors, i.e., \(\*X = \An{\*X}\).
We use \(\G_{\*X}\) to denote the induced subgraph of \(\G\) on \(\*X \subseteq \*V\). A subscript \(\G'\), e.g., \(\An{\*X}_{\G'}\) indicates that the set is computed from the subgraph \(\G'\).
We omit the subscript when clear from context.
An ordering \(\*V^{\prec}\) on variables \(\*V\) is said to be consistent with \(\G\) (i.e., a topological ordering) if for any \(X,Y \in \*V\), \(X \prec Y\) implies \(Y \notin \An{X}_{\G}\). Let \(\*V^{\leq X} = \{Y \mid Y \prec X \text{ or } Y = X\}\).

\emph{Semi-Markovianity vs Non-Markovianity.} A non-Markovian causal DAG \(\G\) can be constructed for a non-Markovian SCM by making the exogenous variables \(\*U\) explicit. A non-Markovian DAG with arbitrary hidden variables can be `projected' onto a semi-Markovian causal DAG \(\G'\) which imposes exactly the same CI constraints over the observed variables \cite{tian:pea02testable-implications}. In \(\G'\), each unobserved variable is (i) a parent of at most two observed variables and (ii) made implicit by adding a dashed bidirected edge between its two children. The complexity of the latent structure is irrelevant to the CIs over observed variables. Therefore, we work with semi-Markovian graphs for model testing.

\textbf{$d$-separation.}
A node \(W\) on a path \(\pi\) is said to be a collider on \(\pi\) if \(W\) has converging arrows into \(W\) in \(\pi\), e.g., \(\rightarrow W \leftarrow\) or \(\leftrightarrow W \leftarrow\).
\(\pi\) is said to be blocked by a set \(\*Z\) if there exists a node \(W\) on \(\pi\) satisfying one of the following two conditions: 1) \(W\) is a collider, and neither \(W\) nor any of its descendants are in \(\*Z\), or 2) \(W\) is not a collider, and \(W\) is in \(\*Z\) \citep{pearl:88a}.
Given disjoint sets \(\*X,\*Y\), and \(\*Z\) in \(\G\), \(\*Z\) is said to \textit{$d$-separate} \(\*X\) from \(\*Y\) in \(\G\) if and only if \(\*Z\) blocks every path from a node in \(\*X\) to a node in \(\*Y\) according to the $d$-separation criterion \citep{pearl:88a}.
If \(\*Z\) $d$-separates \(\*X\) from \(\*Y\) in \(\G\) (written \(\*X \perp_d \*Y \mid \*Z\)), then \(\*X\) is conditionally independent of \(\*Y\) given \(\*Z\) in any observational distribution consistent with \(\G\) \citep{pearl:88a, richardson2003markov}.

\begin{definition}{(C-component) \cite{tian:pea02-general-id}}
\label{def:ccomponent}
    A set of variables \(\*C \subseteq \*V\) in a causal graph \(\G\) is said to be a confounded component (c-component, for short) if there is a path of only bidirected edges connecting any \(V_i, V_j \in \*C\), and \(\*C\) is maximal.
\end{definition} 
For a variable \(X \in \*V\), \(\&{C}(X)_{\G}\) denotes the c-component containing \(X\) in \(\G\). 

Previously, we have referred to the set of all CIs encoded in a DAG. We define this formally.
\begin{definition}{(Global Markov Property (GMP)) \cite{pearl:88a,Geiger1989}}
\label{def:gmp}
    A probability distribution \(P(\*v)\) over a set of variables \(\*V\) is said to satisfy the global Markov property for a causal graph \(\G\) if, for arbitrary disjoint sets \(\*X,\*Y,\*Z \subset \*V\) with \(\*X, \*Y \neq \emptyset\),
    \[
    \*X \perp_d \*Y | \*Z \implies \*X \indep \*Y | \*Z \text{ in } P(\*v).
    \]
\end{definition} 

Various local Markov properties have been developed which identify a subset of the CIs invoked by GMP that imply all others. A prominent example is the local Markov property for Markovian DAGs.

\begin{definition}[The Local Markov Property (LMP) \cite{pearl:88a,lauritzen:etal90,lauritzen:96}\footnote{Note that this property is referred to as the \emph{directed local Markov property} in \cite{lauritzen:etal90}.}] \label{def:markovlmp}
    A probability distribution \(P(\*v)\) over a set of variables \(\*V\) is said to satisfy the local Markov property for a given Markovian DAG \(\G\) if, for any variable \(X \in \*V\),
    \begin{equation*}
    \label{eq:markovlmp}
        X \indep \Nd{\{X\}} \setminus \Pa{\{X\}} \mid \Pa{\{X\}} \setminus \{X\} \text{ in } P(\*v).
    \end{equation*}
\end{definition}

\begin{example}
    Consider Fig.~\ref{fig:intro_local_semimarkov}.
    \(\{C,D,H\}\) is a c-component, and \(\&{C}(H)_{\G^2} = \{C,D,H\}\).
    Since \(\{B,C,D\}\) $d$-separates \(H\) from \(\{A,E,F\}\) in \(\G^2\), \(\G^2\) implies the CI: \(H \indep \{A,E,F\} \mid \{B,C,D\}\).
\end{example}

\section{The C-component Local Markov Property}
\label{sec:reformulation}

In this section, we motivate and introduce the c-component local Markov property for causal DAGs with unobserved confounders. 
In Sec.~\ref{sec:naivemarkov}, we demonstrate the limitations of the traditional local Markov property (LMP) when applied to non-Markovian DAGs.
In Sec.~\ref{sec:clmp:def}, to solve this problem, we present the c-component local Markov property (C-LMP) for semi-Markovian DAGs and establish its equivalence with GMP.
In Sec.~\ref{sec:clmp:properties}, we provide a useful property of C-LMP that makes its CIs amenable to listing.

\subsection{A Naive Approach to Testing Non-Markovian Graphs}\label{sec:naivemarkov}

First, we show the limitations of the well-known LMP (Def.~\ref{def:markovlmp}) in testing non-Markovian DAGs. 
For each variable \(X\) in a given graph, LMP states that \(X\) is independent of its non-descendants conditioning on its parents. Intuitively, the parents of \(X\) form a minimal set separating \(X\) from its non-descendants.

\begin{example}
\label{example:markovlmp}
    Consider Fig.~\ref{fig:intro_local_nonmarkov}. The DAG \(\G^1\) contains only directed edges; assuming all variables are observed, \(\G^1\) is Markovian. LMP invokes 11 CIs for \(\G^1\): \(A \indep \{U_1,U_2, U_3\}\), \(B \indep \{U_1,U_2,U_3\} \mid \{A\}\), \(C \indep \{A,E,U_1,U_2\} \mid \{B,U_3\}\), \(D \indep \{A,B,E,F,U_1,U_3\} \mid \{C,U_2\}\),  \(E \indep \{A,C,D,F,H,U_1,U_2,U_3\} \mid \{B\}\),  \(F \indep \{A,B,E,D,H,U_1,U_2,U_3\} \mid \{C\}\), \(H \indep \{A,B,C,E,F, U_2\} \mid \{D, U_1,U_3\}\), \(U_1 \indep \{A,B,C,D,E,F,U_3\} \mid \{U_2\}\), \(U_2 \indep \{A,B,C,E,F,U_3\}\), \(U_3 \indep \{A,B,E,U_1,U_2\}\).  
    All 11 CIs are testable using samples from the distribution \(P(a,b,c,d,e,f,h,u_1,u_2, u_3)\).
    
\qed
\end{example}

LMP fails trivially for semi-Markovian DAGs since, for example, a variable may be connected to a non-descendant by a bidirected edge. One could think to instead apply LMP to the `unprojected' non-Markovian DAG underlying the given semi-Markovian DAG. The non-Markovian DAG would contain no bidireced edges since the unobserved parents are made explicit. However, LMP does not extend to non-Markovian DAGs either, as we show in the following example.

\begin{example}
\label{example:unproj}
    Continuing Ex.~\ref{example:markovlmp}.
    Assume we are given the non-Markovian DAG \(\G^1\) shown in Fig.~\ref{fig:intro_local_nonmarkov}. 
    If \(U_1, U_2\) and \(U_3\) are unobserved, only samples from \(P(\*v) = \int_{u_1,u_2,u_3}P(a,b,c,d,e,f,h,u_1,u_2,u_3) \: du_1 du_2 du_3\) are available, where \(\*V = \{A,B,C,D,E,F,H\}\) denotes the observed variables.
    All 11 CIs invoked by LMP for \(\G^1\), listed in Ex.~\ref{example:markovlmp}, require samples from $P(a,b,c,d,e,f,h,u_1,u_2,u_3)$.
    Hence, none of these CIs can be tested using \(P(\*v)\).
    
    One approach to try salvaging these 11 CIs is to consider only those CIs in which  \(\{U_1,U_2,U_3\}\) appear before the conditioning bar.
    In such CIs, \(\{U_1,U_2, U_3\}\) can be removed using the decomposition axiom. 
    However, only two of the 11 CIs can be modified in this way, i.e.,
    \begin{align}
        E \indep \{A,C,D,F,H\} \mid \{B\},    \label{eq:ci1}  \\ 
        F \indep \{A,B,E,D,H\} \mid \{C\}.    \label{eq:ci2}
    \end{align}
    These two CIs do not suffice to derive the GMP for \(\G^1\).
    To witness, consider a graph \(\G'\) over the same variables as \(\G^1\) but with only one edge \(H \to A\). 
    Say we have an observational distribution \(P(\*v)\) faithfully induced by \(\G'\).
    Then, the CIs in Eqs.~(\ref{eq:ci1},\ref{eq:ci2}) both hold in \(P(\*v)\).
    However, \(\G^1\) implies that 
    \begin{align}
      H \indep \{A,E,F\} \mid \{B,C,D\}    
    \end{align}
    which does not hold in \(P(\*v)\) since \(\G'\) contains an edge \(H \to A\).
    Only testing the two CIs in Eqs.~(\ref{eq:ci1},\ref{eq:ci2}) would lead to the false conclusion that \(P(\*v)\) is consistent with \(\G^1\).
    As a result, it is insufficient to use only those CIs which invoke  \(\{U_1, U_2, U_3\}\) outside the conditioning set.

\qed
\end{example}

As in the example, to test a non-Markovian DAG, one can not simply `filter out' CIs that require conditioning on unobserved variables.
This is because such CIs can entail testable CIs over the observed variables.
The remaining option is to derive all these entailed CIs using the semi-graphoid axioms, and test those which invoke only observed variables.
This is the GMP (Def.~\ref{def:gmp}) of the non-Markovian DAG, which can invoke \(\Theta(4^n)\) CIs for a DAG with \(n\) observed variables (Prop. C.3.1).
This approach fails to exploit any locality in the graph, and requires a prohibitive number of CI tests, many of which are redundant.
This suggests the need for alternative compatibility properties for semi-Markovian (equivalently, non-Markovian) DAGs. We next introduce our contribution, the c-component local Markov property.

\subsection{C-LMP: A Local Markov Property for Semi-Markovian DAGs}\label{sec:clmp:def}

In a semi-Markovian graph, the observed parents of a variable do not suffice to separate it from its non-descendants.
Therefore, a surrogate of the parents is needed to restore locality.
The construct of a c-component (Def.~\ref{def:ccomponent}) was introduced for this purpose \cite{bareinboim:etal20}, which we explain via an example.

\begin{figure*}[t]
    \centering
    \begin{subfigure}{0.25\textwidth}
        \includegraphics[width=\textwidth]{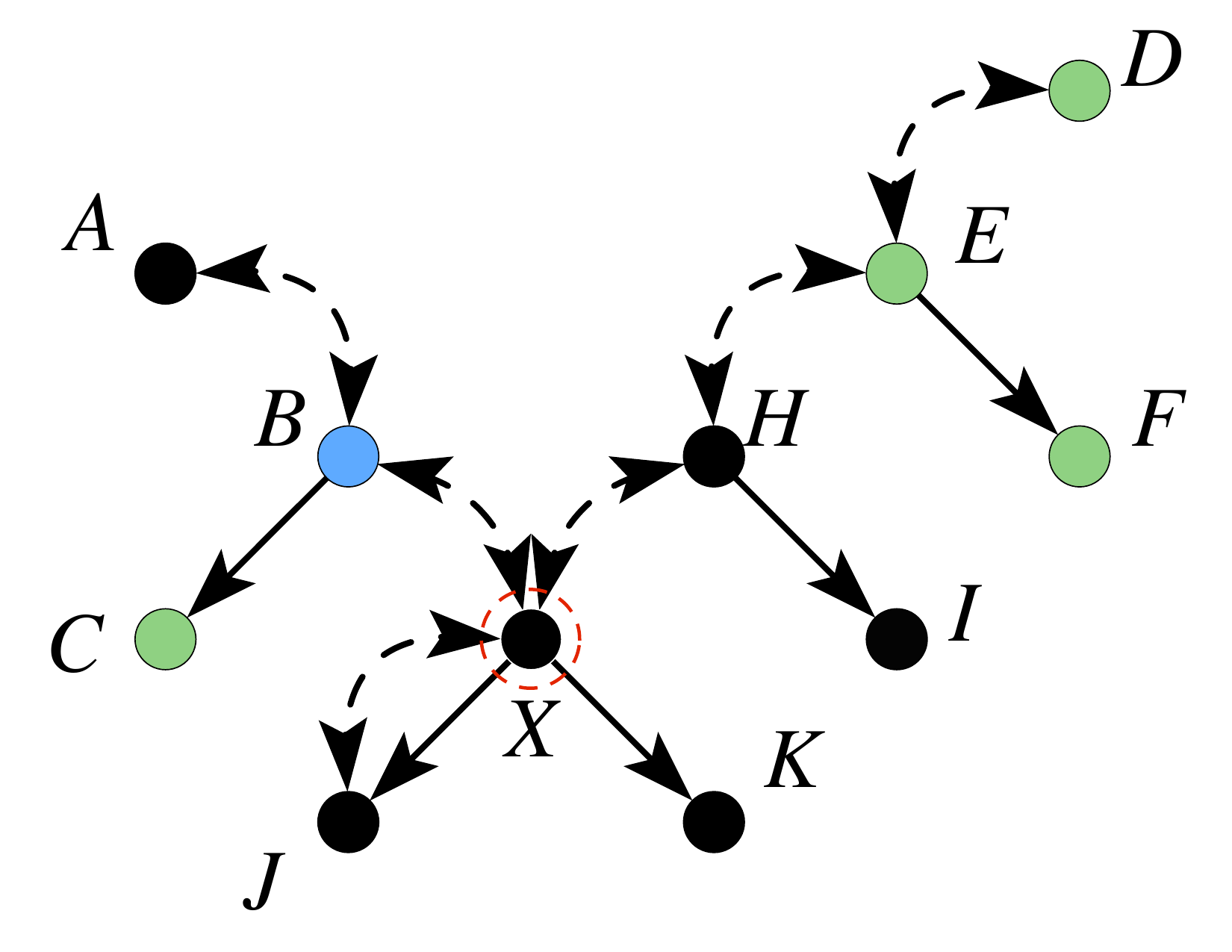}
        \caption{\(X\) is separated from \(C\) but not \(A\) when conditioning on \(B\).}
        \label{fig:clmp_mb1}
    \end{subfigure}
    \hfill
    \begin{subfigure}{0.25\textwidth}
        \includegraphics[width=\textwidth]{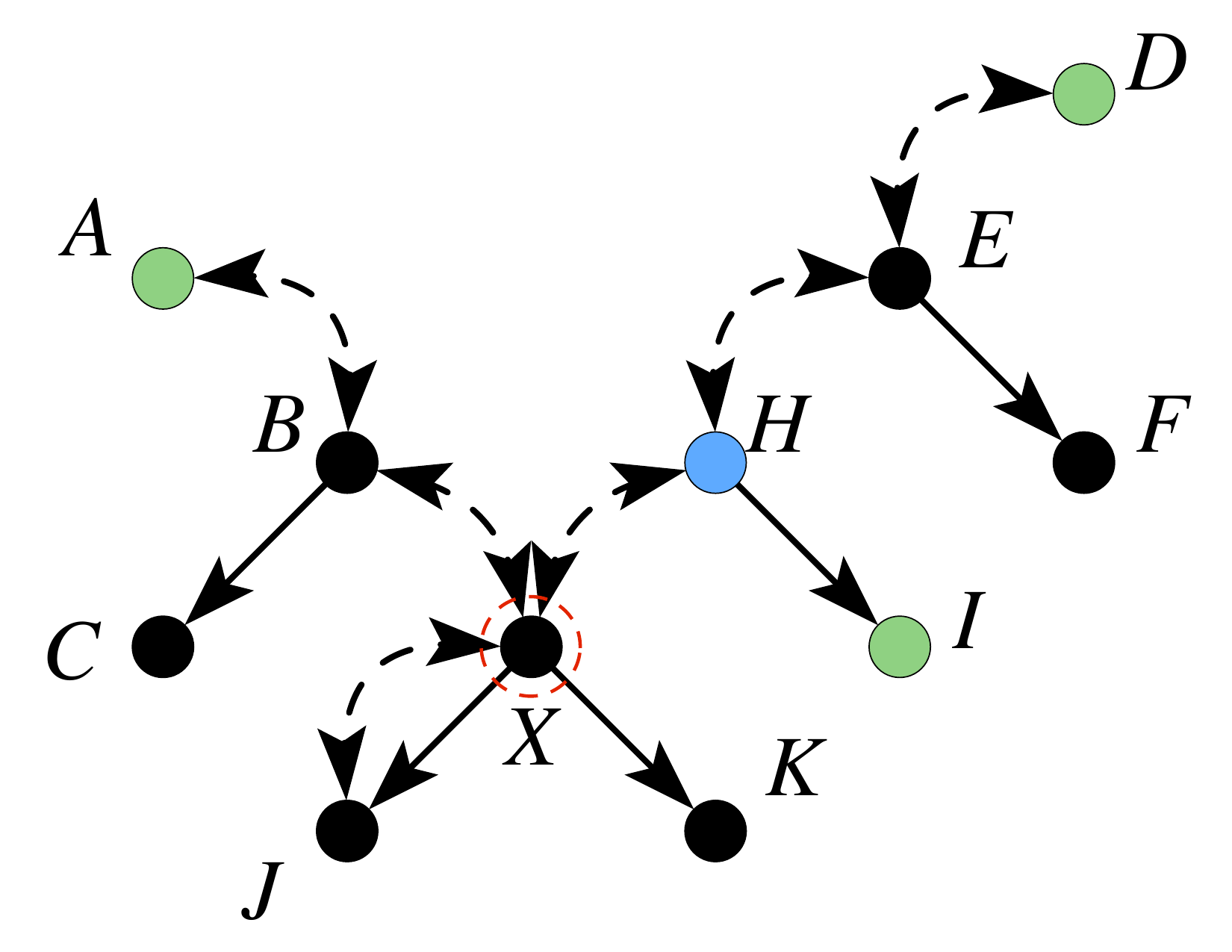}
        \caption{\(X\) is separated from \(A\) but not \(C\) when not conditioning on \(B\).}
        \label{fig:clmp_mb2}
    \end{subfigure}
    \hfill
    \begin{subfigure}{0.25\textwidth}
        \includegraphics[width=\textwidth]{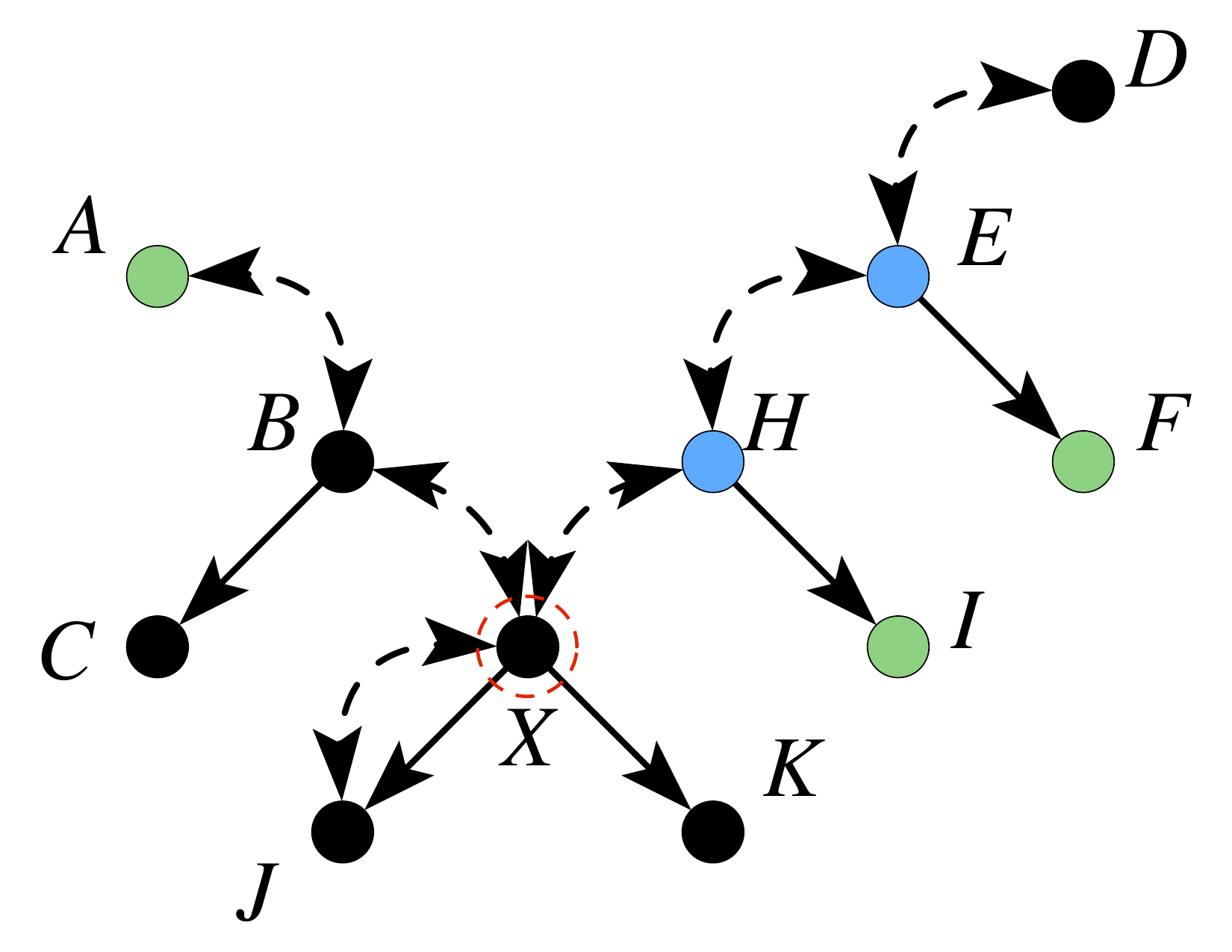}
        \caption{\(X\) is separated from \(F,I\) but not \(D\) when conditioning on \(\{H,E\}.\)}
        \label{fig:clmp_mb3}
    \end{subfigure}
\caption{Three ACs relative to the variable \(X\) in the (same) causal DAG \(\G\).  Assume an ordering \(A \prec B \prec \dots \prec X \prec J \prec K\). 
The ACs relative to \(X\) (excluding \(\{X\}\) itself), shown in blue, separate it from the variables shown in green.
}
\label{fig:clmp_mb}
\end{figure*}

\begin{example} \label{ex:locality}
    Continuing Ex.~\ref{example:markovlmp}, assume \(\{U_1, U_2, U_3\}\) are unobserved in \(\G^1\) (Fig.~\ref{fig:intro_local_nonmarkov}). 
    The second graph \(\G^2\) (Fig.~\ref{fig:intro_local_semimarkov}) is the semi-Markovian projection of \(\G_1\).
    Note that the conditional independence \(H \indep \{A,B,C,E, F\}\mid\{D, U_1, U_3\}\) cannot be tested from the data since \(\{U_1, U_3\}\) are not observed. 
    This means that a different conditioning set is needed to make \(H\) independent of its observed non-descendants. 
    
    One might condition on the observed descendants of \(\{U_1, U_3\}\) that are closest to \(U_1,U_3\), i.e., \(\{C,D\}\).
    These variables are not separable from \(H\) without conditioning on \(U_1, U_2\) or \(U_3\), which is not an option.
    \(\{C,D\}\) have bidirected edges to \(H\) in \(\G^2\), the semi-Markovian projection of \(\G^1\).
    \(\{C,D\}\) are now active on any paths on which \(\{C,D\}\) they are colliders: for instance, on the paths \(E \leftarrow B \to C \leftarrow U_3 \to H\) and \(A \to B \to C \leftarrow U_3 \to H\).
    To block some of these paths, we also condition on the (remaining) observed parents of \(\{C,D\}\), i.e., \(\{B\}\).
    Firstly, conditioning on \(\{C,D\}\) already makes \(B\) and its ancestors active on any paths where they are colliders; secondly, \(B\) is connected to \(H\) when conditioning on \(\{C,D\}\).
    Therefore, conditioning on \(\{B\}\) does not introduce any new active paths to \(X\). 
    Conditioning on \(\{B\}\) additionally blocks paths to \(H\) containing \(B\) on which \(B\) is not a collider.
    Therefore, we have the conditioning set \(\Pa{\*C} \setminus \{H\} = \{B,C,D\}\).
    The CI over observables \(H \indep  \{A,E,F\} \mid \{B,C,D\}\) is thus derived. 
\qed
\end{example}

Ex.~\ref{ex:locality} is relatively simple since the c-component of \(H\) is used to generate the given CI. 
However, the c-components of a variable do not always give rise to CIs.

\begin{example}
\label{ex:onecnoci}
    Consider, as an example DAG, a bidirected path of the form \(V_1 \leftrightarrow V_2 \dots \leftrightarrow V_n\) on variables \(\*V\).
    For each \(V_i\), the c-component including \(V_i\) is the entire graph.
    Therefore, conditioning on the c-component results in the `vacuous' CI: \(V_i \indep \emptyset \mid \*V \setminus V_{i}\).
    Clearly, from this set of vacuous CIs, we cannot derive non-vacuous CIs encoded the graph, such as those of the form \(V_i \indep \{V_j\}\), \(\forall i, j\text{ s.t. }|i-j|>1\) (e.g., \(V_1 \indep \{V_3\}\)). 
\qed
\end{example}

A useful insight due to \cite{richardson2003markov} is that subsets of a variable's c-component can give rise to distinct `surrogates' for its parents and hence distinct CIs. This is because conditioning on a certain variable in a c-component closes some paths while opening others. We generalize c-components to \emph{ancestral c-components} to define these `surrogates.'\footnote{Ancestral c-components can be shown to be equivalent to \emph{ancestrally closed districts} as defined in \cite{richardson:2009}. We thank Robin Evans for bringing this to our attention.}

\begin{definition}{(Ancestral C-component (AC))}
\label{def:ancestralccomponent}
    Given a causal graph \(\G\) and a consistent ordering \(\*V^\prec\), let \(X\) be a variable in \(\*V^\prec\).
    A set of variables \(\*C\) is said to be an ancestral c-component relative to \(X\) if there exists an ancestral set \(\*S \subseteq \*V^{\leq X}\) containing \(X\) such that \(\&{C}(X)_{\G_{\*S}} = \*C\).
    The collection of all such \(\*C\) is denoted:
    \[
        \&{AC}_X = \{ \*C \mid \*C \text{ is an ancestral c-component relative to } X\}.
    \]
\end{definition}

Unlike c-components, there may be many ancestral c-components with respect to a given variable.

\begin{example}
\label{ex:ac}
    Consider the graph \(\G\) in Fig.~\ref{fig:clmp_mb} with ordering \(A \prec B \prec \dots \prec X \prec J \prec K\).
    For the variable \(X\),  \(\{X\}\) is an AC relative to \(X\) induced by the ancestral set \(\*S = \{X\}\); \(\{B,X\}\) is an AC relative to \(X\) induced by the ancestral set \(\*S = \{B,C,D,E,X\}\). \(\{X,A,D,E\}\) is not an AC relative to \(X\) since the exclusion of \(B\) and/or \(H\) disconnects the variables in question.
    For the variable \(J\), \(\{J\}\) is not an AC relative to \(J\) since it excludes the ancestor \(X\) to which \(J\) is connected by a bidirected edge; \(\{X,J\}\) is an AC induced by the ancestral set \(\{X,J\}\). 
\qed
\end{example}

We use ACs to define the c-component local Markov property, which generalizes LMP to semi-Markovian DAGs using this new notion of local independence.

\begin{definition}{(The C-component Local Markov Property (C-LMP))}
\label{def:lmpplus}
    A probability distribution \(P(\*v)\) over a set of variables \(\*V\) is said to satisfy the c-component local Markov property for a causal graph \(\G\) with respect to the consistent ordering \(\*V^\prec\), if, for any variable \(X \in \*V^\prec\) and ancestral c-component \(\*C \in \&{AC}_X\) relative to \(X\),
    \begin{equation*}
        \begin{split}
            X \indep \: &\*S^+ \setminus \Pa{\*C} \mid (\Pa{\*C} \setminus \{X\}) \text{ in } P(\*v) \text{, where } \\
            & \*S^+ = \*V^{\leq X} \setminus \De{\Spo{\*C} \setminus \Pa{\*C}}.
        \end{split}
    \end{equation*}
\end{definition}

\begin{example} \label{example:3mbs}
Continuing Ex.~\ref{ex:ac}. We give a few examples of CIs invoked by C-LMP for the variable \(X\).
\begin{enumerate}
    \item The AC \(\*C = \{X,B\}\) gives the CI \(X \indep \{C,D,E,F\} \mid \{B\}\) (Fig.~\ref{fig:clmp_mb1}), since 
    \begin{align*}
    \Pa{\*C} = \Pa{\{X,B\}} = \{X,B\}
    \end{align*}
    \begin{align*}
    \*S^+ &= \*V^{\leq X} \setminus \De{\Spo{\{X,B\} \setminus \Pa{\{X,B\}}}} \\
    &= \{A,B,C,D,E,F,H,I,X\} \setminus \De{\{A,H\}} \\
    &= \{A,B,C,D,E,F,H,I,X\} \setminus \{A,H,I\} \\
    &=  \{B,C,D,E,F,X\}
    \end{align*}
    \item The AC \(\*C = \{X,H\}\) gives the CI \(X \indep \{A,D,I\} \mid \{H\}\) (Fig.~\ref{fig:clmp_mb2}), since 
    \begin{align*}
        \Pa{\*C} = \Pa{\{X,H\}} = \{X,H\}
    \end{align*} 
    \begin{align*}
        \*S^+ &= \*V^{\leq X} \setminus \De{\Spo{\{X,H\} \setminus \Pa{\{X,H\}}}} \\
        &= \{A,B,C,D,E,F,H,I,X\} \setminus \De{\{B,E\}} \\
        &= \{A,B,C,D,E,F,H,I,X\} \setminus \{B,C,E,F\} \\
        &=  \{A,D,H,I,X\}.
    \end{align*}
    \item The AC \(\*C = \{X,H,E\}\) gives the CI \(X \indep \{A,F,I\} \mid \{H,E\}\) (Fig.~\ref{fig:clmp_mb3}), since 
    \begin{align*}
        \Pa{\*C} = \Pa{\{X,H,E\}} = \{X,H,E\}
    \end{align*}
    \begin{align*}
        \*S^+ &= \*V^{\leq X} \setminus \De{\Spo{\{X,H,E\} \setminus \Pa{\{X,H,E\}}}} \\
        &= \{A,B,C,D,E,F,H,I,X\} \setminus \De{\{B,D\}} \\
        &= \{A,B,C,D,E,F,H,I,X\} \setminus \{B,C,D,E\} \\
        &=  \{A,E,F,H,I,X\}
    \end{align*}
\end{enumerate}
\qed
\end{example}

As a sanity check, let us examine the CIs C-LMP implies for a Markovian DAG \(\G\), where all c-components are singletons. There is exactly one AC \(\*C = \{X\}\) relative to a given variable \(X\). Moreover, \(\Pa{\*C} = \Pa{\{X\}}, \Spo{\{X\}} = \emptyset\) and \(\*S^+ = \*V^{\leq X} \setminus \De{\emptyset} =  \*V^{\leq X}\). Therefore, the CI invoked by C-LMP for \(X\) is
\begin{equation}
\label{eq:clmpcimarkov}
    X \indep \*V^{\leq X} \setminus \Pa{\{X\}} \mid \Pa{\{X\}} \setminus \{X\} 
\end{equation}

Thus, C-LMP reduces to the local well-numbering Markov property \cite{lauritzen:etal90} for a Markovian DAG \(\G\).\footnote{The local well-numbering Markov property was shown to be equivalent to LMP in \cite{lauritzen:etal90}. A subtle difference is that LMP tests the independence of \(X\) from its all non-descendants, not just \(\*V^{\leq X}\) for a given ordering \(\*V^{\prec}\).} %
In semi-Markovian DAGs, c-components are not necessarily singletons. Comparing the CIs invoked by LMP and C-LMP for a given variable \(X\), we see that C-LMP generalizes two concepts:
\begin{enumerate}
    \item The conditioning set \(\Pa{\{X\}} \setminus \{X\}\) stated in LMP is replaced with \(\Pa{\*C} \setminus \{X\}\) in C-LMP, using an AC \(\*C\) relative to \(X\).
    \item The conditioning set \(\Pa{\*C} \setminus \{X\}\) renders \(X\) independent of \(\*S^+ \setminus \Pa{\*C}\) where \(\*S^+ = \*V^{\leq X} \setminus \De{\Spo{\*C} \setminus \Pa{\*C}}\), as stated by C-LMP.
    The set \(\*S^+ \setminus \Pa{\*C}\) replaces the set \(\Nd{\{X\}} \setminus \Pa{\{X\}}\) in LMP.
\end{enumerate}

In Ex.~\ref{ex:locality}, we gave intuition for the generalised conditioning set \(\Pa{\*C} \setminus \{X\}\) (Case 1). Next, we explain the construction of \(\*S^+\) (Case 2) used to compute the maximal set of variables in \(\*V^{\leq X}\) that are independent of \(X\) given \(\Pa{\*C} \setminus \{X\}\).
Consider what happens to a variable \(Y \in \*V^{\leq X} \setminus \Pa{\*C}\) when conditioning on \(\Pa{\*C} \setminus \{X\}\).

\begin{itemize}
    \item If \(Y\) is a descendant (or an ancestor) of some node \(W \in \Pa{\*C}\), we have a directed path \(\pi\) from \(W\) to \(Y\) (or vice-versa). Conditioning on \(\Pa{\*C} \setminus \{X\}\) blocks \(\pi\) (since \(Y \not \in \Pa{\*C}\)), and hence any path from \(X\) to \(Y\) which contains \(\pi\) as a sub-path. For example, in Fig.~\ref{fig:clmp_mb1}, taking \(\*C = \{X,B\}\), \(W = B\) and \(Y = C\), conditioning on \(\{B\}\) blocks the path \(X \leftrightarrow B \to C\).
    \item If \(Y\) is connected by a bidirected path to some node in \(\*C\), but \(Y\) is not in \(\Spo{\*C}\), then some node \(V \in \Spo{\*C} \setminus \Pa{\*C}\) `intercepts' this path, i.e., \(V\) is a closed collider and thus blocks the path from \(X\) to \(Y\).
    For example, in Fig.~\ref{fig:clmp_mb1}, taking \(\*C = \{X,B\}\), \(V = H\) and \(Y = E\), \(H\) blocks the path \(X \leftrightarrow H \leftrightarrow E\).
    \item If \(Y\) is in \(\Spo{\*C} \setminus \Pa{\*C}\), is an active bidirected path from \(X\) to \(Y\) when conditioning on \(\*C \setminus \{X\}\).
    For example, in Fig.~\ref{fig:clmp_mb1}, taking \(\*C = \{X,B\}\) and \(Y = A\), conditioning on \(\{B\}\) opens the path \(X \leftrightarrow B \leftrightarrow A\).
\end{itemize}

Analogous to how, for a given variable \(X\), different conditioning sets give rise to different CIs from \(X\), different ACs also give rise to different CIs from \(X\).
The upshot of defining ACs is that they carve out a relatively small set of CIs (commonly known as a `basis' \cite{bareinboim:etal20}) from which all CIs encoded in the given graph can be derived. The main result of this section, given below, establishes that GMP and C-LMP are equivalent.

\begin{theorem}[Equivalence of C-LMP and GMP]
\label{thm:equivalence:gmp:lmpplus}
    Let \(\G\) be a causal graph and \(\*V^\prec\) a consistent ordering.
    A probability distribution over \(\*V\) satisfies the global Markov property for \(\G\) if and only if it satisfies the c-component local Markov property for \(\G\) with respect to \(\*V^{\prec}\).
\end{theorem}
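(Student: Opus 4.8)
I would prove the two implications separately. The direction GMP \(\Rightarrow\) C-LMP is a direct \emph{soundness} check: every CI that C-LMP asserts is a \(d\)-separation in \(\G\), so it is implied by GMP (Def.~\ref{def:gmp}). The direction C-LMP \(\Rightarrow\) GMP is the substantive one: the CIs invoked by C-LMP, closed under the semi-graphoid axioms (symmetry, decomposition, weak union, contraction, but \emph{not} intersection, since no positivity is assumed), must entail \(\*X \perp_d \*Y \mid \*Z \Rightarrow \*X \indep \*Y \mid \*Z\) for all disjoint \(\*X,\*Y,\*Z\).

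\textbf{Forward direction.} Fix \(X\) and \(\*C \in \&{AC}_X\), and put \(\*Z = \Pa{\*C}\setminus\{X\}\). I would show \(X \perp_d (\*S^+\setminus\Pa{\*C}) \mid \*Z\) by contradiction: suppose a path \(\pi\) from \(X\) to some \(Y \in \*S^+\setminus\Pa{\*C}\) is active given \(\*Z\), and trace \(\pi\) from \(X\). The three bullet points preceding the theorem give the case analysis: each way of leaving \(X\) either runs into a node of \(\Pa{\*C}\) along a directed segment (blocked by \(\*Z\)), or is intercepted by a closed collider in \(\Spo{\*C}\setminus\Pa{\*C}\), or remains within the bidirected region and forces \(Y \in \De{\Spo{\*C}\setminus\Pa{\*C}}\) --- the last contradicting \(Y \in \*S^+ = \*V^{\leq X}\setminus\De{\Spo{\*C}\setminus\Pa{\*C}}\). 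The only real care is tracking collider/non-collider status against membership in \(\*Z\), and using that \(\*C\) is a c-component of an ancestral subgraph, so its bidirected-connected region is ancestrally closed.

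\textbf{Reverse direction.} I would proceed in three steps. (i) \emph{A key lemma}: for every \(X\) and every ancestral set \(\*A \subseteq \*V^{\leq X}\) with \(X \in \*A\), the set \(\*C = \&{C}(X)_{\G_{\*A}}\) is an AC relative to \(X\) (Def.~\ref{def:ancestralccomponent}), and the corresponding C-LMP CI entails the \emph{ordered local} independence
\[
X \indep (\*A \setminus \Pa{\*C}) \mid (\Pa{\*C}\setminus\{X\}).
\]
This follows by \emph{decomposition} alone once we observe \(\*A \subseteq \*S^+\): indeed \(\Pa{\*C}\supseteq\*C\), so any \(W \in \Spo{\*C}\setminus\Pa{\*C}\) lies outside \(\*C\); if \(W\) had a descendant in the ancestral set \(\*A\), then \(W \in \*A\), forcing \(W \in \&{C}(X)_{\G_{\*A}}=\*C\), a contradiction; hence \(\*A \cap \De{\Spo{\*C}\setminus\Pa{\*C}}=\emptyset\) and \(\*A\setminus\Pa{\*C}\subseteq\*S^+\setminus\Pa{\*C}\). (ii) \emph{Reduction to ancestral support}: since \(\*X\perp_d\*Y\mid\*Z\) is decided in \(\G_{\An{\*X\cup\*Y\cup\*Z}}\), and ancestral subsets of an ancestral set remain ancestral in \(\G\), every ordered local CI I need in the subgraph is already a C-LMP CI of \(\G\). (iii) \emph{Assembly}: derive the target \(d\)-separation by induction on \(\prec\), peeling the \(\prec\)-largest vertex \(v\) of the (ancestral) support, applying the induction hypothesis to \(\G\) with \(v\) removed, then reintroducing \(v\) via its ordered local CI from step (i) and gluing the two statements with contraction and weak union. (Equivalently, steps (i)--(ii) show C-LMP yields exactly Richardson's ordered local Markov independencies for the corresponding ADMG, whose induced independence model is known to coincide with GMP.)

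\textbf{Main obstacle.} The crux is step (iii): reassembling the per-vertex ordered CIs into an \emph{arbitrary} \(d\)-separation using only decomposition, weak union, and contraction --- never intersection. The boundary \(\Pa{\*C}\) supplied by C-LMP generally differs from the target conditioning set \(\*Z\), so the peeling induction must reconcile these sets while correctly accounting for which predecessors of \(v\) become active through colliders at spouses of \(\*C\); ensuring that opening such a collider never breaks a separation already derived for the smaller graph is the delicate bookkeeping that I expect to be the longest and most error-prone part of the argument.
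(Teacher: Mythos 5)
Your proposal is correct in outline and, in both directions, tracks the paper's own argument closely. The forward direction is exactly the paper's Prop.~\ref{adxprop:clmpsep}: each C-LMP statement is verified to be a $d$-separation by a path-tracing case analysis, and GMP does the rest. For the reverse direction, the paper gives two routes, and your plan contains both. Your self-contained step (iii) is precisely the paper's Appendix C.1 proof: induction modeled on Lauritzen et al.\ and Richardson, peeling the $\prec$-last vertex \(V^*\), invoking the C-LMP statement for a suitably chosen AC (\(\&{C}(V^*)_{\G_{\An{\*X \cup \*Z}}}\) when \(V^* \in \*X\), \(\&{C}(V^*)_{\G}\) when \(V^* \in \*Z\)), and gluing with weak union and contraction. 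Your citation route is the paper's alternative (Thm.~\ref{adxthm:equivalence:lmp:lmpplus} together with Richardson's Thm.~2, yielding Cor.~\ref{cor:equivalence:lmp:lmpplus}), but your step (i) is a genuinely lighter version of it: the paper proves an exact correspondence between the CI sets of C-LMP and (LMP,\(\prec\)) via its MB/MAS machinery (Props.~B.2.1--B.2.3), whereas you only prove the one-directional containment that every ordered-local CI --- for \emph{every} ancestral set \(\*A\), not just maximal ones --- follows from the C-LMP statement of the same AC by decomposition, using the clean observation that \(\*A \cap \De{\Spo{\*C} \setminus \Pa{\*C}} = \emptyset\) for any ancestral \(\*A\) inducing \(\*C\). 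That containment plus soundness is all the equivalence needs, so your route buys a shorter proof at the cost of importing Richardson's theorem as a black box; the paper's direct induction keeps the result self-contained.

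One concrete piece is missing from your step (iii) if you intend it as a standalone derivation: before peeling, the paper extends \(\*X, \*Y\) so that \(\*X \cup \*Y \cup \*Z = \*V\) (showing every vertex of \(\An{\*X \cup \*Y \cup \*Z}\) outside the three sets is separated from \(\*X\) or from \(\*Y\) given \(\*Z\), then recovering the original statement by decomposition). Restricting to the ancestral closure alone, as in your step (ii), is not enough: the full-support assumption is what forces \(\Spo{\*C} \setminus \Pa{\*C} = \emptyset\) in the cases \(V^* \in \*X\) and \(V^* \in \*Z\) (any spouse in \(\*X \cup \*Z\) lands in \(\*C\) by construction, and a spouse in \(\*Y\) yields an active path contradicting \(\*X \perp_d \*Y \mid \*Z\)). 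Without it, a spouse lying in none of \(\*X, \*Y, \*Z\) could have descendants meeting \(\*X \cup \*Z\), so \(\*X \cup \*Z \not\subseteq \*S^+\) and the weak-union step from \(V^* \indep \*S^+ \setminus \Pa{\*C} \mid \Pa{\*C} \setminus \{V^*\}\) to \(V^* \indep \*Y \mid (\*X \setminus \{V^*\}) \cup \*Z\) breaks down. This is exactly the ``delicate bookkeeping'' you flagged as the main obstacle; since your citation route is complete without it, the omission is a gap only in the self-contained variant, not in the proposal as a whole.
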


As a corollary of Thm.~\ref{thm:equivalence:gmp:lmpplus}, we can conclude that C-LMP is equivalent to Richardson's ordered local Markov property \cite{richardson2003markov}, since the latter is equivalent to GMP \citep[Thm.~2, Section~3.1]{richardson2003markov}. 

\begin{corollary}[Equivalence of C-LMP and the Ordered Local Markov Property]
\label{cor:equivalence:lmp:lmpplus}
    Let \(\G\) be a causal graph and \(\*V^\prec\) a consistent ordering.
    A probability distribution over \(\*V\) satisfies the ordered local Markov property \cite{richardson2003markov} for \(\G\) with respect to  \(\*V^\prec\) if and only if it satisfies the c-component local Markov property for \(\G\) with respect to \(\*V^{\prec}\).
\end{corollary}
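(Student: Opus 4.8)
The plan is to prove the corollary by transitivity, using the global Markov property (GMP) as a bridge between the two local properties. The crucial observation is that GMP (Def.~\ref{def:gmp}) makes no reference to any ordering of \(\*V\): it quantifies over all disjoint triples \(\*X,\*Y,\*Z\) and asserts that every \(d\)-separation implies the corresponding conditional independence in \(P(\*v)\). Hence GMP is an ordering-agnostic anchor to which both ordered local properties can be tied, and the entire argument reduces to composing two biconditionals that already exist.

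Concretely, first I would invoke Thm.~\ref{thm:equivalence:gmp:lmpplus}, which establishes that for the given graph \(\G\) and consistent ordering \(\*V^\prec\), a distribution \(P(\*v)\) satisfies C-LMP with respect to \(\*V^\prec\) if and only if it satisfies GMP for \(\G\). Second, I would invoke the analogous result for Richardson's ordered local Markov property, namely \citep[Thm.~2, Section~3.1]{richardson2003markov}, which states that \(P(\*v)\) satisfies the ordered local Markov property with respect to \(\*V^\prec\) if and only if it satisfies GMP for \(\G\). Chaining these two equivalences through their common middle term GMP then yields directly that \(P(\*v)\) satisfies the ordered local Markov property with respect to \(\*V^\prec\) if and only if it satisfies C-LMP with respect to \(\*V^\prec\), which is the claim.

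I do not expect a genuine obstacle here; the mathematical content is carried entirely by Thm.~\ref{thm:equivalence:gmp:lmpplus} and the cited result of \cite{richardson2003markov}. The only point requiring care is that both local properties, and both biconditionals, are stated with respect to the \emph{same} consistent ordering \(\*V^\prec\). This causes no difficulty precisely because GMP is ordering-free: both equivalences reduce to GMP for \(\G\), so the shared ordering propagates cleanly through the chain, and one needs only to confirm that the notion of ordering used by Richardson's property (a topological ordering consistent with \(\G\)) matches the consistent ordering \(\*V^\prec\) used in Def.~\ref{def:lmpplus}. Accordingly, I would present this as a short, direct corollary rather than a substantial separate argument.
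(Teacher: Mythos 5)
Your proposal is correct and is essentially identical to the paper's own proof: the paper likewise chains Thm.~\ref{thm:equivalence:gmp:lmpplus} (C-LMP \(\iff\) GMP) with \citep[Thm.~2, Section~3.1]{richardson2003markov} ((LMP,\(\prec\)) \(\iff\) GMP) through the ordering-free GMP as the common middle term. No gaps; your remark about both equivalences using the same consistent ordering matches the paper's implicit treatment.
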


In Appendix B, we further develop the connection between C-LMP and the ordered local Markov property.
In fact, in Thm. B.2.1, we show that these two properties induce the exact same set of CIs for a given DAG and a consistent ordering.
Thm. B.2.1 thus provides another way to obtain Thm.~\ref{thm:equivalence:gmp:lmpplus} as a corollary.

The equivalence of C-LMP and GMP means that the CIs invoked by C-LMP for a given causal DAG can be used to test the DAG against observational data. 

\subsection{Uniqueness Property of C-LMP}\label{sec:clmp:properties}

By definition, each CI invoked by C-LMP is generated from an AC. We further show that each CI can be generated from exactly one AC.

\begin{theorem}[Unique AC for each CI Invoked by C-LMP]
\label{thm:equivalence:clmpplus}
    Let \(\G\) be a causal graph, \(\*V^\prec\) a consistent ordering, and \(X\) a variable in \(\*V^\prec\).
    For every conditional independence relation invoked by the c-component local Markov property of the form \(X \indep \*W \mid \*Z\), there is exactly one ancestral c-component \(\*C \in \&{AC}_X\) such that \(\*W = \*V^{\leq X} \setminus ((\De{\Spo{\*C} \setminus \Pa{\*C}}) \cup \Pa{\*C})\) and \(\*Z = \Pa{\*C} \setminus \{X\}\).
\end{theorem}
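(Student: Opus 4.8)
The plan is to prove injectivity of the map that sends an ancestral c-component to the CI it generates under C-LMP, and the key observation is that the conditioning set $\*Z$ alone already pins down the generating AC. Concretely, I would first isolate a purely graphical lemma: for \emph{any} ancestral c-component $\*C \in \&{AC}_X$, we have $\*C = \&{C}(X)_{\G_{\Pa{\*C}}}$. In words, an AC is recovered as the c-component of $X$ in the subgraph induced by its own parent set. Granting this lemma, uniqueness is essentially free, so the entire weight of the argument rests on it.

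To prove the lemma, let $\*S \subseteq \*V^{\leq X}$ be an ancestral set with $X \in \*S$ and $\&{C}(X)_{\G_{\*S}} = \*C$, as guaranteed by Def.~\ref{def:ancestralccomponent}. First I would record that $\Pa{\*C} \subseteq \*S$: since $\*C \subseteq \*S$ and $\*S = \An{\*S}$, every parent of $\*C$ is an ancestor of $\*S$ and hence lies in $\*S$. Then I would show both inclusions. For $\*C \subseteq \&{C}(X)_{\G_{\Pa{\*C}}}$: each bidirected edge joining two nodes of $\*C$ in $\G_{\*S}$ has both endpoints in $\*C \subseteq \Pa{\*C}$, so it survives in $\G_{\Pa{\*C}}$, whence $\*C$ is still bidirected-connected there. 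For the reverse inclusion $\&{C}(X)_{\G_{\Pa{\*C}}} \subseteq \*C$: any bidirected path from $X$ inside $\G_{\Pa{\*C}}$ uses only nodes of $\Pa{\*C} \subseteq \*S$, so it is also a bidirected path in $\G_{\*S}$, placing its far endpoint in $\&{C}(X)_{\G_{\*S}} = \*C$. Combining the two gives $\&{C}(X)_{\G_{\Pa{\*C}}} = \*C$.

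With the lemma established, I would finish as follows. Existence of a generating AC is given, since the CI $X \indep \*W \mid \*Z$ is assumed to be invoked by C-LMP. For uniqueness, suppose $\*C_1, \*C_2 \in \&{AC}_X$ both realize this CI. The conditioning-set clause of Def.~\ref{def:lmpplus} forces $\Pa{\*C_1} \setminus \{X\} = \*Z = \Pa{\*C_2} \setminus \{X\}$, and since $X \in \Pa{\*C_i}$ this yields $\Pa{\*C_1} = \*Z \cup \{X\} = \Pa{\*C_2}$. Applying the lemma to each AC, $\*C_1 = \&{C}(X)_{\G_{\*Z \cup \{X\}}} = \*C_2$. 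The claimed formula for the independent set then follows by unfolding $\*S^+$: since $\*W = \*S^+ \setminus \Pa{\*C}$ and $\*S^+ = \*V^{\leq X} \setminus \De{\Spo{\*C} \setminus \Pa{\*C}}$, the identity $(A \setminus B) \setminus C = A \setminus (B \cup C)$ gives $\*W = \*V^{\leq X} \setminus ((\De{\Spo{\*C} \setminus \Pa{\*C}}) \cup \Pa{\*C})$, exactly as stated.

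I expect the main obstacle to be the reverse inclusion of the lemma, i.e.\ ruling out that restricting the vertex set from $\*S$ down to $\Pa{\*C}$ could \emph{merge} $\*C$ with one of its external parents into a strictly larger c-component. This cannot occur because passing to an induced subgraph only deletes bidirected edges and never creates them, so no bidirected connectivity appears in $\G_{\Pa{\*C}}$ that was absent in $\G_{\*S}$, where $\*C$ was already maximal. The delicate point is to state this monotonicity of c-components under induced subgraphs precisely, since it is the linchpin that converts the geometric intuition into the clean recovery formula $\*C = \&{C}(X)_{\G_{\Pa{\*C}}}$; everything after that is bookkeeping with set differences.
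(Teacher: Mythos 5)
Your proof is correct, and it reaches the theorem by a more self-contained route than the paper does. The paper derives Thm.~\ref{thm:equivalence:clmpplus} by composing two correspondences built in Appendix~B: the bijection \(f:\&{AC}_X \to \&{Z}_X\), \(f(\*C) = \Pa{\*C}\setminus\{X\}\), between ACs and Richardson-style Markov blankets (Lemma~\ref{lemma:equivalence:cmb}), and the induced bijection between ACs and maximal ancestral sets (Cor.~\ref{cor:equivalence:csplus}). Inside the proof of Lemma~\ref{lemma:equivalence:cmb}, injectivity is obtained by taking two witnessing ancestral sets \(\*S_1,\*S_2\) with \(\Pa{\*C_1}=\Pa{\*C_2}\) and chasing \(\*C_1 \subseteq \Pa{\*C_1} = \Pa{\*C_2} \subseteq \*S_2 \implies \*C_1 \subseteq \&{C}(X)_{\G_{\*S_2}} = \*C_2\), and symmetrically. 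Your recovery formula \(\*C = \&{C}(X)_{\G_{\Pa{\*C}}}\) rests on exactly the same two ingredients---\(\*C \subseteq \Pa{\*C} \subseteq \*S\) for any ancestral \(\*S\) witnessing \(\*C\) (using the paper's convention that \(\Pa{\cdot}\) contains the set itself), and the monotonicity fact that induced subgraphs can only destroy, never create, bidirected connectivity---but packages them intrinsically, with no quantification over witnessing ancestral sets. This buys you two things: uniqueness becomes a one-liner (the conditioning set alone determines \(\Pa{\*C} = \*Z \cup \{X\}\), hence \(\*C = \&{C}(X)_{\G_{\*Z\cup\{X\}}}\), so you do not even need the \(\*W\)-clause for uniqueness), and you obtain a constructive inverse \(\*Z \mapsto \&{C}(X)_{\G_{\*Z\cup\{X\}}}\), which is cleaner than the paper's definition of the inverse \(g\) via an arbitrary ancestral set followed by a well-definedness check. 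What the paper's detour through \(\&{Z}_X\) and \(\&{S^+}_X\) buys instead is reuse: Lemma~\ref{lemma:equivalence:cmb} and Cor.~\ref{cor:equivalence:csplus} also drive the correspondence with the ordered local Markov property (Thm.~\ref{adxthm:equivalence:lmp:lmpplus}) and the counting bound (Prop.~\ref{prop:lmpsize}), so the machinery amortizes across several results. Your remaining bookkeeping is sound: \(X \in \*C_i \subseteq \Pa{\*C_i}\) upgrades equality of the sets \(\Pa{\*C_i}\setminus\{X\}\) to equality of the parent sets, the identity \((A\setminus B)\setminus C = A\setminus(B\cup C)\) correctly unfolds \(\*W\), and the worry you flag about the reverse inclusion is resolved exactly as you say---since \(\Pa{\*C}\subseteq\*S\), any merging of \(\*C\) with an external parent in \(\G_{\Pa{\*C}}\) would already occur in \(\G_{\*S}\), contradicting maximality of \(\*C\) there.
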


The one-to-one correspondence between ACs and CIs invoked by C-LMP allows us to give bounds on the latter number that are tight in the exponent.

\begin{prop}[Number of CIs Invoked by C-LMP]
\label{prop:lmpsize}
    Given a causal graph \(\G\) and a consistent ordering  \(\*V^\prec\), let \(n\) and \(s \leq n\) denote the number of variables and the size of the largest c-component in \(\G\) respectively. Then, the c-component local Markov property for \(\G\) with respect to \(\*V^\prec\) invokes \(O(n 2^s)\) conditional independencies implied by \(\G\) over \(\*V\).
    Moreover, there exists a graph \(\G\) and a consistent ordering  \(\*V^\prec\)  for which the property induces \(\Omega(2^n)\) conditional independencies.
\end{prop}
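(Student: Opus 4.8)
The plan is to prove both bounds by reducing the count of CIs invoked by C-LMP to a count of ancestral c-components, using the one-to-one correspondence established in Thm~\ref{thm:equivalence:clmpplus}. That theorem tells us that for each variable \(X\), the CIs of the form \(X \indep \*W \mid \*Z\) invoked by C-LMP are in bijection with the ACs in \(\&{AC}_X\). Since CIs for distinct \(X\) differ in their (singleton) left-hand side, the total number of CIs invoked equals \(\sum_{X \in \*V} |\&{AC}_X|\). Both bounds then follow from estimating \(|\&{AC}_X|\).

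For the upper bound, the key structural observation is that every AC relative to \(X\) is a subset of the full-graph c-component \(\&{C}(X)_{\G}\). Indeed, if \(\*C = \&{C}(X)_{\G_{\*S}}\) for an ancestral set \(\*S \ni X\), then any bidirected path witnessing membership in \(\*C\) lies in the induced subgraph \(\G_{\*S}\) and hence in \(\G\), so \(\*C \subseteq \&{C}(X)_{\G}\); moreover \(X \in \*C\) always. Consequently \(|\&{AC}_X|\) is at most the number of subsets of \(\&{C}(X)_{\G}\) containing \(X\), i.e. \(2^{|\&{C}(X)_{\G}| - 1} \le 2^{s-1}\). Summing over the \(n\) variables yields \(\sum_{X} |\&{AC}_X| \le n\,2^{s-1} = O(n 2^s)\), giving the claimed upper bound.

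For the lower bound, I would exhibit a single graph attaining \(\Omega(2^n)\) CIs, taking \(\G\) to be one large c-component with \(s = n\) --- concretely, the complete bidirected graph on \(\*V = \{V_1, \dots, V_n\}\) (every pair joined by a bidirected edge), with any consistent ordering \(V_1 \prec \dots \prec V_n\). I focus on the last variable \(X = V_n\), for which \(\*V^{\leq X} = \*V\). Because \(\G\) has no directed edges, every subset of \(\*V\) is ancestral, so every \(\*S\) with \(X \in \*S\) is an admissible ancestral set; and since \(\G_{\*S}\) is again complete bidirected, \(\&{C}(X)_{\G_{\*S}} = \*S\). Thus \(\*S \mapsto \&{C}(X)_{\G_{\*S}}\) sends each of the \(2^{n-1}\) subsets containing \(X\) to a distinct AC, so \(|\&{AC}_X| = 2^{n-1}\). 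By the bijection of Thm~\ref{thm:equivalence:clmpplus}, C-LMP invokes at least \(2^{n-1} = \Omega(2^n)\) distinct CIs for this \(\G\). (If one prefers the witnessing CIs to be non-vacuous rather than of the form \(X \indep \emptyset \mid \cdot\), appending a single isolated node \(Y\) ordered before \(X\) forces \(Y\) onto the left-hand side of every such CI while preserving the exponential count.)

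The routine parts are the counting and summation; the one genuine subtlety --- the \emph{main obstacle} --- is the lower bound's requirement that distinct ancestral sets yield distinct ACs, so that we count \(2^{n-1}\) \emph{CIs} and not merely \(2^{n-1}\) ancestral sets that might collapse onto fewer ACs. This is exactly what makes the complete-bidirected (or bidirected-star) construction the right choice: because \(\&{C}(X)_{\G_{\*S}} = \*S\) there, the AC determines \(\*S\) uniquely, ruling out such collisions. One must also confirm that the reduction to \(\sum_X |\&{AC}_X|\) correctly avoids double-counting CIs across variables, which the single-variable left-hand side of every C-LMP constraint makes immediate.
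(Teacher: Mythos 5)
Your proof is correct, and your upper bound is essentially identical to the paper's: both reduce counting CIs to counting ACs via the bijection of Thm.~\ref{thm:equivalence:clmpplus}, observe that every AC relative to \(X\) is a subset of \(\&{C}(X)_{\G}\) (so \(|\&{AC}_X| \le 2^{s}\); your \(2^{s-1}\) is marginally sharper but the same order), and sum over the \(n\) variables.

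The lower bound is where you genuinely diverge, and where one caveat matters. The paper's witness (Fig.~C.1) is a layered graph with ordering \(Z \prec Y \prec X \prec V_1 \prec \dots \prec V_k\) in which each subset \(\*C \subseteq \{V_j\}_{j<i}\) yields the \emph{admissible} AC \(\{V_i,X,Z\} \cup \*C\) and the non-vacuous CI \(V_i \indep \{Y\} \mid \{X,Z\} \cup \*C\), giving \(\sum_{i=1}^{k} 2^{i-1} = \Omega(2^n)\) across all the \(V_i\). Your bare bidirected clique produces only \emph{vacuous} CIs \(V_n \indep \emptyset \mid \*S \setminus \{V_n\}\): with \(\Pa{\*C} = \*C\) and \(\Spo{\*C} \setminus \Pa{\*C}\) equal to the rest of the clique, one gets \(\*S^+ = \*C\) and \(\*W = \emptyset\). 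The paper itself uses exactly this clique (Ex.~D.3.1) as its canonical example of exponentially many \emph{vacuous} CIs, states in Sec.~4 that Prop.~\ref{prop:lmpsize}'s bounds are tight for the number of non-vacuous CIs, and under Def.~\ref{def:gmp} a CI requires nonempty sets on both sides of \(\indep\), so statements of the form \(X \indep \emptyset \mid \*Z\) arguably should not be counted at all. Consequently your parenthetical isolated-node fix is not optional; it should be the construction. It does work: with an isolated \(Y\) ordered before \(V_n\), each of the \(2^{n-1}\) ACs \(\*C \ni V_n\) satisfies \(Y \in \*S^+ \setminus \Pa{\*C}\) and yields the non-vacuous CI \(V_n \indep \{Y\} \mid \*C \setminus \{V_n\}\), all pairwise distinct since the conditioning sets differ, so injectivity of the AC-to-CI map gives \(\Omega(2^n)\) genuine CIs — arguably a cleaner witness than the paper's, since \(\&{C}(V_n)_{\G_{\*S}} = \*S\) makes the count transparent. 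One terminological slip to repair: ``admissible'' is reserved in Def.~\ref{def:validci} for ACs yielding non-vacuous CIs, so in the bare clique no AC relative to \(V_n\) is admissible, and your phrase ``admissible ancestral set'' there should just read ``ancestral set.''
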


This result shows that C-LMP offers an exponential improvement on the \(\Theta(4^n)\) CIs invoked by GMP.
However, C-LMP can still invoke an exponential number of CIs.
For example, in \(\G^{ex}\) (Fig.~\ref{fig:exp_ci}) with \(2n\) nodes, there are \(2^n + (n-3)\) CIs invoked by C-LMP.

The main upshot of the one-to-one correspondence between ACs and CIs invoked by C-LMP is that to list such CIs, it suffices to enumerate ACs.
We study the problem of listing CIs in the next section.

\begin{figure}[t]
    \centering
    \begin{subfigure}{0.26\textwidth}
        \includegraphics[width=\textwidth]{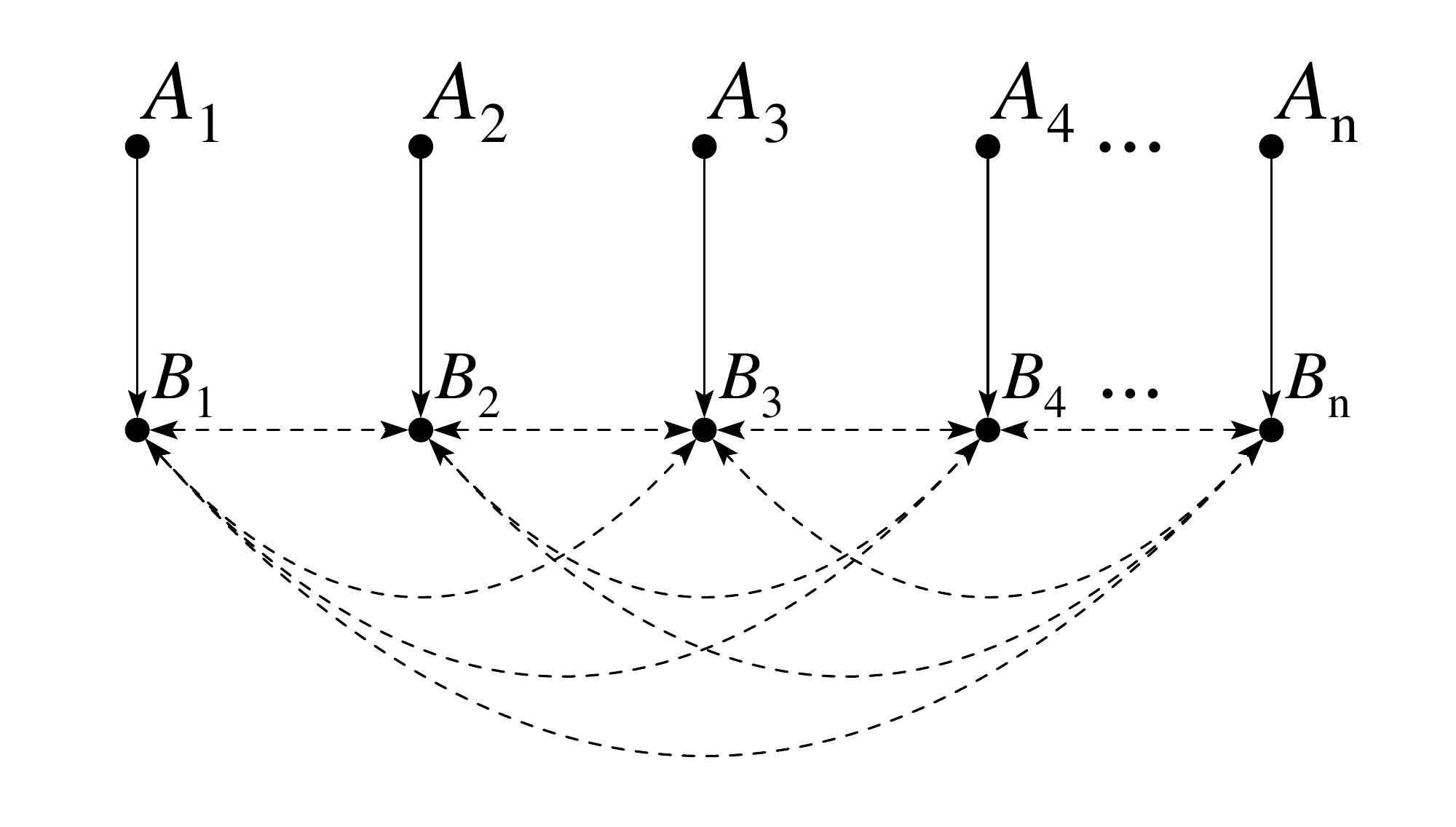}
        \caption{\(\G^{ex}\)}
        \label{fig:exp_ci}
    \end{subfigure}
    \hfill
    \begin{subfigure}{0.18\textwidth}
        \includegraphics[width=\textwidth]{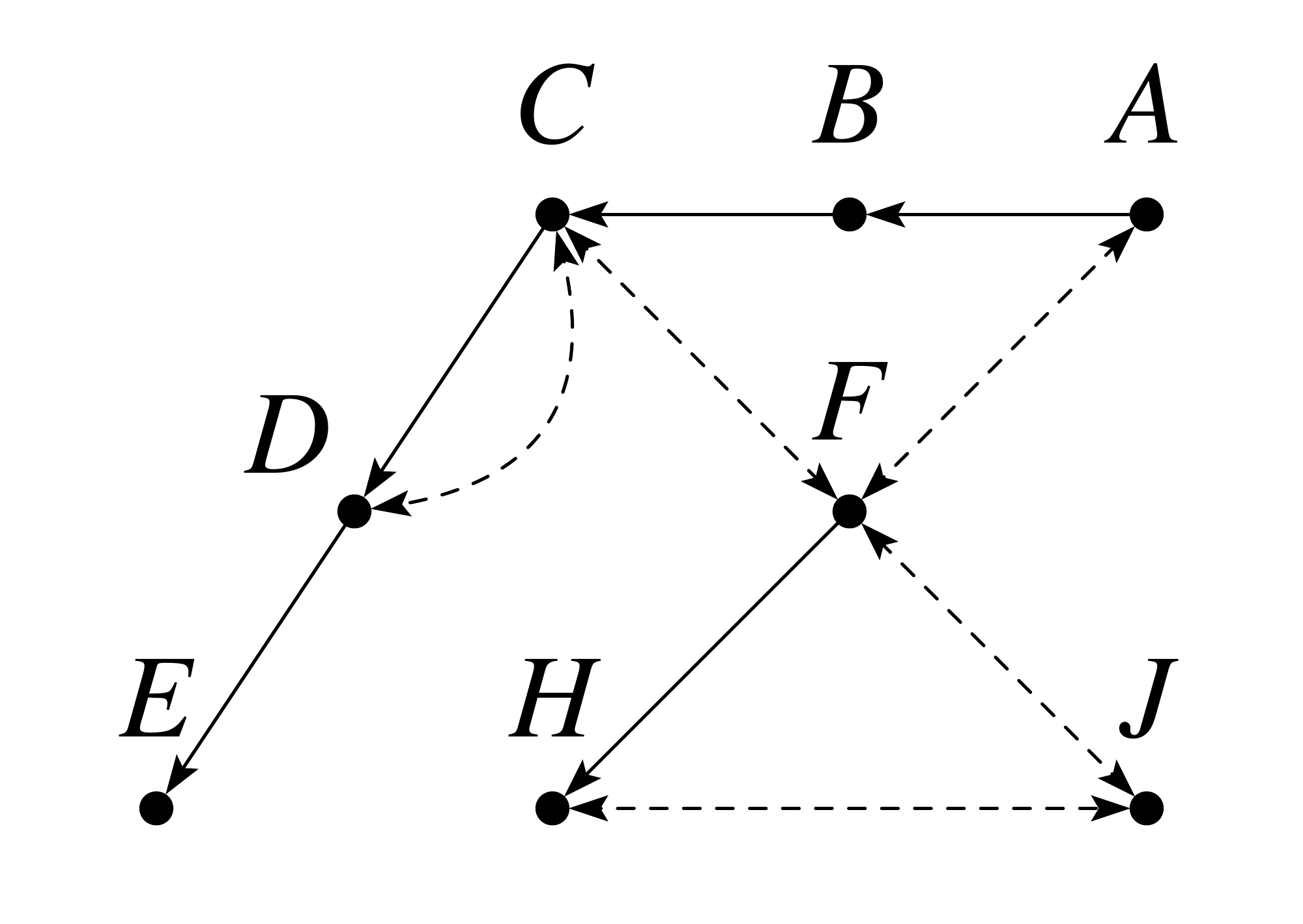}
        \caption{\(\G^3\)}
        \label{fig:listci}
    \end{subfigure}
\caption{
(a) An example showing that C-LMP may invoke an exponential number of CIs.
(b) A causal graph used to show the execution of \textsc{ListCI} in Ex.~\ref{ex:listci}.
}
\end{figure}

\section{Listing CIs}
\label{section:listci}

Our goal in this section is to develop an algorithm that lists CIs invoked by C-LMP. %
In the worst case, there may exist exponentially many such CIs, requiring exponential time to list them all.
In such cases, we look for algorithms that run in polynomial delay \cite{johnson:88}.
Poly-delay algorithms output the first solution (or indicate none is available) in poly-time, and take poly-time to output each consecutive solution.

However, not all CIs invoked by C-LMP are useful for model testing.
C-LMP invokes some `vacuous' CIs of the form \(X \indep \emptyset \mid \*Z\), which do not need testing.
Therefore, we constrain the problem by requiring that we list only \emph{non-vacuous} CIs, as defined below.

\begin{definition}[Vacuous CI and Admissible AC (AAC)]
\label{def:validci}
    Given a conditional independence relation invoked by C-LMP of the form \(X \indep \*W \mid \Pa{\*C} \setminus \{X\} \), where \(\*W = \*S^+ \setminus \Pa{\*C}\) (by Def.~\ref{def:lmpplus}), if \(\*W \neq \emptyset\), the conditional independence relation is said to be non-vacuous and \(\*C\) is said to be an admissible ancestral c-component relative to \(X\).
\end{definition}

\begin{example}
\label{ex:admissibleac}
    Consider the causal graph \(\G^3\) (Fig.~\ref{fig:listci}).
    The AC \(\{J\}\) relative to \(J\) is admissible.
    Given \(\*S^+ = \*V \setminus \{F,H\}\), we have \(\*W = \*S^+ \setminus \{J\} = \{A,B,C,D,E\}\).
    However, the AC \(\{F,J\}\) relative to \(J\) is not admissible.
    Since \(\*S^+ = \{F,J\}\), \(\*W = \*S^+ \setminus \{F,J\} = \emptyset\).
\qed
\end{example}

Listing only non-vacuous CIs is important since C-LMP may invoke exponentially many vacuous CIs.
To witness, consider a bidirected clique on \(n\) nodes such that no two variables are independent of each other given any conditioning set.
Every set \(\*Z \subseteq \*V \setminus \{X\}\) forms an AC, resulting in \(\Omega(2^n)\) vacuous CIs (see  Ex. D.3.1 in Appendix D.3 for details).

Our bounds on the number of CIs invoked by C-LMP are also tight for the number of non-vacuous CIs (Prop.~\ref{prop:lmpsize}). We develop the algorithm \textsc{ListCI} (Alg.~\ref{alg:listci}) to list all non-vacuous CIs invoked by C-LMP in poly-delay.%

\begin{algorithm}[t]
\caption{\textsc{ListCI} (\(\G, \*V^\prec\))}
\label{alg:listci}

\begin{algorithmic} [1]
    \State {\bfseries Input:} \(\G\) a causal diagram; \(\*V^\prec\) an ordering 
    consistent with \(\G\).
    
    \State {\bfseries Output:} Listing non-vacuous CIs invoked by C-LMP for \(\G\) with respect to \(\*V^\prec\).

    \State \textbf{for} {each \(X \in \*V^\prec\)} \textbf{do}
    \Indent
        \State \(\*I \gets \&C(X)_{\G_{\An{\{X\}}}}, \*R \gets \&C(X)_{\G_{\*V^{\leq X}}}\)   \label{alg:listci:i} \label{alg:listci:r}
        \State \(\textsc{ListCIX}(\G_{\*V^{\leq X}}, X, \*V^{\leq X}, \*I, \*R)\) \label{alg:listci:calllistcix}
    \EndIndent

\end{algorithmic}
\end{algorithm}

\begin{example}
\label{ex:listci}
    Consider the causal graph \(\G^3\) (Fig.~\ref{fig:listci}) with \(\*V^\prec = \{A,B,C,D,E,F,H,J\}\).
    \textsc{ListCI}\((\G^3,\*V^\prec)\) lists 11 non-vacuous CIs invoked by C-LMP:
    \(C \indep \{A\} \mid \{B\}\),
    \(D \indep \{A\} \mid \{B, C\}\),
    \(E \indep \{A, B, C\} \mid \{D\}\),
    \(F \indep \{B\} \mid \{A\}\),
    \(F \indep \{E\} \mid \{A, B, C, D\}\),
    \(H \indep \{A, B, C, D, E\} \mid \{F\}\),
    \(J \indep \{A, B, C, D, E\}\),
    \(J \indep \{B\} \mid \{A, F\}\),
    \(J \indep \{B\} \mid \{A, F, H\}\),
    \(J \indep \{E\} \mid \{A, B, C, D, F\}\),
    \(J \indep \{E\} \mid \{A, B, C, D, F, H\}\).
    After, \textsc{ListCI} terminates as there are no more non-vacuous CIs.
\qed
\end{example}

\subsection{Listing CIs for a Given Variable}

The algorithm \textsc{ListCI} iterates over each variable \(X \in \*V^{\prec}\) and lists all non-vacuous CIs invoked by C-LMP for \(X\).
By Defs.~\ref{def:lmpplus}, ~\ref{def:validci} and Thm.~\ref{thm:equivalence:clmpplus}, listing non-vacuous CIs reduces to enumerating AACs.
In this section, we show how to enumerate AACs relative to a given variable  \(X \in \*V^{\prec}\) using the procedure \textsc{ListCIX} (Alg.~\ref{func:listcix}).

\textsc{ListCIX} adopts a divide-and-conquer strategy similar to the algorithm of \cite{Takata2010}.
\textsc{ListCIX} implicitly constructs a binary search tree for \(X\) using a depth-first approach.
Tree nodes of the form \(\&N(\*I',\*R')\) represents the collection of all AACs \(\*C\) with \(\*I' \subseteq \*C \subseteq \*R'\).
The top-level call of \textsc{ListCIX}, at the root node \(\&N(\*I,\*R)\), represents all AACs \(\*C\) relative to \(X\).
This is due to the construction on line \ref{alg:listci:i} of \textsc{ListCI} so that \(\*I\) is contained in and \(\*R\) contains all possible AACs relative to \(X\).
Thus, the top-level call can generate all CIs for \(X\).

Subsequent recursive calls expand this tree by shrinking the range \(\*I' \subseteq \*R'\) one variable at a time.
One requirement of the poly-delay property is that each AAC should appear exactly once in the enumeration.
To expand the tree from \(\&N(\*I',\*R')\), \textsc{ListCIX}  constructs two `disjoint' children (lines~\ref{func:listcix:recursion:rprime}-\ref{func:listcix:recursion:iprime}: a chosen variable \(S \in \*V^{\prec}\) cannot be in any AAC from the left child but must be in every AAC from the right child.

As another requirement of the poly-delay property, we expand the tree from a node \(\&N(\*I',\*R')\) if and only if the expansion is guaranteed to produce a non-vacuous CI. 
Equivalently, there must exist at least one AAC \(\*C\) such that \(\*I' \subseteq \*C \subseteq \*R'\).
If there is no such \(\*C\), we prune the tree and back-track to the previous tree node.
To perform this check for the existence of an AAC in poly-time, \textsc{ListCIX} calls the function \textsc{FindAAC} (Alg.~\ref{func:findadmissiblec}). 
We explain \textsc{FindAAC} in the next subsection.

Finally, a leaf node is reached when \(\*I = \*R\). \textsc{ListCIX} outputs a non-vacuous CI generated from the AAC \(\*C = \*I\) using Def.~\ref{def:lmpplus}.

\begin{figure}[t]
    \centering
\includegraphics[width=.45\textwidth]{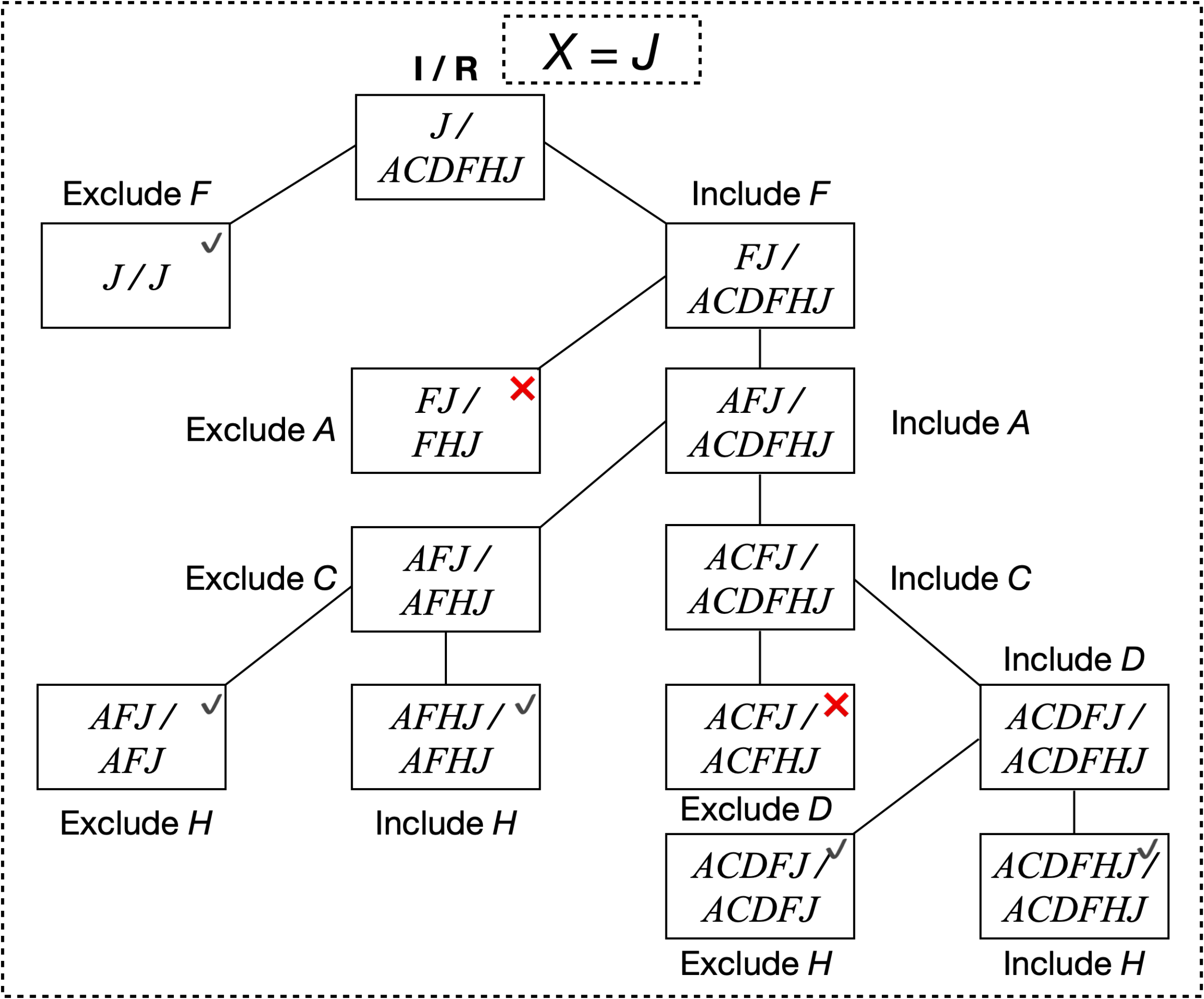}
\caption{
\(\&T^3\) a search tree illustrating the running of \textsc{ListCI} in Ex.~\ref{ex:listci} for \(X = J\).
}
\label{fig:listci:tree}
\end{figure}

\begin{example}
\label{ex:listcix:tree}
    Expanding Ex.~\ref{ex:listci} to demonstrate the construction of the search tree \(\&T^3\) (Fig.~\ref{fig:listci:tree}) generated by running \textsc{ListCI}(\(\G^3, \*V^\prec\)) for \(X = J\).
    With \(\*I = \{J\}\) and \(\*R = \{A,C,D,F,H,J\}\) constructed on line~\ref{alg:listci:i}, the initial search starts from the root node \(\&N(\*I,\*R)\) on line~\ref{alg:listci:calllistcix}.
    On line~\ref{func:listcix:callfindadmissiblec} of \textsc{ListCIX}, \textsc{FindAAC} returns \(\{J\}\).
    With \(S = F\) and \(\*R' = \{J\}\), the recursive call \textsc{ListCIX}\((\G^3,J,\*V^\prec,\*I,\*R')\) is made at line~\ref{func:listcix:recursion:rprime}, spawning a child \(\&N_1(\{J\},\{J\})\).
    The search continues from \(\&N_1\).
    \textsc{FindAAC} returns \(\{J\}\).
    \(\&N_1\) is a leaf node, and \textsc{ListCIX} outputs a CI: \(J \indep \{A, B, C, D, E\}\) on line~\ref{func:listcix:outputci}. The rest of the search tree for \(J\) is shown in \(\&T^3\).
    The full set of search trees is shown in Fig. D.3.1 in Appendix D.3.
\qed
\end{example}

\subsection{Finding an AAC}

In this section, we address the following subproblem, needed for \textsc{ListCIX} to run in poly-delay: given a variable \(X \in \*V^{\prec}\), and two ACs  \(\*I,\*R\) relative to \(X\), how do we find an AAC \(\*C\) such that \(\*I\subseteq \*C \subseteq \*R\) (or indicate that there is none) in poly-time?

The poly-time constraint on solving this subproblem rules out the brute-force approach: namely, iterating over all subsets \(\*C\) such that \(\*I \subseteq \*C \subseteq \*R\) until we find some \(\*C\) that is an AAC (or conclude that there is none).
We find that this exponential search is not necessary.
The key idea behind our solution is to reduce this search to a verification of whether a particular AAC is admissible.

To elaborate, assume there exists an AAC \(\*C\) such that \(\*I \subseteq \*C \subseteq \*R\). It is possible that \(\*C = \*I\), in which case we are done. Otherwise, consider what it means for  \(\*I\) not to be admissible. When conditioning on \(\Pa{\*I} \setminus \{X\}\), every variable in  \(\*V^{\leq X}\) has an active path to \(X\). This means that every such variable (outside the conditioning set) is a descendant of \(\Spo{\*I} \setminus \Pa{\*I}\)
(Def.~\ref{def:lmpplus}). Since \(\*C\) is admissible, then there must be some descendant of \(\Spo{\*I} \setminus \Pa{\*I}\), say \(D\), which has an active path to \(X\) when conditioning on \(\Pa{\*I} \setminus \{X\}\) but not when conditioning on \(\Pa{\*C} \setminus \{X\}\). 
We show that the AAC \(\*C\) can exist if and only if there exists \emph{any} set \(\*Z\) separating \(X\) from some such \(D\), with the restriction that \(\Pa{\*I}  \setminus \{X,D\} \subseteq \*Z \subseteq \Pa{\*R} \setminus \{X,D\}\).  
\(\*Z\) need not be an AAC.
We can check if such \(\*Z\) exists (line ~\ref{func:findadmissiblec:findseparator}) in poly-time using the function \textsc{FindSeparator} (Fig. C.2.2 in Appendix C.2).

\begin{example}
\label{ex:findadmissiblec}
    Expanding Ex.~\ref{ex:listcix:tree} to illustrate the usage of \textsc{FindAAC}.
    Let \(X = J\), \(\*V^{\prec} = \*V^{\leq J}\), \(\*I = \{J\}\), and \(\*R = \{A,C,D,F,H,J\}\).
    \textsc{FindAAC}(\(\G^3, J, \*V^{\leq J}, \*I, \*R\)) returns \(\*C = \{J\}\) since there exists an AAC \(\*C\) relative to \(J\) with \(\*I \subseteq \*C \subseteq \*R\).
    With \(\*I = \{F,J\}\) and \(\*R = \{F, H, J\}\), \textsc{FindAAC}(\(\G^3, J, \*V^{\leq J}, \*I, \*R\)) returns \(\perp\) since none of the ACs \(\*C\) relative to \(J\) with \(\*I \subseteq \*C \subseteq \*R\) are admissible.
\qed
\end{example}

\begin{algorithm}[t]
\caption{\textsc{ListCIX} (\(\G_{\*V^{\leq X}}, X, \*V^{\leq X}, \*I, \*R\))}
\label{func:listcix}

\begin{algorithmic} [1]
    \State {\bfseries Input:} \(\G_{\*V^{\leq X}}\) a causal diagram; \(X\) a variable; \(\*V^{\leq X}\) an ordering consistent with \(\G\); \(\*I\) and \(\*R\) ACs relative to \(X\).
    
    \State {\bfseries Output:} Listing non-vacuous CIs invoked by C-LMP associated with \(X\) and AACs \(\*C\) under the constraint \(\*I \subseteq \*C \subseteq \*R\).

    \State \textbf{if} {\(\textsc{FindAAC}(\G_{\*V^{\leq X}}, X, \*V^{\leq X}, \*I, \*R) \neq \perp\)}  \textbf{then} \label{func:listcix:callfindadmissiblec}
    \Indent
        \State \textbf{if} {\(\*I = \*R\)} \textbf{then}    \label{func:listcix:isleafnode}
        \Indent
            \State \(\*S^+ \gets \*V^{\leq X} \setminus \De{ \Spo{\*{I}} \setminus \Pa{\*I} } \)
            \State Output \(X \indep \*S^+ \setminus \Pa{\*I} \mid \Pa{\*I} \setminus \{X\}\) \label{func:listcix:outputci}
            \State \textbf{return}
        \EndIndent
        
        \State \(\*T \gets \*R \cap (\Spo{\*I} \setminus \*I), S \gets\) Any node in \(\*T\) \label{func:listcix:candidates}

        \State \(\*I' \gets \&C(X)_{\G_{\An{\*I \cup \{S\}} }},\*R' \gets \&C(X)_{\G_{\*R \setminus \De{\{S\}}}}\)    \label{func:listcix:iprime} \label{func:listcix:rprime}

        \State \(\textsc{ListCIX}(\G_{\*V^{\leq X}}, X, \*V^{\leq X}, \*I, \*R')\) \label{func:listcix:recursion:rprime}
        \State \(\textsc{ListCIX}(\G_{\*V^{\leq X}}, X, \*V^{\leq X}, \*I', \*R)\) \label{func:listcix:recursion:iprime}
    \EndIndent
\end{algorithmic}
\end{algorithm}

\begin{lemma}[Correctness of \textsc{FindAAC}]
\label{lemma:FindAAC:correctness}
    Given a causal graph \(\G\), a consistent ordering \(\*V^\prec\), and a variable \(X \in \*V^\prec\), let \(\*I,\*R\) be ancestral c-components relative to \(X\) such that \(\*I \subseteq \*R\). \textsc{FindAAC}(\(\G_{\*V^{\leq X}}, X, \*V^{\leq X}, \*I, \*R\)) outputs an admissible ancestral c-component \(\*{C}\) relative to \(X\) such that \(\*{I} \subseteq \*C \subseteq \*R\) if such a \(\*{C}\) exists, and \(\perp\) otherwise.
\end{lemma}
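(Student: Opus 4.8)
The plan is to establish both \emph{soundness} (any non-\(\perp\) output of \textsc{FindAAC} is a genuine AAC with \(\*I \subseteq \*C \subseteq \*R\)) and \emph{completeness} (whenever some AAC in this range exists, the output is non-\(\perp\)), with both resting on a single characterization that reduces the \emph{existence} of an AAC to the existence of a range-constrained \(d\)-separator decided by \textsc{FindSeparator}. First I would dispose of the easy case: if \(\*I\) is itself admissible, the initial admissibility test (evaluating \(\*S^+ \setminus \Pa{\*I}\) as in Def.~\ref{def:validci}) succeeds and returning \(\*C = \*I\) is correct since \(\*I \subseteq \*I \subseteq \*R\). The substance is the case where \(\*I\) is \emph{not} admissible, which I handle via the following key equivalence (Claim): there is an AAC \(\*C\) with \(\*I \subseteq \*C \subseteq \*R\) if and only if there is a vertex \(D \in \De{\Spo{\*I} \setminus \Pa{\*I}} \cap \*V^{\leq X}\) and a set \(\*Z\) with \(\Pa{\*I} \setminus \{X,D\} \subseteq \*Z \subseteq \Pa{\*R} \setminus \{X,D\}\) such that \(X \perp_d D \mid \*Z\) in \(\G_{\*V^{\leq X}}\).

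For the forward direction of the Claim I would take any admissibility witness \(D\) of \(\*C\), i.e.\ \(D \in \*S^+ \setminus \Pa{\*C}\) with \(\*S^+ = \*V^{\leq X} \setminus \De{\Spo{\*C} \setminus \Pa{\*C}}\), and set \(\*Z = \Pa{\*C} \setminus \{X\}\). By the validity of the C-LMP constraint (Def.~\ref{def:lmpplus}, established through Thm.~\ref{thm:equivalence:gmp:lmpplus}), \(\Pa{\*C} \setminus \{X\}\) \(d\)-separates \(X\) from \(\*S^+ \setminus \Pa{\*C}\), so \(X \perp_d D \mid \*Z\); since \(D \notin \Pa{\*C}\) we have \(\*Z = \Pa{\*C} \setminus \{X,D\}\). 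The range constraint is then immediate from monotonicity of \(\Pa{\cdot}\) under \(\*I \subseteq \*C \subseteq \*R\), and \(D \in \De{\Spo{\*I} \setminus \Pa{\*I}}\) follows because the non-admissibility of \(\*I\) forces \(\*V^{\leq X} \setminus \Pa{\*I} \subseteq \De{\Spo{\*I} \setminus \Pa{\*I}}\), and \(D\) lies in the left-hand set.

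The reverse direction is where the real work lies and is the main obstacle. Given a witness \(D\) and a separator \(\*Z\) that need \emph{not} be the parent set of any AC, I must manufacture an actual AAC \(\*C\) in range. The plan is to take a canonical ancestral c-component \(\*C = \&{C}(X)_{\G_{\*S}}\) for an ancestral set \(\*S \subseteq \*V^{\leq X}\) assembled from \(X\) and the portion of \(\*Z\) lying between \(X\) and \(D\), and then to avoid reproving a separation statement directly: since C-LMP already guarantees that \(\Pa{\*C} \setminus \{X\}\) \(d\)-separates \(X\) from \emph{all} of \(\*S^+ \setminus \Pa{\*C}\), it suffices to show \(D \in \*S^+ \setminus \Pa{\*C}\), i.e.\ \(D \notin \Pa{\*C}\) and \(D \notin \De{\Spo{\*C} \setminus \Pa{\*C}}\). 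Both should be read off from \(X \perp_d D \mid \*Z\): any directed path witnessing \(D \in \Pa{\*C}\), or any active bidirected route witnessing \(D \in \De{\Spo{\*C} \setminus \Pa{\*C}}\), would reopen a path from \(X\) to \(D\) given \(\*Z\), contradicting the separation — provided \(\*S\) is chosen so that \(\Pa{\*C} \setminus \{X\} \subseteq \*Z\). The delicate points are (i) verifying \(\*I \subseteq \*C \subseteq \*R\), the lower bound using that \(\*I\) is already the \(X\)-c-component of its own inducing ancestral set together with \(\*Z \supseteq \Pa{\*I} \setminus \{X,D\}\), and the upper bound using that \(\*R\) is an AC and \(\*Z \subseteq \Pa{\*R} \setminus \{X,D\}\) bounds the growth of \(\*C\); and (ii) confirming \(\Pa{\*C} \setminus \{X\} \subseteq \*Z\) so that the separation transfers. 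Securing this containment while keeping \(\*C\) an AC within the interval \([\*I,\*R]\) is the technical crux.

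Finally I would assemble the lemma. Soundness holds because the returned set is, by the reverse-direction construction applied to the separator actually produced by \textsc{FindSeparator}, an AC with \(\*I \subseteq \*C \subseteq \*R\) in which \(D\) witnesses admissibility. Completeness holds because if any AAC in range exists, the forward direction yields a witness \(D\) and an admissible separator in the stated range, which \textsc{FindSeparator} is guaranteed to find, so \textsc{FindAAC} returns a set rather than \(\perp\). Throughout, I invoke the correctness of \textsc{FindSeparator} (returning a range-constrained \(d\)-separator exactly when one exists) as a black box.
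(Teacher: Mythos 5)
Your skeleton matches the paper's proof closely: dispose of the case where \(\*I\) is admissible via the Def.~\ref{def:validci} check, reduce the existence of an in-range AAC to the existence of a witness \(D \in \De{\Spo{\*I} \setminus \Pa{\*I}}\) together with a separator \(\*Z\) satisfying \(\Pa{\*I}\setminus\{X,D\} \subseteq \*Z \subseteq \Pa{\*R}\setminus\{X,D\}\), treat \textsc{FindSeparator} as a black box (justified by Lemma~\ref{lemma:findsep}), and realize the output as the c-component of \(X\) in an ancestral closure of \(\*I \cup \*Z\). Your forward direction is essentially the paper's, taking \(\*Z = \Pa{\*C}\setminus\{X\}\) for the given AAC \(\*C\) (one small repair: the fact you need is the \(d\)-separation counterpart of C-LMP, Prop.~\ref{adxprop:clmpsep}, not Thm.~\ref{thm:equivalence:gmp:lmpplus}, which is a statement about distributions). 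The genuine gap is the reverse direction, which you correctly flag as the crux but do not prove, and the bridge you propose fails as stated. You want to choose the ancestral set \(\*S\) so that \(\Pa{\*C}\setminus\{X\} \subseteq \*Z\), then transfer \(X \perp_d D \mid \*Z\) into the C-LMP separation for \(\*C\). But this containment is not a free design choice: it can be \emph{unachievable} for a legitimate range-constrained separator. Take \(\G\) with edges \(X \leftrightarrow A\), \(X \leftrightarrow E\), \(E \to B\), \(B \to A\), \(A \to D\), ordering \(E \prec B \prec A \prec D \prec X\). Here \(\*I = \{X\}\) is not admissible, \(\*R = \{X,A,E\}\), and \(\*Z = \{A\}\) separates \(X\) from \(D\) with \(\Pa{\*I}\setminus\{X,D\} = \emptyset \subseteq \*Z \subseteq \Pa{\*R}\setminus\{X,D\} = \{A,B,E\}\). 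Yet the only admissible ACs in range are \(\{X,E\}\) and \(\{X,A,E\}\), with \(\Pa{\cdot}\setminus\{X\}\) equal to \(\{E\}\) and \(\{A,B,E\}\) respectively --- neither is contained in \(\*Z\). So no choice of \(\*S\) secures your ``provided'' clause, and the reverse implication cannot be obtained by transferring the separation in the way you describe.

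The paper closes exactly this hole with a direct argument you would need to supply: it sets \(\*C = \&{C}(X)_{\G_{\An{\*I \cup \*Z}}}\) (verifying \(\*I \subseteq \*C \subseteq \*R\) as you do) and proves admissibility by contradiction, showing that \(D \in \Pa{\*C} \cup \De{\Spo{\*C}\setminus\Pa{\*C}}\) would force an active \(X\)--\(D\) path given \(\*Z\). Since \(\*C \subseteq \An{\*Z \cup \{X,D\}}\) (using \(\*I\setminus\{X,D\} \subseteq \*Z\)), every node on the offending bidirected path is an ancestor of \(\*Z\), of \(D\), or of \(X\), and an induction along the path handles the three cases separately --- the \(\An{\{D\}}\) case activates colliders because \(\*Z\) must block the descending route to \(D\), and the \(\An{\{X\}}\) case either activates the collider or reroutes the active path through \(X \leftsquigarrow V_{i+1}\); a final interception argument rules out \(D \in \De{\Spo{\*C}\setminus\Pa{\*C}}\) because an \(A \in \Spo{\*C} \cap \An{\*Z}\) would already lie in \(\*C\). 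Your plan could alternatively be salvaged by exploiting the \emph{specific} separator \textsc{FindSeparator} returns, \(\*Z = \An{\{X,D\} \cup \Pa{\*I}} \cap (\Pa{\*R}\setminus\{X,D\})\): for this ancestrally generated \(\*Z\) one can show \(\Pa{\*C}\setminus\{X,D\} \subseteq \*Z\), after which all intermediate colliders are trivially active and only \(D \notin \Pa{\*C}\) needs a short separate argument. But that verification is precisely the missing technical content, not a choice of \(\*S\); as submitted, the proposal establishes the easy direction and the scaffolding while leaving the admissibility of the constructed \(\*C\) --- the heart of the lemma --- unproven.
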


\begin{lemma}[Correctness of \textsc{ListCIX}]
\label{lemma:listcix}
    \textsc{ListCIX} (\(\G_{\*V^{\leq X}}, X, \*V^{\leq X}, \*I, \*R\)) enumerates all and only all non-vacuous conditional independence relations invoked by the c-component local Markov property associated with \(X\) and admissible ancestral c-components \(\*C\) relative to \(X\) where \(\*I \subseteq \*C \subseteq \*R\).
    Further, \textsc{ListCIX} runs in \(O(n^2(n+m))\) delay where \(n\) and \(m\) represent the number of nodes and edges in \(\G\), respectively.
\end{lemma}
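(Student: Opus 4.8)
My plan is to establish the two claims---exhaustive, non-redundant enumeration and the \(O(n^2(n+m))\) delay---by induction on the binary recursion tree \(\&T\) that \textsc{ListCIX} implicitly builds, where each node is a call \(\&N(\*I,\*R)=\textsc{ListCIX}(\G_{\*V^{\leq X}},X,\*V^{\leq X},\*I,\*R)\). The running invariant I would carry through the induction is that at every node \(\*I\) and \(\*R\) are ACs relative to \(X\) with \(\*I\subseteq\*R\); this legitimizes the \textsc{FindAAC} guard (Lemma~\ref{lemma:FindAAC:correctness}) at each node and lets me identify a node's ``responsibility'' with the set of AACs \(\*C\) satisfying \(\*I\subseteq\*C\subseteq\*R\).

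For correctness I would prove soundness, completeness, and uniqueness. Soundness is immediate: a CI is emitted only at a leaf \(\*I=\*R\) reached past the \textsc{FindAAC} check, so \(\*I\) is a genuine AAC and, by Thm.~\ref{thm:equivalence:clmpplus} together with Defs.~\ref{def:lmpplus} and~\ref{def:validci}, the emitted relation is exactly the unique non-vacuous CI that \(\*I\) induces. Completeness and uniqueness both follow from a single \emph{partition lemma}. Fixing the split variable \(S\in\*T=\*R\cap(\Spo{\*I}\setminus\*I)\), I claim that every AAC \(\*C\) with \(\*I\subseteq\*C\subseteq\*R\) satisfies exactly one of: (i) \(S\notin\*C\), equivalently \(\*C\subseteq\*R'\) where \(\*R'=\&C(X)_{\G_{\*R\setminus\De{\{S\}}}}\); or (ii) \(S\in\*C\), equivalently \(\*I'\subseteq\*C\) where \(\*I'=\&C(X)_{\G_{\An{\*I\cup\{S\}}}}\). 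Since \(S\notin\*R'\) while \(S\in\*I'\), the two child ranges are disjoint (uniqueness) and, being exhaustive, cover every AAC of the parent (completeness); induction over \(\&T\) then shows each AAC of the root---hence each non-vacuous CI for \(X\)---surfaces at exactly one leaf. The containments rely on monotonicity of \(X\)'s c-component under induced subgraphs, and I would reuse this step to confirm that \(\*R'\) and \(\*I'\) are again ACs with \(\*I\subseteq\*R'\) and \(\*I'\subseteq\*R\), closing the induction on the invariant.

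For the delay bound I would first note that every recursive step strictly decreases \(|\*R\setminus\*I|\): the left child removes \(S\) from \(\*R\) (as \(S\in\*R\setminus\*I\) but \(S\notin\*R'\)) and the right child adds \(S\) to \(\*I\); hence \(\&T\) has depth \(O(n)\). The key to poly-delay is the \textsc{FindAAC} guard at line~\ref{func:listcix:callfindadmissiblec}: any call whose range holds no AAC returns at once, so every node from which \textsc{ListCIX} recurses is productive, and by the partition lemma at least one of its children is productive too. Thus between two consecutive outputs the depth-first traversal climbs and descends paths of total length \(O(n)\), meeting at most one unproductive child per level (each costing a single \textsc{FindAAC} call returning \(\perp\)). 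With per-node work dominated by one \textsc{FindAAC} call---\(O(n)\) separator queries via \textsc{FindSeparator}, each \(O(n+m)\)---plus the \(O(n+m)\) ancestor/descendant/spouse/c-component computations building \(\*I'\) and \(\*R'\), the per-node cost is \(O(n(n+m))\); multiplying by the \(O(n)\) nodes between outputs yields the claimed \(O(n^2(n+m))\) delay.

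I expect the exhaustiveness half of the partition lemma to be the main obstacle, and the place where the choice \(S\in\Spo{\*I}\setminus\*I\) is indispensable. Direction (ii) and the inclusion \(S\in\*I'\) are comparatively routine, since \(S\) is a spouse of some \(W\in\*I\subseteq\*I'\) and the bidirected paths witnessing \(\*I\)'s connectivity to \(X\) stay inside \(\*I\subseteq\An{\*I\cup\{S\}}\). The subtle point is ruling out an AAC \(\*C\) that contains a descendant of \(S\) but not \(S\) itself, which would escape both children. Here I would argue that if \(\*C=\&C(X)_{\G_{\*S'}}\) for an ancestral inducing set \(\*S'\) and \(\*C\) meets \(\De{\{S\}}\), then ancestral closure forces \(S\in\*S'\); since \(S\leftrightarrow W\) with \(W\in\*I\subseteq\*C\) and this bidirected edge survives in \(\G_{\*S'}\), the node \(S\) joins \(X\)'s c-component, i.e., \(S\in\*C\). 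Thus ``contains a descendant of \(S\)'' collapses to ``contains \(S\),'' making (i)--(ii) genuinely complementary. Nailing this connectivity argument precisely---and double-checking that the re-extracted c-components \(\*R'\), \(\*I'\) retain ancestral-c-component status---is where the real care is needed.
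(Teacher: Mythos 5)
Your proposal is correct and takes essentially the same route as the paper: a structural induction on the binary recursion tree whose core is your partition lemma, which coincides exactly with the paper's identity that the parent's collection of AACs is the disjoint union of the two children's collections (disjoint because \(S \in \*I'\) while \(S \notin \*R'\)), together with the same delay accounting of \(O(n)\) \textsc{FindAAC} calls, each costing \(O(n(n+m))\), between consecutive outputs. The two subtleties you flag are precisely where the paper invests its care: your ancestral-closure argument that an AAC meeting \(\De{\{S\}}\) must contain \(S\) spells out the paper's terse claim that \(S \notin \*C\) forces \(\De{\{S\}} \cap \*C = \emptyset\), and the ancestral-c-component status of \(\*R'\) is established separately in the paper as Prop.~\ref{prop:accdesc} (Ancestrality of Modified ACs).
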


Our results are summarized in the following theorem, which provides the soundness, completeness, and poly-delay complexity of the proposed algorithm. 

\begin{theorem}[Correctness of \textsc{ListCI}]
\label{thm:listci}
    Let \(\G\) be a causal graph and \(\*V^\prec\) a consistent ordering.
    \textsc{ListCI}(\(\G, \*V^\prec\)) enumerates all and only all non-vacuous conditional independence relations invoked by the c-component local Markov property in \(O(n^2(n+m))\) delay where \(n\) and \(m\) represent the number of nodes and edges in \(\G\), respectively.
\end{theorem}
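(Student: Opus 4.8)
The plan is to reduce the theorem to the correctness and delay guarantees of \textsc{ListCIX} (Lemma~\ref{lemma:listcix}), and then to account for the outer loop over variables and the transitions between iterations. The first step is to show that the two ancestral c-components $\*I = \&C(X)_{\G_{\An{\{X\}}}}$ and $\*R = \&C(X)_{\G_{\*V^{\leq X}}}$ built on line~\ref{alg:listci:i} \emph{bracket} every ancestral c-component relative to $X$. The key observation is monotonicity of the c-component of $X$ under induced subgraphs: if $\*S \subseteq \*S'$ then $\&C(X)_{\G_{\*S}} \subseteq \&C(X)_{\G_{\*S'}}$, since any bidirected path witnessing membership in the smaller subgraph survives in the larger one. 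As $\An{\{X\}}$ is the minimal ancestral set containing $X$ and $\*V^{\leq X}$ is the maximal ancestral set inside $\*V^{\leq X}$, Def.~\ref{def:ancestralccomponent} gives that every $\*C \in \&{AC}_X$ is induced by some ancestral $\*S$ with $\An{\{X\}} \subseteq \*S \subseteq \*V^{\leq X}$, hence $\*I \subseteq \*C \subseteq \*R$. Since $\*I$ and $\*R$ are themselves ACs relative to $X$ (witnessed by $\An{\{X\}}$ and $\*V^{\leq X}$, the latter being ancestral under a consistent ordering) with $\*I \subseteq \*R$, they are valid inputs to \textsc{ListCIX}, and the AACs $\*C$ satisfying $\*I \subseteq \*C \subseteq \*R$ are exactly \emph{all} AACs relative to $X$.

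Soundness and completeness then follow. By Lemma~\ref{lemma:listcix}, the call on line~\ref{alg:listci:calllistcix} enumerates all and only the non-vacuous CIs invoked by C-LMP associated with $X$, each exactly once, where uniqueness within the iteration uses Thm.~\ref{thm:equivalence:clmpplus} (one AAC per CI). Ranging over $X \in \*V^\prec$, I would argue there is no cross-iteration duplication: each non-vacuous CI has the form $X \indep \*W \mid \*Z$ with a distinguished variable $X$ before the conditioning bar, so it is produced only in the iteration for that $X$. Hence \textsc{ListCI} outputs precisely the set of all non-vacuous CIs invoked by C-LMP, without repeats.

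The delay analysis is the main obstacle and requires care at the boundaries between iterations. Within a single iteration, Lemma~\ref{lemma:listcix} already provides $O(n^2(n+m))$ delay. The difficulty is bounding the gap between the last output of one iteration and the first output of a later one when the intervening variables produce \emph{no} non-vacuous CI. Here I would use the bracketing above together with Lemma~\ref{lemma:FindAAC:correctness}: a variable $X$ yields no non-vacuous CI (Def.~\ref{def:validci}) exactly when no AAC relative to $X$ exists, i.e., when the top-level \textsc{FindAAC} call on line~\ref{func:listcix:callfindadmissiblec} returns $\perp$; in that case \textsc{ListCIX} returns after a single \textsc{FindAAC} call, costing $O(n(n+m))$, plus the $O(n+m)$ setup on line~\ref{alg:listci:i}. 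As there are at most $n$ such ``empty'' variables, their total contribution to any single gap is $O(n \cdot n(n+m)) = O(n^2(n+m))$. Combining this with the tail of the preceding non-empty iteration and the head of the following one (each $O(n^2(n+m))$ by Lemma~\ref{lemma:listcix}), every inter-output gap---as well as the time to the first output and from the last output to termination---is $O(n^2(n+m))$. This yields the claimed polynomial delay and completes the proof.
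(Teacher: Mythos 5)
Your proof is correct and follows essentially the same route as the paper's: it reduces the theorem to Lemma~\ref{lemma:listcix} after establishing that \(\*I\) and \(\*R\) bracket every AC relative to \(X\) (so the per-variable calls cover exactly all AACs), and it bounds the delay by charging each ``empty'' variable a single \(O(n(n+m))\) top-level \textsc{FindAAC} call, with at most \(n\) such variables between consecutive outputs. If anything, your uniform inter-output gap analysis is slightly more complete than the paper's, which argues the same bound by working through two representative worst-case scenarios rather than bounding an arbitrary gap.
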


\section{Experiments}
\label{section:experiments}

In this section, we first demonstrate the runtime of \textsc{ListCI} on benchmark DAGs of up to 100 nodes from the \texttt{bnlearn} repository \cite{scutari2010learning}.
Next, we apply \textsc{ListCI} to model testing on a real-world protein signaling dataset with an expert-provided graph \cite{sachs2005causal}.
Third, we provide analysis of the total number of non-vacuous CIs invoked by C-LMP, using \textsc{ListCI} for the analysis.
The details of the three experiments are shown in Appendix F.
\paragraph{Experiment 1 (Comparison of \textsc{ListCI} with other algorithms).}

We compare the runtime of \textsc{ListCI} with two baselines: \textsc{ListGMP} (Fig.~E.0.1 in Appendix E) and \textsc{ListCIBF} (Alg.~B.1.1 in Appendix~B.1)\footnote{Our implementation of \textsc{ListCIBF} can be improved by generating ancestral sets more efficiently. Regardless, we know \textsc{ListCI} performs better in theory (Sec.~\ref{sec:reformulation}), and have strong evidence that it is also better in practice to this more efficient implementation of \textsc{ListCIBF}.}.
\textsc{ListGMP} lists all CIs invoked by GMP (Def.~\ref{def:gmp}); \textsc{ListCIBF} iterates over ancestral sets to list CIs invoked by the ordered local Markov property \cite{richardson2003markov}.
The algorithms were run on DAGs that describe real-world scenarios from the \texttt{bnlearn} repository. 
Since the graphs are Markovian, non-Markovian graphs were generated by randomly assigning \(U\%\) of nodes to be unobserved for \(U \in \{0,10,20,\dots,90\}\). 
For each \(U\), we generated 10 random samples.
For a given graph, algorithm, and \(U\), if any one sample timed out (\(>\) 1 hour), no further samples are tested.
Fig.~\ref{fig:experiments:results} shows the average runtime of the algorithms, with further details in Fig.~F.1.1.

\begin{algorithm}[t]
\caption{\textsc{FindAAC} (\(\G_{\*V^{\leq X}}, X, \*V^{\leq X}, \*I, \*R\))}
\label{func:findadmissiblec}

\begin{algorithmic} [1]
    \State {\bfseries Input:} \(\G_{\*V^{\leq X}}\) a causal diagram; \(X\) a variable; \(\*V^{\leq X}\) an ordering consistent with \(\G\); \(\*I\) and \(\*R\) ACs relative to \(X\).
    
    \State {\bfseries Output:} An AAC \(\*C\) relative to \(X\) under the constraint \(\*I \subseteq \*C \subseteq \*R\), if such \(\*C\) exists; \(\perp\) otherwise.

    \State \textbf{if} {\textsc{IsAdmissible}(\(\G_{\*V^{\leq X}}, X, \*V^{\leq X}, \*{I}\))} \textbf{then} \label{func:FindAAC:callisadmissible}
    \Indent
        \State \textbf{return} \(\*{I}\)\label{func:FindAAC:returnI}
    \EndIndent
    \State \textbf{for} {each \(D \in \De{\Spo{\*{I}} \setminus \Pa{\*{I}}}\)} \textbf{do}  \label{func:FindAAC:foreachd}
    \Indent
       
        \State \(\*Z \gets \textsc{FindSeparator}(\G_{\*{V}^{\leq X}}, \{X\}, \{D\},\) \label{func:findadmissiblec:findseparator}
        \Indent
            \Indent
                \Indent
                    \(Pa(\*{I}), Pa(\*{R}))\)
                \EndIndent
            \EndIndent
        \EndIndent

        \State \textbf{if} {\(\*Z \neq \perp\)} \textbf{then}
        \Indent
            \State \textbf{return} \(\&{C}(X)_{\G_{\An{\*{I} \cup \*Z}}}\)\label{func:FindAAC:returnCZ}
        \EndIndent
    \EndIndent

    \State \textbf{return} \(\perp\)
\end{algorithmic}
\end{algorithm}

The results corroborate our theoretical conclusion that \textsc{ListCI} outperforms the other algorithms.
For \textsc{ListGMP}, the algorithm did not timeout on graphs with \(n < 10\) nodes.
For \textsc{ListCIBF}, we have mixed results.
The algorithm did not time out for some graphs with up to \(n = 35\) nodes, but there were other graphs with \(n=25\) where the algorithm did time out.
For \textsc{ListCI}, the algorithm did not timeout for many graphs up to \(n = 80\), but did time out for some graphs with \(n = 70\).

\paragraph{Experiment 2 (Application to model testing). }

A real-world protein signaling dataset \cite{sachs2005causal} has been used to benchmark causal discovery methods \cite{cundy:2021, zantedeschi:2023}.
The dataset (853 samples) comes with an expert-provided ground-truth DAG (11 nodes, 16 edges).
Using \textsc{ListCI}, we test to what extent this graph is compatible with the available data.
We use a kernel-based CI test from the \texttt{causal-learn} package
\cite{zheng2024causal} with p-value \(p = 0.05\).

For our chosen topological order, seven out of ten CIs invoked by C-LMP resulted in \(p > 0.05\). This suggests the ground-truth DAG may need revision before use as a benchmark for structure learning.
The exact local CIs that are violated may guide experts in this revision process.

\begin{figure}[t]
    \centering
    \includegraphics[width=.48\textwidth]{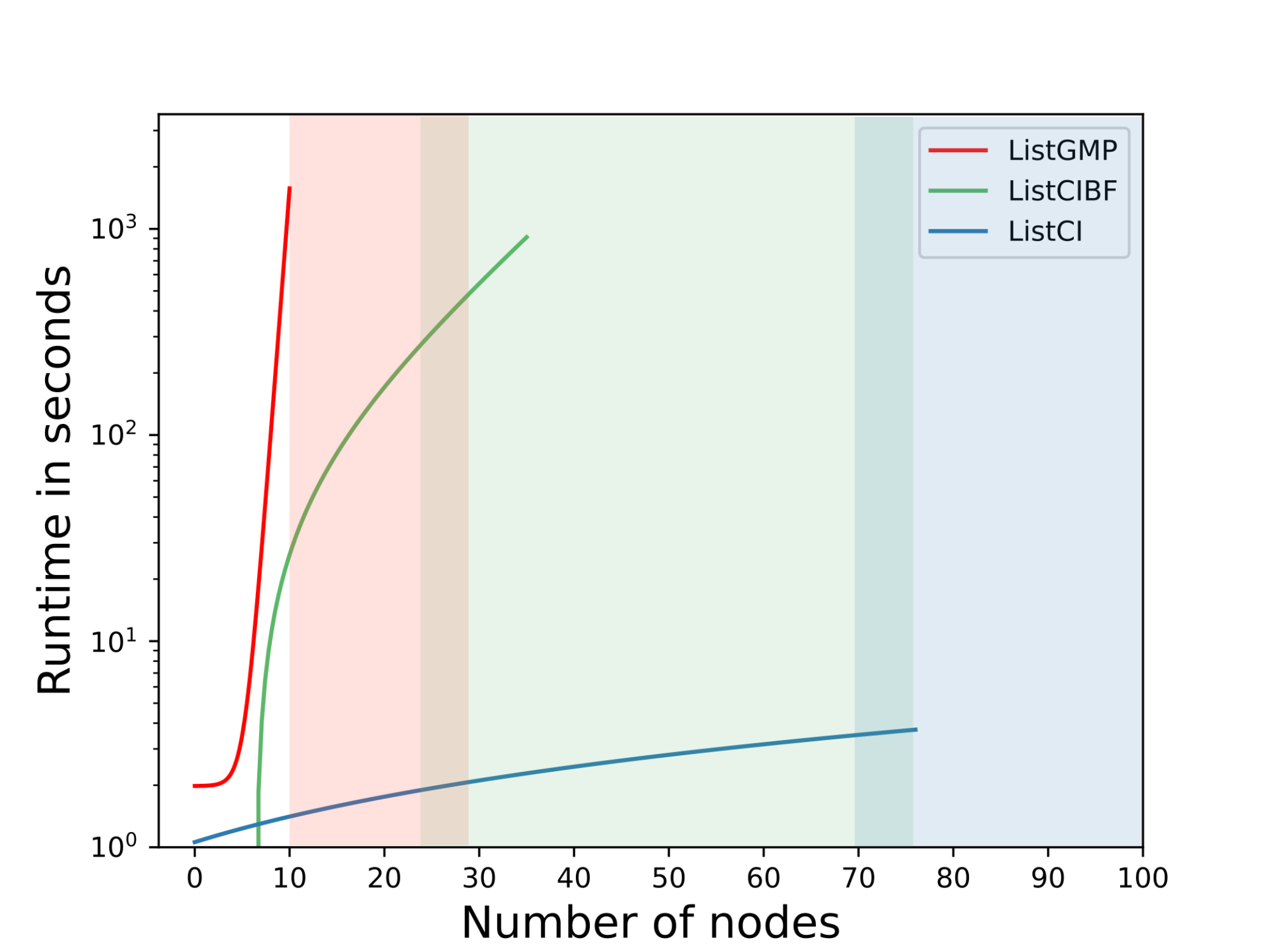}
    
\caption{
Plot of runtimes of the algorithms \textsc{ListGMP}, \textsc{ListCIBF}, and \textsc{ListCI} on graphs of various sizes.
A colored box indicates the interval of \(n\) on which the relevant algorithm has timed out on some graphs with \(n\) nodes.
The y-axis uses a logarithmic scale.
}
\label{fig:experiments:results}
\end{figure}

\paragraph{Experiment 3 (Analysis of C-LMP).}

We use \textsc{ListCI} to understand the total number of non-vacuous CIs invoked by C-LMP.
Let \(\*{CI}\) denote this number.
\(\*{CI}\) is also the number of CIs that need to be tested from a given semi-Markovian causal DAG.
Based on experiments with random graphs shown in Appendix~F.3, we conclude that the graph topology associated with c-components plays a major role in \(\*{CI}\).
More specifically, two factors related to c-components are of primary interest:
\begin{enumerate}
    \item \(s \leq n\): the size of the largest c-component, and
    \item The sparsity of c-components, a proxy for which is the number of bidirected edges.
\end{enumerate}

As we add bidirected edges, while c-components are sparse, \(\*{CI}\) increases exponentially with \(s\), as given by the bound \(O(n 2^s)\).
As c-components become more dense, \(\*{CI}\) decays exponentially with the number of bidirected edges.
As an illustrative example, please refer to Fig.~F.3.1 and the discussion on Case 1 in Appendix~F.3.

\section{Conclusions}
\label{section:conclusion}
In this paper, we introduced a new conditional independence property for causal models with unobserved confounders, namely, \textit{the c-component local Markov property} (C-LMP, Def.~\ref{def:lmpplus}).
Given a DAG \(\G\), C-LMP identifies a small subset of conditional independence constraints (CIs) that together imply all other CIs encoded in \(\G\).
We showed that C-LMP is equivalent to the global Markov property (Thm.~\ref{thm:equivalence:gmp:lmpplus}), and that each CI that C-LMP invokes can be generated from a unique ancestral c-component (Thm.~\ref{thm:equivalence:clmpplus}).
Building on this foundation, we developed the first algorithm \textsc{ListCI} (Alg.~\ref{alg:listci}) capable of listing all CIs invoked by C-LMP in polynomial delay (Thm.~\ref{thm:listci}).
Reducing the number of CI tests needed to
evaluate a causal model is important as it improves runtime per-
formance and helps mitigate concerns about statistical power
and multiple hypothesis testing.
We hope our work will help researchers test their causal assumptions using observational data prior to inference.

\section*{Acknowledgements} 
This research is supported in part by the NSF, DARPA, ONR, AFOSR, DoE, Amazon, JP Morgan, and The Alfred P. Sloan Foundation. 
We thank Robin Evans and the anonymous reviewers for their thoughtful comments.

\bibliography{references}

\theoremstyle{plain}
\newtheorem{adxtheorem}{Theorem}
\newtheorem{adxlemma}{Lemma}
\newtheorem{adxprop}{Proposition}
\newtheorem{adxcorollary}{Corollary}
\theoremstyle{definition}
\newtheorem{adxdefinition}{Definition}
\newtheorem{adxassumption}{Assumption}
\newtheorem{adxexample}{\ding{110} Example}
\theoremstyle{remark}
\newtheorem{adxremark}[theorem]{Remark}

\newtheorem{innercustomlemma}{Lemma}
\newenvironment{customlemma}[1]
  {\renewcommand\theinnercustomlemma{#1}\innercustomlemma}
  {\endinnercustomlemma}
\newtheorem{innercustomthm}{Theorem}
\newenvironment{customthm}[1]
  {\renewcommand\theinnercustomthm{#1}\innercustomthm}
  {\endinnercustomthm}
\newtheorem{innercustomprop}{Proposition}
\newenvironment{customprop}[1]
  {\renewcommand\theinnercustomprop{#1}\innercustomprop}
  {\endinnercustomprop}
  \newtheorem{innercustomcor}{Corollary}
\newenvironment{customcor}[1]
  {\renewcommand\theinnercustomcor{#1}\innercustomcor}
  {\endinnercustomcor}

\appendix

\begin{center}
    \Large
    \textbf{Appendices}
\end{center}

\begin{enumerate}[label=\normalsize\Alph*]

    \item Background and Previous Work
    \begin{enumerate}[label=\normalsize\arabic*]
        \item Background
        \item Related Work
    \end{enumerate}

    \item C-LMP and the ordered local Markov property
    \begin{enumerate}[label=\normalsize\arabic*]
        \item Brute-Force Listing of CIs invoked by (LMP,\(\prec\))
        \item Computing MBs and MASs using ACs
        \item Uniqueness Property of ACs
        \item Proofs
    \end{enumerate}
    
    \item Proofs
    \begin{enumerate}[label=\normalsize\arabic*]
        \item Section~\ref{sec:reformulation} Proofs
        \item Section~\ref{section:listci} Proofs
        \item Appendix Proofs
    \end{enumerate}

    \item Discussion and Examples
    \begin{enumerate}[label=\normalsize\arabic*]
        \item Explaining Markov properties
        \item C-LMP and the Semi-Markov Factorisation
        \item Examples
    \end{enumerate}

    \item Further Results
    \item Experimental Results
    \begin{enumerate}[label=\normalsize\arabic*]
        \item Comparison of \textsc{ListCI} with Other Algorithms
        \item Application to Model Testing
        \item Analysis of C-LMP
    \end{enumerate}

    \item Frequently Asked Questions
\end{enumerate}

\section{Background and Previous Work}

\subsection{Background}
In the appendix, for some integer \(k \geq 0\), we use \([k]\) to denote the set \(\{1, 2, \dots, k\}\) (with \([0] = \emptyset\)).
\subsubsection{Graph preliminaries.}
Let \(\*X\) be a set of variables in a DAG \(\G\) over variables \(\*V\).
We define four kinship relations:

\begin{enumerate}
    \item \emph{Parents} of \(\*X\), denoted \(\Pa{\*X}\): \(\Pa{\*X} = \{Y \in \*V \mid Y \to X \text{ for some } X \in \*X\} \cup \*X\).
    \item \emph{Ancestors} of \(\*X\), denoted \(\An{\*X}\): \(\An{\*X} = \{Y \in \*V \mid \text{ there is a directed path from } Y \text{ to } X \text{ for some } X \in \*X\} \cup \*X\).%
    \item \emph{Descendants} of \(\*X\), denoted \(\De{\*X}\): \(\De{\*X} = \{Y \in \*V \mid \text{ there is a directed path from } X \text{ to } Y \text{ for some } X \in \*X\} \cup \*X\) 

    \item \emph{Non-descendants} of \(\*X\), denoted \(\Nd{\*X}\): \(\*V \setminus \De{\*X}\). Note that \(\Nd{\*X}\) does not include \(\*X\).
\end{enumerate}
Readers may be familiar with spouses of a variable \(X\) as variables \(Y\) such that \(X\) and \(Y\) are both the parent of some \(W\). We use a different sense of spouse consistent with \cite{pearl:2k,richardson2003markov}, defined in Section~\ref{sec:prelim}.

\subsubsection{The ordered local Markov property.} We define the ordered local Markov property \cite{richardson2003markov} for semi-Markovian causal DAGs and its basic components below.  

\begin{adxdefinition}{(Markov Blanket (MB)) \cite{richardson2003markov}}
\label{def:mb}
    Given a causal graph \(\G\) and a consistent ordering \(\*V^\prec\), let \(X\) be a variable in \(\*V^\prec\) and \(\*S\) an ancestral set in \(\G\) such that \(X \in \*S \subseteq \*V^{\leq X}\).
    Then, the Markov blanket of \(X\) with respect to the induced subgraph \(\G_\*S\), denoted  \(\mkbk{X,\*S}\), is defined as \(\mkbk{X, \*S} = \Pa{\&C(X)_{\G_{\*S}}}_{\G_{\*S}} \setminus \{X\}\).
\end{adxdefinition}

\begin{adxdefinition}{(Maximal Ancestral Set (MAS)) \cite{richardson2003markov}}
\label{def:maxanc}
    Given a causal graph \(\G\) and a consistent ordering \(\*V^\prec\), let \(X\) be a variable in \(\*V^\prec\) and \(\*S\) an ancestral set in \(\G\) such that \(X \in \*S \subseteq \*V^{\leq X}\).
    Then, \(\*S\) is said to be maximal with respect to the Markov blanket \(\mkbk{X,\*S}\) if, for any ancestral set \(\*S'\) such that \(X \in \*S \subseteq \*S' \subseteq \*V^{\leq X}\) and \(\mkbk{X,\*S} = \mkbk{X,\*S'}\), we have \(\*S = \*S'\).
\end{adxdefinition}

We state Richardson's \emph{ordered local Markov property} (with quantification MASs instead of all ancestral sets \citep[Section~3.1]{richardson2003markov}).

\begin{adxdefinition}{(The Ordered Local Markov Property (LMP,\(\prec\))) \cite{richardson2003markov}}
\label{def:lmp}
    A probability distribution \(P(\*v)\) over variables \(\*V\) is said to satisfy the ordered local Markov property for \(\G\) with respect to the consistent ordering \(\*V^\prec\) if, for any variable \(X\) and ancestral set \(\*S\) such that \(X \in \*S \subseteq \*V^{\leq X}\) and \(\*S\) is maximal with respect to \(\mkbk{X, \*S}\), \[
        X \indep \*S \setminus (\mkbk{X, \*S} \cup \{X\}) \mid \mkbk{X, \*S} \text{ in } P(\*v).
    \]
\end{adxdefinition}

Finally, we introduce the following collections to understand the web of ancestral sets, MBs, and MASs.
\begin{adxdefinition}
\label{def:collectionsczs}
    Given a causal graph \(\G\), a consistent ordering \(\*V^\prec\), and a variable \(X \in \*V^\prec\),
    define three collections:
    \begin{equation*}
        \begin{split}
            \&{S}_X &= \{ X \in \*S \subseteq \*V^{\leq X} \mid \  \*S \text{ is ancestral }\}, \\
            \&{Z}_X &= \{ \*Z \mid \*Z = \mkbk{X,\*S} \text{ for some } \*S \in \&S_X \}, \text{ and } \\
            \&{S^+}_X &= \{ \*S^+ \in \&S_X \mid \*S^+ \text{ is maximal w.r.t. } \mkbk{X,\*S^+}\}.
        \end{split}
    \end{equation*}
\end{adxdefinition}

\begin{adxexample}
    Consider \(H\) in \(\G^2\) (Fig.~\ref{fig:intro_local_semimarkov}). We have a c-component \(\&{C}(7)_{\G^2} = 347\). The ancestral set \(\*S = 12347\) induces \(\mkbk{7,\*S} = 234\). The MAS with respect to this MB is \(\*{S^+} = 1234567\), resulting in the CI \(7 \indep 156 \mid 234\) invoked by (LMP,\(\prec\)).
\qed
\end{adxexample}

It is known that GMP and (LMP,\(\prec\)) are equivalent: for a causal graph \(\G\) and consistent ordering \(\*V^{\prec}\), a probability distribution satisfies the global Markov property for \(\G\) if and only if it satisfies the local Markov property for \(\G\) with respect to \(\*V^{\prec}\) \citep[Thm.~2, Section~3.1]{richardson2003markov}.

The following lemma provides a (poly-time) test for whether a given set is maximal with respect to the MB that it induces.
\begin{adxlemma}{(Testing Maximality of Ancestral Set) \citep[Lemma.~5]{richardson2003markov}}
\label{adxlemma:maxanc}
    Given a causal graph \(\G\) and a consistent ordering \(\*V^\prec\), let \(X\) be a variable in \(\*V^\prec\).
    An ancestral set \(\*S \in \&{S}_X\) is maximal with respect to the Markov blanket \(\mkbk{X,\*S}\) if and only if:
    \[
        \*S = \*V^{\leq X} \setminus \De{ h(X,\*S) }_{\G}
    \]
    where
    \[
        h(X,\*S) = \Spo{\&{C}(X)_{\G_{\*S}} }_{\G} \setminus ( \mkbk{X,\*S} \cup \{X\} ).
    \]
\end{adxlemma}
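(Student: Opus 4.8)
The plan is to prove both directions at once by exhibiting a single canonical extension of \(\*S\) and showing it is the unique maximal ancestral set carrying the same Markov blanket. Write \(\*C = \&{C}(X)_{\G_{\*S}}\) and set \(\*T = \*V^{\leq X} \setminus \De{h(X,\*S)}\). A basic observation I would record first is that, since \(\Pa{\*C}_{\G_{\*S}}\) contains \(\*C\) by convention, \(\*C \subseteq \mkbk{X,\*S} \cup \{X\}\); consequently \(h(X,\*S)\) is disjoint from \(\*C\). I will also repeatedly use the mini-fact that any spouse of \(\*C\) that lies in \(\*S\) must itself belong to \(\*C\): if \(V \in \*S\) is bidirected-adjacent to some \(C_0 \in \*C \subseteq \*S\), that edge survives in \(\G_{\*S}\), so \(V\) joins the c-component of \(X\) in \(\G_{\*S}\). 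The whole equivalence reduces to establishing that \(\*T\) is ancestral, that \(\*S \subseteq \*T \subseteq \*V^{\leq X}\), and that \(\mkbk{X,\*T} = \mkbk{X,\*S}\); Def.~\ref{def:maxanc} then settles both directions.

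First I would verify the two easy structural facts. Ancestrality of \(\*T\) holds because \(\De{h(X,\*S)}\) is descendant-closed inside the ancestral set \(\*V^{\leq X}\), so its complement is ancestral. For \(\*S \subseteq \*T\), any \(W \in \*S \cap \De{h(X,\*S)}\) would, by ancestrality of \(\*S\), force some ancestor \(V \in h(X,\*S) \subseteq \Spo{\*C}\) into \(\*S\); but then the mini-fact puts \(V \in \*C\), contradicting \(h(X,\*S) \cap \*C = \emptyset\).

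The heart of the argument, and the step I expect to be the main obstacle, is \(\mkbk{X,\*T} = \mkbk{X,\*S}\). I would first show the c-component does not grow, i.e. \(\&{C}(X)_{\G_{\*T}} = \*C\). Suppose a bidirected path in \(\G_{\*T}\) leaves \(\*C\), and let \(V\) be its first node outside \(\*C\); then \(V \in \Spo{\*C} \cap \*T\). If \(V \in h(X,\*S)\), then \(V \in \De{h(X,\*S)}\), contradicting \(V \in \*T\); otherwise \(V \in \mkbk{X,\*S} \subseteq \*S\), and the mini-fact again forces \(V \in \*C\), a contradiction. With the c-component pinned at \(\*C\), equality of blankets reduces to \(\Pa{\*C}_{\G_{\*T}} = \Pa{\*C}_{\G_{\*S}}\); both coincide with \(\Pa{\*C}_{\G}\), since every directed parent of \(\*C\) is an ancestor of \(\*C \subseteq \*S\) and is therefore already in \(\*S\).

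Finally I would close the equivalence. For the forward direction, maximality of \(\*S\) applied to \(\*S' = \*T\) — which is ancestral, contains \(\*S\), lies in \(\*V^{\leq X}\), and has the same blanket — yields \(\*S = \*T\) directly from Def.~\ref{def:maxanc}. For the converse, assume \(\*S = \*T\) and suppose some ancestral \(\*S'\) with \(\*S \subsetneq \*S' \subseteq \*V^{\leq X}\) satisfied \(\mkbk{X,\*S'} = \mkbk{X,\*S}\). Pick \(Y \in \*S' \setminus \*S = \*S' \cap \De{h(X,\*S)}\); ancestrality of \(\*S'\) drags an ancestor \(V \in h(X,\*S)\) into \(\*S'\). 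Since \(V\) is a spouse of \(\*C \subseteq \*S'\), it joins \(\&{C}(X)_{\G_{\*S'}}\) and hence lies in \(\mkbk{X,\*S'}\), yet \(V \notin \mkbk{X,\*S}\) by the definition of \(h\) — contradicting equality of the blankets. Thus \(\*S\) is maximal, completing the proof.
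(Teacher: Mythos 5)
Your proof is correct, but note that the paper itself never proves this lemma: it is imported verbatim from Richardson (2003, Lemma~5), and the paper's surrounding machinery (Props.~\ref{prop:equivalence:cmb} and \ref{prop:equivalence:csplus}, Lemma~\ref{lemma:equivalence:mbsplus}) is derived \emph{from} it. Your argument is therefore a genuinely self-contained alternative, and its structure is sound: the canonical extension \(\*T = \*V^{\leq X} \setminus \De{h(X,\*S)}_\G\) is ancestral (complement of a descendant-closed set inside \(\*V^{\leq X}\)); \(\*S \subseteq \*T\) via your mini-fact that \(\*S \cap \Spo{\&{C}(X)_{\G_{\*S}}}_\G \subseteq \&{C}(X)_{\G_{\*S}}\), which correctly exploits the paper's convention \(\*C \subseteq \Pa{\*C}\) so that \(h(X,\*S) \cap \*C = \emptyset\); the c-component is pinned (\(\&{C}(X)_{\G_{\*T}} = \*C\)) by the first-exit-node case split, where the case \(V \in \mkbk{X,\*S} \subseteq \*S\) again reduces to the mini-fact; and \(\Pa{\*C}_{\G_{\*S}} = \Pa{\*C}_{\G_{\*T}} = \Pa{\*C}_\G\) follows since directed parents of \(\*C\) are ancestors of \(\*C\) and hence already in \(\*S\). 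Both closing directions are clean: maximality applied to \(\*S' = \*T\) gives the forward implication, and in the converse any \(Y \in \*S' \setminus \*T\) drags some \(V \in h(X,\*S)\) into \(\*S'\), where it joins \(\&{C}(X)_{\G_{\*S'}}\) and hence \(\mkbk{X,\*S'}\), contradicting blanket equality. What your route buys beyond the citation is worth making explicit: you establish not just the maximality test but that \(\*T\) is the \emph{unique} maximal ancestral set inducing \(\mkbk{X,\*S}\), i.e., you re-derive in one pass the content of the paper's Lemma~\ref{lemma:equivalence:mbsplus} and Prop.~\ref{prop:equivalence:csplus}, whereas the paper obtains uniqueness separately via Props.~\ref{prop:ccomp-union} and \ref{prop:mkbk-union} (closure of MBs under unions of ancestral sets) on top of the cited lemma.
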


\subsubsection{A note on Markov blankets.} We offer some clarification on the term `Markov blanket' as used in this paper (Def.~\ref{def:mb}), introduced by \cite{richardson2003markov}. 
The more widely known concept of a Markov blanket is due to \citep[Def.~3.12]{pearl:88a}.
Given a set of variables \(\*V\) and a variable \(X \in \*V\), a Pearlian Markov blanket (abbreviated as PMB) is a set of variables \(\*Z \subseteq \*V \setminus \{X\}\) such that \(X \indep \*V \setminus (\{X\} \cup \*Z) \mid \*Z\).
Returning to Fig.~\ref{fig:intro_local_semimarkov}, the variable \(8\) has a PMB \(\*Z_1 = 2346\) since \(7 \indep 15 \mid 2346\). 
\(\*Z_1\) is not an MB (per Def.~\ref{def:mb}).
\(8\) has another PMB \(\*Z_2 = 234\) since \(7 \indep 156 \mid  234\). 
\(\*Z_2\) is also, coincidentally, an MB, though not all MBs are PMBs.
The key differences between MBs and PMBs are twofold:
\begin{enumerate}
    \item In the definition of an MB, we choose an ancestral set \(\*S \subseteq \*V^{\leq X}\) containing \(X\), and require that \(X \indep \*S \setminus (\{X\} \cup \mkbk{X,\*S}) \mid \mkbk{X,\*S}\) holds; the MB separates \(X\) from all other variables in \(\*S\) but not necessarily those in \(\*V \setminus \*X\). The PMB must separate \(X\) from all other variables in \(\*V\).
    \item A given ancestral set \(\*S\) induces exactly one MB for a variable \(X\). However, there may be multiple PMBs for \(X\). An MB is more akin to the notion of a \emph{Markov boundary} \citep[Def.~3.12]{pearl:88a}\footnote{A Markov boundary is a minimal (Pearlian) Markov blanket, such that any strict subset of the Markov boundary no longer separates the variable from all other variables in the graph.}, in the sense that it is `minimal'; removing any variable \(Y\) from the MB \(\mkbk{X,\*S}\) no longer guarantees the independence \(X \indep  \*S \setminus (\{X\} \cup (\mkbk{X,\*S} \setminus \{Y\})) \mid \mkbk{X,\*S} \setminus \{Y\}\).
\end{enumerate}
MBs, therefore, are closely related to PMBs but with additional features needed to define and ensure that (LMP, \(\prec\)) is equivalent to GMP.

\subsection{Related Work}
\label{sec:related_work}

In this section, we expand on the Markov properties and algorithms to enumerate them summarised in Table~\ref{table:contribution}.

For model testing, a Markov property which invokes only a polynomial number of CI tests is ideal.
However, currently known poly-size properties assume either 1) there is no latent confounding between variables, or 2) the given causal DAG does not contain any directed mixed cycles, or 3) the observational distribution satisfies certain additional constraints \cite{kang2009markov}.
Intuitively, a directed mixed cycle is a cycle formed by walking through arrows in one direction.
For instance, in the causal DAG \(\G^2\) (Fig.~\ref{fig:intro_local_semimarkov}), the path \(2 \rightarrow 3 \rightarrow 7 \leftrightarrow 3\) is a directed mixed cycle.
Directed mixed cycles are commonly found in semi-Markovian DAGs – even in the basic bow pattern, in which a variable \(X\) is a cause of \(Y\) and \(X,Y\) have a latent confounder \cite{pearl:2k}.
There is no known poly-sized Markov property for the general setting.

There are two known Markov properties for Markovian causal DAGs.

\begin{enumerate}
    \item LMP: The local Markov property \cite{pearl:88a,lauritzen:etal90,lauritzen:96}.
    LMP specifies a linear number of CIs in total: one for each variable \(X\), stating that \(X\) is conditionally independent of its non-descendants given its parents.

    \item PMP: The pairwise Markov property \cite{pearl:mes99}.
    For a graph with \(n\) variables, PMP invokes \(O(n^2)\) CIs: more specifically, one CI for each pair of non-adjacent variables.
    PMP assumes that the given probability distribution is a compositional graphoid: that is, it additionally satisfies the intersection and composition axioms. 
\end{enumerate}

The intersection and composition axioms do not hold in arbitrary distributions. The intersection axioms holds, for example, in distributions which have full support (\(P(\*v) > 0\) for all \(\*v\)), e.g., a multivariate Gaussian. Composition holds in multivariate Gaussians and in probability distributions that are faithful to some DAG.

The following are known Markov properties for semi-Markovian causal DAGs.

\begin{table}[t]
    \footnotesize
    \centering
    \begin{tabular}{@{} *{6}{c} @{}}
        \toprule
         & \multicolumn{2}{c}{Coverage} & \multicolumn{2}{c@{}}{Scalability} \\
        \cmidrule(lr){2-3} \cmidrule(l){4-5}
        Property & \multirow{2}{1cm}{\textbf{Latents}} & \multirow{2}{1.3cm}{\centering \textbf{Any Prob. Distr.}} & \multirow{2}{1.2cm}{\centering \textbf{Poly-size CIs}} & \multirow{2}{*}{\centering \textbf{\text{Poly-Delay}}} \\
         & & & & \\
        \midrule%
        \textbf{LMP} & \crsmark & \chkmark & \chkmark & \chkmark \\
        \midrule
        \textbf{PMP} & \crsmark & \trimarkblue & \chkmark & \chkmark \\
        \midrule
        \textbf{RLMP} & \trimark & \trimarkblue & \chkmark & \chkmark \\
        \midrule
        \textbf{(RLMP,\(\prec\))} & \trimark & \trimarkblue & \chkmark & \chkmark \\
        \midrule
        \textbf{PMP-C} & \trimark & \trimarkblue & \chkmark &  \chkmark \\
        \midrule
        \textbf{PMP-RS} & \trimark & \trimarkblue & \chkmark &  \chkmark \\
        \midrule
        \textbf{S-Markov} & \chkmark & \chkmark & \crsmark & \crsmark \\
        \midrule
        \textbf{(LMP,\(\prec\))} & \chkmark & \chkmark & \crsmark & \crsmark \\
        \midrule
        \textbf{C-LMP} (ours) & \chkmark & \chkmark & \crsmark & \chkmark \\
        \midrule
    \end{tabular}
\caption{\footnotesize{Summary of properties and algorithms to enumerate CIs invoked by such properties.
The first column denotes if the property applies to graphs with unobserved confounders; the second, if it applies to arbitrary observational distributions; the third, if it invokes a polynomial number of CIs; the fourth, if there is a poly-delay algorithm to list its invoked CIs.
\color{Green}{\cmark} \color{Black} denotes an addressed area. \textcolor{red}{\xmark} denotes an unaddressed area.
\trimark \color{Black} denotes that DAGs may contain unobserved variables but not directed mixed cycles \cite{kang2009markov}, or the input is a MAG, a tranformation of a DAG \cite{richardson:spi02}. \trimarkblue \color{Black} denotes that further assumptions must be made on the probability distribution.
}}
\label{table:contribution}
\end{table}
\begin{enumerate}
    \item RLMP: The reduced local Markov property \cite{kang2009markov}.
    RLMP invokes a linear number of CIs in total, one for each variable.
    RLMP states that a variable is independent of the variables that are neither its descendants nor the descendants of its spouses, conditioning on its parents.
    The property assumes that the given probability distribution satisfies the composition axiom and the DAG has no directed mixed cycles.

    \item (RLMP,\(\prec\)): The ordered reduced local Markov property \cite{kang2009markov}.
    Given a specific ordering of variables called a \textit{c-ordering} \cite{kang2009markov}, (RLMP,\(\prec\)) invokes a linear number of CIs in total.
    (RLMP,\(\prec\)) states that each variable is independent of its predecessors (excluding its spouses) in a c-ordering, given its parents.
    The property assumes that the given probability distribution satisfies the composition axiom and the DAG has no directed mixed cycles.

    \item PMP-C: The pairwise Markov property \cite{kang2009markov}.
    Given a c-ordering, PMP-C invokes \(O(n^2)\) many CIs:
    more specifically, one CI for each pair of non-adjacent variables.
    PMP-C assumes that the given probability distribution satisfies the composition axiom and the DAG has no directed mixed cycles.

    \item PMP-RS: The pairwise Markov property given by \cite{richardson:spi02}. PMP-RS invokes \(O(n^2)\) many CIs, one for each pair of non-adjacent variables, for a given \emph{maximal ancestral graph} (MAG). A semi-Markovian DAG can be transformed into a MAG which encodes exactly the same CIs. It thus suffices to test CIs in the resultant MAG \cite{shipley:21}. However, the equivalence between this pairwise Markov property and the global Markov property has only been proved for probability distributions that are compositional graphoids \cite{lauritzen:2018}.    
    \item S-Markov: The \(S\)-Markov property \cite{kang2009markov}.
    S-Markov relaxes the assumption of the given graph containing no directed mixed cycles.
    Still, S-Markov assumes that the observational distribution satisfies the composition axiom.
    For each variables in the graph that can be c-ordered, S-Markov invokes a linear number of CIs.
    However, for variables that are not c-ordered, S-Markov relies on the ordered local Markov property (LMP,\(\prec\)), which, as discussed, is exponential-sized.
\end{enumerate}

CIs are the only type of constraint that Markovian DAGs impose on the observational distribution. In the non-Markovian case, however, DAGs may encode more complex equality and inequality constraints such as \emph{Verma constraints} \cite{verma:pea90a}.
While such constraints are outside the scope of this work, there are algorithms that list these constraints in addition to CIs.
However, these algorithms do not run in poly-delay.

\section{C-LMP and the Ordered Local Markov Property}
\label{appendix:lmp}

(LMP, \(\prec\)) is a well-known Markov property that applies to arbitrary observational distributions and causal graphs with unobserved confounders. In this section, we first explain how naively following the definition of (LMP,\(\prec\)) can take exponential time to output just one CI.
Next, we characterize (LMP, \(\prec\)) in more depth and show how ACs (Def.~\ref{def:ancestralccomponent}) can be used to compute the CIs that (LMP, \(\prec\)) invokes.

\subsection{Brute-Force Listing of CIs Invoked by (LMP,\(\prec\))}
\label{subsection:lmpbruteforce}
\begin{algorithm}[t]
\caption{\textsc{ListCIBF} (\(\G, \*V^\prec\))}
\label{alg:listcibf}

\begin{algorithmic} [1]
    \State {\bfseries Input:} \(\G\) a causal diagram; \(\*V^\prec\) an ordering %
    consistent with \(\G\).
    
    \State {\bfseries Output:} Listing CIs invoked by (LMP,\(\prec\)) for \(\G\) with respect to \(\*V^\prec\).

    \State \textbf{for} {each \(X \in \*V^\prec\)} \textbf{do}
    \Indent
        \State \textbf{for} {each ancestral set \(\*S\) such that \(X \in \*S \subseteq \*V^{\leq X}\)} \textbf{do}
        
        \Indent
            \State \textbf{if} {\(\*S\) is maximal with respect to \(\mkbk{X,\*S}\)\footnotemark{}} \textbf{then}
            \Indent
                \State Output \(X \indep \*S \setminus (\mkbk{X, \*S} \cup \{X\}) | \mkbk{X, \*S}\)
            \EndIndent
        \EndIndent
    \EndIndent
\end{algorithmic}
\end{algorithm}

\begin{figure}[t]
    \centering
    \null\hfill%
    \begin{subfigure}{0.14\textwidth}
        \includegraphics[width=\textwidth]{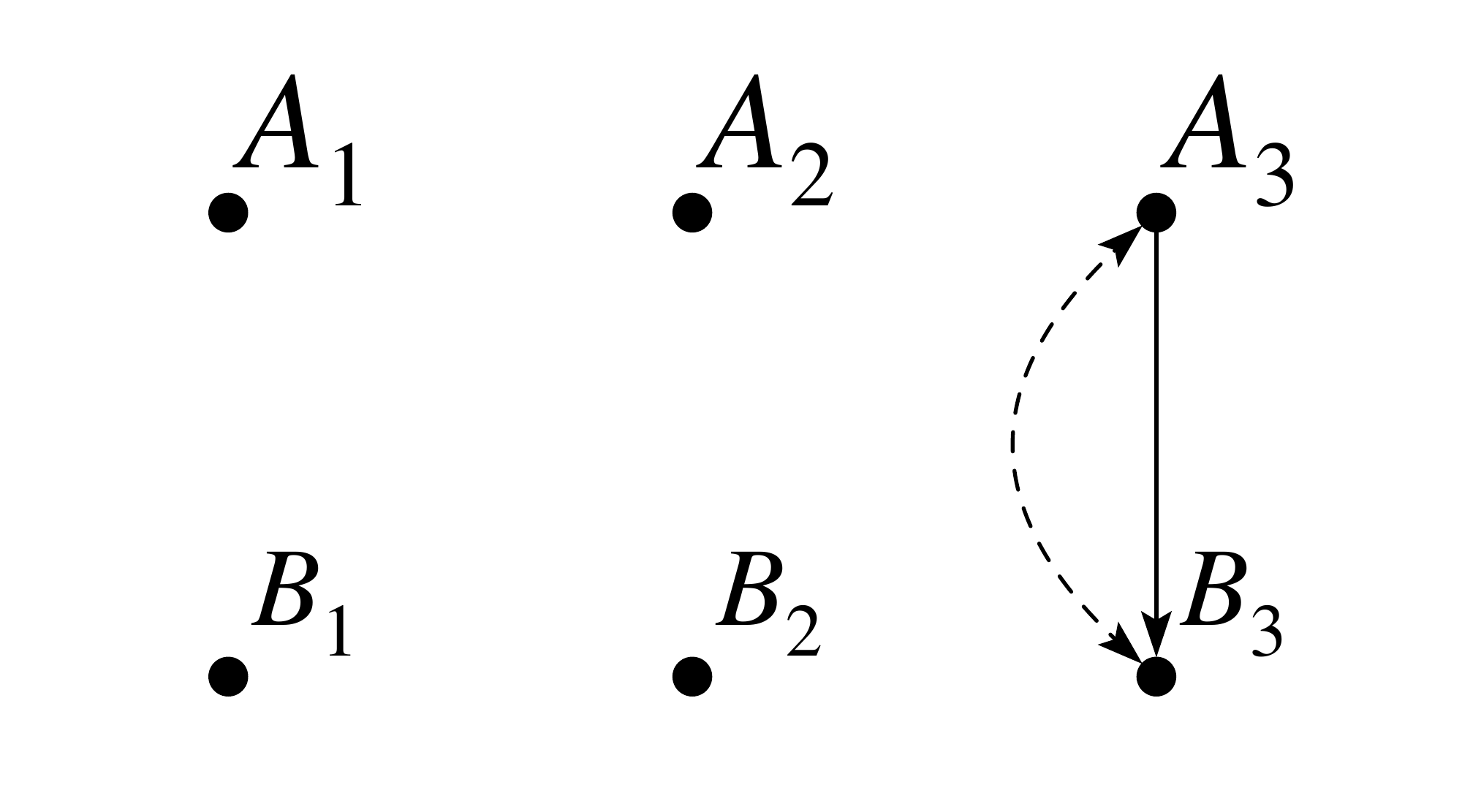}
        \caption{\(\G^{e1}\)}
        \label{fig:reduction:exp:1}
    \end{subfigure}
    \hfill
    \begin{subfigure}{0.25\textwidth}
        \includegraphics[width=\textwidth]{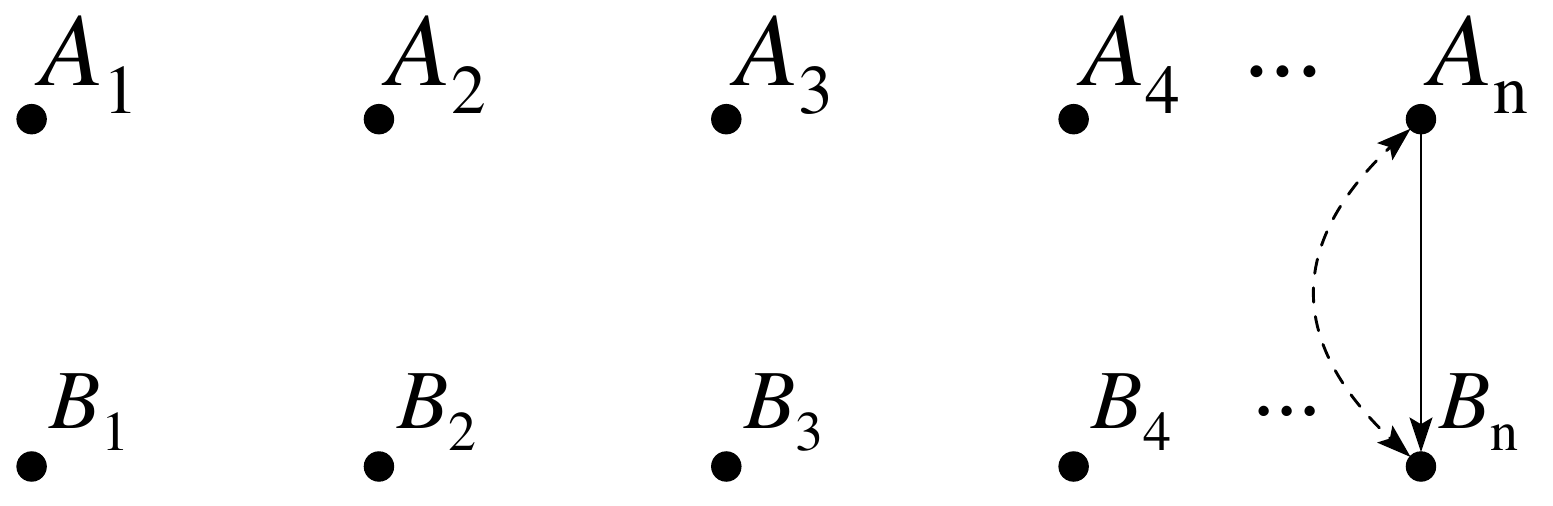}
        \caption{\(\G^{e2}\)}
        \label{fig:reduction:exp:3}
    \end{subfigure}
\caption{
Examples showing that a brute-force approach (\textsc{ListCIBF}) may take exponential time to output one CI invoked by (LMP,\(\prec\)).
}
\label{fig:reduction:exp}
\end{figure}

By definition, we can list the CIs invoked by (LMP,\(\prec\)) (Def.~\ref{def:lmp}) by enumerating over MASs.
However, it is unclear how to enumerate over MASs.
Each MAS is defined relative to an MB, and each MB is defined relative to an ancestral set.
Then, an immediate approach is to iterate over all ancestral sets \(\*S\), verifying if \(\*S\) is maximal with respect to \(\mkbk{X,\*S}\) before we output its corresponding CI constraint.
We implement this approach in the algorithm \textsc{ListCIBF} (Alg.~\ref{alg:listcibf}).

\begin{adxexample}
\label{ex:reduction:listcibf}
    Consider the DAG \(\G^{e1}\) (Fig.~\ref{fig:reduction:exp:1}) with consistent ordering \(\*V^\prec = \{A_1,A_2,A_3,B_1,B_2,B_3\}\).
    \textsc{ListCIBF}(\(\G^{e1},\*V^\prec\)) outputs five CIs invoked by (LMP,\(\prec\)): \(A_2 \indep \{A_1\}\), \(A_3 \indep \{A_1,A_2\}\), \(B_1 \indep \{A_1,A_2,A_3\}\), \(B_2 \indep \{A_1,A_2,A_3,B_2\}\), and \(B_3 \indep \{A_1,A_2,B_1,B_2\} \mid \{A_3\}\).
\qed
\end{adxexample}

In Ex.~\ref{ex:reduction:listcibf}, given \(X = B_3\), \textsc{ListCIBF}(\(\G^{e1},\*V^\prec\)) iterates over \(2^4\) different ancestral sets \(\*S\) with \(B_3 \in  \*S \subseteq \*V^{\leq B_3}\), all of which produce the same \(\mkbk{B_3, \*S} = \{A_3\}\).
However, only \(\*S^+ = \*V^{\leq B_3}\) is maximal with respect to this MB, resulting in the CI: \( B_3 \indep \{A_1,A_2,B_1,B_2\} \mid \{A_3\}\).
\textsc{ListCIBF} goes over \(2^4\) different ancestral sets to output this CI.
Next, we generalize this example to show that \textsc{ListCIBF} may iterate over exponentially many ancestral sets (with respect to the number of variables in \(\G\)) that produce the same MB.

\footnotetext{A poly-time test for whether an ancestral set \(\*S\) is maximal with respect to \(\mkbk{X,\*S}\) is shown in Lemma~\ref{adxlemma:maxanc}.}

\begin{adxexample}
\label{ex:reduction:exp:anc}
    In \(\G^{e2}\) (Fig.~\ref{fig:reduction:exp:3}) with \(2n\) nodes, there are \(2^{n-1} + 2^{n-2} - 1\) ancestral sets and \(n\) of them are maximal.
\qed
\end{adxexample}

In other words, iterating over all ancestral sets naively is potentially sub-optimal.

In the following lemma, we make a key observation: while there may be many ancestral sets producing the same MB (so that \(|\&{S}_X| > |\&{Z}_X|\)), exactly one ancestral set is maximal with respect to this MB.
As a result, \textsc{ListCIBF} may take exponential time to output just one new CI.

\begin{adxlemma}[One-to-one Correspondence between \(\&{Z}_X\) and \(\&{S^{+}}_X\)]
\label{lemma:equivalence:mbsplus}
    Given a causal graph \(\G\) and a consistent ordering \(\*V^\prec\), let \(X\) be a variable in \(\*V^\prec\).
    There is a bijection \(f:\&{Z}_X \to \&{S^{+}}_X\) given by \(f(\*Z) = \*S^+\) where \(\*S^+ \in \&{S^{+}}_X\) is an ancestral set maximal with respect to \(\*Z \in \&{Z}_X\). The inverse of \(f\), \(g: \&{S^{+}}_X \to \&{Z}_X\), is given by \(g(\*S^+) = \mkbk{X,\*S^+}\).
\end{adxlemma}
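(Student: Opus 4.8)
The plan is to show that the map $g:\&{S^+}_X \to \&{Z}_X$ given by $g(\*S^+) = \mkbk{X,\*S^+}$ is a bijection, and that its inverse is $f$ sending each Markov blanket to the (to-be-shown unique) maximal ancestral set realizing it. The linchpin of the whole argument is a structural invariant: \emph{the c-component $\&C(X)_{\G_{\*S}}$ depends only on the Markov blanket $\mkbk{X,\*S}$, not on $\*S$ itself}. Concretely, I would prove that for every $\*S \in \&{S}_X$ with $\*Z = \mkbk{X,\*S}$, we have $\&C(X)_{\G_{\*S}} = \&C(X)_{\G_{\*Z \cup \{X\}}}$. Once this invariant is in hand, the maximality characterization of Lemma~\ref{adxlemma:maxanc} (Richardson's formula $\*S = \*V^{\leq X} \setminus \De{h(X,\*S)}_{\G}$) pins down the maximal ancestral set as a function of $\*Z$ alone, and the bijection follows cleanly.

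\textbf{The invariant.} To establish it, write $\*C = \&C(X)_{\G_{\*S}}$. Since $\*S$ is ancestral and $\*C \subseteq \*S$, all parents of $\*C$ lie in $\*S$, so parents computed in $\G_{\*S}$ coincide with those in $\G$; hence $\Pa{\*C}_{\G} = \*Z \cup \{X\}$. A short bidirected-connectivity argument then gives both inclusions of $\&C(X)_{\G_{\*Z\cup\{X\}}} = \*C$: any internal bidirected path of $\*C$ stays inside $\*C \subseteq \*Z\cup\{X\}$, and conversely any bidirected path to $X$ within $\*Z\cup\{X\} \subseteq \*S$ is also a path in $\G_{\*S}$. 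I would record the key consequence that will be needed later: any spouse of $\*C$ lying in $\*Z\cup\{X\}$ must itself be in $\*C$, i.e. $\Spo{\*C}_{\G}\cap(\*Z\cup\{X\}) \subseteq \*C$, so that $h(X,\*S) = \Spo{\*C}_{\G}\setminus(\*Z\cup\{X\}) = \Spo{\*C}_{\G}\setminus \*C$ depends only on $\*Z$.

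\textbf{Well-definedness of $f$ (surjectivity of $g$).} Given $\*Z \in \&{Z}_X$, pick a witness $\*S \in \&{S}_X$ with $\mkbk{X,\*S} = \*Z$ and set $\*S^+ = \*V^{\leq X}\setminus \De{h(X,\*S)}_{\G}$. I would verify three points. First, $\*S^+$ is ancestral and contains $X$: the complement of a descendant set is ancestral, intersecting with $\*V^{\leq X}$ preserves this (ancestors precede a node in a consistent ordering), and one checks $\*S \subseteq \*S^+$ so $X \in \*S^+$. Second, passing from $\*S$ to $\*S^+$ does not enlarge the c-component: if the first node $W'$ off $\*C$ on a bidirected path to $X$ in $\G_{\*S^+}$ existed, then $W' \in \Spo{\*C}_{\G}\setminus \*C \subseteq h(X,\*S) \subseteq \De{h(X,\*S)}_{\G}$, contradicting $W' \in \*S^+$; hence $\&C(X)_{\G_{\*S^+}} = \*C$ and therefore $\mkbk{X,\*S^+} = \Pa{\*C}_{\G}\setminus\{X\} = \*Z$. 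Third, since the c-component and MB are unchanged, $h(X,\*S^+) = h(X,\*S)$, so $\*S^+ = \*V^{\leq X}\setminus\De{h(X,\*S^+)}_{\G}$ meets the fixed-point criterion of Lemma~\ref{adxlemma:maxanc} and is maximal. Thus $f(\*Z) := \*S^+ \in \&{S^+}_X$ with $g(f(\*Z)) = \*Z$.

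\textbf{Injectivity and inverse.} If $\*S_1,\*S_2 \in \&{S^+}_X$ share a Markov blanket $\*Z$, the invariant gives $\&C(X)_{\G_{\*S_1}} = \&C(X)_{\G_{\*S_2}} = \&C(X)_{\G_{\*Z\cup\{X\}}}$, hence the same $h$; maximality and Lemma~\ref{adxlemma:maxanc} force both to equal $\*V^{\leq X}\setminus \De{h}_{\G}$, so $\*S_1 = \*S_2$ and $g$ is injective. Combining this with the construction shows $f\circ g = \mathrm{id}$ and $g\circ f = \mathrm{id}$, so $f$ and $g$ are mutually inverse bijections. The main obstacle is the second point of the surjectivity step — ruling out growth of the c-component when enlarging $\*S$ to $\*S^+$ — where the subtlety is a spouse of $\*C$ that is simultaneously a parent; the observation $\Spo{\*C}_{\G}\cap(\*Z\cup\{X\})\subseteq\*C$ from the invariant is exactly what dissolves this difficulty.
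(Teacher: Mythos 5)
Your proof is correct, but it reaches the bijection by a genuinely different route than the paper. The paper's own proof is a closure-under-union argument: it first establishes (via Prop.~\ref{prop:ccomp-union} and Prop.~\ref{prop:mkbk-ccomp}) that if two ancestral sets induce the same Markov blanket \(\*Z\), then so does their union (Prop.~\ref{prop:mkbk-union}); two sets maximal with respect to \(\*Z\) are then each contained in, and hence equal to, their union, giving uniqueness, while existence of a maximal set follows from finiteness of \(\&{S}_X\) --- Richardson's fixed-point test is never invoked in that proof. You instead prove the invariant \(\&{C}(X)_{\G_{\*S}} = \&{C}(X)_{\G_{\*Z \cup \{X\}}}\), which is essentially the paper's Prop.~\ref{prop:mkbk-ccomp} in canonical form, and then pin the maximal set down explicitly through Lemma~\ref{adxlemma:maxanc}: you show \(h(X,\*S) = \Spo{\*C} \setminus \Pa{\*C}\) depends only on \(\*Z\), and verify that \(\*S^+ = \*V^{\leq X} \setminus \De{h(X,\*S)}\) is ancestral, contains \(\*S\), preserves the c-component and hence the Markov blanket, and satisfies the fixed-point criterion; uniqueness then falls out because every maximal set with blanket \(\*Z\) must equal this one formula. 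In effect you have merged the present lemma with the paper's Prop.~\ref{prop:equivalence:csplus} (construction of the MAS from the AC), whose proof in the paper is essentially your computation, carried out there only after uniqueness is already secured abstractly. What the paper's route buys is a short, self-contained uniqueness argument that does not lean on Lemma~\ref{adxlemma:maxanc}; what yours buys is a constructive existence proof with an explicit formula for \(f(\*Z)\), at the price of importing Richardson's maximality characterization and of the non-growth check when enlarging \(\*S\) to \(\*S^+\) --- the one genuinely delicate step, which you correctly identified and resolved via the observation \(\Spo{\*C} \cap (\*Z \cup \{X\}) \subseteq \*C\) (note this same observation is also what makes your glossed claim \(\*S \subseteq \*S^+\) go through, since any element of \(\De{h(X,\*S)} \cap \*S\) would force a member of \(h(X,\*S)\) into \(\*S\) and hence into \(\*C\), a contradiction).
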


\begin{adxexample}
    Continuing Ex.~\ref{ex:reduction:listcibf}.
    Given a variable \(B_3\), there exists only one MAS \(\*V^{\leq B_3}\) with respect to the MB \(\*Z = \{A_3\}\) of \(B_3\).
    We have \(\&{Z}_{B_3} = \{ \{A_3\} \}\) and \(\&{S^{+}}_{B_3} = \{ \*V^{\leq B_3} \}\).
    \(\*V^{\leq B_3}\) maps uniquely to \(\{A_3\}\), and vice versa.
\qed
\end{adxexample}

\subsection{Computing MBs and MASs using ACs} \label{adx:sec:zsfromc} 

Listing CIs invoked by (LMP, \(\prec\)) is challenging due to the many-to-one mapping from ancestral sets to CIs. Minimally, we want to be able to list these CIs without brute-force iteration. Fundamental to our solution is the fact that multiple ancestral sets induce the same CI only because they induce the same AC (Def.~\ref{def:ancestralccomponent}).

Observe that exponentially many ancestral sets may induce the same AC.  For instance, in \(\G^{e1}\) (Fig.~\ref{fig:reduction:exp:1}) with \(X = B_3\), \(2^4\) different ancestral sets \(\*S \in \&{S}_{B_3}\) induce the same AC, \(\&{C}(B_3)_{\G_{\*S}} = \{A_3,B_3\}\).  

We show that all ancestral sets inducing the same MB and MAS must induce the same AC.

\begin{adxprop}[Equality of MBs Implies Equality of ACs]
\label{prop:mkbk-ccomp}
    Given a causal graph \(\G\) and a consistent ordering \(\*V^\prec\), for any variable \(X \in \*V^\prec\) and any ancestral sets \(\*S_1, \*S_2 \in \&{S}_X\), if \(\mkbk{X,\*S_1} = \mkbk{X,\*S_2}\), then \(\&{C}(X)_{\G_{\*S_1}} = \&{C}(X)_{\G_{\*S_2}}\).
\end{adxprop}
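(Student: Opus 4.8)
The plan is to translate the hypothesis about Markov blankets into a statement about parent sets in the full graph $\G$, and then exploit the internal bidirected-connectivity of c-components. Write $\*C_1 = \&{C}(X)_{\G_{\*S_1}}$ and $\*C_2 = \&{C}(X)_{\G_{\*S_2}}$; the goal is to show $\*C_1 = \*C_2$.

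First I would establish that for an ancestral set $\*S \in \&{S}_X$ with $\*C = \&{C}(X)_{\G_{\*S}}$, the parents of $\*C$ computed inside $\G_{\*S}$ coincide with the parents computed in $\G$, i.e.\ $\Pa{\*C}_{\G_{\*S}} = \Pa{\*C}_{\G}$. This holds because $\*C \subseteq \*S$ and $\*S$ is ancestral, so every parent of a node of $\*C$ is an ancestor and hence already lies in $\*S$; restricting to $\G_{\*S}$ therefore drops no parents. Combined with $\mkbk{X,\*S} = \Pa{\*C}_{\G_{\*S}} \setminus \{X\}$ (Def.~\ref{def:mb}) and $X \in \*C \subseteq \Pa{\*C}_{\G}$, this gives $\Pa{\*C}_{\G} = \mkbk{X,\*S} \cup \{X\}$. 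Applying this to both $\*S_1$ and $\*S_2$ and using the hypothesis $\mkbk{X,\*S_1} = \mkbk{X,\*S_2}$, I obtain $\Pa{\*C_1}_{\G} = \Pa{\*C_2}_{\G} =: \*P$. In particular $\*C_1, \*C_2 \subseteq \*P$, and since $\*P = \Pa{\*C_i}_{\G} \subseteq \*S_i$ (again by ancestral closure) for $i \in \{1,2\}$, we conclude $\*C_1, \*C_2 \subseteq \*P \subseteq \*S_1 \cap \*S_2$.

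The second step is a symmetric inclusion argument. The key observation is that a c-component is connected to $X$ using only edges internal to it: for any $W \in \*C_1$ there is an all-bidirected path from $X$ to $W$ in $\G_{\*S_1}$, and since every node on such a path is itself bidirected-connected to $X$ it lies in $\*C_1$; hence the path is a path in $\G_{\*C_1}$. Because $\*C_1 \subseteq \*S_2$, every bidirected edge of $\G_{\*C_1}$ survives in $\G_{\*S_2}$, so $W$ remains bidirected-connected to $X$ in $\G_{\*S_2}$, i.e.\ $W \in \&{C}(X)_{\G_{\*S_2}} = \*C_2$. This shows $\*C_1 \subseteq \*C_2$; exchanging the roles of $\*S_1$ and $\*S_2$ gives $\*C_2 \subseteq \*C_1$, and therefore $\*C_1 = \*C_2$.

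The main obstacle, and really the only subtle point, is the ancestral-closure bookkeeping in the first step: one must verify carefully that passing between $\Pa{\cdot}_{\G_{\*S}}$ and $\Pa{\cdot}_{\G}$ introduces no discrepancy, and that the common parent set $\*P$ is contained in \emph{both} $\*S_1$ and $\*S_2$, which is precisely what licenses transplanting the connectivity witnesses of $\*C_1$ into $\G_{\*S_2}$. Once both c-components are confined to $\*S_1 \cap \*S_2$, the connectivity argument is immediate, since induced subgraphs preserve all edges among the retained vertices.
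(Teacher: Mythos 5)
Your proposal is correct and follows essentially the same route as the paper's proof: both derive from $\mkbk{X,\*S_1} = \mkbk{X,\*S_2}$ that $\&{C}(X)_{\G_{\*S_1}} \subseteq \mkbk{X,\*S_1} \cup \{X\} \subseteq \*S_2$ (and symmetrically), then conclude mutual inclusion of the c-components because bidirected connectivity to $X$ is witnessed by paths internal to the c-component and is therefore preserved in the induced subgraph. Your extra bookkeeping via $\Pa{\*C}_{\G_{\*S}} = \Pa{\*C}_{\G}$ and the common parent set $\*P$ is a harmless elaboration of the same argument.
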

 
Moreover, the converse is also true: all ancestral sets inducing the same AC must induce the same MB and MAS. In particular, for a variable \(X\), given \(\*C = \&{C}(X)_{\G_\*S}\) for some ancestral set \(\*S\), we can compute \(\*Z = \mkbk{X,\*S}\) and the MAS \(\*{S^+}\) relative to \(\*Z\) in poly-time without using \(\*S\).
The following results show how MB and MAS can be computed from AC.

\begin{adxprop}[Construction of MB from AC]
\label{prop:equivalence:cmb}
    Given a causal graph \(\G\) and a consistent ordering \(\*V^\prec\), let \(X\) be a variable in \(\*V^\prec\).
    Fix an ancestral c-component \(\*C \in \&{AC}_X\). 
    For any ancestral set \(\*S \in \&{S}_X\) such that  \(\&{C}(X)_{\G_{\*S}} = \*C\), we have \(\mkbk{X, \*S} = \Pa{\*C} \setminus \{X\}\).
\end{adxprop}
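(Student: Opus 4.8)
The plan is to reduce the claim to a single fact about induced subgraphs: that the parents of the ancestral c-component $\*C$ are the same whether computed inside $\G_{\*S}$ or inside $\G$. Concretely, by Def.~\ref{def:mb} and the hypothesis $\&{C}(X)_{\G_{\*S}} = \*C$, we have $\mkbk{X,\*S} = \Pa{\*C}_{\G_{\*S}} \setminus \{X\}$, so the statement $\mkbk{X,\*S} = \Pa{\*C}_{\G} \setminus \{X\}$ is equivalent to $\Pa{\*C}_{\G_{\*S}} = \Pa{\*C}_{\G}$. I would therefore carry out the entire argument at the level of these two parent sets and only subtract $\{X\}$ at the very end.

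One inclusion is immediate. Since $\G_{\*S}$ is the induced subgraph on $\*S$, its edge set is contained in that of $\G$, so every edge into $\*C$ present in $\G_{\*S}$ is present in $\G$; hence $\Pa{\*C}_{\G_{\*S}} \subseteq \Pa{\*C}_{\G}$. The content of the proposition lies in the reverse inclusion. First I would show $\Pa{\*C}_{\G} \subseteq \*S$. Note $\*C = \&{C}(X)_{\G_{\*S}} \subseteq \*S$ by construction, so the ``$\cup\,\*C$'' part of $\Pa{\*C}_{\G}$ is already in $\*S$. For a strict parent, take any $Y$ with $Y \to Z$ in $\G$ for some $Z \in \*C$. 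Then $Y \in \An{Z}_{\G} \subseteq \An{\*S}_{\G}$; since $\*S$ is ancestral, $\An{\*S}_{\G} = \*S$, so $Y \in \*S$. Thus every parent of $\*C$ in $\G$ already lies in $\*S$.

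Finishing is then routine: for any edge $Y \to Z$ of $\G$ with $Z \in \*C$ and $Y \in \Pa{\*C}_{\G}$, both endpoints lie in $\*S$ (by the previous step and $\*C \subseteq \*S$), so this edge survives in the induced subgraph $\G_{\*S}$, giving $Y \in \Pa{\*C}_{\G_{\*S}}$ and hence $\Pa{\*C}_{\G} \subseteq \Pa{\*C}_{\G_{\*S}}$. Combining the two inclusions yields $\Pa{\*C}_{\G} = \Pa{\*C}_{\G_{\*S}}$, and removing $\{X\}$ from both sides completes the proof. I expect the only genuine obstacle to be the inclusion $\Pa{\*C}_{\G} \subseteq \*S$; this is precisely where the ancestral-set hypothesis on $\*S$ is used, and it is what guarantees that restricting to $\G_{\*S}$ does not sever any parent edge of $\*C$. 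Everything else is bookkeeping about induced subgraphs.
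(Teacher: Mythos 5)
Your proof is correct and takes essentially the same approach as the paper: the paper's proof is the identical chain \(\mkbk{X,\*S} = \Pa{\&{C}(X)_{\G_{\*S}}}_{\G_{\*S}} \setminus \{X\} = \Pa{\*C}_{\G_{\*S}} \setminus \{X\} = \Pa{\*C} \setminus \{X\}\), with the last equality justified in one line by ``\(\*S\) is ancestral and \(\*C \subseteq \*S\).'' Your two-inclusion argument for \(\Pa{\*C}_{\G_{\*S}} = \Pa{\*C}_{\G}\) simply unpacks that compressed justification, using ancestrality exactly where the paper does.
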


\begin{adxprop}[Construction of MAS from AC]
\label{prop:equivalence:csplus}
    Given a causal graph \(\G\) and a consistent ordering \(\*V^\prec\), let \(X\) be a variable \(\*V^\prec\).
    Fix an ancestral c-component \(\*C \in \&{AC}_X\).
    For any ancestral set \(\*S \in \&{S}_X\) such that \(\&{C}(X)_{\G_{\*S}} = \*C\), the unique ancestral set \(\*S^+ \in \&{S^+}_X\) maximal with respect to the Markov blanket \(\mkbk{X,\*S}\) is given by \(\*S^+ = \*V^{\leq X} \setminus \De{\Spo{\*C} \setminus \Pa{\*C}}\).
\end{adxprop}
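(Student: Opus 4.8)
The plan is to reduce the proposition to Richardson's maximality characterization (Lemma~\ref{adxlemma:maxanc}) applied to the target set \(\*S^+\), and then rewrite the resulting expression purely in terms of the ancestral c-component \(\*C\). The key realization is that the formula \(\*V^{\leq X} \setminus \De{\Spo{\*C} \setminus \Pa{\*C}}\) is exactly what Lemma~\ref{adxlemma:maxanc} produces once we know that \(\*S^+\) induces the same AC as \(\*S\), so the bulk of the work is establishing that AC-equality and then doing the set algebra.

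First I would pin down the Markov blanket. By Prop~\ref{prop:equivalence:cmb}, since \(\&{C}(X)_{\G_{\*S}} = \*C\), we have \(\mkbk{X,\*S} = \Pa{\*C} \setminus \{X\}\). The unique MAS \(\*S^+\) maximal with respect to this MB exists by the bijection of Lemma~\ref{lemma:equivalence:mbsplus}, whose inverse \(g(\*S^+) = \mkbk{X,\*S^+}\) guarantees that \(\mkbk{X,\*S^+} = \mkbk{X,\*S} = \Pa{\*C} \setminus \{X\}\). The crucial bridge is then Prop~\ref{prop:mkbk-ccomp}: because \(\mkbk{X,\*S^+} = \mkbk{X,\*S}\), equality of MBs forces equality of ACs, giving \(\&{C}(X)_{\G_{\*S^+}} = \&{C}(X)_{\G_{\*S}} = \*C\). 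Since \(\*S^+\) is maximal with respect to its own Markov blanket, the ``only if'' direction of Lemma~\ref{adxlemma:maxanc} applies, yielding \(\*S^+ = \*V^{\leq X} \setminus \De{h(X,\*S^+)}_{\G}\) with \(h(X,\*S^+) = \Spo{\&{C}(X)_{\G_{\*S^+}}}_{\G} \setminus (\mkbk{X,\*S^+} \cup \{X\})\).

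It remains to simplify \(h(X,\*S^+)\). Substituting \(\&{C}(X)_{\G_{\*S^+}} = \*C\) and \(\mkbk{X,\*S^+} = \Pa{\*C} \setminus \{X\}\), and using \((\Pa{\*C} \setminus \{X\}) \cup \{X\} = \Pa{\*C}\) (valid because \(X \in \*C \subseteq \Pa{\*C}\)), I obtain \(h(X,\*S^+) = \Spo{\*C} \setminus \Pa{\*C}\). Plugging this back gives \(\*S^+ = \*V^{\leq X} \setminus \De{\Spo{\*C} \setminus \Pa{\*C}}\), as claimed.

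The main obstacle, and the only step carrying genuine content, is the transfer from MB-equality to AC-equality via Prop~\ref{prop:mkbk-ccomp}; everything else chains together Prop~\ref{prop:equivalence:cmb}, the bijection of Lemma~\ref{lemma:equivalence:mbsplus}, and the maximality test of Lemma~\ref{adxlemma:maxanc}. Two points I would verify carefully: that \(\Spo{\*C}\) is computed in the full graph \(\G\) rather than a subgraph, matching the definition of \(h\) in Lemma~\ref{adxlemma:maxanc}; and that any spouses lying inside \(\*C\) itself are harmlessly removed by the subtraction of \(\Pa{\*C}\) (since \(\*C \subseteq \Pa{\*C}\)), so the final set difference is unaffected.
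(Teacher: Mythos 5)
Your proof is correct and follows essentially the same route as the paper's: both transfer MB-equality to AC-equality via Prop.~\ref{prop:mkbk-ccomp}, invoke Lemma~\ref{adxlemma:maxanc} for the maximality characterization, substitute \(\mkbk{X,\*S^+} = \Pa{\*C}\setminus\{X\}\) using Prop.~\ref{prop:equivalence:cmb}, and obtain uniqueness from Lemma~\ref{lemma:equivalence:mbsplus}. Your extra care in checking \(X \in \*C \subseteq \Pa{\*C}\) (so that \((\Pa{\*C}\setminus\{X\})\cup\{X\} = \Pa{\*C}\)) is a detail the paper's proof uses implicitly.
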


\begin{adxexample}
\label{ex:mbsplus}
    Consider the DAG \(\G^{e1}\) (Fig.~\ref{fig:reduction:exp:1}) with consistent ordering \(\*V^\prec = \{A_1,A_2,A_3,B_1,B_2,B_3\}\).
    Given a variable \(B_3\), \(\*C = \{A_3,B_3\}\) is an AC relative to \(B_3\).
    We compute the MB \(\*Z\) from \(\*C\) as follows: \(\Pa{\{A_3,B_3\}} \setminus \{B_3\} = \{A_3\} = \*Z\).
    For all ancestral sets \(\*S \in \&{S}_{B_3}\), we have \(\mkbk{B_3, \*S} = \Pa{\&C(B_3)_{\G_{\*S}}}_{\G_{\*S}} \setminus \{B_3\} = \{A_3\} = \*Z\).
    The MAS \(\*{S^+}\) relative to \(\*Z\) is given by \(\*{S^+} = \*V^{\leq B_3} \setminus \De{\Spo{\{A_3,B_3\}} \setminus \Pa{\{A_3,B_3\}}} = \*V^{\leq B_3} \setminus \De{\emptyset} = \*V^{\leq B_3} = \*S^+\).
\qed
\end{adxexample}

These results, in part, motivate our definition of a local Markov property via ancestral c-components i.e., C-LMP (Def.~\ref{def:lmpplus}). In fact, we can show the following equivalence between C-LMP and (LMP,\(\prec\)).

\begin{adxtheorem}[Correspondence between C-LMP and (LMP,\(\prec\))]
\label{adxthm:equivalence:lmp:lmpplus}
    Let \(\G\) be a causal graph and \(\*V^\prec\) a consistent ordering.
    The c-component local Markov property and the ordered local Markov property \cite{richardson2003markov} for \(\G\) with respect to \(\*V^\prec\) induce an identical set of conditional independence relations implied by \(\G\) over \(\*V\).
\end{adxtheorem}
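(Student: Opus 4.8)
The plan is to prove that the two properties induce an identical set of CIs by showing, for each variable $X \in \*V^\prec$ separately, that the collection of CI relations $X \indep \*W \mid \*Z$ invoked by C-LMP (Def.~\ref{def:lmpplus}) coincides \emph{as a set of triples} with that invoked by (LMP,$\prec$) (Def.~\ref{def:lmp}). Since both properties quantify over the same variables $X$, it suffices to establish this per-variable equality and take the union over $X$. Almost all of the real content is already packaged in the earlier appendix results, so this theorem is essentially an assembly: Prop.~\ref{prop:equivalence:cmb} converts an ancestral c-component (AC) into its Markov blanket (MB), Prop.~\ref{prop:equivalence:csplus} converts an AC into the associated maximal ancestral set (MAS), Prop.~\ref{prop:mkbk-ccomp} shows the MB determines the AC, and Lemma~\ref{lemma:equivalence:mbsplus} gives the bijection between MBs and MASs. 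Conceptually these combine into a bijection $\&{AC}_X \cong \&{S^+}_X$ between the index sets of the two properties, but for set equality it is cleanest to prove two inclusions directly.

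For the inclusion that every C-LMP CI is a (LMP,$\prec$) CI, I would fix an AC $\*C \in \&{AC}_X$ and pick any ancestral set $\*S$ with $\&{C}(X)_{\G_{\*S}} = \*C$ (one exists by Def.~\ref{def:ancestralccomponent}). By Prop.~\ref{prop:equivalence:cmb}, $\mkbk{X,\*S} = \Pa{\*C} \setminus \{X\}$, and by Prop.~\ref{prop:equivalence:csplus} the MAS maximal with respect to this MB is exactly $\*S^+ = \*V^{\leq X} \setminus \De{\Spo{\*C} \setminus \Pa{\*C}}$, i.e.\ the very set appearing in the C-LMP CI for $\*C$. Maximality preserves the MB, so $\mkbk{X,\*S^+} = \mkbk{X,\*S} = \Pa{\*C} \setminus \{X\}$, which makes the conditioning sets of the two CIs agree. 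For the left-hand sides the key step is the set identity $\mkbk{X,\*S^+} \cup \{X\} = (\Pa{\*C} \setminus \{X\}) \cup \{X\} = \Pa{\*C}$, which holds precisely because $X \in \*C \subseteq \Pa{\*C}$. Hence the (LMP,$\prec$) CI for $\*S^+$, namely $X \indep \*S^+ \setminus (\mkbk{X,\*S^+} \cup \{X\}) \mid \mkbk{X,\*S^+}$, is literally $X \indep \*S^+ \setminus \Pa{\*C} \mid \Pa{\*C} \setminus \{X\}$, the C-LMP CI for $\*C$.

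For the reverse inclusion I would start from a MAS $\*S^+ \in \&{S^+}_X$ and set $\*C = \&{C}(X)_{\G_{\*S^+}}$, which is an AC relative to $X$ because $\*S^+$ is ancestral with $X \in \*S^+ \subseteq \*V^{\leq X}$. Prop.~\ref{prop:equivalence:cmb} again gives $\mkbk{X,\*S^+} = \Pa{\*C} \setminus \{X\}$, and Prop.~\ref{prop:equivalence:csplus} identifies the unique MAS maximal with respect to $\mkbk{X,\*S^+}$ as $\*V^{\leq X} \setminus \De{\Spo{\*C} \setminus \Pa{\*C}}$. Since $\*S^+$ is itself maximal with respect to $\mkbk{X,\*S^+}$ and this MAS is unique (Lemma~\ref{lemma:equivalence:mbsplus}), we conclude $\*S^+ = \*V^{\leq X} \setminus \De{\Spo{\*C} \setminus \Pa{\*C}}$. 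The same set-algebra collapse as above then shows the (LMP,$\prec$) CI for $\*S^+$ equals the C-LMP CI for $\*C$, completing the two-way inclusion and hence the per-variable set equality.

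I expect the main obstacle to be bookkeeping rather than any single hard computation: ensuring the two CIs match as triples (not merely up to logical equivalence) and that the correspondence is genuinely well-defined in both directions. The two delicate points are (i) recovering, in the reverse direction, that $\*S^+$ really is the canonical set $\*V^{\leq X} \setminus \De{\Spo{\*C} \setminus \Pa{\*C}}$, which rests essentially on the uniqueness half of Lemma~\ref{lemma:equivalence:mbsplus} — there may be exponentially many ancestral sets inducing a given MB, but only one is maximal — and (ii) the cancellation $\mkbk{X,\*S^+} \cup \{X\} = \Pa{\*C}$, which depends on $X \in \Pa{\*C}$ and is exactly what forces the left-hand sides to coincide. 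Once these are in place the equality of CI sets follows immediately, including the vacuous CIs, and no appeal to semi-graphoid derivation is needed since the relations are identical rather than merely mutually entailed.
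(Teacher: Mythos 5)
Your proposal is correct and takes essentially the same route as the paper's own proof: both directions proceed by converting an ancestral c-component to its Markov blanket and maximal ancestral set via Prop.~\ref{prop:equivalence:cmb} and Prop.~\ref{prop:equivalence:csplus} (with uniqueness supplied by Lemma~\ref{lemma:equivalence:mbsplus}), and conversely extracting the AC \(\&{C}(X)_{\G_{\*S^+}}\) from a given MAS, so that the two CI statements coincide as identical triples. The only difference is presentational—you spell out the cancellation \(\mkbk{X,\*S^+} \cup \{X\} = \Pa{\*C}\) (valid since \(X \in \*C \subseteq \Pa{\*C}\) under the paper's convention), which the paper's proof leaves implicit.
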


\begin{proof}
    Given a causal graph \(\G\) and a consistent ordering \(\*V^\prec\), let $\&{L}^R$ denote the set of CIs implied by the ordered local Markov property for \(\G\) with respect to \(\*V^\prec\), and $\&{L}^C$ the set of CIs implied by the c-component local Markov property for \(\G\) with respect to \(\*V^\prec\). We show that \(\&{L}^R = \&{L}^C\).
    \begin{enumerate}
     \item (\(\&{L}^R \subseteq \&{L}^C\)) Consider a CI statement in  \(\&{L}^R\) of the form
        \[
            X \indep \*S^+ \setminus (\mkbk{X, \*S^+} \cup \{X\}) \mid \mkbk{X, \*S^+}
        \]
        for some variable \(X \in \*V^\prec\) and an ancestral set \(\*S^+ \in \&{S^+}_X\) maximal with respect to \(\mkbk{X, \*S^+}\). We show that the same CI statement is also in \(\&{L}^C\).

        Let \(\*C = \&{C}(X)_{\G_{\*S^+}}\).
        Since \(\*S^+\) is ancestral, \(\*C\) is an AC relative to \(X\).
        By Def.~\ref{def:lmpplus}, the following CI is in \(\&{L}^C\).
        \[
            X \indep \*S^{+'} \setminus \Pa{\*C} \mid (\Pa{\*C} \setminus \{X\})
        \]
        where
        \[
         \*S^{+'} = \*V^{\leq X} \setminus\De{\Spo{\*C} \setminus \Pa{\*C}}
        \]
        By Prop.~\ref{prop:equivalence:cmb}, \(\mkbk{X, \*S^{+}} = \Pa{\*C} \setminus \{X\}\).
        By Prop.~\ref{prop:equivalence:csplus}, \(\*S^+ = \*S^{+'}\).
        Therefore, the two CI statements are identical, and the given CI from \(\&{L}^R\) is also in \(\&{L}^C\).
    \item (\(\&{L}^C \subseteq \&{L}^R\)) Consider a CI statement in \(\&{L}^C\) of the form
        \[
            X \indep \*S^{+} \setminus \Pa{\*C} \mid \Pa{\*C} \setminus \{X\}
        \]
        where
        \[
            \*S^{+} = \*V^{\leq X} \setminus \De{\Spo{\*C} \setminus \Pa{\*C}}
        \]
         for some variable \(X \in \*V^\prec\) and AC \(\*C\in \&{AC}_X\). By Def.~\ref{def:ancestralccomponent}, there exists an ancestral set \(\*S \in \&{S}_X\) such that \(\*C = \&{C}(X)_{\G_{\*S}}\).
         By Prop.~\ref{prop:equivalence:cmb}, \(\mkbk{X,\*S} = \Pa{\*C} \setminus \{X\}\).
         By Prop.~\ref{prop:equivalence:csplus}, \(\*S^+\) is the unique ancestral set maximal with respect to \(\mkbk{X,\*S}\).
         By Def.~\ref{def:lmp}, the following CI is in \(\&{L}^R\)
         \[
            X \indep \*S^+ \setminus (\mkbk{X, \*S^+} \cup \{X\}) \mid \mkbk{X, \*S^+}
         \]
         Therefore, the two CI statements are identical, and the given CI from \(\&{L}^C\) is also in \(\&{L}^R\).
    \end{enumerate}
\end{proof}

The equivalence between C-LMP and GMP (Thm.~\ref{thm:equivalence:gmp:lmpplus}) can also be proved as a corollary of the above Thm.~\ref{adxthm:equivalence:lmp:lmpplus} and the equivalence between (LMP,\(\prec\)) and GMP \citep[Thm.~2, Section~3.1]{richardson2003markov}.

\subsection{Uniqueness Property of ACs}
\label{adx:sec:czsonetoone}

Recall that in (LMP, \(\prec\)), multiple ancestral sets can induce the same MB. Here, we show this can be remedied using ACs: each MB can be computed from exactly one AC. 

\begin{adxlemma}[One-to-one Correspondence between \(\&{AC}_X\) and \(\&{Z}_X\)]
\label{lemma:equivalence:cmb}
    Let \(\G\) be a causal graph, \(\*V^\prec\) a consistent ordering, and \(X\) a variable in \(\*V^\prec\).
    Then, there is a bijection \(f:\&{AC}_X \to \&{Z}_X\) given by \(f(\*C) = \Pa{\*C} \setminus \{X\}\) with \(\*C \in \&{AC}_X\). The inverse of \(f\), \(g: \&{Z}_X \to \&{AC}_X\), is given by \(g(\*Z) = \&{C}(X)_{\G_\*S}\) where \(\*S\) is an arbitrary ancestral set in \(\&{S}_X\) such that \(\*Z = \mkbk{X,\*S}\).
\end{adxlemma}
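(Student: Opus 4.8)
The plan is to prove the lemma by showing that \(f\) and \(g\) are mutually inverse maps between \(\&{AC}_X\) and \(\&{Z}_X\), which yields the bijection directly. The heavy lifting is already done by two results established earlier: Prop.~\ref{prop:equivalence:cmb} (construction of the MB from an AC) and Prop.~\ref{prop:mkbk-ccomp} (equality of MBs implies equality of ACs). Consequently, the proof reduces to (i) checking that each map is well-defined, i.e.\ lands in the claimed codomain, and (ii) verifying the two composition identities \(g \circ f = \mathrm{id}\) and \(f \circ g = \mathrm{id}\).

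First I would check that \(f\) is well-defined, i.e.\ that \(f(\*C) = \Pa{\*C} \setminus \{X\}\) actually lies in \(\&{Z}_X\). Since \(\*C \in \&{AC}_X\), Def.~\ref{def:ancestralccomponent} supplies an ancestral set \(\*S \in \&{S}_X\) with \(\&{C}(X)_{\G_{\*S}} = \*C\); Prop.~\ref{prop:equivalence:cmb} then gives \(\mkbk{X,\*S} = \Pa{\*C} \setminus \{X\}\), so \(f(\*C) = \mkbk{X,\*S} \in \&{Z}_X\). Here I would note the minor point that, because \(\*S\) is ancestral and \(\*C \subseteq \*S\), every parent of a node of \(\*C\) is an ancestor and hence already lies in \(\*S\); thus \(\Pa{\*C}\) is the same whether computed in \(\G\) or in \(\G_{\*S}\), so there is no ambiguity in the graph used for \(\Pa{\cdot}\).

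The crux of the argument is the well-definedness of \(g\), and this is the step I expect to be the main obstacle. The definition \(g(\*Z) = \&{C}(X)_{\G_{\*S}}\) selects an \emph{arbitrary} ancestral set \(\*S \in \&{S}_X\) with \(\*Z = \mkbk{X,\*S}\), so I must show the output does not depend on this choice; this is precisely where the many-to-one collapse of ancestral sets onto ACs must be controlled. Fortunately this is exactly the content of Prop.~\ref{prop:mkbk-ccomp}: if \(\*S_1, \*S_2 \in \&{S}_X\) both satisfy \(\mkbk{X,\*S_1} = \*Z = \mkbk{X,\*S_2}\), then \(\&{C}(X)_{\G_{\*S_1}} = \&{C}(X)_{\G_{\*S_2}}\), so \(g(\*Z)\) is unambiguous. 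Moreover \(g(\*Z)\) is an ancestral c-component by construction, hence \(g(\*Z) \in \&{AC}_X\).

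Finally I would verify the two composition identities, both of which follow immediately from the same two propositions. For \(g \circ f = \mathrm{id}\): given \(\*C\), take an ancestral set \(\*S\) inducing it; then \(f(\*C) = \Pa{\*C} \setminus \{X\} = \mkbk{X,\*S}\) by Prop.~\ref{prop:equivalence:cmb}, so \(\*S\) is an admissible witness for computing \(g\) at \(f(\*C)\), whence \(g(f(\*C)) = \&{C}(X)_{\G_{\*S}} = \*C\) by well-definedness of \(g\). For \(f \circ g = \mathrm{id}\): given \(\*Z\), write \(g(\*Z) = \&{C}(X)_{\G_{\*S}} =: \*C\) with \(\mkbk{X,\*S} = \*Z\); then Prop.~\ref{prop:equivalence:cmb} gives \(f(\*C) = \Pa{\*C} \setminus \{X\} = \mkbk{X,\*S} = \*Z\). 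Both compositions are the identity, so \(f\) is a bijection with inverse \(g\), completing the proof.
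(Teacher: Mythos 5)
Your proposal is correct and follows essentially the same route as the paper's proof: both establish well-definedness of \(f\) via Prop.~\ref{prop:equivalence:cmb}, show \(g\) is independent of the choice of ancestral set, and verify the two composition identities \(g \circ f = \mathrm{id}\) and \(f \circ g = \mathrm{id}\). The only cosmetic difference is that you cite Prop.~\ref{prop:mkbk-ccomp} for the well-definedness of \(g\), whereas the paper re-derives that same fact inline (via \(\Pa{\*C_1} = \Pa{\*C_2}\) and ancestrality); your citation is legitimate since Prop.~\ref{prop:mkbk-ccomp} states exactly this, and your side remark that \(\Pa{\*C}\) agrees whether computed in \(\G\) or \(\G_{\*S}\) is a correct detail the paper handles within its \(f \circ g\) computation.
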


Both Lemma~\ref{lemma:equivalence:mbsplus} and Lemma~\ref{lemma:equivalence:cmb} imply a one-to-one correspondence between ACs and MASs.

\begin{adxcorollary}[One-to-one Correspondence between \(\&{AC}_X\) and \(\&{S^{+}}_X\)]
\label{cor:equivalence:csplus}
    Let \(\G\) be a causal graph, \(\*V^\prec\) a consistent ordering, and \(X\) a variable in \(\*V^\prec\).
    There is a bijection \(f: \&{AC}_X \to \&{S^{+}}_X\).
\end{adxcorollary}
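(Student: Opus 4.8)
The plan is to obtain the desired bijection by simply composing the two bijections established in the immediately preceding lemmas, using $\&{Z}_X$ (the collection of Markov blankets of $X$) as a common intermediary between $\&{AC}_X$ and $\&{S^{+}}_X$. First I would invoke Lemma~\ref{lemma:equivalence:cmb}, which supplies a bijection $f_1 : \&{AC}_X \to \&{Z}_X$ defined by $f_1(\*C) = \Pa{\*C} \setminus \{X\}$. Then I would invoke Lemma~\ref{lemma:equivalence:mbsplus}, which supplies a bijection $f_2 : \&{Z}_X \to \&{S^{+}}_X$ sending each Markov blanket $\*Z$ to the unique ancestral set maximal with respect to $\*Z$. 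Since the composition of two bijections is a bijection, the map $f = f_2 \circ f_1 : \&{AC}_X \to \&{S^{+}}_X$ is a bijection, which is precisely the statement of the corollary.

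To make the correspondence concrete, and as a consistency check, I would trace the image of an arbitrary $\*C \in \&{AC}_X$: the map $f_1$ sends $\*C$ to $\Pa{\*C} \setminus \{X\}$, and $f_2$ then sends this Markov blanket to the MAS maximal with respect to it. By Prop.~\ref{prop:equivalence:csplus}, that MAS equals $\*V^{\leq X} \setminus \De{\Spo{\*C} \setminus \Pa{\*C}}$, so the composed bijection admits the explicit closed form $f(\*C) = \*V^{\leq X} \setminus \De{\Spo{\*C} \setminus \Pa{\*C}}$. This explicit description reassures us that $f$ is well-defined on all of $\&{AC}_X$ and matches the set $\*S^+$ appearing in Def.~\ref{def:lmpplus}.

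There is essentially no substantive obstacle here, since all the real work has already been done in Lemmas~\ref{lemma:equivalence:mbsplus} and~\ref{lemma:equivalence:cmb} (which in turn rest on Props.~\ref{prop:mkbk-ccomp}, \ref{prop:equivalence:cmb}, and~\ref{prop:equivalence:csplus}). The only point that warrants any attention is confirming that the two lemmas genuinely share $\&{Z}_X$ as a common codomain and domain, so that the composition $f_2 \circ f_1$ is legitimate; this holds by construction. Once that is noted, the corollary follows immediately, which is why it is stated as a corollary rather than a standalone lemma.
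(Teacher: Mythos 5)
Your proof is correct and matches the paper's own argument exactly: the paper also obtains the bijection by composing \(f_1:\&{AC}_X \to \&{Z}_X\) from Lemma~\ref{lemma:equivalence:cmb} with \(f_2: \&{Z}_X \to \&{S^{+}}_X\) from Lemma~\ref{lemma:equivalence:mbsplus}. Your extra consistency check via Prop.~\ref{prop:equivalence:csplus}, giving the closed form \(f(\*C) = \*V^{\leq X} \setminus \De{\Spo{\*C} \setminus \Pa{\*C}}\), is a harmless addition beyond what the paper states.
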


Fig.~\ref{fig:buildingblocks} provides an overview of the relationships among the sets of ancestral sets, ACs, MBs, and MASs.
A core implication is that each CI invoked by (LMP,\(\prec\)) can be derived from exactly one AC, which we exploit in C-LMP (Def.~\ref{def:lmpplus}).

\subsection{Proofs}
We present proofs of the results in Sections~\ref{adx:sec:zsfromc} and \ref{adx:sec:czsonetoone}. We first prove some technical propositions.

\begin{adxprop}[AC in Union of Subgraphs]
~\label{prop:ccomp-union}
    Given a causal graph \(\G\) over a set of variables \(\*V\), for any subsets \(\*S_1, \*S_2 \subseteq \*V\) and a variable \(X \in \*V\), if \(\&{C}(X)_{\G_{\*S_1}} = \&{C}(X)_{\G_{\*S_2}} = \*C\) then \(\&{C}(X)_{\G_{\*S_1 \cup \*S_2}} = \*C\).
\end{adxprop}

\begin{proof}
    Since \(\*S_1 \subseteq \*S_1 \cup \*S_2\), we have \(\*{C} \subseteq \&{C}(X)_{\G_{\*S_1 \cup \*S_2}}\). 
    To show the other direction, for any variable \(U \in \&{C}(X)_{\G_{\*S_1 \cup \*S_2}} \setminus \{X\}\), let \(\pi = \{X \leftrightarrow V_1, V_1 \leftrightarrow V_2, \dots, V_{k-1} \leftrightarrow V_k, V_k \leftrightarrow V_{k+1} = U\}\) be the bidirected path from \(X\) to \(U\) in \(\G_{\*S_1 \cup \*S_2}\) (for some \(k \geq 0\)).   For each \(i \in [k+1]\), we have \(V_i \in \*S_1 \cup \*S_2\). We prove by induction on the index \(i \in [k+1]\) that \(V_i \in \*C\) for each \(i \in [k+1]\).
    
    \emph{Base case.} If \(V_1 \in \*S_1\)then \(X \leftrightarrow V_1\) implies \(V_1 \in C(X)_{\G_{\*S_1}} = \*C\). 
    Otherwise, \(V_1 \in \*S_2\) %
    implies \(V_1 \in C(X)_{\G_{\*S_2}} = \*C\). 
    
    \emph{Inductive hypothesis.} If \(k \geq 1\), assume for some \(i \in [k]\) we have \(V_i \in \mathbf{C}  = C(X)_{\G_{\*S_1}} =C(X)_{\G_{\*S_2}} \).  
    
    \emph{Inductive step.} Then, either \(V_{i+1} \in \*S_1\) or \(V_{i+1} \in 
    \*S_2\). If \(V_{i+1} \in \*S_1\),  then \(V_{i} \in C(X)_{\G_{\*S_1}}\) (by the induction hypothesis) and \(V_i \leftrightarrow V_{i+1}\) implies \(V_{i+1} \in C(X)_{\G_{\*S_1}}\).  Otherwise, \(V_{i+1} \in \*S_2\)  and  \(V_{i} \in C(X)_{\G_{\*S_2}}\) (by the induction hypothesis) \(V_i \leftrightarrow V_{i+1}\) implies \(V_{i+1} \in C(X)_{\G_{\*S_2}}\).  By induction, it follows that \(U = V_{k+1} \in \mathbf{C}\).
\end{proof}

\begin{proof}[Proof of Prop.~\ref{prop:mkbk-ccomp}]
    Consider \(\*S_1, \*S_2 \in \&{S}_X\). If \(\mkbk{X,\*S_1} = \mkbk{X,\*S_2}\), then \(\mkbk{X,\*S_1} \subseteq \*S_2\) and \(\mkbk{X,\*S_2} \subseteq \*S_1\) since \(\mkbk{X,\*S_1} \subseteq \*S_1, \ \mkbk{X,\*S_2} \subseteq \*S_2\). This implies \(\&{C}(X)_{\G_{\*S_1}} \subseteq \mkbk{X,\*S_1} \cup \{X\} \subseteq \*S_2\) and \(\&{C}(X)_{\G_{\*S_2}} \subseteq \mkbk{X,\*S_2}\cup \{X\} \subseteq \*S_1\). However, \(\&{C}(X)_{\G_{\*S_1}} \subseteq \*S_2 \implies \&{C}(X)_{\G_{\*S_1}} \subseteq \&{C}(X)_{\G_{\*S_2}}\) and similarly, \( \&{C}(X)_{\G_{\*S_2}} \subseteq \&{C}(X)_{\G_{\*S_1}}\). Therefore, \(\&{C}(X)_{\G_{\*S_1}}= \&{C}(X)_{\G_{\*S_S}}\).
\end{proof}

\begin{adxprop}[MB in Union of Subgraphs]
\label{prop:mkbk-union}  
    Given a causal graph \(\G\) and a consistent ordering \(\*V^\prec\), for any variable \(X \in \*V\) and any ancestral sets \(\*S_1, \*S_2 \in \&{S}_X\), if \(\mkbk{X,\*S_1} = \mkbk{X,\*S_2} = \*Z\), then \(\mkbk{X, \*S_1 \cup \*S_2} = \*Z\).
\end{adxprop}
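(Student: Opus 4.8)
The plan is to reduce the entire statement to the behavior of the c-component $\&{C}(X)$ under taking unions of subgraphs, leveraging the two propositions already in hand. First I would check that $\*S_1 \cup \*S_2$ is itself a member of $\&{S}_X$, so that $\mkbk{X, \*S_1 \cup \*S_2}$ is even well-defined: it contains $X$ (since $X \in \*S_1$), it is contained in $\*V^{\leq X}$ as a union of two subsets of $\*V^{\leq X}$, and it is ancestral because the union of ancestral sets is ancestral, i.e. $\An{\*S_1 \cup \*S_2} = \An{\*S_1} \cup \An{\*S_2} = \*S_1 \cup \*S_2$.

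Next I would pin down a common c-component shared by all three induced subgraphs. Since $\mkbk{X,\*S_1} = \mkbk{X,\*S_2}$, Prop.~\ref{prop:mkbk-ccomp} yields $\&{C}(X)_{\G_{\*S_1}} = \&{C}(X)_{\G_{\*S_2}} = \*C$ for some ancestral c-component $\*C$ relative to $X$. Feeding this equality into Prop.~\ref{prop:ccomp-union} gives $\&{C}(X)_{\G_{\*S_1 \cup \*S_2}} = \*C$ as well. Thus $X$ induces the same c-component $\*C$ in $\G_{\*S_1}$, $\G_{\*S_2}$, and $\G_{\*S_1 \cup \*S_2}$, and the only remaining task is to compare the parent sets of $\*C$ across these three subgraphs.

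The key computational identity I would establish is that, for any set $\*S$ with $\*C \subseteq \*S$, restricting to the induced subgraph merely intersects the global parent set: $\Pa{\*C}_{\G_{\*S}} = \Pa{\*C}_{\G} \cap \*S$. This follows directly from the definition of $\Pa{\cdot}$, since an edge $Y \to C$ with $C \in \*C$ survives in $\G_{\*S}$ exactly when $Y \in \*S$, while the convention that $\Pa{\cdot}$ contains the set itself is absorbed by the containment $\*C \subseteq \*S$. Applying this identity gives $\mkbk{X,\*S_i} = (\Pa{\*C}_{\G} \cap \*S_i) \setminus \{X\} = \*Z$ for $i \in \{1,2\}$, and $\mkbk{X, \*S_1 \cup \*S_2} = (\Pa{\*C}_{\G} \cap (\*S_1 \cup \*S_2)) \setminus \{X\}$. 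Distributing the intersection over the union, this equals $\big((\Pa{\*C}_{\G} \cap \*S_1) \cup (\Pa{\*C}_{\G} \cap \*S_2)\big) \setminus \{X\} = \*Z \cup \*Z = \*Z$, which finishes the argument.

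The proof is largely mechanical once the two cited propositions are available, so no deep obstacle remains. The one point demanding care is the parent-restriction identity $\Pa{\*C}_{\G_{\*S}} = \Pa{\*C}_{\G} \cap \*S$: because $\Pa{\cdot}$ includes its argument, the intersection is clean only when $\*C \subseteq \*S$, and I would therefore explicitly verify $\*C \subseteq \*S_1$, $\*C \subseteq \*S_2$ (hence $\*C \subseteq \*S_1 \cup \*S_2$) before invoking it. This is exactly what guarantees that enlarging the conditioning subgraph from $\*S_i$ to $\*S_1 \cup \*S_2$ can only add parents lying in $\*S_{3-i}$, all of which are already accounted for by $\*Z$.
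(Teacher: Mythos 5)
Your proposal is correct and takes essentially the same route as the paper's proof: both verify that \(\*S_1 \cup \*S_2\) is ancestral, use Prop.~\ref{prop:mkbk-ccomp} to extract the common c-component \(\*C\) and Prop.~\ref{prop:ccomp-union} to show \(\&{C}(X)_{\G_{\*S_1 \cup \*S_2}} = \*C\), and then finish by comparing parent sets. The only cosmetic difference is the final bookkeeping step, where the paper equates \(\Pa{\*C}_{\G_{\*S_1 \cup \*S_2}}\) with \(\Pa{\*C}_{\G_{\*S_1}}\) directly via ancestrality, while you use the restriction identity \(\Pa{\*C}_{\G_{\*S}} = \Pa{\*C}_{\G} \cap \*S\) (valid since \(\*C \subseteq \*S\)) and distribute the intersection over the union—both are sound.
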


\begin{proof}
    Consider \(\*S_1, \*S_2 \in \&{S}_X\). If \(\mkbk{X,\*S_1} = \mkbk{X,\*S_2}\), then by Prop.~\ref{prop:mkbk-ccomp}, we have \(\&{C}(X)_{\G_{\*S_1}} = \&{C}(X)_{\G_{\*S_2}}\). Since \(\*S_1, \*S_2\) are ancestral, we have \(\An{\*S_1 \cup \*S_2}_\G = \An{\*S_1}_\G \cup \An{\*S_2}_\G = \*S_1 \cup \*S_2\), hence \(\*S_1 \cup \*S_2\) is also ancestral. We get 
    \begin{align}
        \mkbk{X, \*S_1 \cup \*S_2} & = \Pa{\&{C}(X)_{\G_{\*S_1 \cup \*S_2}}}_{\G_{\*S_1 \cup \*S_2}} \setminus \{X\} \\
        &=  \Pa{\&{C}(X)_{\G_{\*S_1}}}_{\G_{\*S_1 \cup \*S_2}}\setminus \{X\} \tag{By Prop.~\ref{prop:ccomp-union} since \(\&{C}(X)_{\G_{\*S_1}} = \&{C}(X)_{\G_{\*S_2}}\)} \\
        &= \Pa{\&{C}(X)_{\G_{\*S_1}}}_{\G_{\*V^{\leq X}}}\setminus \{X\} \tag{\(\*S_1 \cup \*S_2\) ancestral} \\
        &= \Pa{\&{C}(X)_{\G_{\*S_1}}}_{\G_{\*S_1}} \setminus \{X\}\tag{\(\*S_1\) ancestral} \\
        &= \mkbk{X,\*S_1}
    \end{align}
\end{proof}

\begin{figure}[t]
    \centering
    \includegraphics[width=.47\textwidth]{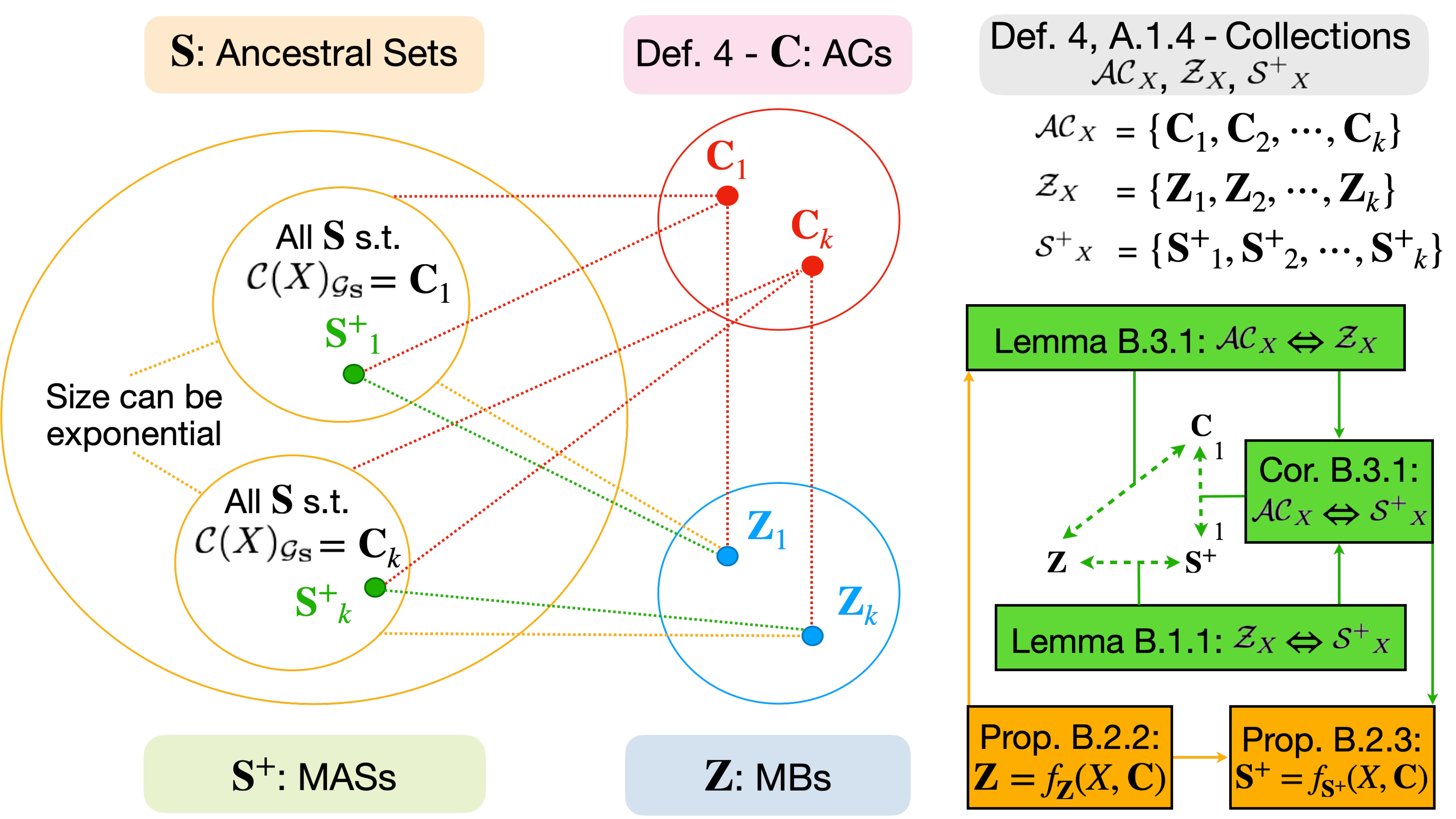}
\caption{
An overview of the relationships among ancestral sets, ACs, MBs,  MASs.
Exponentially many ancestral sets may map to one AC, MB, or MAS.
On the other hand, there is a one-to-one-correspondence among ACs, MBs and MASs.
}
\label{fig:buildingblocks}
\end{figure}

We now move to our main results in Sections \ref{adx:sec:zsfromc}. and \ref{adx:sec:czsonetoone}. 

\begin{proof}[Proof of Lemma~\ref{lemma:equivalence:mbsplus}]
    First, we show the mapping \(f\) is well-defined. Given \(\*Z \in \&{Z}_X\), there exists an ancestral set \(\*S^+ \in \&{S^{+}}_X\) maximal with respect to \(\*Z\). It remains to show that there is exactly one such \(\*S^+\). Let \(\*S_1, \*S_2 \in \&{S^+}_X\) be ancestral sets maximal with respect to \(\*Z\). The equality \(\mkbk{X,\*S_1} = \mkbk{X,\*S_2} = \*Z\) implies \(\mkbk{X,\*S_1 \cup \*S_2} = \*Z\) (by Prop.~\ref{prop:mkbk-union}). Therefore, \(\*S_1 \subseteq \*S_1 \cup \*S_2\) and the maximality of \(\*S_1\) implies \(\*S_1 =\*S_1 \cup \*S_2\) and \(\*S_2 \subseteq \*S_1\). Similarly, \(\*S_1 \subseteq \*S_2\). Therefore, \(\*S_1 = \*S_2\).
    
    Finally, \(g\) is well-defined. \(f(g(\*S^+)) = f(\mkbk{X,\*S^+}) = \*S^+\), and \(g(f(\*Z)) = \*Z\) since \(f(\*Z) = \*S^+\) is maximal with respect to \(\*Z\) if and only if \(\mkbk{X,\*S^+} = \*Z\). Since \(f\) has a two-sided inverse \(g\), \(f\) is bijective.
\end{proof}

\begin{proof}[Proof of Prop.~\ref{prop:equivalence:cmb}]
    For any ancestral set \(\*S \in \&{S}_X\) with \(\&{C}(X)_{\G_{\*S}} = \*C\), we have 
    \begin{align}
        \mkbk{X, \*S} &= \Pa{\&{C}(X)_{\G_{\*S}}}_{\G_\*S} \backslash \{X\} \\
        &= \Pa{\*C}_{\G_\*S} \backslash \{X\}\tag{By definition of \(\*S\)} \\
        &= \Pa{\*C} \backslash \{X\} \tag{\(\*S\) is ancestral and \(\*C \subseteq \*S\)}
    \end{align}
\end{proof}

\begin{proof}[Proof of Prop.~\ref{prop:equivalence:csplus}]
    For any ancestral set \(\*S \in \&{S}_X\) with \(\&{C}(X)_{\G_{\*S}} = \*C\), let \(\*S^+ \in \&{S^+}_X\) be an ancestral set maximal with respect to  \(\mkbk{X, \*S}\). Note that  \(\mkbk{X, \*S} =  \mkbk{X, \*S^+}\) by definition. By Prop.~\ref{prop:mkbk-ccomp}, we have \(\&{C}(X)_{\G_{\*S^+}} = \&{C}(X)_{\G_{\*S}} = \*C\). Then, \(\*S^+\) is maximal with respect to \( \mkbk{X, \*S^+}\) if and only if
    \begin{align}
        \*S^+ & =  \*V^{\leq X} \setminus \De{ \Spo{\&{C}(X)_{\G_{\*S^+}} } \setminus (\mkbk{X,\*S^+} \cup \{X\}) } \tag{Lemma~\ref{adxlemma:maxanc}} \\
        & =  \*V^{\leq X} \setminus \De{ \Spo{\*C} \setminus (\mkbk{X,\*S^+} \cup \{X\}) }\\
        &= \*V^{\leq X} \setminus \De{ \Spo{\*C} \setminus ( (\Pa{\*C} \setminus \{X\} ) \cup \{X\} ) } \tag{Prop.~\ref{prop:equivalence:cmb}}\\
        & = \*V^{\leq X} \setminus \De{\Spo{\*C} \setminus \Pa{\*C}}.
    \end{align}
    The uniqueness of \(\*S^+\) follows from Lemma~\ref{lemma:equivalence:mbsplus}. Note that \(\*S^+\) depends only on \(\*C\), not the particular \(\*S\) such that \(\&{C}(X)_{\G_{\*S}} = \*C\).
\end{proof}

\begin{proof}[Proof of Lemma~\ref{lemma:equivalence:cmb}]
    First, we show the mapping \(f\) is well-defined. Given \(\*C \in \&{AC}_X\), by definition, there exists an ancestral set \(\*S \in \&{S}_X\) such that \(\*C = \&{C}(X)_{\G_\*S}\). Then, \(f(\*C) = \Pa{\*C} \setminus \{X\} = \mkbk{X,\*S}\) by Prop.~\ref{prop:equivalence:cmb}, so \(f(\*C) \in \&{Z}_X\) holds.

    Next, we show that \(f\) is bijective by exhibiting an inverse \(g: \&{Z}_X \to \&{AC}_X\). Given \(\*Z \in \&{Z}_{X}\), fix any \(\*S \in \&{S}_X\) such that \(\*Z = \mkbk{X,\*S}\) (we know such \(\*S\) exists by the definition of \(\*Z\)) and let \(g(\*Z) = \&C(X)_{\G_\*S}\).
    
    To see that \(g\) is well-defined, first note that \(\&C(X)_{\G_\*S} \in \&{AC}_X\) since \(\*S\) is an ancestral set by assumption. Second, we need to show that \(g(\*Z)\) is independent of the particular choice of \(\*S\) (since multiple ancestral sets can induce the same MB). Consider \(\*S_1, \*S_2 \in \&{S}_X\) such that \(\*Z = \mkbk{X,\*S_1} = \mkbk{X,\*S_2}\). Let \(\*C_1 =\&{C}(X)_{\G_{\*S_1}}\) and \(\*C_2 = \&{C}(X)_{\G_{\*S_2}}\).
    
    We show that \(\*C_1 = \*C_2\).
    By Prop.~\ref{prop:equivalence:cmb}, \(\mkbk{X,\*S_1} = \Pa{\*C_1} \setminus \{X\}\) and \(\mkbk{X,\*S_2} = \Pa{\*C_2} \setminus \{X\}\).
    From the equality \(\mkbk{X,\*S_1} = \mkbk{X,\*S_2}\), we have \(\Pa{\*C_1} \setminus \{X\} = \Pa{\*C_2} \setminus \{X\}\) and hence \(\Pa{\*C_1}  = \Pa{\*C_2}\).
    Since \(\*S_1, \*S_2\) are ancestral, we have \(\Pa{\*C_1}  \subseteq \*S_1\), and \(\Pa{\*C_2} \subseteq \*S_2\).
    With \(\Pa{\*C_1} \setminus \{X\} = \Pa{\*C_2} \setminus \{X\}\), we have \(\&{C}(X)_{\G_{\*S_1}} = \*C_1 \subseteq \Pa{\*C_1}  = \Pa{\*C_2}  \subseteq \*S_2\),
    implying \(\&{C}(X)_{\G_{\*S_1}} \subseteq \&{C}(X)_{\G_{\*S_2}}\). By a symmetric argument, we get \(\&{C}(X)_{\G_{\*S_2}} \subseteq \&{C}(X)_{\G_{\*S_1}}\).
    Therefore, \(\*C_1 = \*C_2\).
    
    Finally, we show that \(g\) is a two-sided inverse of \(f\). Given \(\*C \in \&{AC}_X\), fix some \(\*S \in \&{S}_X\) such that \(\*C = \&{C}(X)_{\G_\*S}\). Then 
    \begin{align}
        g(f(\*C)) &= g(\Pa{\*C} \setminus \{X\}) \\
                  &= g(\mkbk{X,\*S}) \tag{Prop.~\ref{prop:equivalence:cmb}} \\
                  &= \&{C}(X)_{\G_\*S} \tag{By definition of \(g\)} \\
                  & = \*C
    \end{align}
    Given \(\*Z \in \&{Z}_X\), fix some \(\*S \in \&{S}_X\) such that \(\*Z = \mkbk{X,\*S}\). Then,
    \begin{align}
        f(g(\*Z)) &= f(\&{C}(X)_{\G_\*S}) \\
        & = \Pa{\&{C}(X)_{\G_\*S}} \setminus \{X\} \\
        & = \Pa{\&{C}(X)_{\G_\*S}}_{\G_{\*S}} \setminus \{X\} \tag{\(\*S\) is ancestral and \(\&{C}(X)_{\G_\*S} \subseteq \*S\)}\\
        & = \mkbk{X,\*S} \\
        & = \*Z
    \end{align}
\end{proof}

\begin{proof}[Proof of Cor.~\ref{cor:equivalence:csplus}]
    The result follows from Lemma~\ref{lemma:equivalence:mbsplus} and Lemma~\ref{lemma:equivalence:cmb}, composing the bijective mappings \(f_1:\&{AC}_X \to \&{Z}_X\) and \(f_2: \&{Z}_X \to \&{S^{+}}_X\).
\end{proof}

\begin{figure}[t]
    \centering
    \includegraphics[height=.25\textheight]{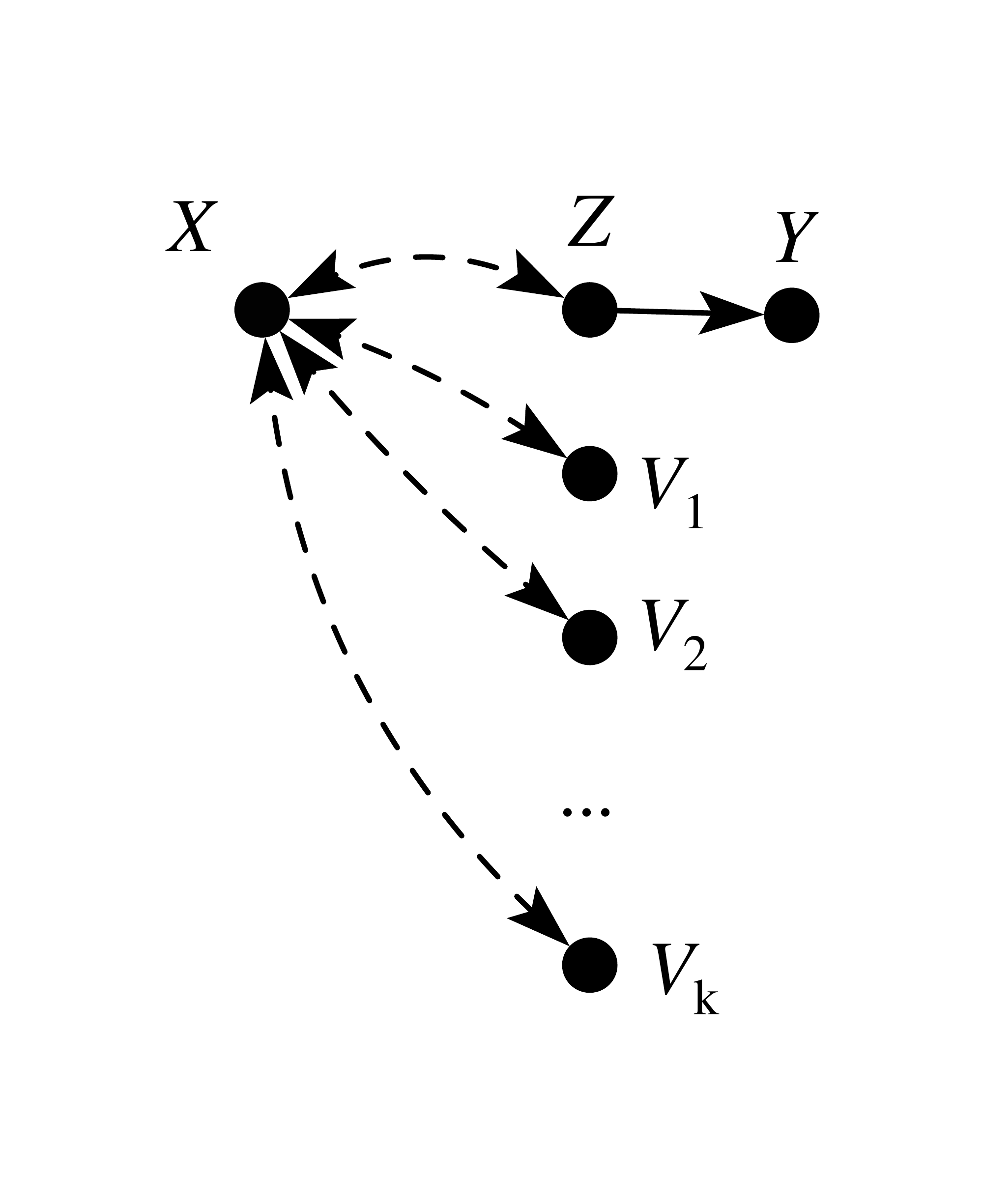}
    \caption{A causal graph \(\G\) with consistent ordering \(Z \prec Y \prec X \prec V_1 \prec \dots \prec V_k\), inducing \(\Omega(2^n)\) number of CIs invoked by C-LMP.}
    \label{fig:lmpsize}
\end{figure}

\section{Proofs}
\label{appendix:proofs}

\subsection{Section~\ref{sec:reformulation} Proofs}
\label{sec:reformulation_proof}

\begin{adxprop}
\label{adxprop:clmpsep}
    Let \(\G\) be a causal graph and \(\*V^\prec\) a consistent ordering. For any variable \(X \in \*V^\prec\) and any ancestral c-component \(\*C \in \&{AC}_X\) relative to \(X\),
    \[
        X \perp_d \*V^{\leq X} \setminus (\De{\Spo{\*C} \setminus \Pa{\*C}} \cup \Pa{\*C}) \mid \Pa{\*C} \setminus \{X\}.
    \]
\end{adxprop}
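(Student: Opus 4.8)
The plan is to prove the statement directly as a $d$-separation fact: show that $\*Z := \Pa{\*C}\setminus\{X\}$ blocks every path in $\G$ between $X$ and any $Y\in\*W := \*V^{\leq X}\setminus(\De{\Spo{\*C}\setminus\Pa{\*C}}\cup\Pa{\*C})$. First I would reduce to a canonical witness: since $\*C\in\&{AC}_X$, by Def.~\ref{def:ancestralccomponent} there is an ancestral $\*S$ with $\*C=\&{C}(X)_{\G_{\*S}}$, and because $\*C\subseteq\*S$ with $\*S$ ancestral we get $\An{\*C}\subseteq\*S$ and $\*C=\&{C}(X)_{\G_{\An{\*C}}}$, so we may take $\*S=\An{\*C}$. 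Writing $\*E:=\Spo{\*C}\setminus\Pa{\*C}$ and $\*D:=\De{\*E}$, I would record three structural facts used throughout: (i) $\Pa{\*C}\subseteq\An{\*C}\subseteq\*V^{\leq X}$, so $\*Z\subseteq\*V^{\leq X}$; (ii) since $\*C$ is a maximal bidirected-connected set in $\G_{\An{\*C}}$, every spouse of $\*C$ lying in $\An{\*C}$ is already in $\*C$, hence $\*E\cap\An{\*C}=\emptyset$; and (iii) $\An{\*Z}=\An{\Pa{\*C}}=\An{\*C}$, using $\*C\subseteq\Pa{\*C}\subseteq\An{\*C}$.

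Next I would argue by contradiction: suppose some $Y\in\*W$ admits a path $\pi$ from $X$ active given $\*Z$, and take $\pi$ of minimum length. Along any active path each collider lies in $\An{\*Z}$ and each non-collider lies outside $\*Z$; by fact (iii) every collider of $\pi$ therefore lies in $\An{\*C}$. A first-edge analysis disposes of the easy case: if the edge at $X$ points into $X$, its other endpoint is a parent in $\*Z$ carrying a tail, hence a conditioned non-collider that blocks $\pi$; so $\pi$ must leave $X$ through an out-edge or a bidirected edge. The core of the proof is then to track how $\pi$ can reach a node of $\*V^{\leq X}$: I would consider the first node $V_j$ of $\pi$ not in $\An{\*C}$. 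Using that colliders stay in $\An{\*C}$, $V_j$ must be a non-collider reached through an arrowhead, and the only gateways out of the bidirected block $\*C$ that are not conditioned parents are the external spouses $\*E$; I would show that $V_j$, and every subsequent node up to $Y$, is forced to lie in $\De{\*E}=\*D$, whence $Y\in\*D$, contradicting $Y\in\*W$.

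The main obstacle is exactly this last bookkeeping step. Since the ordering is consistent with $\G$, proper descendants of $X$ lie outside $\*V^{\leq X}$, so a path that merely descends from $X$ cannot reach a $Y\prec X$; to return to $\*V^{\leq X}$ the path must ``turn around,'' and such turn-arounds occur only at colliders or forks. The delicate part is to prove that every turn-around either happens inside $\An{\*C}$ --- where conditioning on $\Pa{\*C}\setminus\{X\}$ seals off $X$ from $\An{\*C}\setminus\Pa{\*C}$, an ancestral-district separation provable by induction on the bidirected structure of $\*C$ --- or is launched from an external spouse in $\*E$, after which the remaining sub-path stays within $\De{\*E}$. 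Making the invariant ``the active tail of $\pi$ is confined to $\An{\*C}\cup\De{\*E}$'' precise and verifying its preservation across each edge type (directed in, directed out, bidirected), together with the activation condition on colliders, is where the real work lies.

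Finally, I would note a shorter alternative route. By Prop.~\ref{prop:equivalence:cmb} and Prop.~\ref{prop:equivalence:csplus}, the set $\*W$ equals $\*S^+\setminus\Pa{\*C}$ with $\mkbk{X,\*S^+}=\Pa{\*C}\setminus\{X\}$ and $\*S^+$ the maximal ancestral set relative to this Markov blanket; thus the claimed $d$-separation is precisely the $m$-separation underlying the ordered local Markov property of \cite{richardson2003markov}, whose soundness is established there. Invoking that result yields the proposition at once; I would keep the direct path argument above as the self-contained proof and cite Richardson's soundness as corroboration.
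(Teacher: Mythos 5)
Your proposal is correct in substance and follows essentially the same route as the paper's proof: a direct blocking argument over the paths out of \(X\), resting on the same three pillars --- maximality of \(\*C\) in \(\G_{\An{\*C}}\) (so any spouse of \(\*C\) lying in \(\An{\*C}\) is absorbed into \(\*C\), giving \(\*E \cap \An{\*C} = \emptyset\) for \(\*E = \Spo{\*C}\setminus\Pa{\*C}\)), the fact that active colliders must lie in \(\An{\*Z} \subseteq \An{\*C}\) for \(\*Z = \Pa{\*C}\setminus\{X\}\), and the forced descent into \(\De{\*E}\) after exiting through an external spouse. The paper organizes the same case analysis by peeling off the longest initial bidirected segment of the path, showing its nodes sit inside \(\*C\) (interior nodes are conditioned colliders), and then blocking each possible exit; your ``first node outside \(\An{\*C}\)'' bookkeeping is that analysis in different clothing, and your fallback via Props.~\ref{prop:equivalence:cmb} and~\ref{prop:equivalence:csplus} together with Richardson's soundness is legitimate and non-circular (those propositions do not rest on this one) --- the paper itself flags that alternative route in Appendix~B.

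Two local corrections are needed before the sketch compiles into a proof. First, your fact (iii) is overstated: \(\An{\*Z} = \An{\*C}\) can fail --- take \(\*C = \{X\}\) with \(X\) parentless, so \(\*Z = \emptyset\) and \(\An{\*Z} = \emptyset\) while \(X \in \An{\*C}\); only the inclusion \(\An{\*Z} \subseteq \An{\*C}\) holds, and that is all your argument actually uses. Second, the invariant ``the active tail of \(\pi\) is confined to \(\An{\*C} \cup \De{\*E}\)'' is false as stated for paths leaving via \(X \to V_1\): that tail lives in \(\De{\{X\}}\setminus\{X\}\), which is disjoint from \(\*V^{\leq X}\) (hence from \(\An{\*C}\)) and need not meet \(\De{\*E}\). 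State the invariant only for tails that exit \(\An{\*C}\) through a bidirected edge --- there it is preserved, since \(\De{\*E} \cap \An{\*C} = \emptyset\) (by fact (ii) and ancestral closure of \(\An{\*C}\)) rules out every active collider and forces strict descent to \(Y \in \De{\*E}\) --- and dispose of the \(X \to\) branch exactly as you indicate: a turn-around requires an active collider, which would lie in \(\An{\*Z} \subseteq \An{\*C} \subseteq \*V^{\leq X}\) while being a proper descendant of \(X\), contradicting consistency of the ordering. With these repairs your argument goes through and coincides with the paper's.
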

\begin{proof}
    Since \(\perp_d\) satisfies the composition and decomposition axioms, it suffices to show that \(X \perp_d \{Y\} \mid \Pa{\*C}\setminus\{X\}\) for every \(Y \in  \*V^{\leq X }\setminus (\De{\Spo{\*C} \setminus \Pa{\*C}} \cup \Pa{\*C})\). 
    
    Take a variable \(Y \in \*V^{\leq X }\setminus (\De{\Spo{\*C} \setminus \Pa{\*C}} \cup \Pa{\*C})\) and some path \(\pi = (X,V_1,V_2,\dots,V_n,Y)\) between \(X\) and \(Y\) in \(\G\) for \(n \geq 1\) (note that \(X,Y\) are non-adjacent by assumption).
    Let \(\pi' = (X,V_1,\dots,V_k)\) (with \(k \leq n\)) denote the longest sub-path of \(\pi\) starting from \(X\), not including \(Y\), that contains only bidirected edges.
    If \(\pi'= \emptyset\), then \(V_1 \prec X \implies V_1 \in \Pa{\{X\}} \subseteq \Pa{\*C}\), hence \(\pi\) is blocked.
    Otherwise, if for some \(i \in [k]\), \(V_i \not \in \An{\*C}\), then \(V_i\) blocks \(\pi\). If every \(V_i \in \An{\*C}\), since \(\*C\) is an AC, the existence of \(\pi'\) implies \(V_i \in \*C\). Then, consider the subpath of \(\pi\) from \(V_k\) to \(Y\). Note that \(V_k, Y\) are non-adjacent since \(Y \not \in \De{\Spo{\*C} \setminus \Pa{\*C}} \cup \Pa{\*C}\). The sub-path has either \(V_k \leftarrow V_{k+1} \circ - \circ\) or \(V_k \rightarrow V_{k+1} \rightarrow\), both of which are blocked by \(\Pa{\*C} \setminus \{X\}\). Therefore, \(\pi\) is blocked, and \(X \perp_d \{Y\} \mid \Pa{\*C}\setminus\{X\}\).
\end{proof}

\begin{customthm}{\ref{thm:equivalence:gmp:lmpplus}}[Equivalence of C-LMP and GMP]
    Let \(\G\) be a causal graph and \(\*V^\prec\) a consistent ordering.
    A probability distribution over \(\*V\) satisfies the global Markov property for \(\G\) if and only if it satisfies the c-component local Markov property for \(\G\) with respect to \(\*V^{\prec}\).
\end{customthm}
\begin{proof}
The proof is similar to that of \citep[Prop.~4]{lauritzen:etal90} and \citep[Thm.~2]{richardson2003markov}, which is based on the former.

(\(\implies\)) Prop.~\ref{adxprop:clmpsep} shows that the CIs invoked by C-LMP are a subset of those invoked by GMP. Therefore, if a probability distribution \(P(\*v)\) satisfies the GMP for a given DAG \(\G\), it necessarily satisfies the C-LMP for \(\G\) (with respect to any given ordering).

(\(\impliedby\)) Next, we show that if a probability distribution \(P(\*v)\) satisfies the C-LMP for a given DAG \(\G\) with respect to a given ordering, it necessarily satisfies the GMP for \(\G\).
We show the other direction by induction on the number of nodes. Let \(I_k\) be the statement that for a graph \(\G\) on \(k\) nodes, if a distribution \(P(\*v)\) satisfies the C-LMP for \(\G\), then it satisfies the GMP for \(\G\). The base case is trivial. Assume for some \(k\) that \(I_j\) is true for all \(j \leq k\). We will show this implies \(I_{k+1}\). Fix a graph \(\G\) with \(k+1\) nodes, a consistent ordering \(\prec\), and a distribution \(P(\*v)\) which satisfies the C-LMP for \(\G\) with respect to \(\prec\). Consider a $d$-separation \(\*X \perp_d \*Y \mid \*Z\) in \(\G\) for disjoint sets \(\*X,\*Y,\*Z\). We need to show that \(\*X \indep \*Y \mid \*Z\) in \(P(\*v)\). 

We claim we can assume, without loss of generality, that \(\*X \cup\*Y \cup \*Z = \*V\).
First, we show how we can assume \(\An{\*X \cup \*Y \cup \*Z} = \*V\). Consider \(\G' = \G_{\An{\*X \cup \*Y \cup \*Z}}\), and let \(\prec'\) be the ordering \(\prec\) but removing variables in \(\*V \setminus \An{\*X \cup \*Y \cup \*Z}\). Let \(\*A =  \An{\*X \cup \*Y \cup \*Z}_\G\), so that \(P(\*a) = \sum_{\*v\setminus \*a}P(\*v)\).  Since \(\G'\) is a subgraph on an ancestral set,
any AC in \(\G'\) is an AC in \(\G\), it is easy to see that if \(P(\*v)\) satisfies the C-LMP for \(\G\) with respect to \(\prec\), then \(P(\*a)\) satisfies the C-LMP for \(\G\) with respect to \(\prec'\).  Since \(\*X \perp_d \*Y \mid \*Z\) in \(\G\), and \(\G'\) contains no more edges than \(\G\), we also have \(\*X \perp_d \*Y \mid \*Z\) in \(\G'\). By the inductive assumption for \(\G'\), we have \(\*X \indep \*Y \mid \*Z\) in \(P(\*a)\), which implies  \(\*X \indep \*Y \mid \*Z\) in \(P(\*v)\). Finally, we can extend \(\*X,\*Y\) so that \(\An{\*X \cup \*Y \cup \*Z} = \*X \cup \*Y \cup \*Z\) (and reduce to the original separation statement using the decomposition axiom). For any \(V \in \An{\*X \cup \*Y \cup \*Z} \setminus \*X \cup \*Y \cup \*Z\), either \(V \perp_d \*Y \mid \*Z\) or \(V \perp_d \*X \mid \*Z\). Towards contradiction, assume \(V\) has an active path \(\pi_x\) to some node in \(\*X\)  and an active path \(\pi_y\) to some node in \(\*Y\) when conditioning on \(\*Z\). Then, adjoining \(\pi = \pi_x \cup \pi_y\) gives an active path between \(\*X\) and \(\*Y\) unless \(V\) is an inactive collider on this path. However, if \(V \in \An{\*X}\), the path \(V \rightsquigarrow X\) to the descendant node \(X \in \*X\), adjoined with \(\pi_y\), gives an active path between \(\*X\) and \(\*Y\) unless we condition on some descendant of \(V\); the same applies if \(V \in \An{\*Y}\); and if \(V \in \An{\*Z}\), clearly, \(V\) is active when conditioning on \(\*Z\). We thus arrive at a contradiction.

Now, consider a separation \(\*X \perp_d \*Y \mid \*Z\) in \(\G\) such that \(\*X \cup \*Y \cup \*Z = \*V\). We need to show that \(\*X \indep \*Y \mid \*Z\) in \(P(\*v)\). Let \(V^*\) be the final node in the ordering \(\prec\) so that \(\*V^{\leq V^*}= \*V\). Since \(V \in \*X \cup \*Y \cup \*Z\), there are three cases to consider:

\begin{enumerate}
    \item \(V^* \in \*X\). 
    
    Since \(\*X \perp_d \*Y \mid \*Z\) in \(\G\), we have \(\*X\setminus\{V^*\} \perp_d \*Y \mid \*Z\) in \(\G_{\*V \setminus \{V^*\}}\). Since  \(\G_{\*V \setminus \{V^*\}}\) is ancestral, we apply a similar argument as in justifying the assumption that \(\*X \cup\*Y \cup \*Z = \*V\) to get, by the inductive assumption for \(\G'\), that 
    \[
    \*X\setminus\{V^*\} \indep \*Y \mid \*Z \text{ in } P(\*v).
    \]
    Let \(\*C = \&{C}(V^*)_{\G_{\An{\*X \cup \*Z}}}\). By C-LMP, we have \(V^* \indep \*V \setminus (\De{\Spo{\*C} \setminus \Pa{\*C}} \cup  \Pa{\*C}) \mid \Pa{\*C} \setminus \{V^*\}\).

    First, note that \(\*Y \cap \Pa{\*C} = \emptyset\). Towards contradiction, assume that for some \(Y \in \*Y\), there is a path \(\pi: Y \circ \rightarrow V_1 \leftrightarrow V_{2} \leftrightarrow \dots V_{k} \leftrightarrow V_{k+1} = V^*\) such that each \(V_i \in \*C \subseteq \An{\*X \cup \*Z}\). By induction on \(i \in [k+1]\), we show that \(\pi\) is active when conditioning on \(\*Z\).
    For the base case, clearly, the subpath \(Y \circ \to V_1\) of \(\pi\) is active when conditioned on \(\*Z\).
    Assume that, for some \(i \in [k+1]\), the sub-path of \(\pi\) from \(Y\) to \(V_i\) is active.
    Consider the inductive step. If \(V_i = V^*\), we are done. Otherwise, if \(V_i \in \An{\*Z}\), then \(V_i\) is active in \(\pi\) when conditioning on \(\*Z\).  If \(V_i \in \An{\*X}\), then from the inductive assumption, there is a path from \(Y\) to \(V_i\) plus a path \(V_i \rightsquigarrow X'\) to some \(X' \in \*X\) which is only blocked if \(\De{\{V_i\}} \cap \*Z \neq \emptyset\). This again implies that \(V_1\) is active in \(\pi\) when conditioning on \(\*Z\). In either case, the subpath of \(\pi\) from \(Y\) to \(V_{i+1}\) is active. This contradicts \(\*X \perp_d \*Y \mid \*Z\) in \(\G\). Therefore, we can conclude \(\*Y \cap \Pa{\*C} = \emptyset\).

    Second, note that \(\*Y \cap \De{\Spo{\*C} \setminus \Pa{\*C}} = \emptyset\). This is because \(\Spo{\*C} \setminus \Pa{\*C} = \emptyset\). For any \(U \in \Spo{\*C}, U \in \*X \cup \*Z \implies U \in \*C\) by definition of \(\*C\). Therefore, \(U \in \Spo{\*C} \setminus \Pa{\*C} \implies U \in \*Y = \*V \setminus \*X \cup \*Z\). However, for such \(U\), there is a path \(\pi: U \leftrightarrow V_1 \leftrightarrow V_{2} \leftrightarrow \dots V_{k} \leftrightarrow V_{k+1} = V^*\) with each \(V_i \in \An{\*X \cup \*Z}\). By a similar induction as for the claim \(\*Y \cap \Pa{\*C} = \emptyset\), we can show that \(\pi\) is active when conditioning on \(\*Z\), which contradicts \(\*X \perp_d \*Y \mid \*Z\). 

    We return to the CI statement 
    \[
    V^* \indep \*V \setminus (\De{\Spo{\*C} \setminus \Pa{\*C}} \cup  \Pa{\*C}) \mid \Pa{\*C} \setminus \{V^*\}.\]
    Since \(\*V = \*X \cup \*Y \cup \*Z\) by assumption and \(\*Y \cap \Pa{\*C}= \*Y \cap \De{\Spo{\*C} \setminus \Pa{\*C}} = \emptyset\), we can simplify this statement to 
    \begin{align*}
        V^* \indep & \*Y \cup ((\*X \cup \*Z) \setminus (\De{\Spo{\*C} \setminus \Pa{\*C}} \cup  \Pa{\*C}))\\
        & \mid \Pa{\*C} \setminus \{V^*\}.
    \end{align*}
    For any variable \(W \in \*X \cup \*Z\), \(W \in \De{\Spo{\*C}}\) implies there is some variable \(B \in \Spo{\*C} \cap \An{\{W\}} \subseteq \Spo{\*C} \cap \An{\*X \cup \*Z}\), hence \(B \in \*C\). Therefore, \(\De{\Spo{\*C} \setminus \Pa{\*C}} = \emptyset\). This further implies
    \[
    V^* \indep \*Y \cup ((\*X \cup \*Z) \setminus  \Pa{\*C}) \mid \Pa{\*C} \setminus \{V^*\}
    \]
    By the weak union axiom, we get 
    \[
    V^* \indep \*Y  \mid (\*X \setminus \{V^*\}) \cup \*Z
    \]
    Applying the contraction axiom to \(\*X\setminus\{V^*\} \indep \*Y \mid \*Z\) and \(V^* \indep \*Y  \mid (\*X \setminus \{V^*\}) \cup \*Z\) gives \(\*X \indep \*Y \mid \*Z\) in \(P(\*v)\).
    
    \item \(V^* \in \*Y\). This is similar to the case \(V \in \*X\) (switching \(\*X, \*Y\) in the proof).
    \item \(V^* \in \*Z \). Since \(\*X \perp_d \*Y \mid \*Z\) in \(\G\), we have \(\*X \perp_d \*Y \mid \*Z \setminus \{V^*\}\) in \(\G_{\*V \setminus \{V^*\}}\).
    Since \(\G_{\*V \setminus \{V^*\}}\) is a subgraph on an ancestral set,
    we apply a similar argument as in justifying the assumption that \(\*X \cup\*Y \cup \*Z = \*V\) to get, by the inductive assumption for \(\G'\), that 
    \[
        \*X \indep \*Y \mid \*Z \setminus\{V^*\} \text{ in } P(\*v).
    \]
    Let \(\*C = \&{C}(V^*)_\G\). By C-LMP, we have \(V^* \indep \*V \setminus (\De{\Spo{\*C} \setminus \Pa{\*C}} \cup  \Pa{\*C}) \mid \Pa{\*C} \setminus \{V^*\}\).

    First, we show that either \(\Pa{\*C} \cap \*Y = \emptyset\) or \(\Pa{\*C} \cap \*X = \emptyset\). Assume, toward contradiction, that \(\Pa{\*C} \cap \*Y \neq \emptyset\) and \(\Pa{\*C} \cap \*X \neq \emptyset\).
    Then, there are variables \(Y \in \*Y, X\in \*X\) and a path \(\pi: Y \circ \rightarrow V_1 \leftrightarrow V_2 \leftrightarrow \dots V_k \leftarrow \circ V_{k+1} = X\) for some \(k\geq 0\). Let \(\pi'\) be a subpath of \(\pi\) such that one endpoint node of \(\pi'\) is in \(\*X\), the other endpoint node in \(\*Y\), and all intermediate nodes (if any) are in \(\*Z\). It is easy to see \(\pi'\) must exist since \(\pi' = \pi\) if for each \(i \in [k]\), we have \(V_i \in \*Z\); otherwise, we can construct \(\pi'\) by removing variables from \(\pi\). Then, \(\pi'\) is active when conditioning on \(\*Z\), which contradicts \(\*X \perp_d \*Y \mid \*Z\).

    Moreover, \(\Spo{\*C} \setminus \Pa{\*C} = \emptyset\) because \(\*C\) is defined over \(\*V = \*X \cup \*Y \cup \*Z\). Thus, \(\*X \cap \De{\Spo{\*C} \setminus \Pa{\*C}} = \*Y \cap \De{\Spo{\*C} \setminus \Pa{\*C}} = \emptyset\).

    Return to the CI statement: \(V^* \indep \*V \setminus (\De{\Spo{\*C} \setminus \Pa{\*C}} \cup  \Pa{\*C}) \mid \Pa{\*C} \setminus \{V^*\}\).
    If \(\Pa{\*C} \cap \*Y = \emptyset\), this simplifies to \(V^* \indep \*Y \mid \*X \cup (\*Z \setminus \{V^*\})\) by an argument similar to Case (1). The contraction axiom applied to  \(V^* \indep \*Y \mid \*X \cup (\*Z \setminus \{V^*\})\) and \(\*X \indep \*Y \mid \*Z \setminus \{V^*\}\) gives \(\*X \cup \{V^*\} \indep \*Y  \mid \*Z \setminus \{V^*\}\). Applying the weak union axiom to this last CI, we get \(\*X \indep \*Y \mid \*Z\). A similar argument applies if  \(\Pa{\*C} \cap \*X = \emptyset\).
    \end{enumerate}

\end{proof}

\begin{customcor}{\ref{cor:equivalence:lmp:lmpplus}}[Equivalence of C-LMP and the Ordered Local Markov Property]
\label{cor:equivalence:gmp:lmpplus}
    Let \(\G\) be a causal graph and \(\*V^\prec\) a consistent ordering.
    A probability distribution over \(\*V\) satisfies the ordered local Markov property for \(\G\) with respect to  \(\*V^\prec\) if and only if it satisfies the c-component local Markov property for \(\G\) with respect to \(\*V^{\prec}\).
\end{customcor}
\begin{proof}
    By Thm.~\ref{thm:equivalence:gmp:lmpplus}, a probability distribution \(P(\*v)\) over \(\*V\) satisfies the C-LMP for \(\G\) with respect to \(\*V^\prec\) if and only if it satisfies the GMP for \(\G\). By \citep[Thm.~2, Section~3.1]{richardson2003markov}, a probability distribution \(P(\*v)\) over \(\*V\) satisfies the ordered local Markov property for \(\G\) with respect to \(\*V^\prec\) if and only if it satisfies the GMP for \(\G\).
\end{proof}

\begin{customthm}{\ref{thm:equivalence:clmpplus}}[Unique AC for each CI Invoked by C-LMP]
\label{adxcor:equivalence:clmpplus}
    Let \(\G\) be a causal graph, \(\*V^\prec\) a consistent ordering, and \(X\) a variable in \(\*V^\prec\).
    For every conditional independence relation invoked by the c-component local Markov property of the form \(X \indep \*W \mid \*Z\), there is exactly one ancestral c-component \(\*C \in \&{AC}_X\) such that \(\*W = \*V^{\leq X} \setminus ((\De{\Spo{\*C} \setminus \Pa{\*C}}) \cup \Pa{\*C})\) and \(\*Z = \Pa{\*C} \setminus \{X\}\).
\end{customthm}

\begin{proof}
    The result follows from Def.~\ref{def:lmpplus}, Lemma~\ref{lemma:equivalence:cmb}, and Cor.~\ref{cor:equivalence:csplus}. 
\end{proof}

\begin{customprop}{~\ref{prop:lmpsize}}[Number of CIs Invoked by C-LMP]
    Given a causal graph \(\G\) and a consistent ordering  \(\*V^\prec\), let \(n\) and \(s \leq n\) denote the number of variables and the size of the largest c-component in \(\G\) respectively. Then, the c-component local Markov property for \(\G\) with respect to \(\*V^\prec\) invokes \(O(n 2^s)\) conditional independencies implied by \(\G\) over \(\*V\).
    Moreover, there exists a graph \(\G\) and a consistent ordering  \(\*V^\prec\)  for which the property induces \(\Omega(2^n)\) conditional independencies.
\end{customprop}

\begin{proof}
    By Def.~\ref{def:lmpplus}, the set of CIs invoked by C-LMP for a variable \(X \in \*V^\prec\) is in bijection with the set of ACs, \(\&{AC}_X\).
    Therefore, it suffices to bound \(|\&{AC}_X|\). Recall that \(\&{AC}_X \subseteq \mathcal{P}(\&C(X)_\G)\) (where \(\mathcal{P}(\cdot)\) denotes the power-set operation). Then, \(|\&C(X)_\G| \leq s \implies |\mathcal{P}(\&C(X)_\G)| \leq 2^s \implies |\&{AC}_X| \leq 2^s.\) Total number of CIs \(k\) invoked by C-LMP for all variables is thus \(k \leq n 2^s \in O(n 2^s)\).

    Next, consider the graph \(\G\) shown in Fig.~\ref{fig:lmpsize} for which C-LMP invokes \(\Omega(2^n)\) CIs.

    Fix \(V_i, i \in [k]\). For each \(\*C \subseteq \{V_j\}_{j < i}\), we get an AAC \(\{V_i,X,Z\} \cup \*C\) relative to \(V_i\) inducing the CI: \(V_i \indep \{Y\} \mid \{X,Z\} \cup \*C\) (The definition of admissibility of AC is given by Def.~\ref{def:validci}). There are \(2^{i-1}\) such CIs for each \(V_i\). Then, the total number of CIs across all such \(V_i\) is 
    \[
        \sum_{i=1}^k 2^{i-1} = 2^{k-1} - 1 = 2^{n-4} - 1 \in \Omega(2^n) 
    \]
    Since, for any \(\G\), we have that \(s \leq n\), the upper bound \(O(n 2^s)\) is thus tight ignoring the linear term in \(n\).
\end{proof}

\subsection{Section~\ref{section:listci} Proofs}
\label{sec:listci_proof}

\textbf{Notation.} For the proofs in this section, given a causal graph \(\G\) defined on a set of variables \(\*V\), and variables \(X, Y \in \*V\), we use \(X \sim Y\) to denote an arbitrary path (possibly of length 0, when \(X = Y\)) between \(X\) and \(Y\) in \(\G\); \(X \rightsquigarrow Y\) to denote a directed path (possibly of length 0, when \(X = Y\)) from \(X\) to \(Y\) in \(\G\); and \(X \circ \rightarrow Y\) to denote that there is either an edge \(X \rightarrow Y\) or \(X \leftrightarrow Y\) in \(\G\).

\begin{adxprop}[Time Complexity of Computing a C-component]~\label{prop:constc}
     Given a causal graph \(\G\) over a set of variables \(\*V\) and a variable \(X \in \*V\), the c-component \(\&{C}(X)_\G\) containing \(X\) in \(\G\) is computable in time \(O(n+m)\), where \(n\) and \(m\) are the numbers of nodes and edges in \(\G\) respectively.
\end{adxprop}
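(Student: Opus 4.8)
The plan is to reduce the computation of \(\&{C}(X)_\G\) to a single connected-component search in the \emph{bidirected skeleton} of \(\G\). Let \(\G^{\leftrightarrow}\) denote the undirected graph on vertex set \(\*V\) whose edges are exactly the bidirected edges of \(\G\). By Def.~\ref{def:ccomponent}, \(\&{C}(X)_\G\) is the maximal set of vertices joined to \(X\) by paths consisting solely of bidirected edges, which is precisely the connected component of \(X\) in \(\G^{\leftrightarrow}\). Thus the task is to find that component and return its vertex set.

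First I would build \(\G^{\leftrightarrow}\) by scanning the adjacency structure of \(\G\) once and retaining only the bidirected edges, discarding directed edges. Assuming \(\G\) is stored with adjacency lists that tag each incident edge by its type, this scan enumerates each edge a constant number of times and costs \(O(n+m)\); the result has at most \(n\) vertices and at most \(m\) edges. Then I would run a standard breadth-first or depth-first search from \(X\) in \(\G^{\leftrightarrow}\), marking every reachable vertex. The set of marked vertices is returned as \(\&{C}(X)_\G\). A single traversal visits each vertex once and each edge at most twice, so this step is also \(O(n+m)\), giving a total of \(O(n+m)\).

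For correctness I would argue both inclusions. Any vertex \(V_j\) marked by the search is reached along a path in \(\G^{\leftrightarrow}\), i.e.\ a path of bidirected edges from \(X\) to \(V_j\) in \(\G\), so \(V_j \in \&{C}(X)_\G\). Conversely, every \(V_j \in \&{C}(X)_\G\) is, by Def.~\ref{def:ccomponent}, joined to \(X\) by a bidirected path, hence reachable from \(X\) in \(\G^{\leftrightarrow}\) and therefore marked. Maximality is immediate, since a connected component of \(\G^{\leftrightarrow}\) already contains \emph{all} vertices reachable from \(X\).

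The only real obstacle is bookkeeping rather than mathematics: the \(O(n+m)\) bound depends on being able to enumerate the bidirected neighborhood of each visited vertex in time proportional to its degree, so that the search does not degrade to \(O(n^2)\). With the standard adjacency-list representation in which edges carry a type label, this is guaranteed, and no genuine algorithmic difficulty arises; the argument reduces to the textbook linear-time analysis of graph traversal.
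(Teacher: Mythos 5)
Your proposal is correct and matches the paper's own proof, which likewise computes \(\&{C}(X)_\G\) by a breadth-first search from \(X\) following only bidirected edges, with the same \(O(n+m)\) traversal bound. Your added correctness argument and adjacency-list bookkeeping are just explicit versions of details the paper leaves implicit.
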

\begin{proof}
    Using breadth-first search (BFS), compute the set of nodes reachable from the starting node \(X\) by following only bidirected edges. This takes time \(O(n+m)\), the complexity of BFS.
\end{proof}

\begin{figure}[t]
\begin{algorithmic}[1]
\Function {IsAdmissible} {$\G_{\*V^{\leq X}}, X, \*V^{\leq X}, \*C$}

    \State {\bfseries Output:} True if a given AC \(\*C\) relative to \(X\) is admissible; False otherwise.

    \State \(\*S^+ \gets \*V^{\leq X} \setminus \De{ \Spo{\*C} \setminus \Pa{\*C} } \)
    \State \(\*W \gets \*S^+ \setminus \Pa{\*C}\)

    \State \textbf{if} \(\*W \neq \emptyset\) \textbf{then}
    \Indent
        \State \textbf{return} True
    \EndIndent
    \State \textbf{else}
    \Indent
        \State \textbf{return} False
    \EndIndent
    
\EndFunction
\end{algorithmic}
\caption{A function that checks if a given AC is admissible.}
\label{func:isadmissible}
\end{figure}

\begin{adxlemma}[Correctness of \textsc{IsAdmissible}]
\label{adxlemma:isadmissible}
    Given a causal graph \(\G_{\*V^{\leq X}}\), a variable \(X\), and a set of variables \(\*V^{\leq X}\), let \(\*C\) be an ancestral c-component relative to \(X\).
    Then, \textsc{IsAdmissible} returns True if \(\*C\) is admissible, and False otherwise.
    \textsc{IsAdmissible} takes \(O(n+m)\) time where \(n\) and \(m\) represent the number of nodes and edges in \(\G\), respectively.
\end{adxlemma}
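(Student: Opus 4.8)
The plan is to treat correctness and running time separately, with correctness being essentially definitional and the time bound following from a line-by-line accounting of elementary graph operations.

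For correctness, I would simply observe that \textsc{IsAdmissible} computes $\*S^+ = \*V^{\leq X} \setminus \De{\Spo{\*C} \setminus \Pa{\*C}}$ and $\*W = \*S^+ \setminus \Pa{\*C}$, which are exactly the sets named in Def.~\ref{def:lmpplus} and Def.~\ref{def:validci}. Since $\*C$ is given to be an ancestral c-component relative to $X$, the relation $X \indep \*W \mid \Pa{\*C}\setminus\{X\}$ is a well-formed C-LMP constraint, and by Def.~\ref{def:validci} the AC $\*C$ is admissible precisely when $\*W \neq \emptyset$. The function returns True exactly in that case and False otherwise, so its boolean output coincides with admissibility. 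I expect this to be a single short paragraph carrying no graph-theoretic content beyond matching symbols to the definition.

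For the time bound, I would argue that each line executes a constant number of breadth-first (or depth-first) searches and set operations over $\G_{\*V^{\leq X}}$. Computing $\Pa{\*C}$ and $\Spo{\*C}$ each requires one scan of the directed and bidirected edges incident to $\*C$, respectively, costing $O(n+m)$; the difference $\Spo{\*C}\setminus\Pa{\*C}$ is $O(n)$; and $\De{\Spo{\*C}\setminus\Pa{\*C}}$ is obtained by a single forward traversal along directed edges from that source set, costing $O(n+m)$ by the same reasoning as the c-component computation in Prop.~\ref{prop:constc}. The two remaining set differences defining $\*S^+$ and $\*W$, and the emptiness test on $\*W$, are each $O(n)$. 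Summing these contributions yields the claimed $O(n+m)$ bound.

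The only point requiring care --- and the closest thing to an obstacle in an otherwise bookkeeping argument --- is to fix a representation of the vertex sets (for instance, boolean membership arrays indexed by the vertices of $\G_{\*V^{\leq X}}$) under which all membership queries and set differences run in linear time; a naive implementation over unsorted lists could otherwise introduce a hidden super-linear factor. I would make this representation assumption explicit at the start of the complexity analysis and keep the remainder of the argument brief.
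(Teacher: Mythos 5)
Your proposal is correct and follows essentially the same route as the paper, which likewise treats correctness as immediate from Def.~\ref{def:lmpplus} and Def.~\ref{def:validci} and obtains the $O(n+m)$ bound from the linear-time construction of $\*S^+$ and $\*W$. Your explicit remark about representing vertex sets as boolean membership arrays is a reasonable implementation detail the paper leaves implicit, but it does not change the argument.
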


\begin{proof}
    For correctness, it immediately follows from Def.~\ref{def:lmpplus} and Def.~\ref{def:validci}.
    
    \textsc{IsAdmissible} runs in \(O(n+m)\) time since the construction of the sets \(\*S^+\) and \(\*W\) takes \(O(n+m)\) time for each set.
\end{proof}

\begin{adxlemma}[Existence of a Separator]
\label{lemma:existsep}
    Given a causal graph \(\G\), let \(\*I, \*R, \*X, \*Y\) be sets of nodes with \(\*I \subseteq \*R\) and \(\*R \cap (\*X \cup \*Y) = \emptyset\). If there exists a set \(\*Z_0\) separating \(\*X\) and \(\*Y\) in \(\G\) such that \(\*{I} \subseteq \*{Z_0} \subseteq \*{R}\), then \(\*Z = \An{\*X \cup \*Y \cup \*I}_\G \cap \*R\) is such a set.
\end{adxlemma}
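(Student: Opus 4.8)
The plan is to handle the two requirements separately: that \(\*Z\) satisfies \(\*I \subseteq \*Z \subseteq \*R\), and that \(\*Z\) in fact \(d\)-separates \(\*X\) and \(\*Y\). The containment is immediate: since every node is its own ancestor, \(\*I \subseteq \An{\*X \cup \*Y \cup \*I}\), and \(\*I \subseteq \*R\) by hypothesis, so \(\*I \subseteq \An{\*X \cup \*Y \cup \*I} \cap \*R = \*Z\); and \(\*Z \subseteq \*R\) by construction. Write \(\*A = \An{\*X \cup \*Y \cup \*I}\). Two identities drive everything else. First, because \(\*I \subseteq \*Z \subseteq \*A\) and \(\*A\) is ancestrally closed, \(\An{\*X \cup \*Y \cup \*Z} = \*A\); this says \(\*Z\) and its ancestral subgraph live in the same \(\G_{\*A}\). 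Second, since \(\*Z_0 \subseteq \*R\), we have \(\*Z_0 \cap \*A \subseteq \*R \cap \*A = \*Z\).

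I would then prove separation by contraposition, first attempting to show that an active path given \(\*Z\) forces an active path given \(\*Z_0\). Suppose \(\pi\) is active given \(\*Z\) between \(X \in \*X\) and \(Y \in \*Y\). By the standard reachability fact, \(\pi\) may be chosen with all vertices in \(\An{\*X \cup \*Y \cup \*Z} = \*A\) (each collider has a descendant in \(\*Z \subseteq \*A\), so lies in \(\*A\); non-colliders can be routed to ancestors of the endpoints, also in \(\*A\)). Now compare \(\pi\) with \(\*Z_0\). Every non-collider \(W\) of \(\pi\) satisfies \(W \notin \*Z\); since \(W \in \*A\) and \(\*Z = \*A \cap \*R\), this gives \(W \notin \*R\), hence \(W \notin \*Z_0\). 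So non-colliders are never blocked by \(\*Z_0\).

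The colliders are the crux. A collider \(C\) on \(\pi\) has some \(d \in \De{\{C\}} \cap \*Z \subseteq \*A = \An{\*X \cup \*Y \cup \*I}\). If \(d \in \An{\*I}\), a directed path \(d \rightsquigarrow I_0\) with \(I_0 \in \*I \subseteq \*Z_0\) shows \(C\) has a descendant in \(\*Z_0\), so \(C\) is unblocked by \(\*Z_0\) as well. The obstruction is \(d \in \An{\*X \cup \*Y}\setminus\An{\*I}\): then \(C \rightsquigarrow d \rightsquigarrow t\) for an endpoint-type node \(t \in \*X \cup \*Y\) with \(t \notin \*Z_0\), so \(\pi\) itself may be blocked by \(\*Z_0\) at \(C\). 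I expect this collider case to be the main difficulty, since it shows one cannot simply reuse the same path (a local reroute through \(C \rightsquigarrow t\) risks hitting a \(\*Z\)-node on the directed segment).

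To close this gap cleanly I would pass to the moralization (augmentation) criterion for acyclic directed mixed graphs \citep{richardson2003markov}, under which \(\*X \perp_d \*Y \mid \*Z\) is equivalent to ordinary separation of \(\*X\) and \(\*Y\) by \(\*Z\) in the augmented undirected graph \((\G_{\An{\*X \cup \*Y \cup \*Z}})^{a}\). Using the first identity, the augmentation for \(\*Z\) is built on \(\G_{\*A}\). Since every collider-connecting path inside \(\G_{\*A}\) also exists in \(\G_{\*A_0}\) with \(\*A_0 = \An{\*X \cup \*Y \cup \*Z_0} \supseteq \*A\), the edge set of \((\G_{\*A})^a\) is contained in that of \((\G_{\*A_0})^a\) restricted to \(\*A\); hence \(\*Z_0\)-separation in \((\G_{\*A_0})^a\) restricts to \((\*Z_0 \cap \*A)\)-separation in \((\G_{\*A})^a\). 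Finally, separation in an undirected graph is monotone in the separating set, and by the second identity \(\*Z \supseteq \*Z_0 \cap \*A\); therefore \(\*Z\) separates \(\*X\) from \(\*Y\) in \((\G_{\*A})^a\), which by the criterion yields \(\*X \perp_d \*Y \mid \*Z\). The remaining care is in verifying the subgraph and monotonicity steps for the augmented graph, which is where I would spend most of the effort.
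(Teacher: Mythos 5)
Your proposal is correct, and it reaches the conclusion by a genuinely different route than the paper. The paper's proof stays entirely at the level of paths in \(\G\): it supposes \(\*Z = \An{\*X \cup \*Y \cup \*I}_\G \cap \*R\) fails to block some path \(\pi\), reduces to the situation where every node of \(\pi\) lying in \(\*R\) is a collider, and then runs a delicate induction along the colliders of \(\pi\) --- using that colliders in \(\An{\*I}\) are opened because \(\*I \subseteq \*Z_0\), that colliders in \(\An{\*Y}\) are forced open since otherwise their directed tails would themselves yield \(\*Z_0\)-active paths into \(\*Y\), and that colliders in \(\An{\*X}\) permit restarting the path from a different node of \(\*X\) --- to manufacture a \(\*Z_0\)-active path between \(\*X\) and \(\*Y\), contradicting the assumed separation. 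That induction is precisely the ``path surgery'' you correctly predicted would be the crux and could not be done by reusing \(\pi\) locally. You sidestep it entirely by passing to the augmentation criterion: your two identities \(\An{\*X \cup \*Y \cup \*Z}_\G = \An{\*X \cup \*Y \cup \*I}_\G = \*A\) and \(\*Z_0 \cap \*A \subseteq \*R \cap \*A = \*Z\) are both valid, the edge-containment \((\G_{\*A})^a \subseteq (\G_{\*A_0})^a\) restricted to \(\*A\) holds because collider-connecting paths in an induced subgraph persist in the larger graph, the restriction step works because any path in \((\G_{\*A})^a\) avoiding \(\*Z_0 \cap \*A\) lies wholly in \(\*A\) and hence avoids \(\*Z_0\), and monotonicity of vertex separation in undirected graphs is immediate (the hypothesis \(\*R \cap (\*X \cup \*Y) = \emptyset\) is what licenses enlarging the separator to \(\*Z\), in both your proof and the paper's). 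What the two approaches buy: the paper's argument is self-contained, needing nothing beyond the definition of \(d\)-separation, at the cost of a long case analysis; yours is shorter and more conceptual, outsourcing the combinatorial difficulty to Richardson's equivalence between \(m\)-separation and separation in the augmented graph, and it matches the standard technique in the minimal-separator literature. One cosmetic remark: in your abandoned direct attempt you say non-colliders on an active path can be routed to ancestors of the endpoints; the correct statement is that an active path can be chosen inside \(\An{\*X \cup \*Y \cup \*Z}\) (non-colliders may be ancestors of colliders, hence of \(\*Z\)), but since \(\An{\*X \cup \*Y \cup \*Z} = \*A\) this lands in the same set, and your final argument does not use that claim anyway.
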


\begin{proof}
    Assume there exists \(\*Z_0\) separating \(\*X,\*Y\) such that \(\*{I} \subseteq \*{Z_0} \subseteq \*{R}\). For some \(X \in \*X\), \(Y \in \*Y\), consider a path \(\pi\) from \(X\) to \(Y\) in \(\G\), consisting of nodes \(\{X = V_0, V_1, \dots, V_n, V_{n+1} = Y\}\) where \(V_i, V_i+1\) are adjacent in \(\G\) for \(0 \leq i \leq n\). Note that we must have \(n \geq 1\); otherwise, \(X,Y\) are adjacent and cannot be separated. 
    
    If none of the variables \(V_i, i \in [n]\) is a collider, then each \(V_i\) must be in \(\An{\{X, Y\}}\).  If  \(V_i \not \in \*R\) for any \(i \in [n]\), then  \(V_i \not \in \*{Z_0}\subseteq \*R\) for any \(i \in [n]\), and hence \(\*Z_0\) does not block \(\pi\), which is a contradiction. Therefore, there exists \(V_i\) such that \(V_i \in \An{\*X \cup \*Y \cup \*I}_\G \cap \*R = \*Z\) and hence \(\*Z\) blocks  \(\pi\). 
    
    If some \(V_i\)  is a collider, let \(\*C = \{C_{i_1}, \dots, C_{i_k}\} \subseteq \{V_1, \dots, V_n\}\) denote the set of colliders on \(\pi\) such that \(i_j < i_{j+1}\) for \(1 \leq j \leq k-1\). If there is a variable \(C \in \*C\) such that \(\*Z \cap \De{\{C\}}_\G = \emptyset\)  (in other words, \(\*Z\) does not contain \(C\) or any of its descendants), then \(\*Z\) blocks \(\pi\) due to the inactive collider \(C\). Otherwise, consider the case that \(\*C \subseteq \An{\*Z}_\G\)  i.e. for every \(C \in \*C\), either \(C\) is in \(\*Z\) or some descendant of \(C\) is in \(\*Z\), and  hence \(C\) is active (when conditioning on \(\*Z\). Since \(\*Z = \An{\*X \cup \*Y \cup \*I}_\G \cap \*R \subseteq  \An{\*X \cup \*Y \cup \*I}_\G\), we have \(\*C \subseteq \An{\*Z}_\G \implies \An{\*C}_\G \subseteq \An{\*X \cup \*Y \cup \*I}_\G\). For any \(V_i\) in \(\pi\), either \(V_i \in \*C\) or  \(V_i \in \An{\{X,Y\}}\) or \(V_i \in \An{\*C}\); therefore, \(\{V_i\}_{1:n} \subseteq \An{\{X,Y\} \cup \*C}_\G \subseteq \An{\*X \cup \*Y \cup \*I}_\G\). Hence, \(\pi\) is blocked by \(\*Z\) unless every \(V_i \in \*R\) is a collider; that is, \(\{V_i\}_{1:n} \cap \*R \subseteq \*C\). Assume toward contradiction that  \(\{V_i\}_{1:n} \cap \*R \subseteq \*C\).
    
    We show, by induction on the index \(i_j, j \in [k]\) of \(\*C\), that for every \(j \in [k]\), there exists a variable \(X_0 \in \*X\) such that there is an active \(X_0 \sim C_{i_k} \leftarrow \circ V_{i_j + 1}\) path when conditioning on \(\*{Z_0}\).

    \emph{Base case}. Consider \(C_{i_1} \in \An{\*X \cup \*Y \cup \*I}_\G\). The sub-path of \(\pi\) from \(X\) to \(C_{i_1}\) is unblocked by \(\*{Z_0} \subseteq \*R\). This is because for any node \(V\) on this sub-path (excluding \(X\) and \(C_{i_1}\)), \(V \not \in \*C\) by assumption and  \(\{V_i\}_{1:n} \cap \*R \subseteq \*C \implies V \not \in \*R \implies V \not \in \*Z_0\). 
    
    \begin{itemize}
        \item If \(C_{i_1} \in \An{\*Y}_\G\), there is a directed path \(\pi'\) from \(C_{i_1}\) to \(Y'\) for some \(Y' \in \*Y\). The \(X \sim C_{i_1}\) sub-path of \(\pi\) (which is unblocked by \(\*Z_0\)), adjoined with \(\pi'\), gives an active path from \(X \in \*X\) to \(Y' \in \*Y\). For \(\*Z_0\) to block this path, it must block \(\pi'\). Hence, \(\*{Z_0}\) contains a descendant of \(C_{i_1}\) and \(C_{i_1}\) is active when conditioning on \(\*{Z_0}\), giving an active sub-path of \(\pi\), \(X \sim V_{i_1-1} \circ \rightarrow C_{i_1} \leftarrow \circ V_{i_1 + 1}\).
       \item If \(C_{i_1} \in \An{\*X}_\G\), there is a directed path \(\pi'\) from \(C_{i_1}\) to \(X'\) for some \(X' \in \*X\). If \(\*{Z_0}\) contains a descendant of \(C_{i_1}\), then \(C_{i_1}\) is active when conditioning on \(\*{Z_0}\). Therefore, the sub-path of \(\pi\), \(X  \sim V_{i_1-1} \circ \rightarrow C_{i_1} \leftarrow \circ V_{i_1 + 1}\) is active. If \(\*{Z_0}\) contains no descendants of \(C_{i_1}\), then \(\pi'\) is unblocked by \(\*{Z_0}\), giving an active \(X' \leftsquigarrow C_{i_1} \leftarrow \circ V_{i_1 + 1}\) path. 
        \item If \(C_{i_1} \in \An{\*I}_\G\), since \(\*I \subseteq \*{Z_0}\), we condition on a descendant of \(C_{i_1}\) and \(C_{i_1}\) is active,  giving an active sub-path of \(\pi\), \(X \sim  V_{i_1-1} \circ \rightarrow \C_{i_1} \leftarrow \circ V_{i_1 + 1}\). 
    \end{itemize}

    \emph{Inductive assumption.} Assume for some \(j \in [k-1]\), there is an active \(X_0 \sim\*C_{i_j} \leftarrow \circ V_{i_j + 1}\) path for some \(X_0 \in \*X\).

    \emph{Inductive step.} We show this implies the existence of an active \(X_0' \sim\*C_{i_{j+1}} \leftarrow \circ V_{i_{j+1} + 1}\) path for some \(X_0' \in \*X\).  Note that the sub-path of \(\pi\) from \(C_{i_j}\) to \(C_{i_{j+1}}\) is unblocked by \(\*{Z_0}\). This is because for any node \(V\) on this sub-path (excluding \(C_j\) and \(C_{j+1}\)), \(V \not \in \*C\) by assumption and  \(\{V_i\}_{1:n} \cap \*R \subseteq \*C \implies V \not \in \*R \implies V \not \in \*Z_0\).
    \begin{itemize}
        \item If \(C_{i_{j+1}} \in \An{\*Y}_\G\), there is a directed path \(\pi': C_{i_{j+1}} \rightsquigarrow Y'\) for some \(Y' \in \*Y\). By the inductive assumption, we get an active path \(X_0 \sim\*C_{i_j} \leftarrow \circ V_{i_j + 1} \circ \rightarrow C_{i_{j+1}}\rightsquigarrow Y'\). For \(\*{Z_0}\) to block this path, it must block \(\pi'\). Hence, \(\*{Z_0}\) contains a descendant of \(C_{i_{j+1}}\) and \(C_{i_{j+1}} \) is active when conditioning on \(\*{Z_0}\), giving an active 
        \(X_0 \sim C_{i_{j}} \leftarrow \circ V_{i_j + 1} \sim  V_{i_{j+1} - 1} \circ \rightarrow C_{i_{j+1}} \leftarrow \circ V_{i_{j+1} + 1}\) path.
        \item If \(C_{i_{j+1}} \in \An{\*X}_\G\), there is a directed path \(\pi'\) from \(C_{i_{j+1}}\) to \(X'\) for some \(X' \in \*X\). If  \(\*{Z_0}\) contains a descendant of \(C_{i_{j+1}}\), then \(C_{i_{j+1}}\) is active when conditioning on \(\*{Z_0}\). Using the inductive assumption, we get an active path  \(X_0 \sim C_{i_{j}} \leftarrow \circ V_{i_j + 1} \sim   V_{i_{j+1} - 1} \circ \rightarrow C_{i_{j+1}} \leftarrow \circ V_{i_{j+1} + 1} \).  If  \(\*{Z_0}\) contains no descendants of \(C_{i_{j+1}}\), then \(\pi'\) is unblocked by \(\*{Z_0}\), giving an active path \(X' \leftsquigarrow C_{i_{j+1}} \leftarrow \circ V_{i_{j+1} + 1}\) path.
        \item If \(C_{i_{j+1}}  \in \An{\*I}_\G\), since \(\*I \subseteq \*{Z_0}\), we condition on a descendant of \(C_{i_{j+1}}\) and \(C_{i_{j+1}}\) is active. Using the inductive assumption, we get an active path \(X_0 \sim C_{i_{j}} \leftarrow \circ V_{i_j + 1} \sim  V_{i_{j+1} - 1}  \circ \rightarrow C_{i_{j+1}} \leftarrow \circ V_{i_{j+1} + 1}\).
    \end{itemize}

    By induction, we have an active \(X_0 \sim C_{i_k} \leftarrow \circ V_{i_k+1}\) path for some \(X_0 \in \*X\). The \(V_{i_k+1} \sim Y\) sub-path of \(\pi\) is active when conditioning on \(\*{Z_0}\). This is because for any node \(V\) on this sub-path (excluding \(V_{i_k+1}\) and \(Y\)), \(V \not \in \*C\) by assumption and  \(\{V_i\}_{1:n} \cap \*R \subseteq \*C \implies V \not \in \*R \implies V \not \in \*Z_0\). Recall that by assumption, \( V_{i_k+1}\) is not a collider.  We thus have an \(X_0 \sim Y\) path which is active when conditioning on \(\*{Z_0}\). We thus have a contradiction.
\end{proof}

\begin{figure}[t]
\begin{algorithmic}[1]
\Function {FindSeparator} {$\G, \*X, \*Y, \*I, \*R$}
    \State {\bfseries Output:} A set of variables \(\*Z\) $d$-separating \(\*X\) and \(\*Y\) in \(\G\) under the constraint \(\*I \setminus (\*X \cup \*Y) \subseteq \*Z \subseteq \*R \setminus (\*X \cup \*Y)\) if such \(\*Z\) exists; \(\perp\) otherwise.

    \State \(\*{R'} \gets \*{R} \setminus (\*X \cup \*Y)\)
    \State \(\*Z \gets \An{\*X \cup \*Y \cup \*I}_\G \cap \*R'\) \label{func:findseparator:z}

    \State \textbf{if} \(\*Z\) $d$-separates \(\*X, \*Y\) in \(\G\) \textbf{then}   \label{func:findsep:dsep}
    \Indent
        \State \textbf{return} \(\*Z\)
    \EndIndent
    \State \textbf{else}
    \Indent
        \State \textbf{return} \(\perp\)
    \EndIndent

\EndFunction
\end{algorithmic}
\caption{A function that finds a separator of a given pair of sets of variables, if it exists.}
\label{func:findseparator}
\end{figure}

\begin{adxlemma}[Correctness of \textsc{FindSeparator}]
\label{lemma:findsep}
    Given a causal graph \(\G\), let \(\*I, \*R, \*X, \*Y\) be sets of nodes with \(\*I \subseteq \*R\). \textsc{FindSeparator}(\(\G, \*X, \*Y, \*I, \*R\)) has a non-empty output if and only if there exists a set \(\*Z\) separating \(\*X,\*Y\) in \(\G\) such that \(\*{I} \setminus (\*X \cup \*Y) \subseteq \*{Z} \subseteq \*{R} \setminus (\*X \cup \*Y)\). Moreover, any output \(\*Z \neq \perp\) satisfies \(\*X \perp_\G \*Y \mid \*Z\) and \(\*{I} \setminus (\*X \cup \*Y) \subseteq \*{Z} \subseteq \*{R} \setminus (\*X \cup \*Y)\). Finally, \(\textsc{FindSeparator}\)  runs in time \(O(n+m)\), where \(n\) and \(m\) are the numbers of nodes and edges respectively in \(\G\).
\end{adxlemma}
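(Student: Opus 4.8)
The plan is to reduce this statement almost entirely to the already-established Existence of a Separator lemma (Lemma~\ref{lemma:existsep}), with the only real work being the set-difference bookkeeping around \(\*X \cup \*Y\). First I would record two elementary observations about the quantities computed inside \textsc{FindSeparator}. Writing \(\*I_0 = \*I \setminus (\*X \cup \*Y)\) and \(\*R_0 = \*R' = \*R \setminus (\*X \cup \*Y)\), we have \(\*R_0 \cap (\*X \cup \*Y) = \emptyset\) by construction, and \(\*I_0 \subseteq \*R_0\) since \(\*I \subseteq \*R\); these are exactly the two hypotheses needed to invoke Lemma~\ref{lemma:existsep} with \((\*I_0, \*R_0)\) in place of \((\*I, \*R)\). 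Moreover, because \(\*I \cap (\*X \cup \*Y) \subseteq \*X \cup \*Y\), we have \(\*X \cup \*Y \cup \*I = \*X \cup \*Y \cup \*I_0\), so the set \(\*Z = \An{\*X \cup \*Y \cup \*I}_\G \cap \*R'\) computed on line~\ref{func:findseparator:z} coincides with the canonical separator \(\An{\*X \cup \*Y \cup \*I_0}_\G \cap \*R_0\) that Lemma~\ref{lemma:existsep} singles out.

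Next I would handle the forward direction together with the ``moreover'' clause, since both follow from the explicit check on line~\ref{func:findsep:dsep}. If the algorithm returns \(\*Z \neq \perp\), then by construction the check succeeded, so \(\*X \perp_\G \*Y \mid \*Z\). The inclusion \(\*Z \subseteq \*R' = \*R \setminus (\*X \cup \*Y)\) is immediate. For the lower bound, every \(W \in \*I \setminus (\*X \cup \*Y)\) is its own ancestor, hence \(W \in \An{\*X \cup \*Y \cup \*I}_\G\), and \(W \in \*R'\) because \(\*I \subseteq \*R\); therefore \(\*I \setminus (\*X \cup \*Y) \subseteq \An{\*X \cup \*Y \cup \*I}_\G \cap \*R' = \*Z\). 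This shows any non-\(\perp\) output is a genuine constrained separator, establishing both ``only if'' and the ``moreover'' statement.

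For the reverse direction I would argue the contrapositive of the usual form: assume some set \(\*Z_1\) separates \(\*X, \*Y\) with \(\*I \setminus (\*X \cup \*Y) \subseteq \*Z_1 \subseteq \*R \setminus (\*X \cup \*Y)\), i.e.\ \(\*I_0 \subseteq \*Z_1 \subseteq \*R_0\). Applying Lemma~\ref{lemma:existsep} to \((\*I_0, \*R_0, \*X, \*Y)\) yields that \(\An{\*X \cup \*Y \cup \*I_0}_\G \cap \*R_0\) is itself a separator of \(\*X\) and \(\*Y\); by the identity from the first paragraph this is precisely the set \(\*Z\) computed on line~\ref{func:findseparator:z}. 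Hence the \(d\)-separation test on line~\ref{func:findsep:dsep} passes and the algorithm returns \(\*Z \neq \perp\), giving the ``if'' direction.

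Finally, for the running time I would note that \(\An{\*X \cup \*Y \cup \*I}_\G\) is computable by a single reverse reachability traversal over directed edges in \(O(n+m)\), the intersection with \(\*R'\) costs \(O(n)\), and the \(d\)-separation test on line~\ref{func:findsep:dsep} can be performed in \(O(n+m)\) by a standard reachability (Bayes-ball) procedure, for a total of \(O(n+m)\). I expect essentially no conceptual obstacle here: all the difficulty has been pushed into Lemma~\ref{lemma:existsep}, which we may assume, and the only thing to be careful about is verifying that removing \(\*X \cup \*Y\) from \(\*R\) (and implicitly from \(\*I\)) does not disturb either the ancestor set or the hypotheses of that lemma, which is exactly the purpose of the two bookkeeping identities above.
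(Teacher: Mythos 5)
Your proposal is correct and takes essentially the same approach as the paper: the paper's proof likewise reduces correctness to Lemma~\ref{lemma:existsep} together with the construction of \textsc{FindSeparator}, and obtains the \(O(n+m)\) bound by computing \(\*R'\) and \(\*Z\) directly and running a Bayes-Ball \(d\)-separation check (on the graph with each bidirected edge replaced by an explicit latent common cause). Your write-up simply makes explicit the bookkeeping around \(\*X \cup \*Y\) --- the identity \(\An{\*X \cup \*Y \cup \*I}_\G = \An{\*X \cup \*Y \cup \*I_0}_\G\) and the verification of the hypotheses of Lemma~\ref{lemma:existsep} --- which the paper leaves as ``immediate from the construction.''
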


\begin{proof}
    The correctness is immediate from the construction of \textsc{FindSeparator} and Lemma~\ref{lemma:existsep}. For the runtime, constructing \(\*R'\) and \(\*Z\) in the algorithm takes time \(O(n)\) and \(O(n+m)\) respectively.
    Verifying whether \(\*Z\) $d$-separates \(\*X\) from \(\*Y\) in \(\G\) or not, as shown in line~\ref{func:findsep:dsep}, may be performed by using the Bayes-Ball algorithm \cite{shachter2013bayesball} on a modified graph \(\G'\) of \(\G\) where \(\G'\) is constructed as follows: start from \(\G' = \G\), and replace each edge \(X \leftrightarrow Y\) with an explicit latent common cause \(X \leftarrow U_{XY} \rightarrow Y\).
    The construction of \(\G'\) takes \(O(n+m)\) time, and the Bayes-Ball algorithm runs in \(O(n+m)\) time.
    The overall runtime of \textsc{FindSeparator} is thus \(O(n+m)\).
\end{proof}

Since the size of the input graph \(\G\) is \(O(n+m)\), \textsc{FindSeparator} is asymptotically optimal.

\begin{customlemma}{\ref{lemma:FindAAC:correctness}}[Correctness of \textsc{FindAAC}]
\label{adxlemma:FindAAC:correctness}
    Given a causal graph \(\G\), a consistent ordering \(\*V^\prec\), and a variable \(X \in \*V^\prec\), let \(\*I,\*R\) be ancestral c-components relative to \(X\) such that \(\*I \subseteq \*R\). \textsc{FindAAC}(\(\G_{\*V^{\leq X}}, X, \*V^{\leq X}, \*I, \*R\)) outputs an admissible ancestral c-component \(\*{C}\) relative to \(X\) such that \(\*{I} \subseteq \*C \subseteq \*R\) if such a \(\*{C}\) exists, and \(\perp\) otherwise.
\end{customlemma}

\begin{proof}
By assumption, \(\*I\) is an AC relative to \(X\) in the desired range since \(\*I \subseteq \*I \subseteq \*R\). \(\textsc{FindAAC}\) outputs \(\*I\) (at line \ref{func:FindAAC:returnI}) if and only if \(\*I\) is admissible. This follows from the correctness of \(\textsc{IsAdmissible}\) (by Lemma~\ref{adxlemma:isadmissible}).

Assume \(\*I\) is not admissible. It remains to show that there exists an AAC \(\*{C_0}\) relative to \(X\) such that \(\*I \subsetneq \*{C_0} \subseteq \*R\) if and only if  there exists a variable \(D \in \De{\Spo{\*{I}} \setminus \Pa{\*{I}}}\) and a set \(\*Z\) such that \(\Pa{\*{I}} \setminus \{X,D\} \subseteq \*Z \subseteq \Pa{\*{R}} \setminus \{X,D\}\) and \(X \perp_\G D \mid \*Z\). Moreover, the output of \(\textsc{FindAAC}\) at line ~\ref{func:FindAAC:returnCZ} must be an AAC relative to \(X\) in the given range.

(\(\implies\)) Since \(\*{I}\) is not admissible, by Def.~\ref{def:validci} we have
    \begin{equation}  \label{eq:FindAAC:Inotadm}
        \*V^{\leq X} \setminus (\Pa{\*{I}} \cup \De{\Spo{\*{I}} \setminus \Pa{\*{I}}}) = \emptyset
    \end{equation}
    Since \(\*{C_0} \supsetneq \*{I}\) is admissible, by Def.~\ref{def:validci} we have 
    \begin{equation} \label{eq:FindAAC:c0adm}
        \*V^{\leq X} \setminus (\Pa{\*{C_0}} \cup \De{\Spo{\*{C_0}} \setminus \Pa{\*{C_0}}}) \neq \emptyset
    \end{equation}
        
    However, \(\*{I} \subsetneq \*{C_0} \implies \Pa{\*{I}} \subseteq \Pa{\*{C_0}}\). Therefore, Eq.~(\ref{eq:FindAAC:Inotadm}) and Eq.~(\ref{eq:FindAAC:c0adm}) imply that there exists a variable \(D \in \*V^{\leq X}\) such that \(D \in \De{\Spo{\*{I}} \setminus \Pa{\*{I}}}\) and \(d \not \in  \Pa{\*{C_0}}  \cup \De{\Spo{\*{C_0}} \setminus \Pa{\*{C_0}}}\).
    By the definition of C-LMP (shown in Def.~\ref{def:lmpplus}), we have that \(X \perp_\G D \mid \Pa{\*{C_0}} \setminus \{X\}\).
    Since \(\*{I} \subseteq \*{C_0} \subseteq \*R\) and \(D \not \in \Pa{\*{C_0}}\), we have \(\Pa{\*{I}} \setminus \{X,D\} \subseteq \Pa{\*{C_0}} \setminus \{X\}  \subseteq \Pa{\*{R}} \setminus \{X,D\}\). Therefore, \(\*Z = \Pa{\*{C_0}} \setminus \{X\}\) is a set such that \(\Pa{\*{I}} \setminus \{X,D\} \subseteq \*Z \subseteq \Pa{\*{R}} \setminus \{X,D\}\) and \(X \perp_\G D \mid \*Z\).
    The correctness of \textsc{FindSeparator} (Lemma~\ref{lemma:findsep}) implies that \textsc{FindAAC} detects the existence of \(\*Z\) and outputs \(\*C = \&{C}(X)_{\G_{\An{\*{I} \cup \*Z}}}\) at line ~\ref{func:FindAAC:returnCZ}. In the proof for the reverse direction, we will show that \(\*C\) thus defined is in fact admissible.

(\(\impliedby\)) Consider some \(D \in \De{\Spo{\*I} \setminus \Pa{\*I}}\) such that
\begin{equation*}
    \begin{split}
        \*Z &= \textsc{FindSeparator}(\G_{\*{V}^{\leq X}}, \{X\}, \{D\}, Pa(\*{I}), Pa(\*{R})) \\
        & \neq \perp
    \end{split}
\end{equation*}
By the correctness of \textsc{FindSeparator} (Lemma~\ref{lemma:findsep}), we have \(X \perp_\G D \mid \*Z\). We give a constructive proof of existence by showing that \(\*C = \&{C}(X)_{\G_{\An{\*{I} \cup \*Z}}}\) is an AAC relative to \(X\) such that \(\*I \subsetneq \*C \subseteq \*R\).

Clearly, \(\*C\) is an AC by construction and \(\*{I} = \&{C}(X)_{\G_{\*{I}}} \subseteq \*C\). Moreover,
\begin{align}
    \*{I}, \*Z \subseteq \Pa{\*{R}} & \implies \*{I} \cup \*Z \subseteq \Pa{\*{R}} \\
    & \implies \An{\*{I} \cup \*Z} \subseteq \An{ \*{R}} \\
    & \implies \&{C}(X)_{\G_{\An{\*{I} \cup \*Z}}} \subseteq \&{C}(X)_{\G_{\An{ \*{R}}}} \\
    & \implies \&{C}(X)_{\G_{\An{\*{I} \cup \*Z}}} \subseteq \*{R} 
\end{align}
where the last implication follows since \(\*{R}\) is an AC relative to \(X\) by assumption, implying that \(\*{R} = \&{C}(X)_{\G_{\An{\*{R}}}}\).
Moreover, we claim that \(\*C\) is admissible, i.e.,
\[
    \*S^+ = \*V^{\leq X} \setminus (\Pa{\*C} \cup \De{\Spo{\*C} \setminus \Pa{\*C}}) \neq \emptyset
\]

We will show that \(D \not \in \Pa{\*C} \cup \De{\Spo{\*C} \setminus \Pa{\*C}}\), hence \(\*S^+\) contains \(D\) (and is therefore non-empty). We know \(D \in \*V^{\leq X}\). Assume, towards contradiction, that \(D \in \Pa{\*C} \cup \De{\Spo{\*C} \setminus \Pa{\*C}}\). Note that \(\*I \setminus \{X,D\} \subseteq \*Z \implies \An{\*I} \subseteq \An{\*Z \cup \{X,D\}}\). Therefore, \(\*C \subseteq \An{\*{I} \cup \*Z} \subseteq  \An{\*Z \cup \{X,D\}}\). Since \(X,D\) are non-adjacent (because \(\*Z\) separates them), this implies the existence of a path \(\pi\) of one of the following types:
\begin{enumerate}
    \item If \(D \in \*C\), then \(X \leftrightarrow V_1 \dots \leftrightarrow V_n \leftrightarrow V_{n+1} = D\) with \(n \geq 1\) and each \(V_i \in \An{\*Z \cup \{X,D\}}\) for \(i \in [n+1]\)
    \item If \(D \in \Pa{\*C}\), then \(X \leftrightarrow V_1 \dots \leftrightarrow V_n \leftarrow V_{n+1} = D\) with \(n \geq 1\) and each \(V_i \in  \An{\*{I} \cup \*Z}\) for \(i \in [n+1]\)
    \item If \(D \in \De{\Spo{\*C} \setminus \Pa{\*C}})\), then \(X \leftrightarrow V_1 \leftrightarrow \dots \leftrightarrow V_n \leftrightarrow V_{n+1} = A \rightsquigarrow D\) with each \(V_i \in \An{\*Z \cup \{X,D\}}\) for each \(i \in [n+1]\) with \(n \geq  0\) and \(A \in \Spo{\*C} \setminus \Pa{\*C}\). It is possible that the path \(A \rightsquigarrow d\) has length 0, i.e., \(A = D\).
\end{enumerate}

 We show by induction that there is an active \(X \sim V_i \leftarrow \circ V_{i+1}\) path for each \(i \in [n]\) when conditioning on \(\*Z\). 

\emph{Base case.} We know \(V_1 \in \An{\*Z \cup \{X,D\}}\).
\begin{itemize}
    \item If \(V_1 \in \An{\*Z}\), \(V_1\) is active when conditioning on \(\*Z\), hence the sub-path \(X \leftrightarrow V_1 \leftarrow \circ V_{2}\) of \(\pi\) is active.
    \item If \(V_1 \in \An{\{D\}}\), then there is a path \(X \leftrightarrow V_1 \rightsquigarrow D\). Since \(\*Z\) must block this path, we have \(\*Z \cap \De{\{V_1\}} \neq \emptyset\), hence \(V_1\) is active when conditioning on \(\*Z\) and the sub-path \(X \leftrightarrow V_1 \leftarrow \circ V_{2}\) of \(\pi\) is active.
    \item  If \(V_1 \in \An{\{X\}}\), then \(X \in \*{I}\), \(V_1 \in \Spo{\{X\}}\), and \(\*{I}\) is an AC implies that \(V_1 \in \*{I}\). Since \(V_1 \not \in \{X,D\}\), this implies that \(V_1 \in \*Z\).  Therefore, \(V_1\) is active when conditioning on \(\*Z\) and the sub-path \(X \leftrightarrow V_1 \leftarrow \circ V_{2}\) of \(\pi\) is active.
\end{itemize}

\emph{Inductive assumption}. Assume, for some \(i \in [n]\), there is an active \(X \sim V_i \leftrightarrow V_{i+1}\) path when conditioning on \(\*Z\). 

\emph{Inductive step.} We show that there is an active \(X \sim V_{i+1} \leftarrow \circ V_{i+2}\) path  when conditioning on \(\*Z\). We know \(V_{i+1} \in  \An{\*Z \cup \{X,D\}}\).
\begin{itemize}
    \item If \(V_{i+1} \in \An{\*Z}\), \(V_{i+1}\) is active when conditioning on \(\*Z\). Therefore, the inductive assumption gives us an active path \(X \sim V_i \leftrightarrow V_{i+1} \leftarrow \circ V_{i+2}\) when conditioning on \(\*Z\).
    \item If \(V_{i+1} \in \An{\{D\}}\), then there is a path \(V_{i+1} \rightsquigarrow D\). By the inductive assumption, there is an active path \(X \sim V_i \leftrightarrow V_{i+1}\) when conditioning on \(\*Z\). Since \(\*Z\) must block the path \(X \sim V_i \leftrightarrow V_{i+1} \rightsquigarrow D\), we have \(\*Z \cap \De{\{V_{i+1}\}} \neq \emptyset\), hence \(V_{i+1}\) is active when conditioning on \(\*Z\) and the path \(X \sim V_i \leftrightarrow V_{i+1} \leftarrow \circ V_{i+2}\) is active.
    \item  If \(V_{i+1} \in \An{\{X\}}\), then there is a path \(X \leftsquigarrow V_{i+1}\). If \(\De{\{V_{i+1}\}} \cap \*Z \neq \emptyset\), then \(V_{i+1}\) is active when conditioning on \(\*Z\) and by the inductive assumption, the path \(X \sim V_i \leftrightarrow V_{i+1} \leftarrow \circ V_{i+2}\) is active. If  \(\De{\{V_{i+1}\}} \cap \*Z = \emptyset\), then the path \(X \leftsquigarrow V_{i+1} \leftarrow \circ V_{i+2}\) is active.
\end{itemize}
Therefore, by induction, there is an active \(X \sim V_n \leftarrow \circ V_{n+1} \) path when conditioning on \(\*Z\). If  \(V_{n+1} = D\), this contradicts \(X \perp_\G d \mid \*Z\). Otherwise, if \(V_{n+1} = A \rightsquigarrow D\) in Case (3), then \(\*Z\) must block the path \(A \rightsquigarrow D\).
This implies that \(A \in \An{\*Z}\); moreover, \(A \in \Spo{\*C}\) and \(\*C = \&{C}(X)_{\G_{\An{\*I \cup \*Z}}}\) implies that \(A \in \*C\), which contradicts the assumption that \(A \in \Spo{\*C} \setminus \Pa{\*C}\).
\end{proof}

\begin{adxprop}[Runtime of \textsc{FindAAC}]
\label{prop:FindAAC:runtime}
    Given a causal graph \(\G\), a consistent ordering \(\*V^\prec\), and a variable \(X \in \*V^\prec\), let \(\*I,\*R\) be ancestral c-components relative to \(X\) such that \(\*I \subseteq \*R\).
    \textsc{FindAAC}(\(\G_{\*V^{\leq X}}, X, \*V^{\leq X}, \*I, \*R\)) runs in \(O(n(n+m))\) time where \(n\) and \(m\) denote the numbers of nodes and edges in \(\G\) respectively.
\end{adxprop}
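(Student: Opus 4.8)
The plan is to bound the cost of \textsc{FindAAC} line by line and then sum the contributions, exploiting the runtime guarantees already established for its subroutines. First I would handle the admissibility check on line~\ref{func:FindAAC:callisadmissible}: by Lemma~\ref{adxlemma:isadmissible}, the call \textsc{IsAdmissible}$(\G_{\*V^{\leq X}}, X, \*V^{\leq X}, \*I)$ runs in $O(n+m)$ time. If it returns True, the algorithm returns $\*I$ immediately at line~\ref{func:FindAAC:returnI}, so the entire execution in this branch costs $O(n+m)$, comfortably within the claimed bound.

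Next I would analyze the main for-loop over $D \in \De{\Spo{\*I} \setminus \Pa{\*I}}$. The key observation is that this iteration set is a subset of $\*V$, so the loop runs at most $n$ times. I would compute the set $\De{\Spo{\*I} \setminus \Pa{\*I}}$ \emph{once} before entering the loop: computing $\Spo{\*I}$, subtracting $\Pa{\*I}$, and taking descendants are each single graph traversals costing $O(n+m)$. Likewise, the arguments $\Pa{\*I}$ and $\Pa{\*R}$ passed to \textsc{FindSeparator} are each computed once in $O(n+m)$. Within each of the (at most $n$) iterations, the dominant cost is the call to \textsc{FindSeparator} on line~\ref{func:findadmissiblec:findseparator}, which by Lemma~\ref{lemma:findsep} runs in $O(n+m)$ time. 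Hence the loop contributes $O(n(n+m))$ in total.

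Finally I would bound the return step on line~\ref{func:FindAAC:returnCZ}, executed at most once, when some $\*Z \neq \perp$ is found. Producing $\&{C}(X)_{\G_{\An{\*I \cup \*Z}}}$ requires first computing the ancestral set $\An{\*I \cup \*Z}$ by a reachability traversal in $O(n+m)$ time, then the c-component of $X$ in the induced subgraph $\G_{\An{\*I \cup \*Z}}$, which by Prop.~\ref{prop:constc} also costs $O(n+m)$. Summing the three parts gives
\[
    O(n+m) + O(n(n+m)) + O(n+m) = O(n(n+m)),
\]
as required.

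The analysis involves no genuine difficulty; the only care needed is \emph{accounting discipline}. The main point to get right is that the potentially expensive quantities—the iteration set $\De{\Spo{\*I} \setminus \Pa{\*I}}$, the parent sets $\Pa{\*I}, \Pa{\*R}$, and the final c-component computation—are each evaluated a constant number of times rather than repeatedly inside the loop, so that the $n$ calls to \textsc{FindSeparator} dominate and the bound does not inflate beyond $O(n(n+m))$.
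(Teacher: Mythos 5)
Your proof is correct and follows essentially the same route as the paper's: bound \textsc{IsAdmissible} by Lemma~\ref{adxlemma:isadmissible}, note the set \(\De{\Spo{\*I} \setminus \Pa{\*I}}\) is computed once in \(O(n+m)\), charge each of the at most \(n\) loop iterations \(O(n+m)\) for \textsc{FindSeparator} via Lemma~\ref{lemma:findsep}, and bound the c-component construction at line~\ref{func:FindAAC:returnCZ} by Prop.~\ref{prop:constc}. Your observation that the final return step executes at most once is a marginally sharper accounting than the paper's (which charges it inside the loop), but both yield the same \(O(n(n+m))\) bound.
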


\begin{proof}
    A call to the function \textsc{IsAdmissible} in line~\ref{func:FindAAC:callisadmissible} takes \(O(n+m)\) time (by Lemma~\ref{adxlemma:isadmissible}).
    \textsc{FindAAC} computes a set of variables \(\De{\Spo{\*I} \setminus \Pa{\*I}}\) (shown in line~\ref{func:FindAAC:foreachd}) only once, which takes \(O(n+m)\) time. There are at most \(O(n)\) iterations of the for loop, within which a call to the function \textsc{FindSeparator} (by Lemma~\ref{lemma:findsep}) and the construction of a c-component \(\&{C}(X)_{\G_{\An{\*{I} \cup \*Z}}}\) in line \ref{func:FindAAC:returnCZ} (by Prop.~\ref{prop:constc}) take time \(O(n+m)\). Thus, the total runtime of \textsc{FindAAC} is \(O(n(n+m))\).
\end{proof}

\begin{adxprop}[Ancestrality of Modified ACs] \label{prop:accdesc}
    Given a causal graph \(\G\), a consistent ordering \(\*V^\prec\), and a variable \(X \in \*V^\prec\), let \(\*C\) be an ancestral c-component relative to \(X\). For any \(\*S \subseteq \*V^\prec\) such that \(X \not \in \De{\*S}\), \(\*{C_S} = \&{C}(X)_{\G_{\*C \setminus \De{\*S}}}\) is an ancestral c-component relative to \(X\).
\end{adxprop}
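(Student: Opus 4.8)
The plan is to produce, directly from Definition~\ref{def:ancestralccomponent}, an ancestral set witnessing that $\*{C_S}$ is an ancestral c-component relative to $X$. Since $\*C$ is an AC relative to $X$, there is an ancestral set $\*S_0 \subseteq \*V^{\leq X}$ with $X \in \*S_0$ and $\&{C}(X)_{\G_{\*S_0}} = \*C$; in particular $\*C \subseteq \*S_0$. My candidate witness is $\*T = \*S_0 \setminus \De{\*S}$. First I would record that $X \in \*{C_S}$: by hypothesis $X \notin \De{\*S}$ and $X \in \*C$, so $X \in \*C \setminus \De{\*S}$ and hence $X \in \&{C}(X)_{\G_{\*C \setminus \De{\*S}}} = \*{C_S}$.

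Next I would verify that $\*T$ is a legitimate witness, i.e.\ that it is ancestral, contains $X$, and lies in $\*V^{\leq X}$. Ancestrality is the only nontrivial point: if $W \in \*T$ and $W' \in \An{\{W\}}$, then $W' \in \An{\*S_0} = \*S_0$, and $W' \in \De{\*S}$ would force $W \in \De{\*S}$ (by composing a directed path $\*S \rightsquigarrow W'$ with $W' \rightsquigarrow W$), contradicting $W \in \*T$; hence $W' \in \*T$. The inclusions $X \in \*T$ and $\*T \subseteq \*S_0 \subseteq \*V^{\leq X}$ are immediate from $X \in \*S_0$, $X \notin \De{\*S}$, and the choice of $\*S_0$.

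The heart of the argument is the identity $\&{C}(X)_{\G_{\*T}} = \*{C_S}$. The inclusion $\*{C_S} \subseteq \&{C}(X)_{\G_{\*T}}$ follows from monotonicity of the $X$-c-component in the vertex set, since $\*C \setminus \De{\*S} \subseteq \*T$ and both sets contain $X$. For the reverse inclusion I would take any $V \in \&{C}(X)_{\G_{\*T}}$ together with a bidirected path $\pi$ from $X$ to $V$ inside $\G_{\*T}$. Because $\*T \subseteq \*S_0$, $\pi$ is also a bidirected path in $\G_{\*S_0}$, so every node of $\pi$ is bidirected-connected to $X$ there and therefore lies in $\&{C}(X)_{\G_{\*S_0}} = \*C$; but every node of $\pi$ also lies in $\*T = \*S_0 \setminus \De{\*S}$, so in fact every node of $\pi$ lies in $\*C \cap (\*S_0 \setminus \De{\*S}) = \*C \setminus \De{\*S}$ (using $\*C \subseteq \*S_0$). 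Thus $\pi$ is a bidirected path in $\G_{\*C \setminus \De{\*S}}$, giving $V \in \&{C}(X)_{\G_{\*C \setminus \De{\*S}}} = \*{C_S}$. With both inclusions, $\*T$ is an ancestral set in $\*V^{\leq X}$ containing $X$ whose $X$-c-component is exactly $\*{C_S}$, so $\*{C_S} \in \&{AC}_X$ by Definition~\ref{def:ancestralccomponent}.

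The main obstacle is precisely this reverse inclusion: one must rule out that passing from the smaller graph $\G_{\*C \setminus \De{\*S}}$ to the larger $\G_{\*T}$ enlarges the c-component of $X$. The crucial leverage is that $\*C$ already caps the $X$-c-component throughout $\G_{\*S_0}$, so restricting further to $\*T$ cannot introduce bidirected connections outside $\*C$, and intersecting with $\*T$ then confines everything to $\*C \setminus \De{\*S}$. A minor point worth stating carefully (to keep the argument self-contained) is the standard fact that every vertex on a bidirected path emanating from $X$ in an induced subgraph belongs to the $X$-c-component of that subgraph.
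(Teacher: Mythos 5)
Your proof is correct, but it takes a genuinely different route from the paper's. You certify that \(\*{C_S} \in \&{AC}_X\) by exhibiting the explicit witness \(\*T = \*S_0 \setminus \De{\*S}\), where \(\*S_0\) is an ancestral set witnessing \(\*C \in \&{AC}_X\); the key identity \(\&{C}(X)_{\G_{\*T}} = \*{C_S}\) then follows from a direct path-containment argument: any bidirected path from \(X\) in \(\G_{\*T}\) already lives in \(\G_{\*S_0}\), so its vertices lie in \(\&{C}(X)_{\G_{\*S_0}} = \*C\), and intersecting with \(\*T\) confines them to \(\*C \setminus \De{\*S}\). The paper instead works with the canonical witness \(\An{\*{C_S}}\) and proves the sharper identity \(\*{C_S} = \&{C}(X)_{\G_{\An{\*{C_S}}}}\), which requires an induction along a bidirected path \(W = V_k \leftrightarrow \dots \leftrightarrow V_1 \leftrightarrow Y\) in which the AC property of \(\*C\) (ancestors of \(\*C\) that are bidirected-adjacent to \(\*C\) must lie in \(\*C\)) is invoked at every step to pull each \(V_i\) first into \(\*C \setminus \De{\*S}\) and then into \(\*{C_S}\). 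Your argument is shorter and more elementary, trading the induction for one appeal to the definition of \(\*C\) as a c-component of \(\G_{\*S_0}\); the cost is that your witness depends on the (non-unique) choice of \(\*S_0\), whereas the paper's formulation establishes the intrinsic characterization \(\*{C_S} = \&{C}(X)_{\G_{\An{\*{C_S}}}}\), a fact of independent use elsewhere in the paper (e.g., the correctness proof of \textsc{FindAAC} uses that any AC \(\*R\) satisfies \(\*R = \&{C}(X)_{\G_{\An{\*R}}}\)). Both proofs are complete for the stated proposition; you also correctly handle the small points the statement needs — \(X \in \*C \setminus \De{\*S}\) so \(\*{C_S}\) is well-defined, ancestrality of \(\*T\) via \(\An{\*S_0} = \*S_0\) and transitivity of descent, and \(\*T \subseteq \*V^{\leq X}\).
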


\begin{proof}
    It suffices to show that \(\*{C_S} = \&{C}(X)_{\G_{\An{\*{C_S}}}}\). Since \(\*{C_S \subseteq \An{\*{C_S}}}\) and \(\*{C_S} = \&{C}(X)_{\G_{\*{C_S}}}\), we have \(\*{C_S} \subseteq  \&{C}(X)_{\G_{\An{\*{C_S}}}}\). To show \(\*{C_S} \supseteq  \&{C}(X)_{\G_{\An{\*{C_S}}}}\), we make use of two facts.
    Since  \(\*{C_S} = \&{C}(X)_{\G_{\*C \setminus \De{\*S}}}\), we have \(\*{C_S} \cap \De{\*S} = \emptyset\). This further implies that  \(\An{\*{C_S}} \cap \De{\*S} = \emptyset\) (if some \(W \in \An{\*{C_S}} \cap \De{\*S}\), then \(\exists S \in \*S\) such that \(S \in \An{\{W\}} \subseteq \An{\*{C_S}}\) contradicts \(\*{C_S} \cap \De{\*S} = \emptyset\)). Therefore, we have \(\&{C}(X)_{\G_{\An{\*{C_S} \setminus \De{\*S}} \setminus \De{\*S}}} = \&{C}(X)_{\G_{\An{\*{C_S}}}}\); Let \(\*A =  \&{C}(X)_{\G_{\An{\*{C_S} \setminus \De{\*S}} \setminus \De{\*S}}} \). We now show that \(\*{C_S} \supseteq  \*A\). Consider some variable \(W \in \*A\). Then, there exists a variable \(Y \in \*{C_S} \setminus \De{\*S}\) such that \(W \in \An{\{Y\}}\). Moreover, since \(Y \in \*{C_S} \subseteq \*A\), we either have \(W = Y\) (and hence \(W \in \*{C_S}\)) or a path \(W = V_k \leftrightarrow \dots \leftrightarrow V_1 \leftrightarrow Y\) for some \(k \geq 1\) with \(V_i \in \An{\*{C_S} \setminus \De{\*S}} \setminus \De{\*S}\) for each \(i \in [k]\) (by the construction of \(\*A\)). We show by induction that \(V_i \in \*{C_S}\) for each \(i \in [k]\). 
    
    \emph{Base case}. \(k = 1\). Since \(V_1 \in \An{\*{C_S} \setminus \De{\*S}} \setminus \De{\*S}\), we have \(\{V_1\} \cap \De{\*S} = \emptyset\). Moreover, \(V_1 \in \An{\*{C_S}} \subseteq \An{\*C}\) (since \(\*{C_S} \subseteq \*C\)). Furthermore, \(Y \in \*{C_S} \subseteq \*C\). So, \(Y \in \*C\), \(V_1 \in \An{\*C}\), \(V_1 \leftrightarrow Y\), and \(\*C\) is an AC by assumption implies that \(V_1 \in \*C\). Therefore, \(V_1 \in \*C \setminus \De{\*S}\). Since \(Y \in \*{C_S} = \&{C}(X)_{\G_{\*C \setminus \De{\*S}}}\), \(Y \leftrightarrow V_1\), and \(V_1 \in \*C \setminus \De{\*S}\), we get \(V_1 \in \*{C_S}\).
    
    \emph{Inductive assumption}. Assume, for some \(i \in [k-1]\), we have \(V_i \in \*{C_S}\).
    
    \emph{Inductive step}. By similar reasoning as in the base case, we show that \(V_{i+1} \in \*{C_S}\).  Since \(V_{i+1} \in \An{\*{C_S} \setminus \De{\*S}} \setminus \De{\*S}\), we have \(\{V_{i+1}\} \cap \De{\*S} = \emptyset\). Moreover, \(V_{i+1} \in \An{\*{C_S}} \subseteq \An{\*{C}}\). Furthermore, by the inductive assumption, \(V_i \in \*{C_S} \subseteq \*{C}\). So, \(V_i \in \*{C}\), \(V_{i+1} \in \An{\*{C}}\), \(V_{i+1} \leftrightarrow V_i\), and \(\*{C}\) being an AC implies that \(V_{i+1} \in \*{C}\). Therefore, \(V_{i+1} \in \*{C} \setminus \De{\*S}\). Since \(V_i \in \*{C_S}\), \(V_{i+1} \leftrightarrow V_i\), and \(V_{i+1} \in \*{C} \setminus \De{\*S}\), we get \(V_{i+1} \in \*{C_S}\).
    
    Therefore, \(W \in \*{C_S}\) and since \(W\) was chosen arbitrarily from \(\*A\), we have \(\*A \subseteq \*{C_S}\).
\end{proof}

\begin{customlemma}{\ref{lemma:listcix}}[Correctness of \textsc{ListCIX}]
\label{adxlemma:listcix}
    \textsc{ListCIX} (\(\G_{\*V^{\leq X}}, X, \*V^{\leq X}, \*I, \*R\)) enumerates all and only all non-vacuous conditional independence relations invoked by the c-component local Markov property associated with \(X\) and admissible ancestral c-components \(\*C\) relative to \(X\) where \(\*I \subseteq \*C \subseteq \*R\).
    Further, \textsc{ListCIX} runs in \(O(n^2(n+m))\) delay where \(n\) and \(m\) represent the number of nodes and edges in \(\G\), respectively.
   
\end{customlemma}

\begin{proof}
We show the correctness of \textsc{ListCIX} and the running time that \textsc{ListCIX} runs in \(O(n^2(n+m))\) delay.

\begin{itemize}
    \item Correctness:
        We prove correctness by structural induction on the binary recursion tree for \textsc{ListCIX}, rooted at \(\&{N}(\*I,\*R)\). We claim that \textsc{ListCIX} called at a node \(\&{N}(\*I',\*R')\) enumerates all and only non-vacuous CIs of \(X\) invoked by C-LMP (Def.~\ref{def:lmpplus}) such that the conditioning set is of the form \(\Pa{\*C} \setminus \{X\}\) for some AC \(\*C\) such that \(\*I' \subseteq \*C \subseteq \*R'\).

        \emph{Base case.} Consider a leaf node \(\&{N}(\*I',\*R')\) \footnote{A leaf node is a node that has no children.}. Let \[
            \*C := \textsc{FindAAC}(\G_{\*V^{\leq X}}, X, \*V^{\leq X}, \*I', \*R').
        \]
        Since we are at a leaf node, we either have
        \begin{enumerate}
            \item \(\*C = \perp\), in which case \textsc{ListCIX} outputs nothing at \(\&{N}(\*I',\*R')\). By the correctness of \textsc{FindAAC} (Lemma ~\ref{lemma:FindAAC:correctness}) there are no CIs of \(X\) invoked by C-LMP such that the conditioning set is of the form \(\Pa{\*C} \setminus \{X\}\) for some AC \(\*C\) such that \(\*I' \subseteq \*C \subseteq \*R'\).
            Therefore, the output is correct.
            \item \(\*I' = \*R'\) and hence \(\*C = \*I'\). Similarly, by the correctness of \textsc{FindAAC} and the definition of C-LMP, \textsc{ListCIX} outputs the unique non-vacuous CI of the desired form at \(\&{N}(\*I',\*R')\).
        \end{enumerate}
        Note that these are the only conditions under which we are at a leaf node. If \(\*C \neq \perp\) and \(\*I' \neq \*R'\),  then \(\*T = \*R' \cap (\Spo{\*I'} \setminus \*I')\) must be non-empty (because \(\*R'\) is a c-component such that \(\*R' \supsetneq \*I'\)) and we recurse.

        \emph{Inductive assumption.} Assume the claim holds for some nodes \(\&{N}_1(\*I_1,\*R_1),\) \(\&{N}_2(\*I_2,\*R_2)\).

        \emph{Inductive step.} We show that the claim holds for any node \(\&{N}_0(\*I',\*R')\) whose children are \(\&{N}_1(\*I_1,\*R_1), \&{N}_2(\*I_2,\*R_2)\). We first define three collections of ACs relative to \(X\). Recall that \(\&{AC}_X\) (Def.~\ref{def:collectionsczs}) denotes the set of all ACs relative to \(X\).
        \begin{equation*}
            \begin{split}
                \&{AC}_0 = \{\*C \subseteq \*V^{\leq X} \mid \: &\*C \in \&{AC}_X, \*C \text{ is AAC, } \\
                &\*I' \subseteq \*C \subseteq \*R'\}
            \end{split}
        \end{equation*}
        \begin{equation*}
            \begin{split}
                \&{AC}_1 = \{\*C \subseteq \*V^{\leq X} \mid \: &\*C \in \&{AC}_X, \*C \text{ is AAC, } \\
                &\*I_1 \subseteq \*C \subseteq \*R_1\}
            \end{split}
        \end{equation*}
        \begin{equation*}
            \begin{split}
                \&{AC}_2 = \{\*C \subseteq \*V^{\leq X} \mid \: &\*C \in \&{AC}_X, \*C \text{ is AAC, } \\
                &\*I_2 \subseteq \*C \subseteq \*R_2\}
            \end{split}
        \end{equation*}
      
        It suffices to show that \(\&{AC}_0 = \&{AC}_1 \cup \&{AC}_2\), since this implies that any CI that should be output by \textsc{ListCIX} at \(\&{N}_0(\*I',\*R')\) is output by \textsc{ListCIX} at either \(\&{N}_1(\*I_1,\*R_1)\) or \(\&{N}_2(\*I_2,\*R_2)\). Since \textsc{ListCIX} at \(\&{N}_0(\*I',\*R')\) calls  \textsc{ListCIX} at both \(\&{N}_1(\*I_1,\*R_1)\) and \(\&{N}_2(\*I_2,\*R_2)\), we prove the claim.

        By construction at lines~\ref{func:listcix:recursion:rprime}-\ref{func:listcix:recursion:iprime}, since \(\&{N}_0(\*I',\*R')\) has children \(\&{N}_1(\*I_1,\*R_1)\) and \(\&{N}_2(\*I_2,\*R_2)\), we have (without loss of generality) that \((\*I_1,\*R_1) = (\&{C}(X)_{\G_{\An{\*I' \cup \{S\}}}}, \*R')\) and \((\*I_2,\*R_2) = (\*I', \&{C}(X)_{\G_{\*R' \setminus \De{\{S\}}}})\) for some \(S \in \*T = \*R' \cap (\Spo{\*I'} \setminus \*I')\).

        First, some technicalities. we want to show that \((\*I_1,\*R_1), (\*I_2,\*R_2)\) are well-defined and non-vacuous inputs to \textsc{ListCIX}.
        \begin{itemize}
            \item Since \(S \in \*T \subseteq \*R'\) and \(\*I' \subseteq \*R'\), and \(\*R'\) is an AC relative to \(X\), we have \(\*I_1 = \&{C}(X)_{\G_{\An{\*I' \cup \{S\}}}} \subseteq \*R' = \*R_1\). Furthermore, \(\*I_1\) is an AC by construction.
            \item Since \(S \in \Spo{\*I'} \setminus \*I'\) and \(\*I'\) is an AC, we have \(\De{\{S\}} \cap \*I' = \emptyset\). Since \(\*I' \subseteq \*R'\), this further implies that \(\*I_2 = \*I' \subseteq \*R_2 = \&{C}(X)_{\G_{\*R' \setminus \De{\{S\}}}}\). Moreover, \(\*R_2\) is an AC by Prop.~\ref{prop:accdesc}.
        \end{itemize}

        We show the equality of \(\&{AC}_0\) and \(\&{AC}_1 \cup \&{AC}_2\) in both directions.

        \begin{itemize}
            \item \( \&{AC}_1 \cup \&{AC}_2 \subseteq \&{AC}_0\)

            For any \(\*C \in \&{AC}_1 \cup \&{AC}_2\), we have either 
            \begin{itemize}
                \item \(\*I_1 \subseteq \*C \subseteq \*R_1\) and hence \(\*I' \subseteq \*C \subseteq \*R'\) since \(\*I' \subseteq \*I_1, \*R' = \*R_1\), \emph{or}
                \item  \(\*I_2 \subseteq \*C \subseteq \*R_2\) and hence \(\*I' \subseteq \*C \subseteq \*R'\) since \(\*I' = \*I_2, \*R_2 \subseteq \*R'\).
            \end{itemize}
            Therefore, \( \&{AC}_1 \cup \&{AC}_2 \subseteq \&{AC}_0\).

            \item \(\&{AC}_1 \cup \&{AC}_2 \supseteq \&{AC}_0\)

            For any \(\*C \in \&{AC}_0\), we have \(\*I' \subseteq \*C \subseteq \*R'\). Then, we have either
            \begin{itemize}
                \item \(S \in \*C\). Therefore, \(\*I' \cup \{S\} \subseteq \*C\). Since \(\*C\) is an AC, we must have \(\&{C}(X)_{\G_{\An{\*I' \cup \{S\}}}} \subseteq \*C\) (otherwise, there is an ancestor of \(\*I' \cup \{S\}\)  not in \(\*C\) which is connected by a bi-directed path to some node in \(\*I \cup \{S\} \subseteq \*C\)). Therefore, we have \(\*I_1 \subseteq \*C \subseteq \*R_1 = \*R'\) and hence \(\*C \in \&{AC}_1\).
                \item \(S \not \in \*C\). Since \(\*C\) is an AC, this implies \(\De{\{S\}} \cap \*C = \emptyset\). Moreover, \(\*C \subseteq \*R'\) further implies that \(\*C \subseteq \&{C}(X)_{\G_{\*R' \setminus \De{\{S\}}}}\).  Therefore, we have \(\*I_2 = \*I_1 \subseteq \*C \subseteq \*R_2\), and \(\*C \in \&{AC}_2\).
            \end{itemize}
            Therefore, \(\&{AC}_1 \cup \&{AC}_2 \supseteq \&{AC}_0\).
        \end{itemize}

        Hence, \(\&{AC}_0 = \&{AC}_1 \cup \&{AC}_2\) and we are done. Since \textsc{ListCIX} enumerates all AACs relative to \(X\) correctly, by Thm.~\ref{thm:equivalence:clmpplus}, it enumerates all non-vacuous CIs invoked by C-LMP for \(X\) correctly.

    \item Running time:

    Consider the recursion tree for \textsc{ListCIX}.
    Whenever a tree node \(\&{N}(\*I',\*R')\) is visited, the function \textsc{FindAAC} is called, which takes \(O(n(n+m))\) time (by Lemma~\ref{adxlemma:FindAAC:correctness}).
    If \textsc{FindAAC} outputs \(\perp\), then \textsc{ListCIX} does not search further from  \(\&N\) because there exists no AAC \(\*C\) relative to \(X\) with \(\*I' \subseteq \*C \subseteq \*R'\).
    Otherwise, recursion continues until a leaf tree node is visited.
    In each level of the tree, a single node \(S\) is removed from \(\*T\).
    Either the variable \(S\), is added to \(\*I\), resulting in \(\*I'\) given at line~\ref{func:listcix:iprime} (by \(S \in \Spo{\*I} \setminus \*I\)), or the variable \(S\) is removed from \(\*R\) to construct \(\*R'\) which is shown at line~\ref{func:listcix:rprime}.
    Any \(\*C\) is uniquely contained in one child; therefore, no CI is output more than once.
    The depth of the tree is at most \(n\), and the time required to find one \(\*C\) and output one non-vacuous CI invoked by C-LMP associated with \(\*C\) (following Def.~\ref{def:lmpplus}), is \(O(n^2(n+m))\).
    In the worst case scenario, \(n\) branches will be aborted (i.e., \textsc{FindAAC} outputs \(\perp\) on every level of the tree) before reaching the first leaf.
    It takes \(O(n^2(n+m))\) time to produce either the first output or halt.
\end{itemize}
\end{proof}

\begin{customthm}{\ref{thm:listci}}[Correctness of \textsc{ListCI}]
\label{adxthm:listci}
    Let \(\G\) be a causal graph and \(\*V^\prec\) a consistent ordering.
    \textsc{ListCI}(\(\G, \*V^\prec\)) enumerates all and only all non-vacuous conditional independence relations invoked by the c-component local Markov property in \(O(n^2(n+m))\) delay where \(n\) and \(m\) represent the number of nodes and edges in \(\G\), respectively.
\end{customthm}

\begin{proof}
We show the correctness of \textsc{ListCI} and the running time that \textsc{ListCI} runs in \(O(n^2(n+m))\) delay.

\begin{itemize}
    \item Correctness: 

    \textsc{ListCI}(\(\G, \*V^\prec\)) iterates over each variable \(X \in \*V^\prec\).
    For each \(X\), \textsc{ListCI} constructs two ACs relative to \(X\): \(\*I = \&C(X)_{\G_{\An{\{X\}}}}\) and \(\*R = \&C(X)_{\G_{\*V^{\leq X}}}\).
    \(\*I\) is the minimum-size AC relative to \(X\) and \(\*R\) is the maximum-size AC relative to \(X\).
    Then, \textsc{ListCI} calls the function \textsc{ListCIX}(\(\G_{\*V^{\leq X}}, X, \*V^{\leq X}, \*I, \*R\)) that outputs all and only all non-vacuous CIs invoked by C-LMP associated with \(X\), which is performed by generating all and only all AACs \(\*C\) relative to \(X\) under the constraint \(\*I \subseteq \*C \subseteq \*R\) (by Lemma~\ref{lemma:listcix}).
    \textsc{ListCI}(\(\G, \*V^\prec\)) iterates over each variable \(X \in \*V^\prec\), and thus \textsc{ListCI}(\(\G, \*V^\prec\)) lists all and only all non-vacuous CIs invoked by C-LMP.

    \item Running time:
    
    There are two types of worst case scenarios.

    \begin{enumerate}
        \item No CI is invoked by C-LMP.

        By the assumption that no CI is invoked by C-LMP, Def.~\ref{def:lmpplus} and Def.~\ref{def:validci}, none of the ACs \(\*C\) relative to \(X\) for any \(X \in \*V^{\prec}\) is admissible.
        For each \(X\) visited by \textsc{ListCI}(\(\G, \*V^\prec\)), \textsc{ListCI} calls the function \textsc{ListCIX}(\(\G_{\*V^{\leq X}}, X, \*V^{\leq X}, \*I, \*R\)) at line~\ref{alg:listci:calllistcix}.
        \textsc{ListCIX} spends \(O(n(n+m))\) time and terminates with no output since \textsc{ListCIX} calls the function \(\textsc{FindAAC}(\G_{\*V^{\leq X}}, X, \*V^{\leq X}, \*I, \*R)\) at line~\ref{func:listcix:callfindadmissiblec}, which returns \(\perp\) (by Lemma~\ref{lemma:FindAAC:correctness}).
        \textsc{ListCI} checks the next variable of \(X\), if any exists.
        Since \(|\*V^\prec| = n\), \textsc{ListCI} spends \(O(n^2(n+m))\) time and then terminates with no output.

        \item No CI invoked by C-LMP exists for all variables \(X \in \*V^\prec\) except for the last variable \(X_n\) in the ordering \(\*V^\prec\).

        For the first \(n-1\) variables in \(\*V^\prec\), \textsc{ListCI} spends \(O(n^2(n+m)) = O(n(n+m) * (n-1))\) time producing no output.
        More specifically, for each variable, \textsc{ListCIX} spends \(O(n(n+m))\) time and terminates with no output since \textsc{ListCIX} calls \(\textsc{FindAAC}(\G_{\*V^{\leq X}}, X, \*V^{\leq X}, \*I, \*R)\) at line~\ref{func:listcix:callfindadmissiblec}, which returns \(\perp\).
        When \(X = X_n\), \textsc{ListCI} calls the function \textsc{ListCIX}(\(\G_{\*V^{\leq X}}, X, \*V^{\leq X}, \*I, \*R\)) at line~\ref{alg:listci:calllistcix} where \textsc{ListCIX} spends \(O(n^2(n+m))\) time to output one non-vacuous CI invoked by C-LMP that is associated with \(X_n\).
        In total, \textsc{ListCI} spends \(O(n^2(n+m))\) time to produce an output.
    \end{enumerate}
\end{itemize}
\end{proof}

\subsection{Appendix Proofs}

\begin{adxprop}[Total Number of CIs Invoked by GMP]
\label{prop:gmpsize}
    Given a causal graph \(\G\) over a set of variables \(\*V\) with \(n = |\*V|\), the global Markov property for \(\G\) invokes \(O(4^n)\) number of conditional independence relations. Moreover, there exists a causal graph \(\G\) for which the bound is tight, that is, the global Markov property for \(\G\) implies \(\Omega(4^n)\) number of conditional independence relations.
\end{adxprop}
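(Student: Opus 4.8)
The plan is to prove both bounds by a direct counting argument over triples of pairwise-disjoint subsets. The key observation is that, by Def.~\ref{def:gmp}, every conditional independence invoked by GMP has the form $\*X \indep \*Y \mid \*Z$ and is indexed by an ordered triple $(\*X, \*Y, \*Z)$ of pairwise-disjoint subsets of $\*V$ with $\*X, \*Y \neq \emptyset$ for which $\*X \perp_d \*Y \mid \*Z$ holds in $\G$. So it suffices to count such triples.

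For the upper bound, I would first note that any CI invoked by GMP is determined by such a triple, and that the total number of ordered triples of pairwise-disjoint subsets of $\*V$ is exactly $4^n$: each of the $n$ variables is independently assigned to exactly one of the four roles ``in $\*X$'', ``in $\*Y$'', ``in $\*Z$'', or ``in none of the three''. Restricting to $\*X, \*Y \neq \emptyset$, and further to those triples where $d$-separation actually holds, only decreases this count. Hence the number of CIs invoked by GMP is at most $4^n$, which gives $O(4^n)$.

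For the lower bound (tightness), I would take $\G$ to be the edgeless graph on $n$ nodes. Since $\G$ has no edges, there is no path between any two distinct vertices, so every $d$-separation statement $\*X \perp_d \*Y \mid \*Z$ holds vacuously for all pairwise-disjoint $\*X, \*Y, \*Z$. Thus GMP invokes $\*X \indep \*Y \mid \*Z$ for every such triple with $\*X, \*Y \neq \emptyset$. By inclusion--exclusion, the number of ordered triples of pairwise-disjoint subsets with $\*X, \*Y \neq \emptyset$ is $4^n - 2 \cdot 3^n + 2^n$ (subtracting the $3^n$ triples with $\*X = \emptyset$ and the $3^n$ with $\*Y = \emptyset$, then adding back the $2^n$ subtracted twice). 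Because $\*X, \*Y$ are disjoint and nonempty, each CI statement $\*X \indep \*Y \mid \*Z$ arises from exactly two ordered triples (swapping $\*X$ and $\*Y$), so the number of distinct CIs equals $\tfrac{1}{2}(4^n - 2 \cdot 3^n + 2^n)$. As $4^n$ dominates the lower-order terms, this is $\Omega(4^n)$, matching the upper bound up to constant factors.

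Since the argument is essentially a finite count, there is no deep obstacle; the only points requiring care are bookkeeping ones. I expect the main subtlety to be handling the symmetry $\*X \leftrightarrow \*Y$ of the independence relation correctly so as not to double-count distinct CI statements, together with confirming that pairwise-disjoint triples---up to this single symmetry---do yield syntactically distinct CI statements, which they do because a statement $\*X \indep \*Y \mid \*Z$ determines the unordered pair $\{\*X, \*Y\}$ and the set $\*Z$. A secondary check is that in the edgeless graph every such $d$-separation genuinely holds, which is immediate from the absence of any path.
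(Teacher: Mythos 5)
Your proposal is correct and follows essentially the same route as the paper: both count $d$-separation triples $(\*X,\*Y,\*Z)$, halve for the symmetry $\*X \leftrightarrow \*Y$, and use the edgeless graph as the tight example, arriving at the identical closed form $\tfrac{1}{2}(4^n - 2\cdot 3^n + 2^n) = \tfrac{1}{2}(4^n+2^n) - 3^n$. The only difference is cosmetic bookkeeping---your four-roles assignment with inclusion--exclusion replaces the paper's nested binomial sums simplified via the binomial theorem.
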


\begin{proof}
    Each CI invoked by GMP is given by a choice of disjoint sets \(\*{X,Y,Z \subseteq V}\) where \(\*{X,Y} \neq \emptyset\) and a $d$-separation statement \(\*X \perp_d \*Y \mid \*Z\). The number of such statements is upper-bounded by
    \begin{align}
        d(n) &= \frac{1}{2}\sum_{i=1}^n \binom{n}{i} \sum_{j=1}^{n-i}\binom{n-i}{j} \sum_{k=0}^{n-i-j}\binom{n-i-j}{k} \\
        &= \frac{2^n(2^n-1)}{2} - 3^n + 2^n
    \end{align}
    where we divide the quantity by 2 to avoid double-counting the following two symmetrical statements: \(\*X \perp_d \*Y \mid \*Z\) and \(\*Y \perp_d \*X \mid \*Z\) (since $d$-separation is symmetric). We first simplify the inner-most sum.
    \begin{equation}
        \sum_{k=0}^{n-i-j}\binom{n-i-j}{k} = 2^{n-i-j}
    \end{equation}
    This gives
    \begin{equation}
        d(n) = \frac{1}{2}\sum_{i=1}^n \binom{n}{i} \sum_{j=1}^{n-i}\binom{n-i}{j}2^{n-i-j}
    \end{equation}
    We then simplify the second nested sum. Note that by the binomial theorem,
    \begin{align}
        3^{n-i} &= (2+1)^{n-i} \\
        &= \sum_{j=0}^{n-i}\binom{n-i}{j}2^{n-i-j}1^j \\
        &= 2^{n-i} + \sum_{j=1}^{n-i}\binom{n-i}{j}1^j 2^{n-i-j}
    \end{align}
    Therefore,
    \begin{align}
        d(n) & = \frac{1}{2}\sum_{i=1}^n \binom{n}{i} (3^{n-i} -  2^{n-i}) \\
            & = \frac{1}{2} \left(\sum_{i=1}^n \binom{n}{i} 3^{n-i} - \sum_{i=1}^n \binom{n}{i} 2^{n-i}) \right)\\
            & = \frac{1}{2} \left(4^n  + 2^n\right) - 3^n \in O(4^n).
    \end{align}

    Moreover, for \(\G\) given by the independent set on \(n\) variables (i.e., \(\G\) contains no edges), every possible $d$-separation holds, therefore we get \(d(n) \in \Omega(4^n)\) number of CIs implied by GMP.
\end{proof}

\section{Discussion and Examples}

\subsection{Explaining Markov properties} \label{sec:adx:clmp}

A causal DAG on \(n\) variables may encode \(\Theta(4^n)\) CIs (Prop.~\ref{prop:gmpsize}). 
In this section, we explain how a subset of these CIs, often considered a `basis' \cite{bareinboim:etal20}, may imply all others.

The CI relation is a \emph{semi-graphoid} \cite{pearl:86a, pearl:88a}.
Given an arbitrary probability distribution \(P(\*v)\) over a set of variables \(\*V\), CIs in \(P(\*v)\) must exhibit certain properties.
Specifically, for disjoint sets of variables \(\*X,\*Y, \*W, \*Z\), where \(\*X,\*Y \neq \emptyset\), the probability axioms can be used to show that the following properties hold:

\begin{enumerate}
    \item Symmetry: \(\*X \indep \*Y \mid \*Z \iff \*Y \indep \*X \mid \*Z\)
    \item Decomposition: \(\*X \indep \*Y \cup \*W \mid \*Z \implies \*X \indep \*Y \mid \*Z\) and \(\*X \indep \*W \mid \*Z\)
    \item Weak union:\(\*X \indep \*Y \cup \*W \mid \*Z \implies \*X \indep \*Y \mid \*Z \cup \*W\) and \(\*X \indep \*W \mid \*Z \cup \*Y\)
    \item Contraction: \(\*X \indep \*Y \mid \*Z\) and \(\*X \indep \*W \mid \*Z \cup \*Y \implies \*X \indep \*Y \cup \*W \mid \*Z\)
\end{enumerate} 
    
We give an example to show how these axioms can be applied.

\begin{adxexample}
\label{adxexample:basis}
    The  DAG \(\G\) (Fig.~\ref{fig:intro_basis_dag})  encodes 5 CIs, but the colored subsets of CIs can be used to derive all others, as shown in Fig.~\ref{fig:intro_basis_axioms}.
    For one example, the CI \(X_4 \indep \{X_1,X_2\}\) implies all others.
    In the context of testing \(\G\) against observational data, it suffices to test only \(X_4 \indep \{X_1,X_2\}\).
    For another example, \(X_4 \indep X_2\) and \(X_1 \indep X_4 \mid X_2\) together imply \(X_4 \indep \{X_1, X_2\}\) by the contraction axiom, and hence all other CIs.
    Therefore, it suffices also to only test \(X_4 \indep X_2\) and \(X_1 \indep X_4 \mid X_2\).
\qed
\end{adxexample}

In the given example, scrutiny revealed which CIs are sufficient to derive others via the semi-graphoid axioms. 
Markov properties, however, provide a systematic way to identify such CIs. 
The semi-graphoid axioms can be used to show equivalence between Markov properties: for example, between the local Markov property and GMP for Markovian DAGs \cite{pearl:88a,lauritzen:etal90,lauritzen:96}, and between (LMP,\(\prec\)) and GMP for semi-Markovian DAGs \cite{richardson2003markov}.

\subsection{C-LMP and the Semi-Markov Factorisation}
\label{sec:adx:paplus}

In this section, we develop a connection between Markov properties and a related notion of compatibility between causal graphs and observational data – the factorisation that the distribution should admit.
This offers another perspective on the combinatorial explosion in the number of CIs invoked by C-LMP in the semi-Markovian case, compared with the Markovian case.

An observational distribution \(P(\*v)\) over a set of variables \(\*V\) factorizes, according to the chain rule, as
\begin{align*}
    P(\*v) = \prod_{V_i \in \*V}p(v_i \mid v_1,\dots, v_{i-1})
\end{align*}

However, if we know \(P(\*v)\) is compatible with a graph \(\G\), CIs implied by \(\G\) can be used to simplify the factorisation above to a \emph{Markov factorisation}. We first define the Markov factorisation for Markovian DAGs.

\begin{figure}[t]
    \centering
    \begin{subfigure}{0.3\textwidth}
        \includegraphics[width=\textwidth]{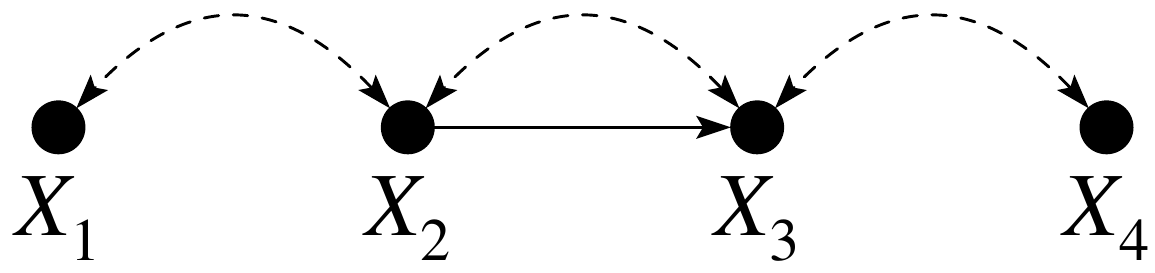}
        \caption{A causal DAG \(\G\).}
        \label{fig:intro_basis_dag}
    \end{subfigure}
    \begin{subfigure}{0.45\textwidth}
        \begin{tikzpicture}[thick, scale=0.7,
                    arrow/.style={-{Latex[length=3mm]}, shorten >=3pt, shorten <=3pt}]

\node (X4X2) [draw, rectangle, fill=cyan!50] at (0, 1.5) {$X_4 \indep X_2$};
\node (X4X1) [draw, rectangle, fill=orange!50] at (8, 1.5) {$X_4 \indep X_1$};
\node (X4X1X2) [draw, rectangle, fill=red!50] at (4, 0) {$X_4 \indep \{X_1, X_2\}$};
\node (X1X4X2) [draw, rectangle, fill=cyan!50] at (0, -1.5) {$X_1 \indep X_4 \mid X_2$};
\node (X2X4X1) [draw, rectangle, fill=orange!50] at (8, -1.5) {$X_2 \indep X_4 \mid X_1$};

\node (dec) [fill=white, inner sep=1pt] at (4,1.5) {\footnotesize decomposition};
\node (wu) [fill=white, inner sep=1pt] at (4,-1.5) {\footnotesize weak union};
\node (cont1) [fill=white, inner sep=1pt] at (0,0) {\footnotesize contraction};
\node (cont2) [fill=white, inner sep=1pt] at (8,0) {\footnotesize contraction};

\begin{pgfonlayer}{background}

\draw (X4X2) -- (X1X4X2);
\draw[arrow] (cont1) -- (X4X1X2);
\draw (X4X1) -- (X2X4X1);
\draw[arrow] (cont2) -- (X4X1X2);
\draw (X4X1X2) -- (dec);
\draw[arrow] (dec) -- (X4X2);
\draw[arrow] (dec) -- (X4X1);
\draw (X4X1X2) -- (wu);
\draw[arrow] (wu) -- (X1X4X2);
\draw[arrow] (wu) --  (X2X4X1);
\end{pgfonlayer}
\end{tikzpicture}
        \caption{CIs encoded in \(\G\).}
        \label{fig:intro_basis_axioms}
    \end{subfigure}
    \caption{
    A causal DAG \(\G\) and a hyper-graph depicting all the CIs encoded in \(\G\). An edge indicates that we can derive the CIs at the arrowheads from the CIs at the tails using the semi-graphoid axioms. The highlighted subsets of CIs in blue, orange, and pink are each sufficient to derive all other CIs in the graph.}
    \label{fig:intro_basis}
\end{figure}

\begin{adxdefinition}[Markov Relative \cite{bareinboim:etal20}]
\label{adxdef:markovrel}
    An observational distribution \(P(\*v)\) is said to be Markov relative to a graph \(\G\) (over a set of variables \(\*V\)) if, for a given ordering \(V_1, \dots, V_n\) consistent with \(\G\), \(P(\*v)\) factorizes as
    \begin{align}
        P(\*v) = \prod_{V_i \in \*V}p(v_i \mid pa^-_i)
    \end{align}
    where \(Pa^-_i = \Pa{\{V_i\}}_\G \setminus \{V_i\}\).
\end{adxdefinition}
For example, in the simple three-node graph \(X \to Y \to Z\), the observational distribution $P(x,y,z)$ factorizes as 
\begin{align*}
    P(x)P(y\mid x)P(z \mid y,x) = P(x)P(y\mid x)P(z \mid y)   
\end{align*}

By means of this factorisation, a graph imposes CI constraints on the distribution \(P(\*v)\): in our example, \(Z \indep X \mid Y\). This gives an equivalent definition of `Markov relative': \(P(\*v)\) is Markov relative to \(\G\) if, for a given ordering \(\*V^\prec\) consistent with \(\G\), and for each \(V_i \in \*V\),
\begin{align*}
    V_i \indep \*V^{\leq V_i} \setminus \Pa{\{V_i\}} \mid \Pa{\{V_i\}} \setminus \{V_i\}
\end{align*}

Notice how this set of CI constraints is identical to C-LMP in the Markovian case (Eq.~(\ref{eq:clmpcimarkov})), discussed in Section~\ref{sec:reformulation}. Therefore, if \(P(\*v)\) is Markov relative to a given \(\G\), all the CI constraints encoded in \(\G\) must hold in \(P(\*v)\).

If \(\G\) contains bidirected edges, the factorisation in Def.~\ref{adxdef:markovrel} no longer applies. For instance, a variable \(V_i\) may be connected to a non-descendant \(V_j\) by means of a bidirected edge, \(V_i\) is not independent of \(V_j\) when conditioning on the (observed) parents \(Pa^-_i\). This leads to the more general definition of compatibility for semi-Markovian graphs, given below.

\begin{adxdefinition}[Semi-Markov Relative \cite{bareinboim:etal20}] \label{adxdef:semimarkovrel}
    An observational distribution \(P(\*v)\) is said to be Semi-Markov relative to a graph \(\G\) (over a set of variables \(\*V\)) if, for every ordering \(V_1, \dots, V_n\) consistent with \(\G\), \(P(\*v)\) factorizes as
    \begin{align}
        P(\*v) = \prod_{V_i \in \*V}p(v_i \mid pa^+_i)
    \end{align}
    where \(Pa^+_i = \Pa{\&{C}(V_i)_{\G_{\*V^{\leq V_i}}}}_\G \setminus \{V_i\}\).
\end{adxdefinition}
\begin{figure*}[t]
    \centering
    \begin{subfigure}{0.25\textwidth}
        \includegraphics[width=\textwidth]{figures/graphs/fig_intro_mb1.pdf}
        \caption{\(X\) is separated from \(C\) but not \(A\) when conditioning on \(B\).}
        \label{adxfig:clmp_mb1}
    \end{subfigure}
    \hfill
    \begin{subfigure}{0.25\textwidth}
        \includegraphics[width=\textwidth]{figures/graphs/fig_intro_mb2.pdf}
        \caption{\(X\) is separated from \(A\) but not \(C\) when not conditioning on \(B\).}
        \label{adxfig:clmp_mb2}
    \end{subfigure}
    \hfill
    \begin{subfigure}{0.25\textwidth}
        \includegraphics[width=\textwidth]{figures/graphs/fig_intro_mb3.pdf}
        \caption{\(X\) is separated from \(F,I\) but not \(D\) when conditioning on \(H,E\).}
        \label{adxfig:clmp_mb3}
    \end{subfigure}
\caption{Fig.~\ref{fig:clmp_mb} reproduced for convenience. Three ACs relative to the variable \(X\) in the (same) causal DAG \(\G\).  Assume an ordering \(A \prec B \prec \dots \prec X \prec J \prec K\). 
The ACs relative to \(X\) (excluding \(\{X\}\) itself), shown in blue, separate it from the variables shown in green.
}
\label{adxfig:clmp_mb}
\end{figure*}
\begin{adxexample}
\label{adxexample:semimarkovrel}
    Consider the semi-Markovian graph \(\G\) in Fig.~\ref{fig:intro_basis_dag}. There are 12 possible orderings of the 4 nodes; each ordering induces  a factorisation of \(P(\*v)\) in Def.~\ref{adxdef:semimarkovrel}. We give four examples.
    \begin{enumerate}
        \item \(X_1 \prec X_2 \prec X_3 \prec X_4\). This implies \(Pa^+(\{X_1\}) = \emptyset, \ Pa^+(\{X_2\}) = \{X_1\}, \ Pa^+(\{X_3\}) = \{X_1,X_2\}, \ Pa^+(\{X_4\}) = \{X_1, X_2, X_3\}\).
        The resultant semi-Markov factorisation is \(p(x_1,x_2,x_3,x_4) = p(x_1)p(x_2\mid x_1)p(x_3 \mid x_1,x_2)p(x_4 \mid x_1,x_2,x_3)\).
        This is equivalent to the factorisation given by the chain rule, and implies no CI constraints.
        \item  \(X_1 \prec X_2 \prec X_4 \prec X_3\). This implies \(Pa^+(\{X_1\}) = \emptyset, \ Pa^+(\{X_2\}) = \{X_1\}, \ Pa^+(\{X_3\}) = \{X_1,X_2,X_4\}, \ Pa^+(\{X_4\}) = \emptyset\).
        The resultant semi-Markov factorisation is \(p(x_1,x_2,x_3,x_4) = p(x_1)p(x_2\mid x_1)p(x_4)p(x_3 \mid x_1,x_2,x_4)\), which implies the CI constraint \(X_4 \indep \{X_1,X_2\}\).
        \item \(X_2 \prec X_4 \prec X_1 \prec X_3\). This implies \(Pa^+(\{X_2\}) = \emptyset, \ Pa^+(\{X_4\}) = \emptyset, \ Pa^+(\{X_1\}) = \{X_2\}, \ Pa^+(\{X_3\}) = \{X_1,X_2,X_4\}\).
        The resultant semi-Markov factorisation is \(p(x_1,x_2,x_3,x_4) = p(x_2)p(x_4)p(x_1\mid x_2)p(x_3 \mid x_1,x_2,x_4)\), which implies the CI constraints \(X_4 \indep \{X_2\}\) and \(X_1 \indep \{X_4\} \mid\{X_2\}\).
        \item \(X_4 \prec X_1 \prec X_2 \prec X_3\). This implies \(Pa^+(\{X_4\}) = \emptyset, \ Pa^+(\{X_1\}) = \emptyset, \ Pa^+(\{X_2\}) = \{X_1\}, \ Pa^+(\{X_3\}) = \{X_1,X_2,X_4\}\).
        The resultant semi-Markov factorisation is \(p(x_1,x_2,x_3,x_4) = p(x_4)p(x_1)p(x_2\mid x_1)p(x_3 \mid x_1,x_2,x_4)\), which implies the CI constraints \(X_1 \indep \{X_4\}\) and \(X_2 \indep \{X_4\} \mid\{X_1\}\).
    \end{enumerate}
    The CIs induced by the first ordering (namely, none) are clearly insufficient to derive all CIs encoded in \(\G\), similar to Ex.~\ref{ex:onecnoci}. The four orderings above are chosen to be representative. Each of the eight remaining orderings induces exactly the same CIs as one the four orderings.
\qed
\end{adxexample}

We define the exact set of CI constraints implied by the semi-Markov factorisation below.

\begin{adxdefinition}[Semi-Markov Relative CI Constraints]
\label{adxdef:semimarkovrelci}
    Let \(\G\) be a causal graph over variables \(\*V\) and \(P(\*v)\) a probability distribution over \(\*V\) that is semi-Markov relative to \(\G\). Then, the conditional independence constraints encoded in the factorisation of \(P(\*v)\) are given by:
    For every ordering \(\*V^\prec\) consistent with \(\G\), for every variable \(V_i \in \*V\),
    \begin{equation*}
        V_i \indep \*V^{\leq V_i} \setminus (Pa^+(\{V_i\}) \cup \{V_i\}) \mid Pa^+(\{V_i\})
    \end{equation*}
    where \(Pa^+(\{V_i\}) = \Pa{\&{C}(V_i)_{\G_{\*V^{\leq V_i}}}}_{\G_{\*V^{\leq V_i}}} \setminus \{V_i\}\) and \(\*V^{\leq V_i}\) depends on the ordering \(\*V^{\prec}\).
\end{adxdefinition}
Note that we take a union over all orderings in Def.~\ref{adxdef:semimarkovrelci}.

\begin{adxexample}
\label{adxexample:semimarkovfactorbasis}
    Continuing Ex.~\ref{adxexample:semimarkovrel}. The set of CIs induced by the semi-Markov factorization for \(\G\) (Fig.~\ref{fig:intro_basis_dag}) is the union of all CIs listed in Ex.\ref{adxexample:semimarkovrel}: \(X_4 \indep \{X_1,X_2\}, X_4 \indep \{X_2\},X_1 \indep \{X_4\} \mid \{X_2\},X_1 \indep \{X_4\}, X_2 \indep \{X_4\} \mid \{X_1\}\).
\qed
\end{adxexample}

Ex.~\ref{adxexample:semimarkovrel} and Ex.~\ref{adxexample:semimarkovfactorbasis} show an important contrast between the Markovian and semi-Markovian cases. In the Markovian case, we can fix an arbitrary ordering: compatibility requires that \(P(\*v)\) factorize according to the product in Def.~\ref{adxdef:markovrel} for any one ordering.
In the semi-Markovian case, we cannot fix an arbitrary ordering; the ordering \(X_1 \prec X_2 \prec X_3 \prec X_4\) in Ex.~\ref{adxexample:semimarkovrel} provides no CI constraints.
Coincidentally, the ordering \(X_1 \prec X_2 \prec X_4 \prec X_3\) does suffice to derive all CIs encoded in \(\G\) from \(X_4 \indep \{X_1,X_2\}\), as seen in Fig.~\ref{fig:intro_basis_axioms}.
However, no method is known for choosing an ordering (or subset of orderings) that a priori guarantees that all CIs encoded in the graph can be derived from the resulting factorisation(s).
Therefore, in the semi-Markovian case, it is required that \(P(\*v)\) factorizes according to the product in Def.~\ref{adxdef:semimarkovrel} for all possible orderings.

Applying Def.~\ref{adxdef:semimarkovrelci} to a Markovian graph \(\G\) reveals why considering all orderings is not necessary in the Markovian case. We make two observations for Markovian \(\G\):

\begin{enumerate}
    \item Each c-component in \(\G\) is a singleton. This means \(\&{C}(V_i)_{\G_{\*V^{\leq V_i}}} = \{V_i\}\).
    \item The parents of a variable precede it in every ordering, and do not depend on the ordering. This means \(\Pa{\{V_i\}}_{\G_{\*V^{\leq V_i}}} = \Pa{\{V_i\}}_\G\).
\end{enumerate}

Therefore, for any ordering \(\*V^{\prec}\) and any variable \(V_i \in \*V\), the set \(Pa^+(\{V_i\})\) simplifies to \(\Pa{\{V_i\}}_\G \setminus \{V_i\}\). The set of CIs induced by Def.~\ref{adxdef:semimarkovrelci} contains: for every ordering  \(\*V^\prec\) consistent with \(\G\), for every variable \(V_i \in \*V\),
\begin{equation}
\label{adxeq:basiscimarkov}
    V_i \indep \*V^{\leq V_i} \setminus \Pa{\{V_i\}}_\G \mid \Pa{\{V_i\}}_\G \setminus \{V_i\}
\end{equation}

Let \(\Phi\) denote this set of CIs. The set of CIs induced by the Markov factorisation for \(\G\) (Def.~\ref{adxdef:markovrel}) – which fixes one ordering – is a subset of \(\Phi\).
Moreover, contrast \(\Phi\) with the set of CIs induced by LMP.
LMP abstracts away the ordering of variables.
Since \(\Nd{\{V_i\}} = \bigcup\limits_{\prec \text{ ordering of } \G} \*V^{\leq V_i}_{\prec} \setminus \{V_i\}\), LMP tests the CI: \(V_i \indep \Nd{\{V_i\}} \setminus \Pa{\{V_i\}}_\G \mid (\Pa{\{V_i}\}_\G \setminus \{V_i\})\).
This CI implies the CI in Eq.~(\ref{adxeq:basiscimarkov}) for every possible ordering by the decomposition axiom.

Therefore, we have another perspective on the combinatorial explosion of the number of CIs in the semi-Markovian case, relative to the Markovian case. This explosion was introduced in Section~\ref{sec:reformulation}, and characterised in terms of ACs. Here, we understand it through the many possible orderings of a given graph. To tie together these two concepts, we show an equivalence between C-LMP and the CI constraints invoked by the semi-Markov factorization.

\begin{adxprop}
\label{adxprop:equivalence:factorclmp}
    Given a causal graph \(\G\) over a set of variables \(\*V\) and a consistent ordering \(\*V^\prec\), let \(\mathcal{L}^C\) denote the set of conditional independence constraints that the c-component local Markov property invokes for \(\G\) with respect to \(\*V^\prec\) and \(\mathcal{L}^P\) denote the set of conditional independence constraints that the semi-Markov factorisation induces for \(\G\). Then, \(\mathcal{L}^C \subseteq \mathcal{L}^P\). Moreover, there exists \(\G, \*V^\prec\) for which \(\mathcal{L}^C \subsetneq \mathcal{L}^P\).
\end{adxprop}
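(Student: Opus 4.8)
The plan is to prove the inclusion constructively: for each CI in \(\mathcal{L}^C\) I will exhibit a \emph{single} consistent ordering under which the semi-Markov factorisation (Def.~\ref{adxdef:semimarkovrelci}) reproduces exactly that CI, so the CI lands in the union \(\mathcal{L}^P\). The strict inclusion then follows from the structural asymmetry that \(\mathcal{L}^C\) is attached to the one fixed ordering \(\*V^\prec\), whereas \(\mathcal{L}^P\) ranges over all orderings.

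For \(\mathcal{L}^C \subseteq \mathcal{L}^P\), fix an arbitrary CI invoked by C-LMP for \(\G\) with respect to \(\*V^\prec\), arising from a variable \(X\) and an ancestral c-component \(\*C \in \&{AC}_X\); by Def.~\ref{def:lmpplus} it reads \(X \indep \*S^+ \setminus \Pa{\*C} \mid \Pa{\*C} \setminus \{X\}\) with \(\*S^+ = \*V^{\leq X} \setminus \De{\Spo{\*C}\setminus\Pa{\*C}}\). I would first record three facts about \(\*S^+\). (i) By Prop.~\ref{prop:equivalence:csplus}, \(\*S^+\) is the (ancestral) maximal ancestral set relative to the Markov blanket \(\mkbk{X,\*S} = \Pa{\*C}\setminus\{X\}\), where \(\*S\) is any ancestral set with \(\&{C}(X)_{\G_{\*S}} = \*C\). (ii) Since \(\*S^+\) is maximal, \(\mkbk{X,\*S^+} = \mkbk{X,\*S}\), so Prop.~\ref{prop:mkbk-ccomp} gives \(\&{C}(X)_{\G_{\*S^+}} = \*C\). (iii) Because \(\*S^+ \subseteq \*V^{\leq X}\) and \(\*V^\prec\) is consistent, every element of \(\*S^+ \setminus \{X\}\) precedes \(X\) and is therefore a non-descendant of \(X\).

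Then I would build a consistent ordering \(\pi\) having \(\*S^+\) as its initial segment with \(X\) placed last within that segment. This is legitimate because \(\*S^+\) is ancestral (no directed edge enters \(\*S^+\) from outside, so \(\*S^+\) may be listed first) and, by (iii), \(X\) has no proper descendant inside \(\*S^+\) (so \(X\) may be the topological sink of \(\*S^+\)); concatenating a topological order of \(\*V \setminus \*S^+\) completes a valid \(\pi\). Under \(\pi\) the prefix up to and including \(X\) is exactly \(\*S^+\), so by (ii) the c-component in the prefix equals \(\*C\); since \(\*S^+\) is ancestral and \(\*C \subseteq \Pa{\*C} \subseteq \*S^+\), the parents of \(\*C\) inside \(\G_{\*S^+}\) agree with those in \(\G\) (this is Prop.~\ref{prop:equivalence:cmb} with \(\*S = \*S^+\)), giving \(Pa^+(\{X\}) = \Pa{\*C}\setminus\{X\}\). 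Substituting into Def.~\ref{adxdef:semimarkovrelci}, and using \(X \in \Pa{\*C}\) to rewrite \((\Pa{\*C}\setminus\{X\})\cup\{X\} = \Pa{\*C}\), yields precisely \(X \indep \*S^+ \setminus \Pa{\*C} \mid \Pa{\*C}\setminus\{X\}\). Hence the CI is in \(\mathcal{L}^P\); the argument is indifferent to whether \(\*S^+\setminus\Pa{\*C}\) is empty, so vacuous CIs are handled uniformly.

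For strict inclusion I would take the graph \(\G\) of Fig.~\ref{fig:intro_basis_dag} (namely \(X_1 \to X_3 \leftarrow X_2\) with \(X_3 \leftrightarrow X_4\)) together with any consistent ordering \(\*V^\prec\) that begins with \(X_1\), which exists since \(X_1\) is a source. Every C-LMP CI with \(X_1\) on the left has free set contained in \(\*V^{\leq X_1}\setminus\{X_1\} = \emptyset\), hence \(X_1 \indep \{X_4\}\mid\{X_2\} \notin \mathcal{L}^C\); yet by Ex.~\ref{adxexample:semimarkovfactorbasis} (realised by the ordering \(X_2 \prec X_4 \prec X_1 \prec X_3\) of Ex.~\ref{adxexample:semimarkovrel}) we have \(X_1 \indep \{X_4\}\mid\{X_2\} \in \mathcal{L}^P\), witnessing \(\mathcal{L}^C \subsetneq \mathcal{L}^P\). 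The main obstacle I anticipate is the ordering-construction step of the inclusion: one must verify carefully that a single consistent \(\pi\) can make \(\*S^+\) a prefix \emph{and} \(X\) its final element simultaneously, which is exactly where the ancestrality of \(\*S^+\) (fact (i)) and the fact that \(X\) is a sink within \(\*S^+\) (fact (iii)) are both required; everything else reduces to the established correspondences among ACs, MBs, and MASs.
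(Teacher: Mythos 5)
Your proof is correct and takes essentially the same route as the paper's: the same chain of facts (Prop.~\ref{prop:equivalence:csplus}, Prop.~\ref{prop:mkbk-ccomp}, Prop.~\ref{prop:equivalence:cmb}) establishes that \(\*S^+\) is ancestral with \(\&{C}(X)_{\G_{\*S^+}} = \*C\) and \(Pa^+(\{X\}) = \Pa{\*C}\setminus\{X\}\), and your direct prefix construction of \(\pi\) is equivalent to the paper's ``pivot'' re-ordering, which likewise yields a consistent ordering with \(\*V^{\leq X}_{\prec_*} = \*S^+\). For strictness the paper uses the same graph (which, note, also contains the edge \(X_1 \leftrightarrow X_2\) that your parenthetical description omits, though your argument never relies on it) with the ordering \(X_1 \prec X_2 \prec X_4 \prec X_3\) and compares the two full lists of constraints, whereas you exhibit the single witness \(X_1 \indep \{X_4\} \mid \{X_2\}\) --- both suffice.
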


\begin{proof}
    Consider a CI statement in \(\&{L}^C\) of the form
    \[
        X \indep \*S^{+} \setminus \Pa{\*C} \mid \Pa{\*C} \setminus \{X\} \text{, where}
    \]
    \[
        \*S^{+} = \*V^{\leq X}_\prec \setminus \De{\Spo{\*C} \setminus \Pa{\*C}}
    \]
    for some variable \(X \in \*V^\prec\) and AC \(\*C\in \&{AC}_X\). By Def.~\ref{def:ancestralccomponent}, there exists an ancestral set \(\*S \in \&{S}_X\) such that \(\*C = \&{C}(X)_{\G_{\*S}}\). By Props.~\ref{prop:equivalence:csplus} and ~\ref{prop:mkbk-ccomp}, \(\*S^{+}\) is ancestral and \(\*C = \&{C}(X)_{\G_{\*S^+}}\).

    First, we construct an ordering \(\*V^{\prec_*}\) under which \(\*C = \&{C}(X)_{\G_{\*V_{\prec_*}^{\leq X}}}\) using a `pivot' technique about \(X\).
    Given $\prec^*, \*S^+$, initialise $\prec_* = \prec$. We re-order $\prec_*$ as follows. Let the pivot $P = X$. For each $Y \in \*V^{\leq X}_{\prec} \backslash \*S^+$ in order of $\prec$, move $Y$ to immediately succeed $P$ in $\prec$ and update $P=Y$.
    Then, $\prec_*$ is a valid ordering since $\*S^+$ is ancestral. Moreover, $\*S^+ = \*V^{\leq X}_{\prec_*}$ and hence \(\*C = \&{C}(X)_{\G_{\*V_{\prec_*}^{\leq X}}}\). 
 
    By definition, \(\mathcal{L}^P\) contains the CI
    \begin{equation*}
        X \indep \*V^{\leq X}_{\prec_*} \setminus (Pa^+(\*C) \cup \{X\}) \mid Pa^+(\{V_i\})
    \end{equation*}
    where \(Pa^+(\{V_i\}) = \Pa{\&{C}(V_i)_{\G_{\*V_{\prec_*}^{\leq V_i}}}}_{\G_{\*V_{\prec_*}^{\leq V_i}}}\).
    Thus, the given CI from \(\&{L}^C\) has an identical counterpart in \(\mathcal{L}^P\).
    
    The graph \(\G\) in Fig.~\ref{fig:intro_basis_dag} with the ordering \(X_1 \prec X_2 \prec X_4 \prec X_3\) provides an example where \(\mathcal{L}^C \subsetneq \mathcal{L}^P\). We have  \(\&{L}^C = \{X_4 \indep \{X_1, X_2\}\}\). However,  \(\mathcal{L}^P = \{X_4 \indep \{X_1,X_2\}, X_4 \indep \{X_2\},X_1 \indep \{X_4\} \mid \{X_2\},X_1 \indep \{X_4\}, X_2 \indep \{X_4\} \mid \{X_1\}\}\), as shown in Ex.~\ref{adxexample:semimarkovfactorbasis}.
\end{proof}

We reproduce Fig.~\ref{adxfig:clmp_mb} to demonstrate the `pivot' technique used in the proof above.

\begin{adxexample}
\label{adxexample:clmporders}
    Continuing Ex.~\ref{example:3mbs}, we demonstrate the construction used in the proof of Prop.\ref{adxprop:equivalence:factorclmp} for the graph \(\G\) in Fig.~\ref{adxfig:clmp_mb}. We fix the ordering \(A \prec B \prec C \prec D \prec E \prec F \prec H \prec I \prec X \prec J \prec K\) for C-LMP. 
    \begin{enumerate}
        \item Fig.~\ref{adxfig:clmp_mb1} depicts the CI \(X \indep \{C,D,E,F\} \mid \{B\}\). Here, \(\*C = \Pa{\*C} = \{X,B\}, \*S^+ = \{B,C,D,E,F,X\}\). We construct the ordering \(\prec_*:  B \prec C \prec D \prec E \prec F  \prec X \prec A \prec H \prec I \prec J \prec K\).
        \item Fig.~\ref{adxfig:clmp_mb2} depicts the CI \(X \indep \{A,D,I\} \mid \{H\}\). Here, \(\*C = \Pa{\*C} = \{X,H\}, \*S^+ = \{A,D,H,I,X\}\). We construct the ordering \(\prec_*:  A  \prec D  \prec H \prec I \prec X \prec B \prec C \prec E \prec F \prec J \prec K\).
        \item Fig.~\ref{adxfig:clmp_mb3} depicts the CI \(X \indep \{A,F,I\} \mid \{H,E\}\). Here, \(\*C = \Pa{\*C} = \{X,H,E\}, \*S^+ = \{A,E,F,H,I,X\}\). We construct the ordering \(\prec_*: A \prec E \prec F \prec H \prec I \prec X \prec B \prec C \prec D \prec J \prec K\).
    \end{enumerate}
    Each  ordering \(\prec_*\) constructed for the given \(\*C, \*S^+\) implies \(\*V^{\leq X}_{\prec_*} = \*S^+\) and \(Pa^+(\{X\}) = \Pa{\*C} \setminus \{X\}\) in Def.~\ref{adxdef:semimarkovrelci}.
\qed
\end{adxexample}
Prop.~\ref{adxprop:equivalence:factorclmp} thus implies the following corollary.

\begin{adxcorollary}
    Let \(\G\) be a causal graph, \(\*V^\prec\) a consistent ordering, and \(P(\*v)\) a probability distribution over the set of variables \(\*V\). Then, the following conditions are equivalent.
    \begin{itemize}
        \item[] \emph{(G)} \(P(\*v)\)  satisfies the global Markov property for \(\G\).
        \item[] \emph{(L)} \(P(\*v)\)  satisfies the c-component local Markov property for \(\G\) with respect to \(\*V^\prec\).
        \item[] \emph{(F)}  \(P(\*v)\) is semi-Markov relative to \(\G\).
    \end{itemize}
\end{adxcorollary}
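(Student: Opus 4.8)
The plan is to leverage the two equivalences already available — Thm.~\ref{thm:equivalence:gmp:lmpplus}, which gives (G) $\iff$ (L), and Prop.~\ref{adxprop:equivalence:factorclmp}, which gives the containment $\mathcal{L}^C \subseteq \mathcal{L}^P$ of the C-LMP constraints inside the semi-Markov-relative constraints — and to close the loop by proving (F) $\implies$ (L) and (G) $\implies$ (F). Since (G) $\iff$ (L) is immediate from Thm.~\ref{thm:equivalence:gmp:lmpplus}, these two implications together force (G) $\iff$ (L) $\iff$ (F).

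For (F) $\implies$ (L): if $P(\*v)$ is semi-Markov relative to $\G$, then by Def.~\ref{adxdef:semimarkovrel} it factorizes via $Pa^+$ for \emph{every} consistent ordering, so every constraint in $\mathcal{L}^P$ (Def.~\ref{adxdef:semimarkovrelci}) holds in $P(\*v)$. By Prop.~\ref{adxprop:equivalence:factorclmp}, $\mathcal{L}^C \subseteq \mathcal{L}^P$, hence every CI invoked by C-LMP holds in $P(\*v)$, which is exactly (L). This direction is essentially a one-line consequence of the proposition.

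The substantive direction is (G) $\implies$ (F). The idea is to show that each semi-Markov-relative CI is in fact a $d$-separation implied by $\G$, so that GMP forces it to hold in $P(\*v)$, collapsing the chain-rule factorisation into the semi-Markov factorisation. Fix a consistent ordering $\*V^\prec$ and a variable $V_i$, and set $\*C = \&{C}(V_i)_{\G_{\*V^{\leq V_i}}}$, so that $Pa^+_i = \Pa{\*C} \setminus \{V_i\}$. Since $\*V^{\leq V_i}$ is an ancestral set containing $V_i$, $\*C$ is an ancestral c-component relative to $V_i$ and Prop.~\ref{adxprop:clmpsep} applies. The key observation is that $\De{\Spo{\*C} \setminus \Pa{\*C}}$ is disjoint from $\*V^{\leq V_i}$: because $\*C$ is the full c-component of $V_i$ inside $\G_{\*V^{\leq V_i}}$, any spouse of $\*C$ lying in $\*V^{\leq V_i}$ already belongs to $\*C \subseteq \Pa{\*C}$, so $\Spo{\*C} \setminus \Pa{\*C}$ consists only of strict successors of $V_i$; by consistency of the ordering their descendants are strict successors as well. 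Consequently $\*V^{\leq V_i} \setminus (\De{\Spo{\*C} \setminus \Pa{\*C}} \cup \Pa{\*C}) = \*V^{\leq V_i} \setminus \Pa{\*C}$, and Prop.~\ref{adxprop:clmpsep} reduces to the statement $V_i \perp_d \*V^{\leq V_i} \setminus (Pa^+_i \cup \{V_i\}) \mid Pa^+_i$. Under (G) this $d$-separation yields $V_i \indep \*V^{\leq V_i} \setminus (Pa^+_i \cup \{V_i\}) \mid Pa^+_i$ in $P(\*v)$, hence $p(v_i \mid v_1, \dots, v_{i-1}) = p(v_i \mid pa^+_i)$. Taking the product over all $V_i$ turns the chain-rule expansion into the factorisation of Def.~\ref{adxdef:semimarkovrel}; since the argument holds for every consistent ordering, $P(\*v)$ is semi-Markov relative to $\G$, i.e.\ (F).

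I expect the main obstacle to be the bookkeeping in (G) $\implies$ (F): carefully justifying that the semi-Markov CI for $V_i$ coincides with the C-LMP CI for the \emph{maximal} AC $\*C = \&{C}(V_i)_{\G_{\*V^{\leq V_i}}}$, which rests entirely on the disjointness of $\De{\Spo{\*C} \setminus \Pa{\*C}}$ from $\*V^{\leq V_i}$ (and, via this, on the fact that $\Pa{\*C}$ computed in $\G$ and in $\G_{\*V^{\leq V_i}}$ agree because $\*V^{\leq V_i}$ is ancestral). Everything else follows routinely from Thm.~\ref{thm:equivalence:gmp:lmpplus} and Prop.~\ref{adxprop:equivalence:factorclmp}.
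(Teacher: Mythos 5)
Your proof is correct and takes essentially the same route as the paper's: (G)\(\iff\)(L) from Thm.~\ref{thm:equivalence:gmp:lmpplus}, (F)\(\implies\)(L) as a one-line consequence of the containment \(\mathcal{L}^C \subseteq \mathcal{L}^P\) in Prop.~\ref{adxprop:equivalence:factorclmp}, and (G)\(\implies\)(F) by identifying each chain-rule term with the \(d\)-separation of Prop.~\ref{adxprop:clmpsep} applied to the maximal AC \(\*C = \&{C}(V_i)_{\G_{\*V^{\leq V_i}}}\). Your explicit justification that \(\De{\Spo{\*C} \setminus \Pa{\*C}}\) is disjoint from \(\*V^{\leq V_i}\) (any spouse of \(\*C\) inside \(\*V^{\leq V_i}\) already lies in \(\*C\), so the remaining spouses and their descendants are strict successors of \(V_i\)) is a more careful rendering of the paper's terser claim that \(\Spo{\*C} \setminus \*C = \emptyset\), which is accurate only when spouses are read within the subgraph \(\G_{\*V^{\leq V_i}}\), so your version is if anything the cleaner one.
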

\begin{proof}
The equivalence of (G) and (L) follows from Thm.~\ref{thm:equivalence:gmp:lmpplus}.

(G) \(\implies\) (F). Given a DAG \(\G\) and a distribution \(P(\*v)\), we need to show \(P(\*v)\) factorizes according to Def.~\ref{adxdef:semimarkovrel} for every ordering \(\*V^\prec\) consistent with \(\G\). Fix an arbitrary ordering \(\*V^\prec\). Using the chain rule, we factorize
\begin{align*}
    P(\*v) = \prod_{V_i \in \*V^\prec} p(v_i \mid v_1,\dots, v_{i-1})
\end{align*}
Then, let \(Pa^+(\{V_i\}) = \Pa{\&{C}(V_i)_{\G_{\*V^{\leq V_i}}}}_{\G_{\*V^{\leq V_i}}} \setminus \{V_i\}\). It suffices to show that 
\[
    V_i \perp_d \*V^{\leq V_i} \setminus (Pa^+(\{V_i\}) \cup \{V_i\}) \mid Pa^+(\{V_i\}) \text{   in   } \G
\]
Since \(\*V^{\leq V_i}\) is an ancestral set, \(\*C = \&{C}(V_i)_{\G_{\*V^{\leq V_i}}}\) is an AC relative to \(V_i\). By definition, \(\Spo{\*C} \setminus \*C = \emptyset\), hence \(\*V^{\leq V_i} \setminus \De{\Spo{\*C} \setminus \Pa{\*C}} = \*V^{\leq V_i}\). By Prop.~\ref{adxprop:clmpsep}, we get the required d-separation. Since \(P(\*v)\) satisfies the global Markov property for \(\G\), this d-separation implies that 
\[
    V_i \indep \*V^{\leq V_i} \setminus (Pa^+(\{V_i\}) \cup \{V_i\}) \mid Pa^+(\{V_i\}) \text{   in   } P(\*v).
\]
This allows us to simplify the factorisation of \(P(\*v)\) to 
\begin{align*}
    P(\*v) = \prod_{V_i \in \*V^\prec} p(v_i \mid pa^+_i)
\end{align*}

(F) \(\implies\) (C). If \(P(\*v)\) is semi-Markov relative to \(\G\), then each of the semi-Markov relative CIs of \(\G\) (Def.~\ref{adxdef:semimarkovrelci}) must hold in \(P(\*v)\). Since the C-LMP CIs of \(\G\)with respect to \(\*V^\prec\) are a subset of the semi-Markov relative CIs (Prop.~\ref{adxprop:equivalence:factorclmp}), the C-LMP CIs must necessarily hold in  \(P(\*v)\).
\end{proof}

\begin{figure*}[t]
    \centering
    \begin{subfigure}{0.48\textwidth}
        \centering
        \includegraphics[width=\textwidth]{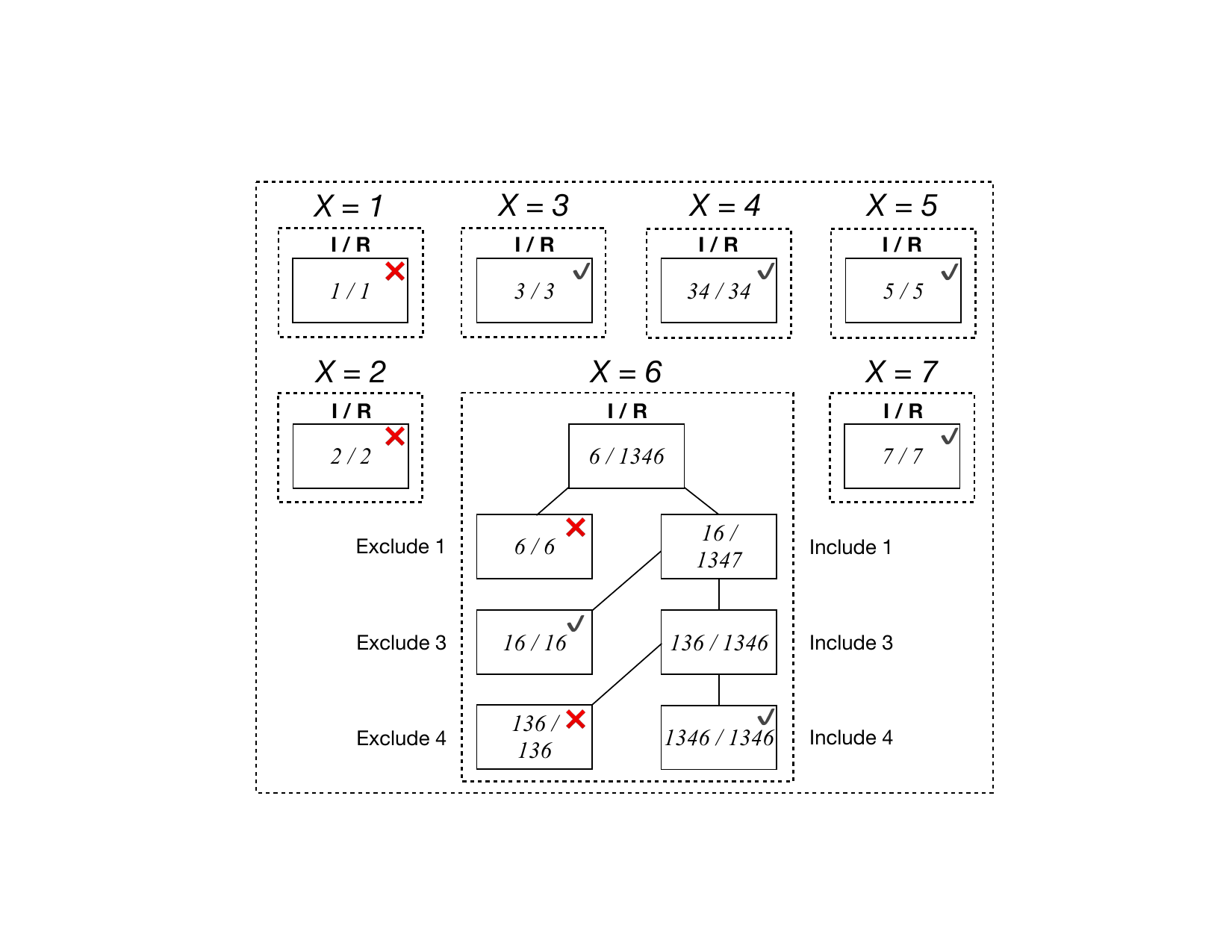}
        \caption{A set of search trees, one for each \(X \neq J\)}
        \label{fig:listci:tree_nonj}
    \end{subfigure}
    \hfill
    \begin{subfigure}{0.48\textwidth}
        \centering
        \includegraphics[width=\textwidth]{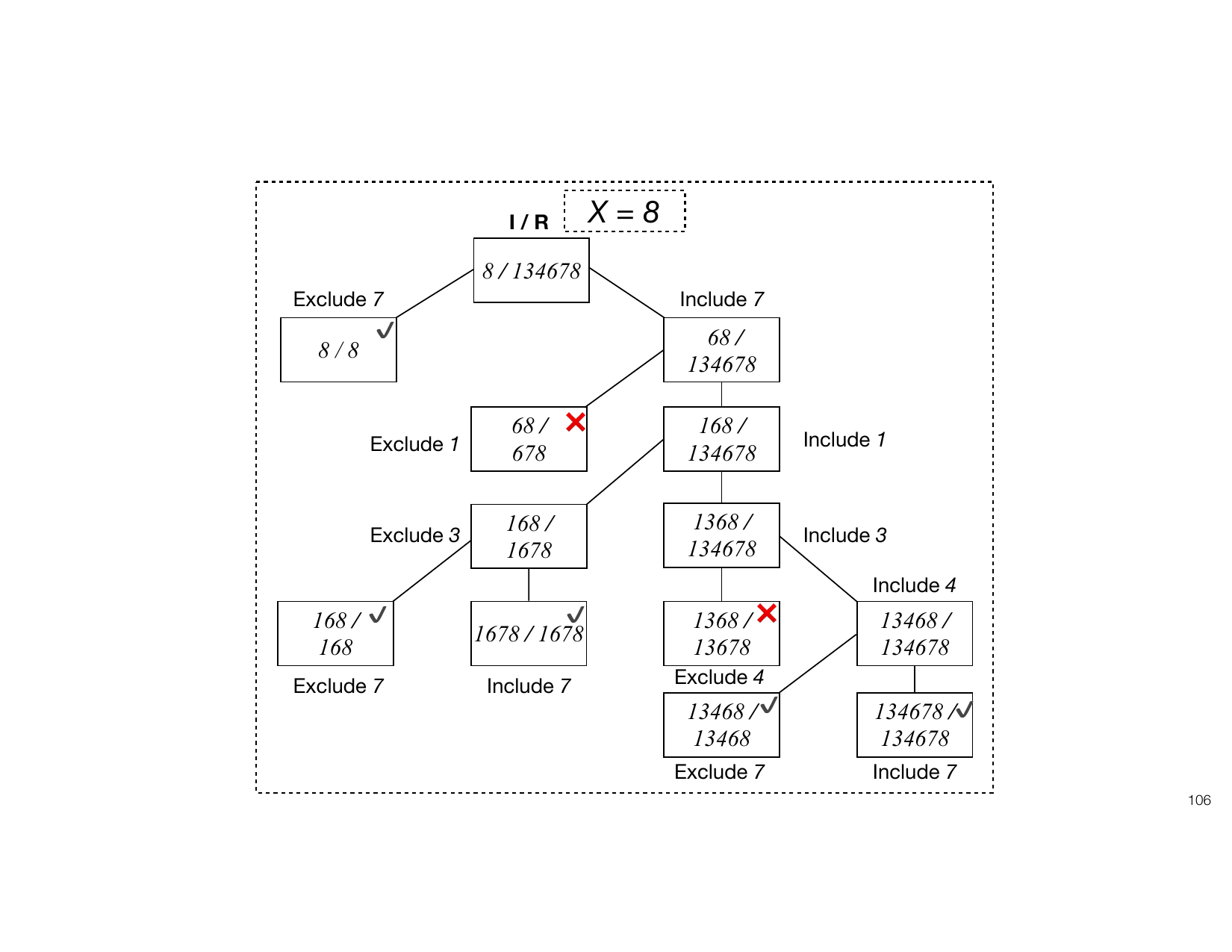}
        \caption{A search tree for \(X = J\)}
        \label{fig:listci:tree_j}
    \end{subfigure}
\caption{
    A set of search trees illustrating the running of \textsc{ListCI} in Ex.~\ref{ex:listci}.
}
\label{fig:listci:tree_full}
\end{figure*}

\subsection{Examples} 
\label{sec:appendix:examples}

The following example shows that total number of vacuous CIs invoked by C-LMP may be exponential with respect to the number of nodes in a graph.

\begin{adxexample}
\label{ex:exp_no_ci}
    Consider the three causal graphs in Fig.~\ref{fig:exp_no_ci}, each of which is a bidirected clique encoding no CIs.
    For \(\G^{b1}\) shown in Fig.~\ref{fig:exp_no_ci_1}, C-LMP invokes 7 vacuous CIs:
    \(
        A_1 \indep \emptyset,
        A_2 \indep \emptyset,
        A_2 \indep \emptyset \mid \{A_1\},
        A_3 \indep \emptyset,
        A_3 \indep \emptyset \mid \{A_1\},
        A_3 \indep \emptyset \mid \{A_2\},
        A_3 \indep \emptyset \mid \{A_1,A_2\}
    \).
     \(\G^{b2}\) shown in Fig.~\ref{fig:exp_no_ci_2} is constructed by adding one variable \(A_4\) to \(\G^{b1}\) with three bidirected edges \(A_4 \leftrightarrow A_i\) for \(1 \leq i \leq 3\).
     C-LMP invokes 15 vacuous CIs for \(\G^{b2}\). 
    If we add another variable \(A_5\) to \(\G^{b2}\) with four bidirected edges \(A_5 \leftrightarrow A_ii\) for \(1 \leq i \leq 4\), C-LMP invokes 31 vacuous CIs.
    As shown in Fig.~\ref{fig:exp_no_ci_3} which generalizes this pattern to variables \(\{A_1, \cdots, A_n\}\) with bidirected edges between every \(A_i\) and \(A_j\) with \(1 \leq i ,j \leq n, i \neq j\), C-LMP invokes \(2^n -1\) vacuous CIs.
\qed
\end{adxexample}

The following example expands on Ex.~\ref{ex:listcix:tree}.
We demonstrate the execution of \textsc{ListCI}(\(\G^3, \*V^\prec\)) with \(\G^3\) shown in Fig.~\ref{fig:listci} and \(\*V^\prec = \{A,B,C,D,E,F,H,J\}\).
The full search tree for the execution of \textsc{ListCI} (in Ex.~\ref{ex:listci}) is given in Fig.~\ref{fig:listci:tree_full}.

\begin{adxexample}
    Expanding Ex.~\ref{ex:listcix:tree}.
    Let \(\G^3\) be the causal graph shown in Fig.~\ref{fig:listci} and \(\*V^\prec = \{A,B,C,D,E,F,H,J\}\).
    We show a part of running \textsc{ListCI}(\(\G^3, \*V^\prec\)) with \(X = J\) starting from the root node \(\&N(\{J\},\{A,C,D,F,H,J\})\) to the leaf node \(\&L(\{A,F,J\},\{A,F,J\})\).

    Initially, the search starts from \(\&N\) which is constructed at line~\ref{alg:listci:calllistcix} of \textsc{ListCI} with \(X = J\), \(\*I = \{J\}\) and \(\*R = \{A,C,D,F,H,J\}\).
    At line~\ref{func:listcix:callfindadmissiblec} of \textsc{ListCIX}, \textsc{FindAAC} returns \(\{J\}\).
    With \(s = F\) and \(\*R' = \{J\}\), the recursive call \textsc{ListCIX}\((\G^3,J,\*V^\prec,\{J\},\{J\})\) is made at line~\ref{func:listcix:recursion:rprime}, spawning a child \(\&N_1(\{J\},\{J\})\).
    The search continues from \(\&N_1\).
    \textsc{FindAAC} returns \(\{J\}\).
    \(\&N_1\) is a leaf node, and \textsc{ListCIX} outputs a CI: \(J \indep \{A, B, C, D, E\}\) at line~\ref{func:listcix:outputci}.

\begin{figure}[ht]
    \centering
    \begin{subfigure}{0.15\textwidth}
        \includegraphics[width=\textwidth]{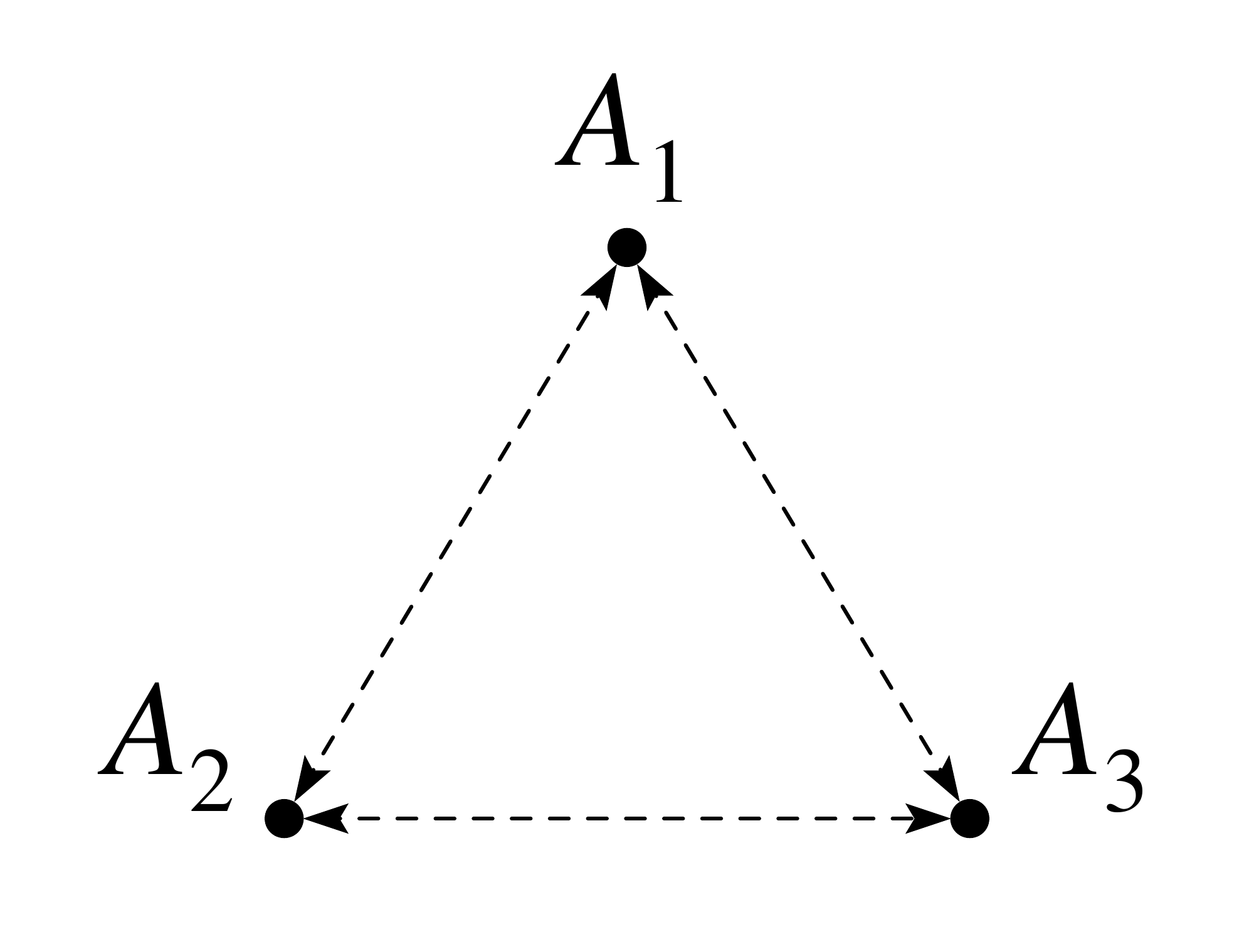}
        \caption{\(\G^{b1}\)}
        \label{fig:exp_no_ci_1}
    \end{subfigure}
    \hfill
    \begin{subfigure}{0.15\textwidth}
        \includegraphics[width=\textwidth]{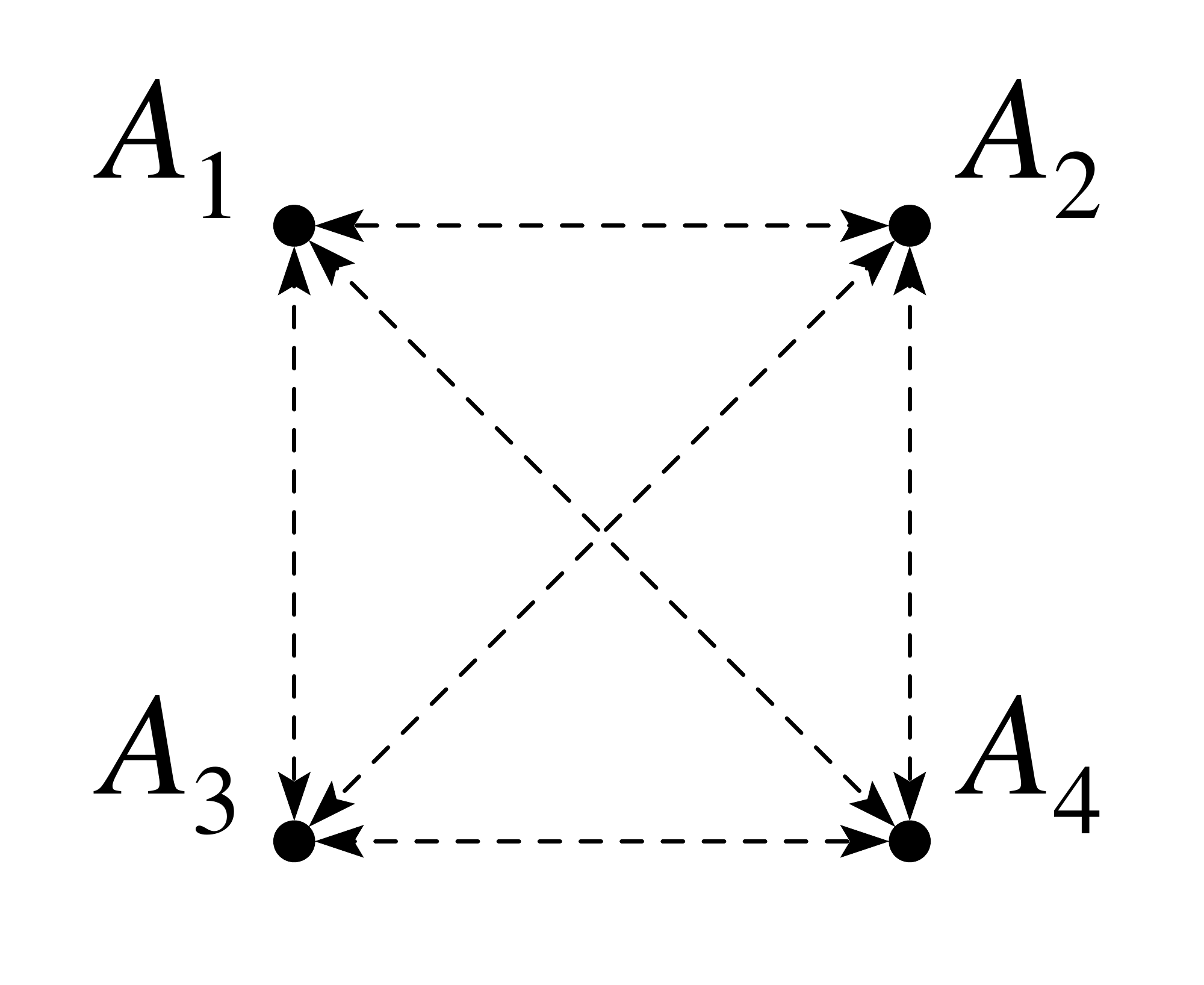}
        \caption{\(\G^{b2}\)}
        \label{fig:exp_no_ci_2}
    \end{subfigure}
    \hfill
    \begin{subfigure}{0.15\textwidth}
        \includegraphics[width=\textwidth]{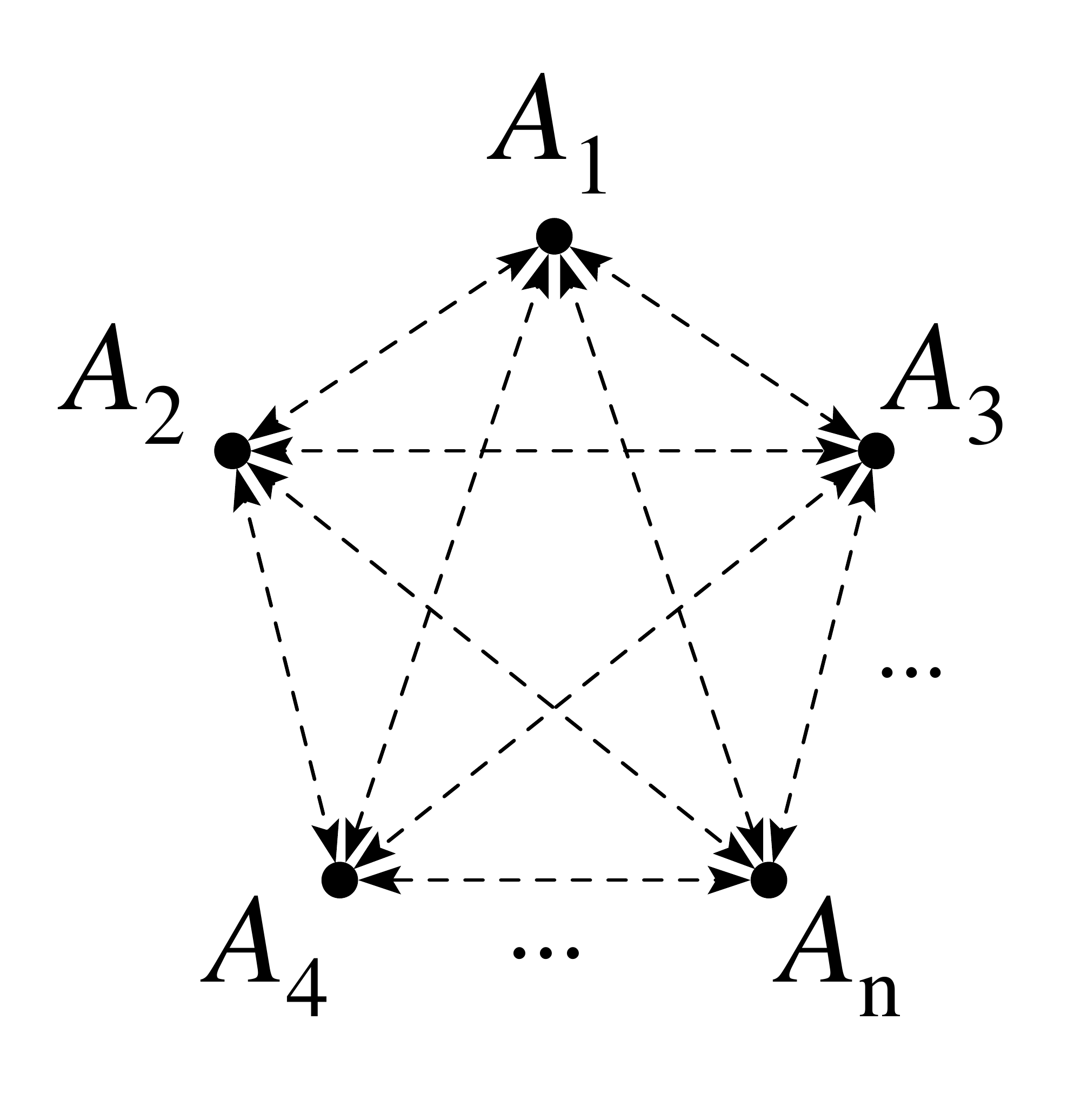}
        \caption{\(\G^{b3}\)}
        \label{fig:exp_no_ci_3}
    \end{subfigure}
\caption{
Three examples to demonstrate that total number of vacuous CIs invoked by C-LMP may be exponential with respect to the number of nodes in a graph.
}
\label{fig:exp_no_ci}
\end{figure}

    After, \textsc{ListCIX} backtracks to the parent \(\&N\).
    Then, with \(\*I' = \{F,J\}\) constructed at line~\ref{func:listcix:iprime}, a recursive call \textsc{ListCIX}\((\G^3,J,\*V^\prec,\{F,J\},\{A, C, D, F, H, J\})\) is made at line~\ref{func:listcix:recursion:rprime}, spawning a child \(\&N_2(\{F,J\},\{A, C, D, F, H, J\})\).
    At \(\&N_2\), \textsc{FindAAC} returns \(\{A, F, J\}\).
    With \(s = A\) and \(\*R' = \{F, H, J\}\), another recursive call \textsc{ListCIX}\((\G^3,J,\*V^\prec,\{F,J\},\{F, H, J\})\) is made at line~\ref{func:listcix:recursion:rprime}, spawning a child \(\&N_3(\{F,J\},\{F, H, J\})\).
    At \(\&N_3\), \textsc{FindAAC} returns \(\perp\), backtracking to \(\&N_2\).
    with \(\*I' = \{A,F,J\}\), a recursive call \textsc{ListCIX}\((\G^3,J,\*V^\prec,\{A,F,J\},\{A, C, D, F, H, J\})\) creates a child \(\&N_4(\{A,F,J\},\{A, C, D, F, H, J\})\).
    The recursion continues in the following order: \(\&N_4\) adds a child \(\&N_5(\{A,F,J\},\{A, F, H, J\})\) with \(s = C\) and \(\*R' = \{A,F,H,J\}\), and \(\&N_5\) adds a child \(\&N_6(\{A,F,J\},\{A, F, J\})\) with \(s = H\) and \(\*R' = \{A,F,J\}\).
    \(\&N_6 = \&L\) is a leaf node and \textsc{FindAAC} returns \(\{A, F, J\}\).
    Finally, \textsc{ListCIX} outputs a CI: \(J \indep \{B\} \mid \{A, F\}\) at line~\ref{func:listcix:outputci}.
\qed
\end{adxexample}

\section{Further Results}
\label{appendix:furtherresults}

We present a procedure \textsc{ListGMP} (Fig.~\ref{func:listgmp}) that lists all CIs invoked by GMP for a causal graph \(\G\) over a set of variables \(\*V\).
The following result states that \textsc{ListGMP} correctly lists all such CIs.

\begin{figure}[t]
\begin{algorithmic}[1]
\Function {ListGMP} {$\G, \*V$}
    \State {\bfseries Output:} Listing CIs invoked by GMP for \(\G\) over \(\*V\).

    \State \textbf{for} each \(\*X\) with \(\emptyset \subset \*X \subset \*V\) \textbf{do}
    \Indent
        \State \textbf{for} each \(\*Y\) with \(\emptyset \subset \*Y \subseteq \*V \setminus \*X\) \textbf{do}
        \Indent
            \State \textbf{for} \(\*Z\) with \(\emptyset \subseteq \*Z \subseteq \*V \setminus (\*X \cup \*Y)\) \textbf{do}
            \Indent
                \State Output \(\*X \indep \*Y \mid \*Z\)
            \EndIndent
        \EndIndent
    \EndIndent

\EndFunction
\end{algorithmic}
\caption{A function that lists all CIs invoked by GMP.}
\label{func:listgmp}
\end{figure}

\begin{adxlemma}[Correctness of \textsc{ListGMP}]
\label{adxlemma:listgmp}
    Given a causal graph \(\G\) over a set of variables \(\*V\), \textsc{ListGMP}\((\G,\*V)\) lists all and only all conditional independence relations invoked by the global Markov property for \(\G\).
\end{adxlemma}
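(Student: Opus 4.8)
The plan is to first pin down the target set and then verify that \textsc{ListGMP} emits exactly it. By Def.~\ref{def:gmp}, the conditional independence relations invoked by the global Markov property for $\G$ are precisely the statements $\*X \indep \*Y \mid \*Z$, ranging over disjoint $\*X,\*Y,\*Z \subseteq \*V$ with $\*X,\*Y \neq \emptyset$, for which $\*X \perp_d \*Y \mid \*Z$ holds in $\G$; call this set $\&{D}$. Thus ``all and only all'' reduces to the two inclusions: every statement \textsc{ListGMP} outputs belongs to $\&{D}$ (soundness), and every member of $\&{D}$ is output (completeness).

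First I would show the three nested loops range over exactly the disjoint triples $(\*X,\*Y,\*Z)$ with $\*X,\*Y \neq \emptyset$, each visited once. The constraint $\emptyset \subset \*X \subset \*V$ gives all nonempty proper subsets $\*X$ (properness is automatic, since any admissible triple needs a nonempty $\*Y \subseteq \*V\setminus\*X$); $\emptyset \subset \*Y \subseteq \*V\setminus\*X$ gives all nonempty $\*Y$ disjoint from $\*X$; and $\emptyset \subseteq \*Z \subseteq \*V \setminus (\*X\cup\*Y)$ gives all $\*Z$ (including $\emptyset$) disjoint from both. Hence the loop body is executed exactly once per admissible triple. The output step emits $\*X \indep \*Y \mid \*Z$ exactly when $\*X \perp_d \*Y \mid \*Z$ holds in $\G$ --- the membership condition of $\&{D}$ --- a test performable in $O(n+m)$ time by Bayes-Ball on the explicit-confounder transform of $\G$, as already used in \textsc{FindSeparator} (Lemma~\ref{lemma:findsep}). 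Soundness is then immediate (each emitted triple satisfies the separation, so lies in $\&{D}$), and completeness follows because any member of $\&{D}$ is an admissible triple, is therefore visited, and passes the test.

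Because this is essentially an exhaustive-enumeration lemma, there is no deep obstacle; the care lies entirely in the loop-range bookkeeping, and I would make this the crux of the write-up. Specifically, I must check that restricting the outer loop to $\*X \subsetneq \*V$ discards no valid triple (it cannot, since $\*X \subseteq \*V\setminus\*Y \subsetneq \*V$ for any admissible triple), that $\*Z=\emptyset$ is admitted, and that distinct iterations yield distinct triples so nothing is double-counted. One further point worth flagging: since $d$-separation is symmetric, both $\*X \indep \*Y \mid \*Z$ and $\*Y \indep \*X \mid \*Z$ are produced; this is consistent with Def.~\ref{def:gmp}, which quantifies over ordered pairs of sets, so both genuinely belong to $\&{D}$ and neither is spurious. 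A one-line finiteness remark (there are only finitely many triples over $\*V$) establishes termination and completes the proof.
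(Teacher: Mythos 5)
Your proposal is, in substance, the fully spelled-out version of the paper's own proof, which consists of the single line ``follows by construction from Def.~\ref{def:gmp}'': the loop-range bookkeeping (each admissible triple visited exactly once), the inclusion of \(\*Z = \emptyset\), the observation that restricting to \(\*X \subsetneq \*V\) discards no admissible triple, and the symmetry remark are precisely the details that one-liner leaves implicit, and they are all correct.

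The one substantive mismatch is in your soundness step. You write that the output step ``emits \(\*X \indep \*Y \mid \*Z\) exactly when \(\*X \perp_d \*Y \mid \*Z\) holds in \(\G\),'' implemented by a Bayes-Ball test as in \textsc{FindSeparator}. But \textsc{ListGMP} as printed (Fig.~\ref{func:listgmp}) contains no \(d\)-separation test at all: the innermost loop outputs every admissible triple unconditionally. Taken literally, the printed procedure lists \emph{all} CIs invoked by GMP but not \emph{only} those: on any graph with at least one edge it also emits statements whose antecedent \(d\)-separation fails and which GMP therefore does not invoke, under the reading of ``invoked'' used in the counting argument of Prop.~\ref{prop:gmpsize}, where each invoked CI corresponds to a \(d\)-separation that actually holds. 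So you have, perhaps unknowingly, proved the correctness of a repaired algorithm --- the printed enumeration plus the filter you describe --- rather than of the printed one; that repair is exactly what the lemma's ``and only all'' direction requires, so yours is arguably the right proof of the right statement. In your write-up, make explicit that you are inserting the per-triple \(d\)-separation check (with its \(O(n+m)\) cost) that the figure omits, rather than presenting it as something the pseudocode already does.
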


\begin{proof}
    The proof follows by construction from Def.~\ref{def:gmp}.
\end{proof}

\section{Experimental Details}
\label{appendix:experiments}

All experiments were run on a machine with CPU: Apple M2 Chip, 16GB of RAM, and macOS operating system.
We used a single core for the experiments.
The algorithms are implemented in Python.

This section is organized as follows.
Section~\ref{subsection:experimenta} presents details of the runtime of \textsc{ListCI} and other algorithms.
Section~\ref{subsection:application} shows detailed result on testing a hypothesized model against a real-life dataset.
Section~\ref{subsection:experimentb} provides detailed analysis of the total number of non-vacuous CIs invoked by C-LMP.
We use \textsc{ListCI} for the analysis.

\subsection{Comparison of \textsc{ListCI} with Other Algorithms}
\label{subsection:experimenta}

We compare the runtime of \textsc{ListCI} with other two algorithms: \textsc{ListGMP} and \textsc{ListCIBF} over \texttt{bnlearn} instances.
The runtime of the algorithms across different levels of projection \(U \in \{0,20,40,60,80\}\) respectively, are shown in Tables~\ref{table:latent0} - \ref{table:latent80}.

Fig.~\ref{fig:experiments:results:detail} depicts further statistics on the results shown in Fig.~\ref{fig:experiments:results}. Each point corresponds to average runtime of the specified algorithm on a given graph and projection level \(U \in \{0,10,20,\dots,90\}\), from which 10 random projections were generated.
Only the cases where all 10 samples did not time out (\(>\) 1 hour) are shown.
Each dot is associated with an error bar where the 1st and 3rd quartiles across the 10 samples are given by the the lower and upper limits, respectively.

\begin{figure}[t]
    \centering
    \includegraphics[width=.48\textwidth]{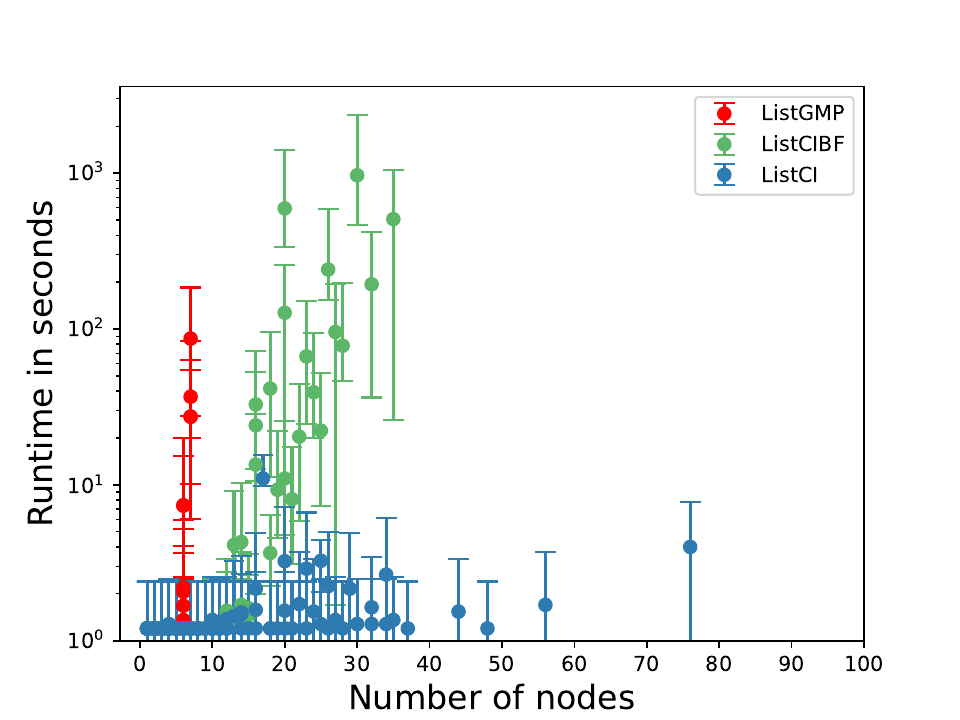}
    \caption{
    Plot of runtimes of the algorithms \textsc{ListGMP}, \textsc{ListCIBF}, and \textsc{ListCI} on graphs of various sizes.
    Each dot represents the average runtime of over 10 sample graphs for a given number of nodes and projection level \(U \in \{0,10,20,\dots,90\}\).
    Lower and upper limits of an error bar represent the 1st and 3rd quartiles respectively.
    The y-axis uses a logarithmic scale.
    }
    \label{fig:experiments:results:detail}
\end{figure}

\begin{table}[ht!]
    \footnotesize
    \centering
    \begin{tabular}{*{1}{l} *{5}{c}}
        \toprule
        \multicolumn{3}{c}{Graphs} & \multicolumn{3}{c}{Runtime (mm:ss)}\\
        \cmidrule(lr){1-3} \cmidrule(l){4-6}
        Name & n & m & \textsc{ListGMP} & \textsc{ListCIBF} & \textsc{ListCI} \\
        \midrule%
        asia    &   8   &   8   &   03:49   &   00:00   &   00:00 \\
        cancer  &   5   &   4   &   00:00   &   00:00   &   00:00 \\
        earthquake  &   5   &   4   &   00:00   &   00:00   &   00:00 \\
        sachs   &   11   &   17  &   -   &   00:00   &   00:00 \\
        survey  &   6   &   6   &   00:01   &   00:00   &   00:00 \\
        \midrule
        alarm   &   37   &   46   &   -   &   -   &   00:01 \\
        barley  &   48   &   84   &   -   &   -   &   00:01 \\
        child   &   20   &   25   &   -   &   00:46   &   00:00 \\
        insurance   &   27   &   52   &   -   &   00:54   &   00:00 \\
        mildew  &   35   &   46   &   -   &   04:10   &   00:00 \\
        water   &   32   &   66   &   -   &   -   &   00:00 \\
        \midrule
        hailfinder  &   56   &   66   &   -   &   -   &   00:01 \\
        win95pts  &   76   &   112   &   -   &   -   &   00:02 \\
        \bottomrule
    \end{tabular}
\caption{
No variables unobserved.
}
\label{table:latent0}

    \begin{tabular}{*{1}{l} *{5}{c}}
        \toprule
        \multicolumn{3}{c}{Graphs} & \multicolumn{3}{c}{Runtime (mm:ss)}\\
        \cmidrule(lr){1-3} \cmidrule(l){4-6}
        Name & n & m & \textsc{ListGMP} & \textsc{ListCIBF} & \textsc{ListCI} \\
        \midrule%
        asia    &   7   &   7   &   00:14   &   00:00   &   00:00 \\
        cancer  &   4   &   3   &   00:00   &   00:00   &   00:00 \\
        earthquake  &   4   &   3   &   00:00   &   00:00   &   00:00 \\
        sachs   &   9   &   14  &   -   &   00:00   &   00:00 \\
        survey  &   5   &   5   &   00:00   &   00:00   &   00:00 \\
        \midrule
        alarm   &   30   &   40   &   -   &   -   &   00:00 \\
        barley  &   39   &   88   &   -   &   -   &   00:01 \\
        child   &   16   &   24   &   -   &   00:05   &   00:00 \\
        insurance   &   22   &   57   &   -   &   00:06   &   00:00 \\
        mildew  &   28   &   45   &   -   &   00:29   &   00:00 \\
        water   &   26   &   78   &   -   &   13:22   &   00:01 \\
        \midrule
        hailfinder  &   45   &   84   &   -   &   -   &   06:06 \\
        win95pts  &   61   &   111   &   -   &   -   &   00:35 \\
        \bottomrule
    \end{tabular}
\caption{
20\% of variables unobserved.
}
\label{table:latent20}

    \begin{tabular}{*{1}{l} *{5}{c}}
        \toprule
        \multicolumn{3}{c}{Graphs} & \multicolumn{3}{c}{Runtime (mm:ss)}\\
        \cmidrule(lr){1-3} \cmidrule(l){4-6}
        Name & n & m & \textsc{ListGMP} & \textsc{ListCIBF} & \textsc{ListCI} \\
        \midrule%
        asia    &   5   &   6   &   00:00   &   00:00   &   00:00 \\
        cancer  &   3   &   2   &   00:00   &   00:00   &   00:00 \\
        earthquake  &   3   &   2   &   00:00   &   00:00   &   00:00 \\
        sachs   &   7   &   12  &   00:02   &   00:00   &   00:00 \\
        survey  &   4   &   4   &   00:00   &   00:00   &   00:00 \\
        \midrule
        alarm   &   23   &   35   &   -   &   -   &   00:00 \\
        barley  &   29   &   91   &   -   &   02:29   &   00:01 \\
        child   &   12   &   25   &   -   &   00:01   &   00:00 \\
        insurance   &   17   &   65   &   -   &   00:06   &   00:02 \\
        mildew  &   21   &   39   &   -   &   00:04   &   00:00 \\
        water   &   20   &   78   &   -   &   00:21   &   00:01 \\
        \midrule
        hailfinder  &   34   &   67   &   -   &   -   &   00:12 \\
        win95pts  &   46   &   98   &   -   &   -   &   02:07 \\
        \bottomrule
    \end{tabular}
\caption{
40\% of variables unobserved.
}
\label{table:latent40}
\end{table}

\begin{table}[ht!]
    \footnotesize
    \centering
    \begin{tabular}{*{1}{l} *{5}{c}}
        \toprule
        \multicolumn{3}{c}{Graphs} & \multicolumn{3}{c}{Runtime (mm:ss)}\\
        \cmidrule(lr){1-3} \cmidrule(l){4-6}
        Name & n & m & \textsc{ListGMP} & \textsc{ListCIBF} & \textsc{ListCI} \\
        \midrule%
        asia    &   4   &   3   &   00:00   &   00:00   &   00:00 \\
        cancer  &   2   &   1   &   00:00   &   00:00   &   00:00 \\
        earthquake  &   2   &   1   &   00:00   &   00:00   &   00:00 \\
        sachs   &   5   &   7  &   00:00   &   00:00   &   00:00 \\
        survey  &   3   &   2   &   00:00   &   00:00   &   00:00 \\
        \midrule
        alarm   &   15   &   27   &   -   &   00:10   &   00:00 \\
        barley  &   20   &   80   &   -   &   04:18   &   00:01 \\
        child   &   8   &   15   &   -   &   00:00   &   00:00 \\
        insurance   &   11   &   47   &   -   &   00:00   &   00:00 \\
        mildew  &   23   &   20   &   -   &   00:03   &   00:00 \\
        water   &   13   &   47   &   -   &   00:01   &   00:00 \\
        \midrule
        hailfinder  &   23   &   42   &   -   &   -   &   00:01 \\
        win95pts  &   31   &   53   &   -   &   -   &   00:14 \\
        \bottomrule
    \end{tabular}
\caption{
60\% of variables unobserved.
}
\label{table:latent60}

    \begin{tabular}{*{1}{l} *{5}{c}}
        \toprule
        \multicolumn{3}{c}{Graphs} & \multicolumn{3}{c}{Runtime (mm:ss)}\\
        \cmidrule(lr){1-3} \cmidrule(l){4-6}
        Name & n & m & \textsc{ListGMP} & \textsc{ListCIBF} & \textsc{ListCI} \\
        \midrule%
        asia    &   2   &   1   &   00:00   &   00:00   &   00:00 \\
        cancer  &   1   &   0   &   00:00   &   00:00   &   00:00 \\
        earthquake  &   1   &   0   &   00:00   &   00:00   &   00:00 \\
        sachs   &   3   &   2  &   00:00   &   00:00   &   00:00 \\
        survey  &   2   &   1   &   00:00   &   00:00   &   00:00 \\
        \midrule
        alarm   &   8   &   10   &   -   &   00:00   &   00:00 \\
        barley  &   10   &   23   &   -   &   00:01   &   00:01 \\
        child   &   4   &   6   &   00:01   &   00:00   &   00:00 \\
        insurance   &   6   &   20   &  00:02   &   00:00   &   00:00 \\
        mildew  &   7   &   14   &   00:19   &   00:00   &   00:00 \\
        water   &   7   &   12   &   00:49   &   00:00   &   00:00 \\
        \midrule
        hailfinder  &   12   &   24   &   -   &   00:05   &   00:00 \\
        win95pts  &   16   &   18   &   -   &   02:14   &   00:01 \\
        \bottomrule
    \end{tabular}
\caption{
80\% of variables unobserved.
}
\label{table:latent80}

\caption*{Table \ref{table:latent0} - \ref{table:latent80}:
Summary of runtime of algorithms over various graphs.
For each graph, the stated percent of variables were randomly chosen as unobserved, and the graph was projected onto the remaining observed variables.
Runtime is rounded to the nearest integer (second).
The symbol ``-'' indicates that the algorithm took over an hour on at least one sample graph.}

\end{table}

\subsection{Application to Model Testing}
\label{subsection:application}

In this section, we provide more details on our application of \textsc{ListCI} to the task of model testing in Section~\ref{section:experiments}.
Recall that we test an expert-provided ground-truth DAG (11 nodes and 16 edges, shown in Fig.~\ref{fig:ground_truth_graph}) against a real-world protein signaling dataset with 853 samples \cite{sachs2005causal}.
We present the details in the following example.

\begin{adxexample}
\label{ex:model_testing}
    Let \(\G\) be the ground-truth DAG shown in Fig.~\ref{fig:ground_truth_graph}, and fix the consistent ordering  
    \(\*V^\prec = \{PKA,PIP3,Plcg,Akt,PIP2,PKC,Raf,\)
    \(P38,Jnk,Mek,Erk\}\).
    GMP invokes 76580 CIs for \(\G\).
    A naive approach would be to test all these CIs against the data.
    In contrast, C-LMP invokes 10 CIs for \(\G\) with respect to \(\*V^\prec\), which together imply all CIs of the GMP.
    Therefore, C-LMP makes it possible to test the CIs encoded in \(\G\) against the data.
    The full list of CIs that C-LMP invokes is shown in Table~\ref{table:model_testing}.

    To generate and test these CIs, we call \textsc{ListCI}\((\G, \*V^\prec)\) and use a kernel-based CI test from the \texttt{causal-learn} package \cite{zheng2024causal} with p-value \(p = 0.05\) (for the null hypothesis of independence).

    As shown in Table~\ref{table:model_testing}, seven out of ten CIs invoked by C-LMP resulted in \(p > 0.05\).
\qed
\end{adxexample}

The test results show that \(\G\) may need to be revised, and the exact list of CIs that are violated may help experts in the revision process.
However, we note that significance testing (whether rejecting the null hypothesis or not) has its own limitations.
For example, selection of the level of significance impacts the probability of Type I error, and sample size of the dataset affects the likelihood of Type II error, especially for small datasets.

\begin{figure}[t]
    \centering
    \includegraphics[width=0.4\textwidth]{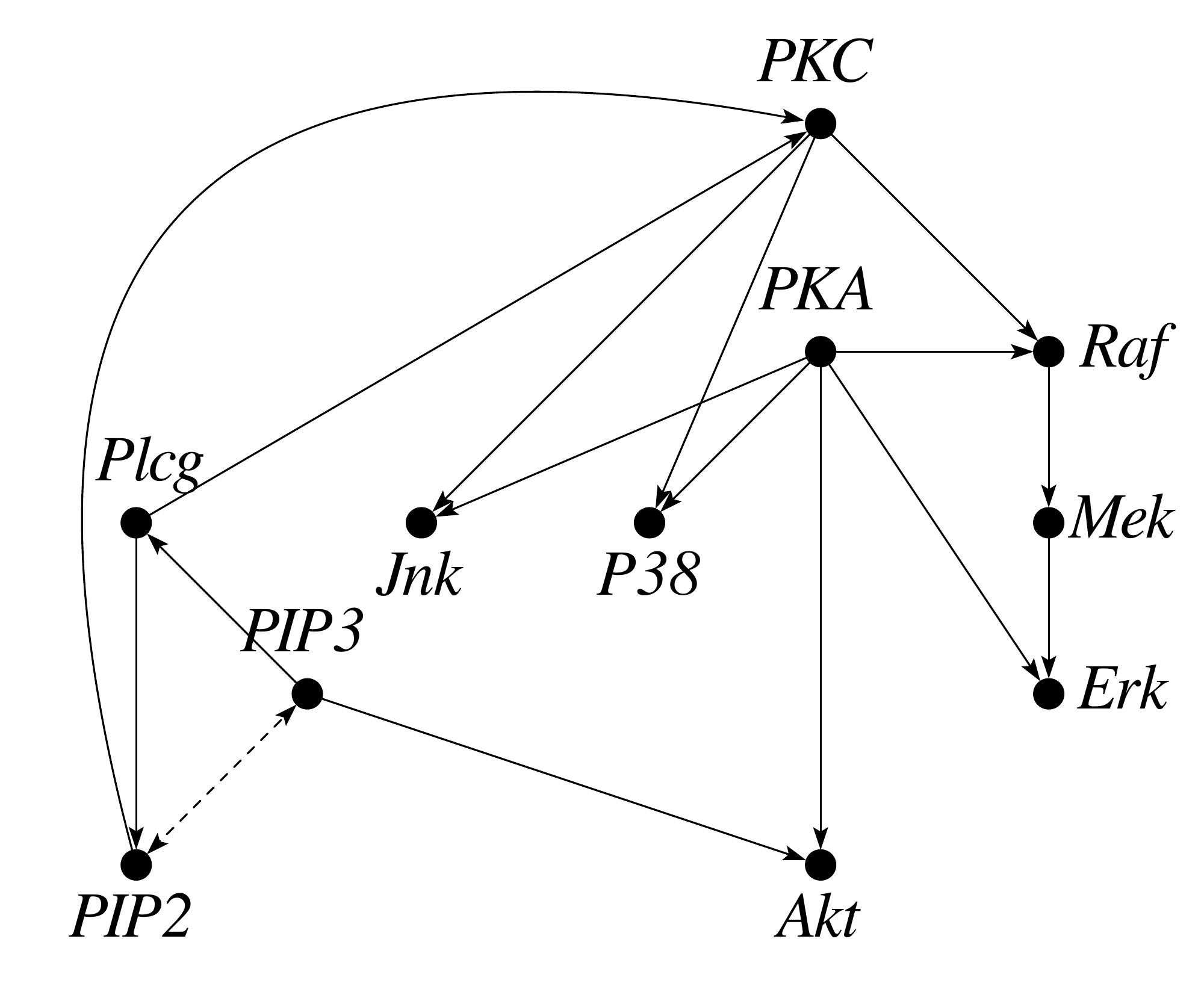}
    \caption{
    The ground-truth DAG (a protein-signaling network) sn \citep[Fig.~2]{sachs2005causal}.
    }
    \label{fig:ground_truth_graph}
\end{figure}

\begin{table}[t]
    \footnotesize
    \centering
    \begin{tabular}{*{1}{l} *{1}{c}}
        \toprule
        \multicolumn{1}{c}{CIs implied by \(\G\)} & \multicolumn{1}{c}{p-value}\\
        \midrule%
        \(PIP3 \indep PKA\) & 0.175 \\
        \midrule
        \(Plcg \indep PKA \mid PIP3\) & 0.081 \\
        \midrule
        \(Akt \indep Plcg \mid PIP3, PKA\) & 0.370 \\
        \midrule
        \(PIP2 \indep Akt, PKA \mid PIP3, Plcg\) & 0.648 \\
        \midrule
        \(PKC \indep Akt, PIP3, PKA \mid PIP2, Plcg\) & 0.318 \\
        \midrule
        \(Raf \indep Akt, PIP2, PIP3, Plcg\) & 0.036 \\
        \(\qquad \qquad \mid PKA, PKC\) & \\
        \midrule
        \(P38 \indep Akt, PIP2, PIP3, Plcg, Raf\) & 0.680 \\
        \(\qquad \qquad \mid PKA, PKC\) & \\
        \midrule
        \(Jnk \indep Akt, P38, PIP2, PIP3, Plcg, Raf\) & 0.002 \\
        \(\qquad \qquad \mid PKA, PKC\) & \\
        \midrule
        \(Mek \indep Akt, Jnk, P38, PIP2, PIP3, PKA,\) & 0.544 \\
        \(\qquad \qquad PKC, Plcg \mid Raf\) & \\
        \midrule
        \(Erk \indep Akt, Jnk, P38, PIP2, PIP3, PKC,\) & 0.000 \\
        \(\qquad \qquad Plcg, Raf \mid Mek, PKA\) & \\
        \bottomrule
    \end{tabular}
\caption{
Summary of results on testing a ground-truth DAG against a protein-signaling dataset.
A kernel-based CI test was used to test the set of CIs (invoked by C-LMP) on the dataset.
P-value is rounded to the nearest three digits after the decimal point.
}
\label{table:model_testing}
\end{table}

\subsection{Analysis of C-LMP}
\label{subsection:experimentb}

In this section, we use \textsc{ListCI} to understand the total number of non-vacuous CIs invoked by C-LMP.
Let \(\*{CI}\) denote this number.
We showed that \(\*{CI}\) is bounded by \(\Theta(n2^s)\) for a DAG with \(n\) nodes whose largest c-component has size \(s\).
While we give a concrete DAG to show this bound is tight, our hope is to empirically evaluate how often this worst-case arises, and \(\*{CI}\) in the `average case' using random graphs.
\textsc{ListCI} makes such empirical analysis of C-LMP possible by giving a way to efficiently compute \(\*{CI}\).

\subsubsection{Hypothesis.} We hypothesize that the following two parameters are key in determining \(\*{CI}\) for a DAG \(\G\):

\begin{enumerate}
    \item \(mu\): the number of bidirected edges in \(\G\), and
    \item \(s\): the size of the largest c-component in \(\G\).
\end{enumerate}

Intuitively, both the size of c-components in \(\G\) and their sparsity, which depends on \(mu\), are important indicators of \(\*{CI}\). Both parameters control the number of subsets of the c-component that result in admissible ancestral c-components.

\begin{figure}[t]
    \centering
    \begin{subfigure}{0.47\textwidth}
        \includegraphics[width=\textwidth]{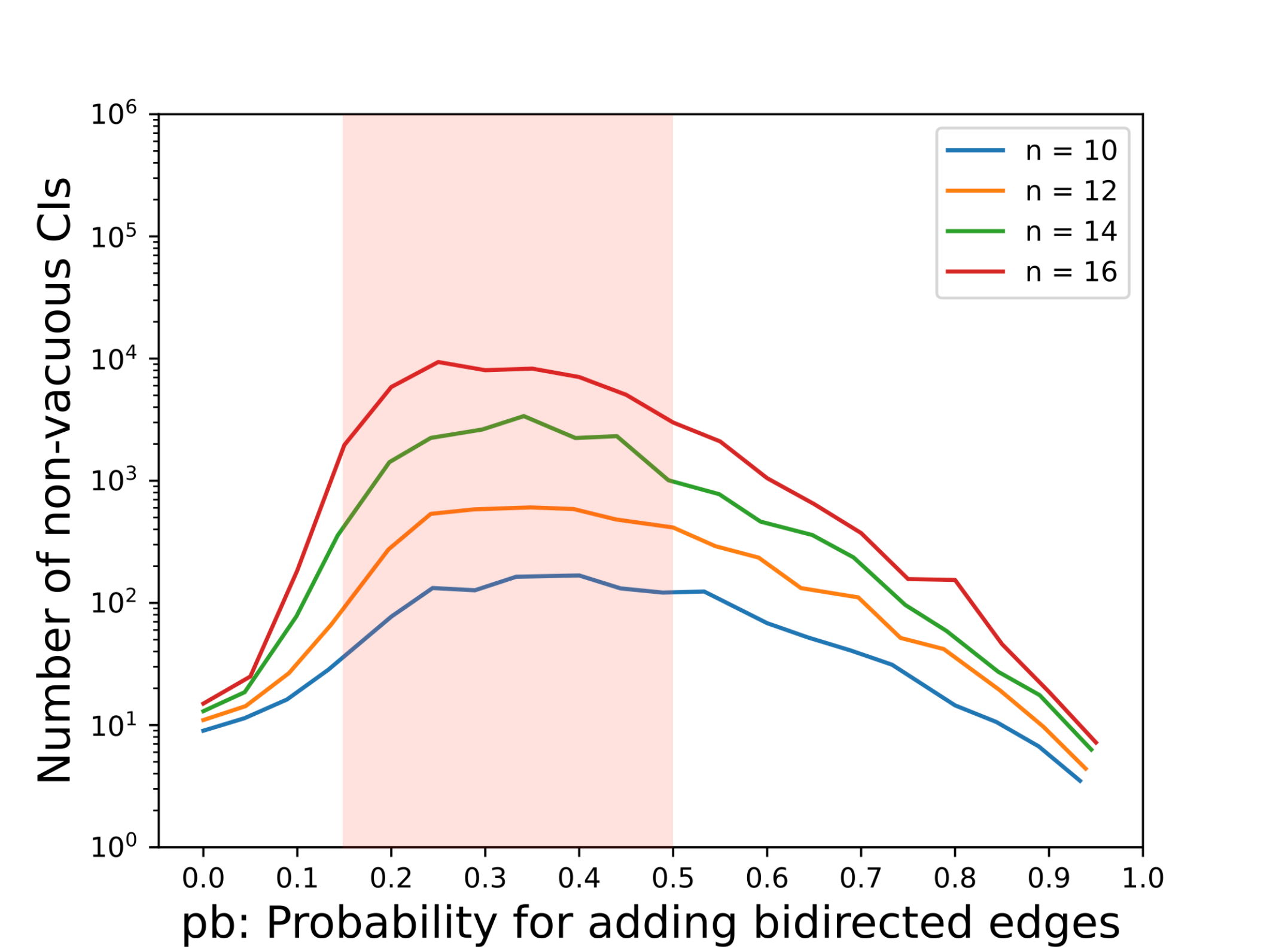}
        \caption{\(pb\) and \(\*{CI}\)}
        \label{fig:experiments:plot:1a:pbci}
    \end{subfigure}
    \hfill
    \begin{subfigure}{0.47\textwidth}
        \includegraphics[width=\textwidth]{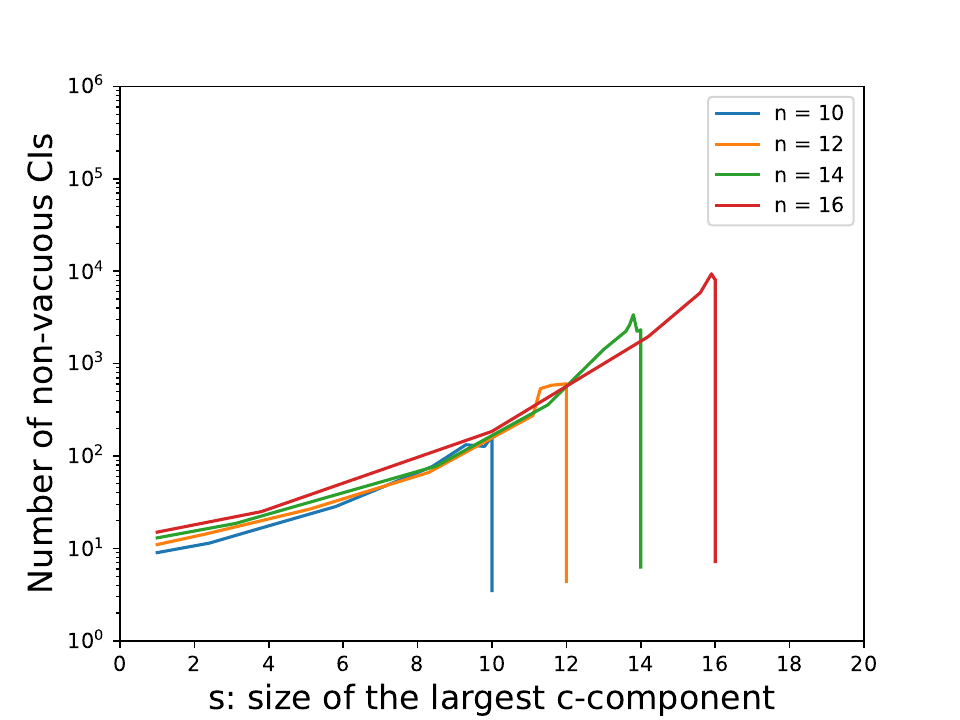}
        \caption{\(s\) and \(\*{CI}\)}
        \label{fig:experiments:plot:1a:sci}
    \end{subfigure}
    \caption{
    Illustration of results in Case 1 (\(md = 0\)).
    (a) displays two-phase transitions: as \(pb\) increases, \(\*{CI}\) grows rapidly up to a certain point (Phase 1) but shrinks after (Phase 2).
    (b) shows that \(s\) correlates with \(\*{CI}\) only within Phase 1.
    The red box indicates the `critical region.'
    }
    \label{fig:experiments:plot:1a}
\end{figure}

\begin{figure}[t]
    \centering
    \begin{subfigure}{0.47\textwidth}
        \includegraphics[width=\textwidth]{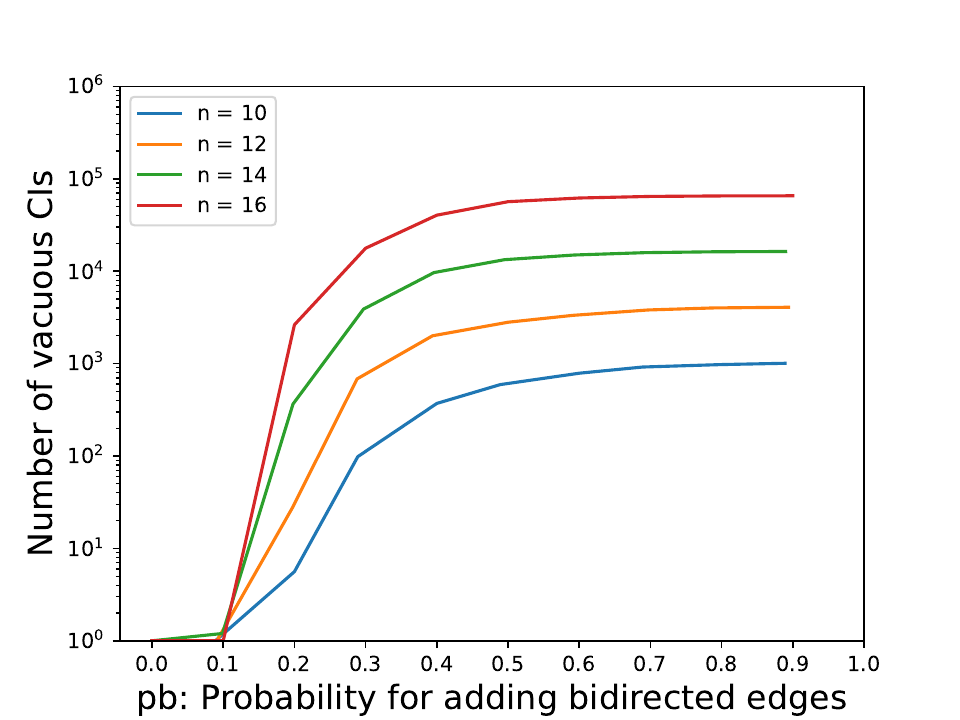}
        \caption{\(pb\) and number of vacuous CIs}
        \label{fig:experiments:plot:1v}
    \end{subfigure}
    \hfill
    \begin{subfigure}{0.47\textwidth}
        \includegraphics[width=\textwidth]{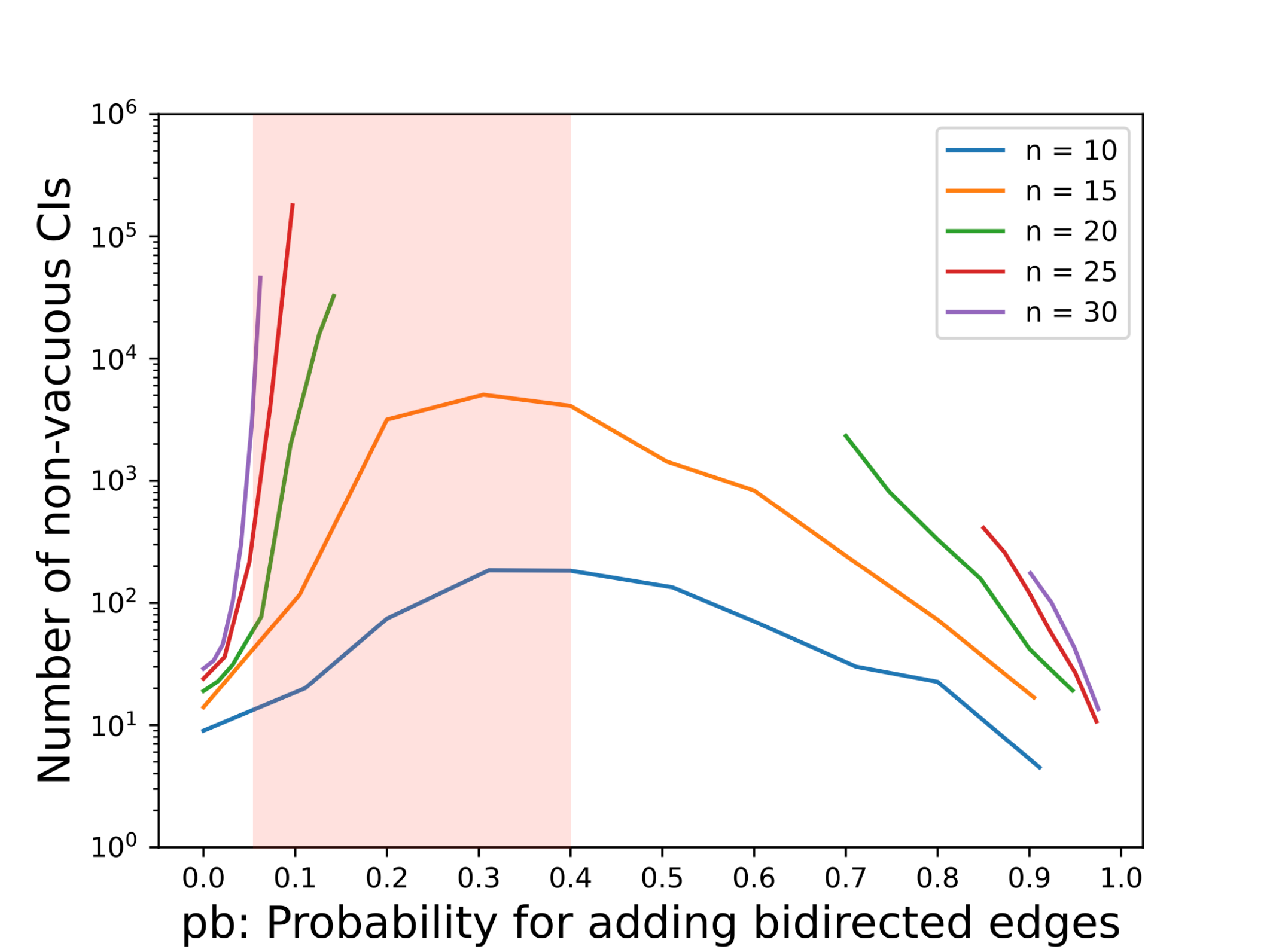}
        \caption{\(pb\) and \(\*{CI}\)}
        \label{fig:experiments:plot:1r}
    \end{subfigure}
    
    \caption{
    Illustration of results in Case 1.
    (a) Total number of vacuous CIs increases as \(\G\) becomes more dense with respect to bidirected edges.
    (b) Two-phase transitions are not fully observable for \(n \geq 20\).
    he red box indicates the `critical region.'
    }
    \label{fig:experiments:plot:1vr}
\end{figure}

\subsubsection{Random graphs.}
We run \textsc{ListCI} on random graphs to understand \(\*{CI}\) in the average case.
In particular, we use a minor variant of Erdős-Rényi random graphs  to include both directed and bidirected edges. 
We define a random causal DAG as \(\G(n,pd,pb)\) where \(pd\) (\(pb\)) represents the probability of a directed (bidirected) edge between a given pair of nodes.
Each possible edge is an independent Bernoulli.

\subsubsection{Experimental design.}  For each experiment, we fix our controls: \(n\), the number of nodes, and \(md\), the number of directed edges, in \(\G\).
We test \(md = 0,n,2n\) for select \(n \in [10,50]\).
Then, we change \(mu\), and observe \(s\) and \(\*{CI}\).
For each \(n\) and \(md\), we ran each experiment on 100 sample graphs.
Each run of \textsc{ListCI} was given an hour until timeout. 
Only complete runs are shown.

\begin{figure}[t]
    \centering
    \begin{subfigure}{0.47\textwidth}
        \includegraphics[width=\textwidth]{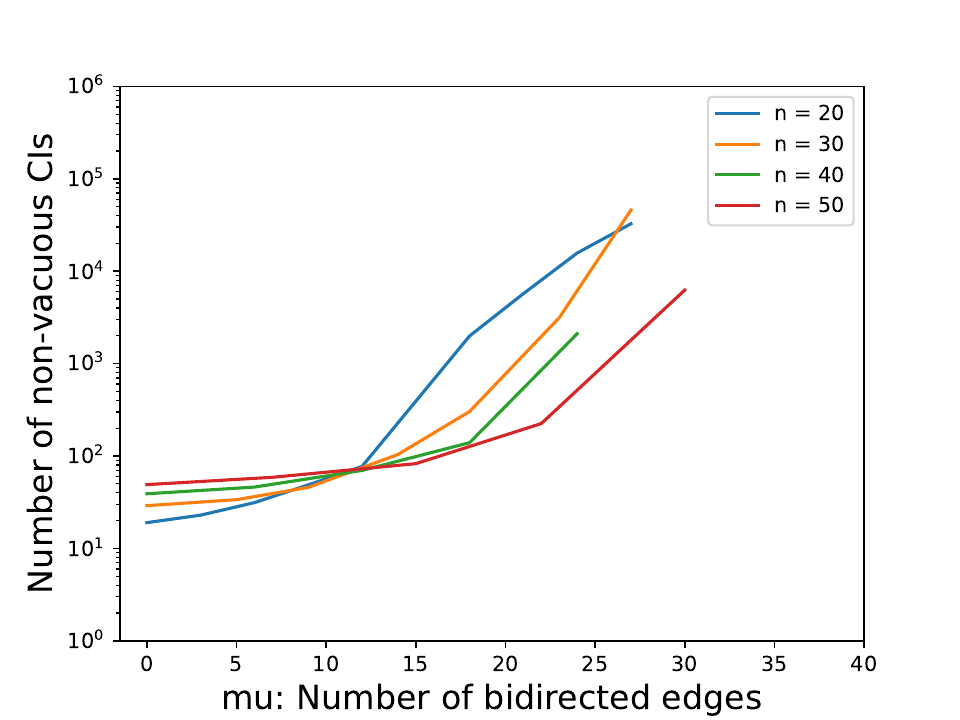}
        \caption{\(mu\) and \(\*{CI}\)}
        \label{fig:experiments:plot:1b:mbci}
    \end{subfigure}
    \hfill
    \begin{subfigure}{0.47\textwidth}
        \includegraphics[width=\textwidth]{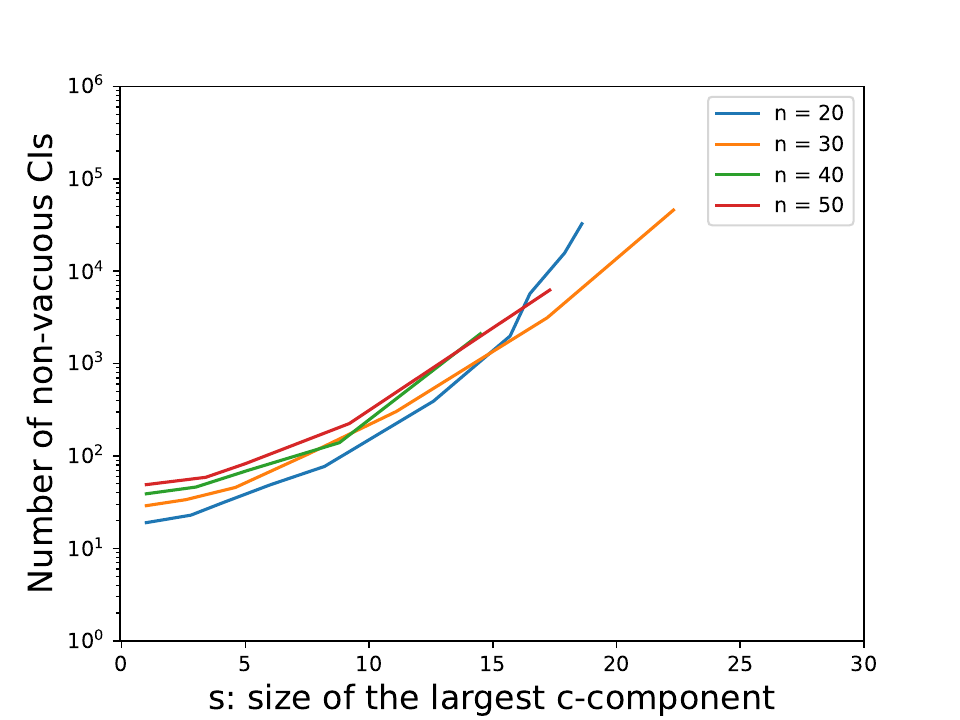}
        \caption{\(s\) and \(\*{CI}\)}
        \label{fig:experiments:plot:1b:sci}
    \end{subfigure}
    \caption{
    Illustration of results in Case 1 within Phase 1.
    \textsc{ListCI} starts timing out at approximately \(mu = 30\) or greater.
    }
    \label{fig:experiments:plot:1b}
\end{figure}

\subsubsection{Results and discussion.}

\begin{enumerate}
    \item \textbf{Case 1}: \(md = 0\). 
    First, for simplicity, we work with small graphs containing no directed edges.
    Starting from \(pb = 0\) (or \(mu = 0\)), we incrementally add bidirected edges until reaching full capacity, i.e., \(pb = 1\) or \(mu = \frac{n (n-1)}{2}\).
    Then, an interesting trend emerges as shown in Fig.~\ref{fig:experiments:plot:1a:pbci}.
    Roughly speaking, there are two phases seen on the curve.

    \begin{enumerate}
        \item Phase 1 on the left half of the curve.
        As more bidirected edges are added (i.e., \(pb\) increases), \(\*{CI}\) grows exponentially up to a certain peak region.
    
        \item Phase 2 on the right half of the curve.
        After reaching this peak, \(\*{CI}\) decreases exponentially as more bidirected edges are added.
    \end{enumerate}
    
    A possible explanation for the pattern shown in Phase 1 is that larger c-components tend to be constructed as \(pb\) increases.
    Then, \(s\) increases in general.
    As given by the bound \(O(n 2^s)\) (Prop~\ref{prop:lmpsize}), \(\*{CI}\) increases exponentially with a linear increase in \(s\).
    The curve in Fig~\ref{fig:experiments:plot:1a:sci} showing this relationship corresponds to Phase 1.
    Intuitively, a linear increase of the size of the largest c-component \(\*C\) (of size \(s\)) implies an exponential increase of total combination of subsets of \(\*C\) (i.e., MBs).
    As shown by Lemma~\ref{lemma:equivalence:cmb} and Thm.~\ref{thm:equivalence:clmpplus}, each MB maps uniquely to each CI invoked by C-LMP.
    Thus, the total number of MBs is the sum of the numbers of vacuous and non-vacuous CIs, which is represented by the ``sum'' of both curves: one curve in Fig.~\ref{fig:experiments:plot:1a:pbci} and the other curve in Fig.~\ref{fig:experiments:plot:1v}, respectively.

    On the other hand, in Phase 2, when even more bidirected edges are added to \(\G\), large c-components (or in fact, the largest and only c-component of size \(n\)) may become more dense in terms of bidirected edges.
    In the extreme case with \(pb = 1\), \(\G\) becomes a bidirected clique of size \(n\).
    With more bidirected paths between nodes, the number of \(d\)-separations in the graph decreases, leading to a decrease in \(\*{CI}\).
    Conceptually, the total number of MBs increases as c-components get more dense. 
    However, the ratio of MBs that result in non-vacuous CIs to total MBs decreases at a higher rate than the rate of increase of the number of MBs.
    We observe the difference by comparing Fig.~\ref{fig:experiments:plot:1a:pbci} and Fig.~\ref{fig:experiments:plot:1v}.
    This results in the decrease in \(\*{CI}\).

    \begin{figure}[t]
        \centering
        \begin{subfigure}{0.47\textwidth}
            \includegraphics[width=\textwidth]{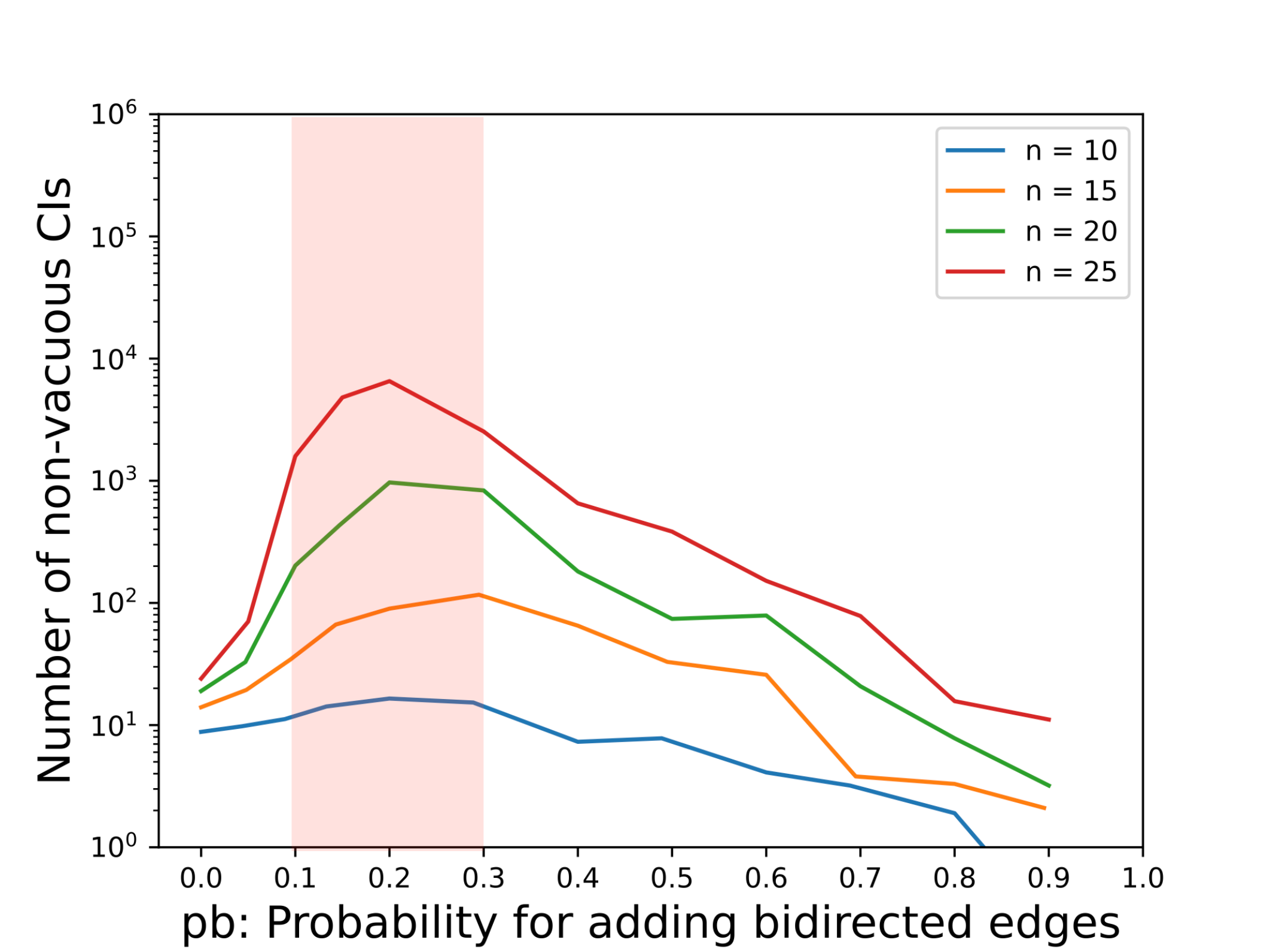}
            \caption{\(pb\) and \(\*{CI}\)}
            \label{fig:experiments:plot:2a:pbci}
        \end{subfigure}
        \hfill
        \begin{subfigure}{0.47\textwidth}
            \includegraphics[width=\textwidth]{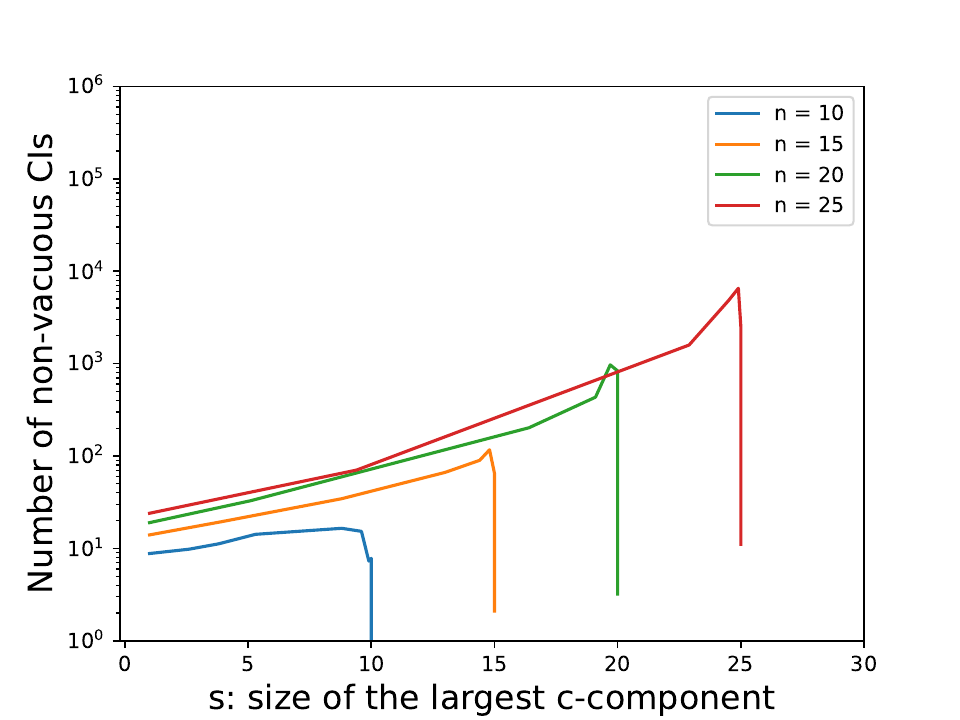}
            \caption{\(s\) and \(\*{CI}\)}
            \label{fig:experiments:plot:2a:sci}
        \end{subfigure}
        \caption{
        Illustration of results in Case 2 (\(md = n\)).
        Overall, similar patterns are shown as in Case 1 (Fig.~\ref{fig:experiments:plot:1a}).
        However, the rate of growth of \(\*{CI}\) with respect to \(n\) is lower than in Case 1.
        he red box indicates the `critical region.'
        }
        \label{fig:experiments:plot:2a}
    \end{figure}

    \begin{figure}[t]
        \centering
        \begin{subfigure}{0.47\textwidth}
            \includegraphics[width=\textwidth]{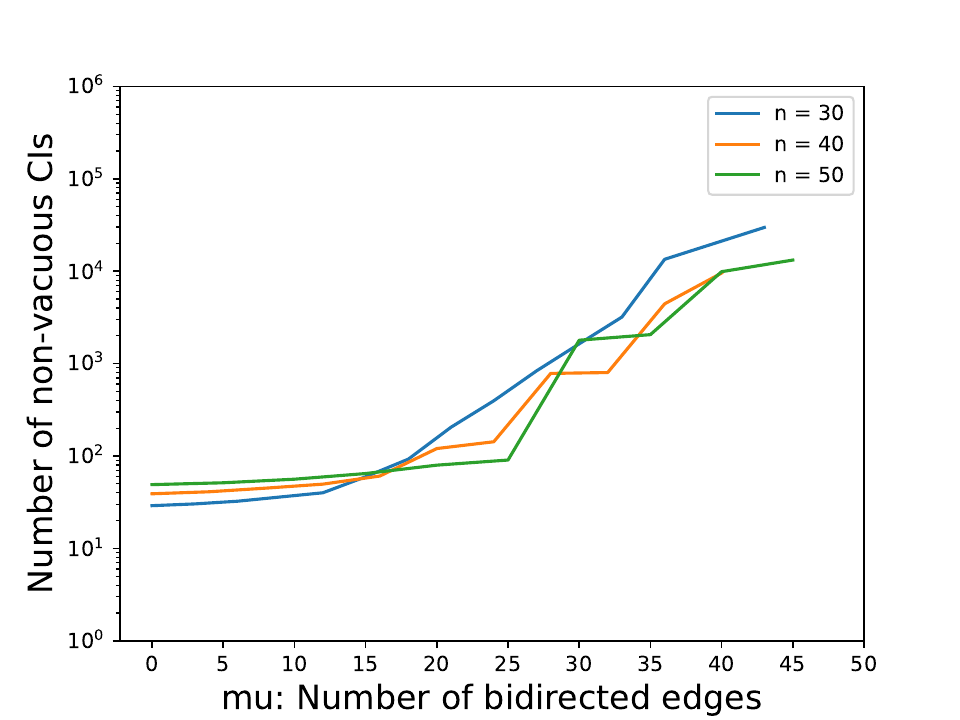}
            \caption{\(mu\) and \(\*{CI}\)}
            \label{fig:experiments:plot:2b:mbci}
        \end{subfigure}
        \hfill
        \begin{subfigure}{0.47\textwidth}
            \includegraphics[width=\textwidth]{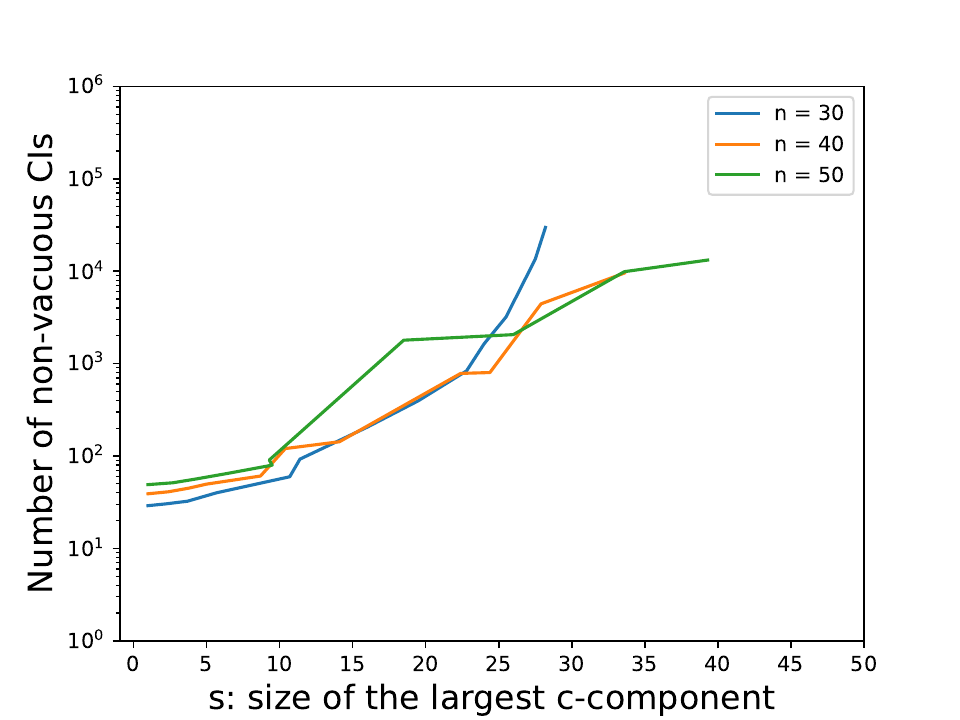}
            \caption{\(s\) and \(\*{CI}\)}
            \label{fig:experiments:plot:2b:sci}
        \end{subfigure}
        \caption{
        Illustration of results in Case 2 by adding bidirected edges to a graph \(\G\) across varying \(n\).
        \textsc{ListCI} starts timing out at approximately \(mu = 50\) or greater.
        }
        \label{fig:experiments:plot:2b}
    \end{figure}
    
    In Fig~\ref{fig:experiments:plot:1a:sci}, a vertical line where \(s\) stays constant, i.e., \(s = n\), corresponds to Phase 2.
    We note that \(s\) being a constant is a natural consequence of the experimental setup.
    When \(pb\) continues to increase from 0, all nodes in \(\G\) will eventually become connected to one another, and thus \(s\) converges to \(n\).
    Once the point with \(s = n\) is reached, \(s\) stays constant even with further addition of bidirected edges since the entire set of nodes in \(\G\) is the largest and the only c-component in \(\G\).
    When \(pb\) is further increased, the largest c-component in \(\G\)  becomes more dense, which explains the decrease in \(\*{CI}\).
    Therefore, \(s\) may be a good indicator of \(\*{CI}\) in Phase 1, but not necessarily in Phase 2.

    \begin{figure}[t]
        \centering
        \begin{subfigure}{0.47\textwidth}
            \includegraphics[width=\textwidth]{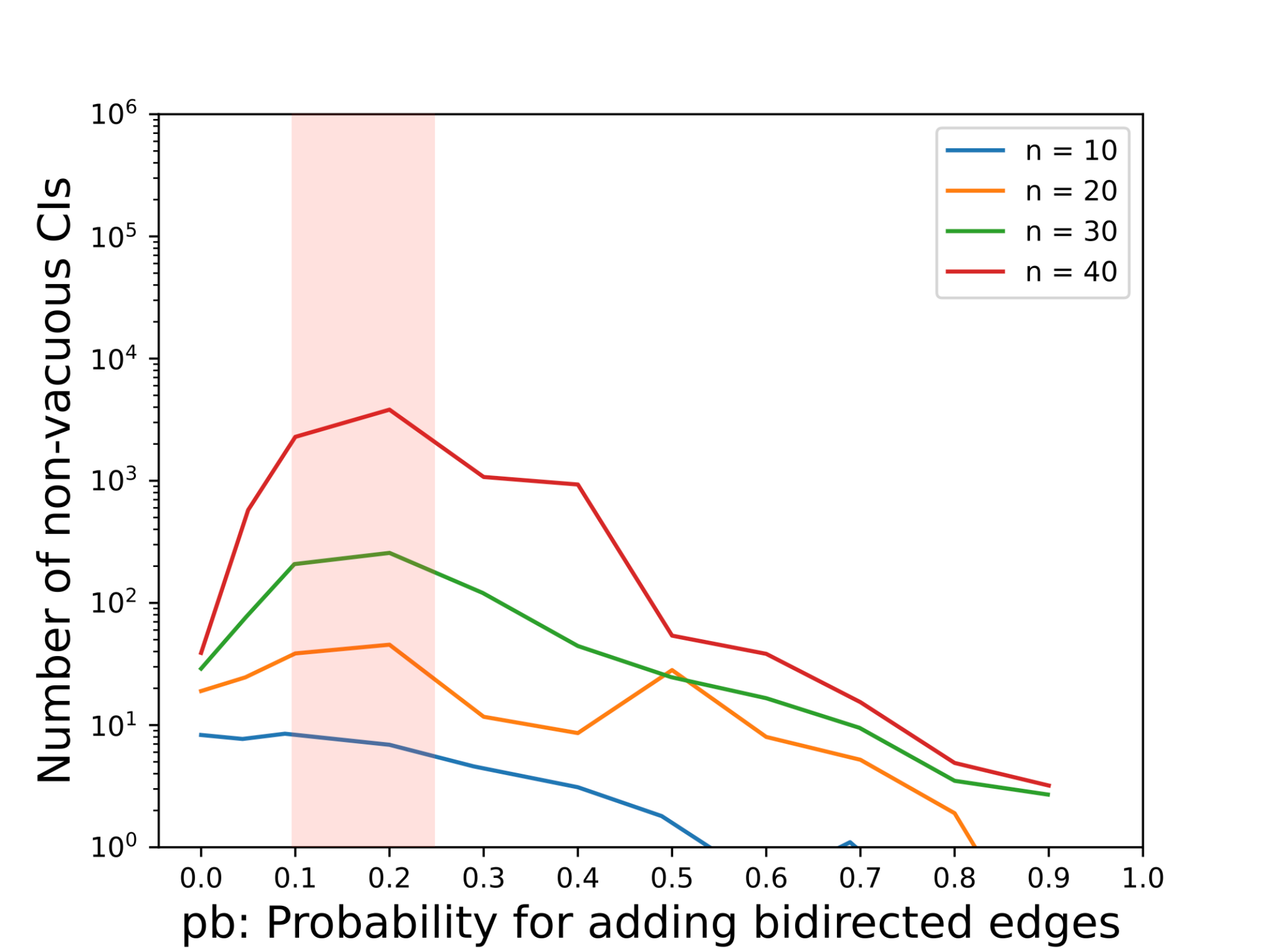}
            \caption{\(pb\) and \(\*{CI}\)}
            \label{fig:experiments:plot:3a:pbci}
        \end{subfigure}
        \hfill
        \begin{subfigure}{0.47\textwidth}
            \includegraphics[width=\textwidth]{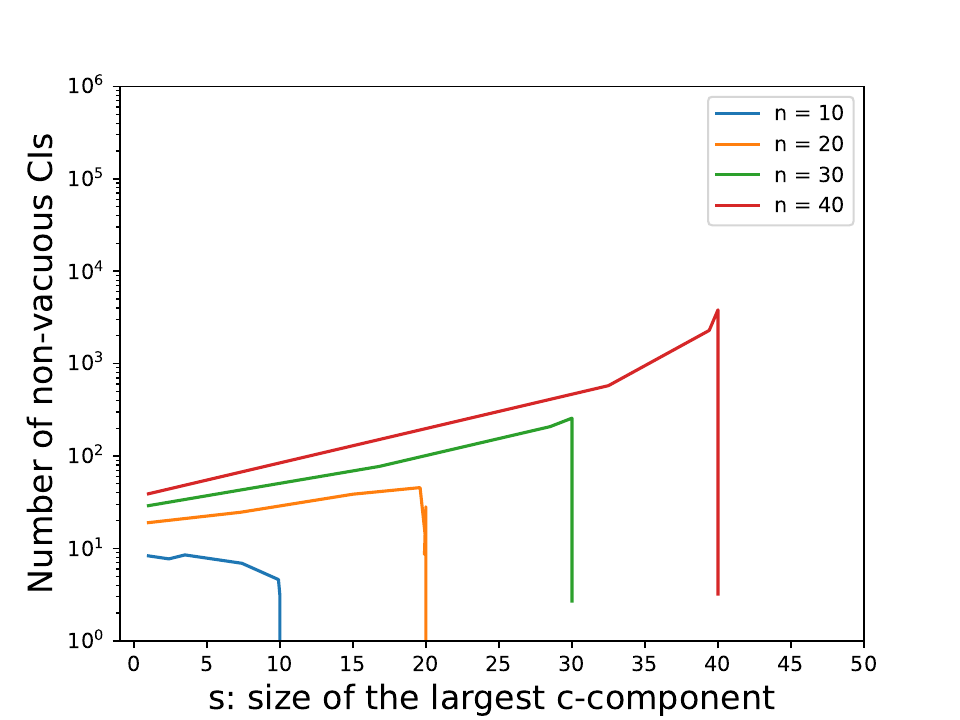}
            \caption{\(s\) and \(\*{CI}\)}
            \label{fig:experiments:plot:3a:sci}
        \end{subfigure}
        \caption{
        Illustration of results in Case 3 (\(md = 2n\)).
        Overall, the patters are similar as shown in Case 1 and Case 2 (Fig.~\ref{fig:experiments:plot:1a} and Fig.~\ref{fig:experiments:plot:2a}).
        The rate of growth of \(\*{CI}\) with respect to \(n\) is lower than those in Case 1 and Case 2.
        he red box indicates the `critical region.'
        }
        \label{fig:experiments:plot:3a}
    \end{figure}

    Another subtlety to note is the rate of growth of \(\*{CI}\) with respect to \(n\).
    Within a ``critical region'' shown in Fig~\ref{fig:experiments:plot:1a:pbci}, observe gaps between the curves for each \(n\).
    Even as \(n\) increases by two from \(n = 10\), \(\*{CI}\) in this middle region grows exponentially.
    This may not be immediate as the bound on \(\*{CI}\) is linear in  \(n\).
    However, in the peak region of the curve, we have \(s \approx n\), thus making \(\*{CI}\) exponential in \(n\).
    This makes it infeasible to observe phase transitions over larger \(n\) in graphs without directed edges.

    We verify the claim that phase transitions may not be fully observable for larger \(n\).
    Bidirected edges are added slowly until \textsc{ListCI} starts timing out.
    The results are shown in Fig.~\ref{fig:experiments:plot:1b:mbci}.
    We see that \(mu = 30\) is an approximate threshold after which \textsc{ListCI} may spend more than an hour.
    Given \(mu = 30\), the threshold values of \(pb\) that correspond to each \(n \in \{20,30,40,50\}\) are \(0.158, 0.069, 0.038\), and \(0.024\) respectively.
    Based on the curves shown in Fig.~\ref{fig:experiments:plot:1a:pbci}, it is possible \textsc{ListCI} times out before the peak.
    All curves for large \(n\) live within Phase 1.

    Additionally, we present Phase 2 for \(n \in \{10,15,20,25,30\}\) in Fig.~\ref{fig:experiments:plot:1r}.
    Starting from \(pb = 1\), we keep removing bidirected edges (i.e., decreasing \(pb\)) until \textsc{ListCI} starts timing out.
    For \(n = 20\), \textsc{ListCI} times out with \(pb < 0.7\).
    For \(n = 30\), \(pb < 0.85\) and for \(n = 40\), \(pb < 0.9\).
    All fraction of the curves represent some fraction of Phase 2.
   
    Returning to \(s\), we show the relationship between \(s\) and \(\*{CI}\) in Fig~\ref{fig:experiments:plot:1b:sci}.
    As in the case for small \(n\), \(\*{CI}\) is exponential in \(s\) during Phase 1.

    It may seem that \textsc{ListCI} is not feasible on larger graphs. However, Case 1 considers an edge case with no directed edges where all subsets of nodes are ancestral. The problem is highly unconstrained. Most real-world graphs are not this sparse, which makes \(\*{CI}\) less sensitive to changes in \(mu\), as we explain in the next part.

    \item \textbf{Case 2}: \(md = n\).
    We use a similar setup as in Case 1, except that we add \(n\) directed edges on a wider range of graph sizes.
    When we incrementally add bidirected edges to \(\G\) from \(pb = 0\) up to \(pb = 1\), a pattern identical to Case 1 (Fig.~\ref{fig:experiments:plot:1a:pbci}) arises in Case 2 (Fig.~\ref{fig:experiments:plot:2a:pbci}).
    A notable difference, however, is the rate of growth of \(\*{CI}\) with respect to \(n\).
    For example, let \(n = 20\).
    In Case 1 (Fig.~\ref{fig:experiments:plot:1b:mbci}), with increasing \(mu > 20\), \(\*{CI}\) increases to \(10^4\) and beyond until \textsc{ListCI} times out.
    On the other hand, in Case 2 (Fig.~\ref{fig:experiments:plot:2a:pbci}), \(\*{CI}\) does not exceed \(10^3\) for any \(mu\) (or \(pb\)).
    Still, the two-phase transition is not observable for larger \(n\), i.e., \(n \in \{30,40,50\}\).
    Similarly, as in Fig.~\ref{fig:experiments:plot:1a:sci}, we observe an exponential relationship between \(s\) and \(\*{CI}\) (Fig.~\ref{fig:experiments:plot:2a:sci}) in for \(s < n\).

    Next, we let \(n \in \{30,40,50\}\). %
    The results are shown in Fig.~\ref{fig:experiments:plot:2b}.
    We have a similar conclusion as in Case 1, except that an approximate threshold for \(mu\) until \textsc{ListCI} times out is increased to 50.
    Given \(mu = 50\), the values of \(pb\) that map to each \(n \in \{30,40,50\}\) are \(0.115, 0.064\), and \(0.041\) respectively.
    The larger threshold can be explained by the correspondence between ancestral sets and MBs.
    Since adding directed edges exponentially reduces the number of ancestral sets, this can only reduce the number of MBs, and hence \(\*{CI}\).
    Inspecting the curves shown in Fig.~\ref{fig:experiments:plot:2a:pbci}, it is likely that \textsc{ListCI} starts timing out before \(\*{CI}\) peaks.

    \item \textbf{Case 3}: \(md = 2n\).

    We continue the set up of Cases 1 and 2, now adding \(2n\) directed edges to small-to-medium sized graphs.
    As shown in Fig.~\ref{fig:experiments:plot:3a:pbci}, we see phase transitions for \(n\) up to \(40\).
    Comparing to Case 2 where \(md = n\), the rate of growth of \(\*{CI}\) with \(mu\) is lower in general.
    For example, let \(n = 20\).
    In Case 2 (Fig.~\ref{fig:experiments:plot:2a:pbci}), \(\*{CI}\) reaches approximately  \(10^3\).
    However, in Case 3 (Fig.~\ref{fig:experiments:plot:3a:pbci}), \(\*{CI}\) does not reach \(10^2\), even in the peak.
    The relationship between \(s\) and \(\*{CI}\) seen in Cases 1 and 2 (Figs.~\ref{fig:experiments:plot:1a:sci} and ~\ref{fig:experiments:plot:2a:sci}) – with two patterns corresponding to the two phases – is reproduced in Case 3 (Fig.~\ref{fig:experiments:plot:3a:sci}).
\end{enumerate}

Summarizing experimental findings from Cases 1, 2, and 3, we conclude that both the size \(s\) of the largest c-component \(\*C\) in \(\G\) and the sparsity of \(\*C\) determined by the number of bidirected edges play a key role in \(\*{CI}\). The reproducibility of the phase transitions and relationships between \(s, mu\), and \(\*{CI}\) across different combinations of \(md\) and \(n\) lends credence to this conclusion.

\section{Frequently Asked Questions}

\renewcommand{\labelenumi}{Q\arabic{enumi}.}

\begin{enumerate}

    \item Is it reasonable to expect that the causal graph is available? How do you get the graph?

    \textbf{Answer.}
    The assumption of the causal diagram is made out of necessity; without causal assumptions, causal inferences are almost never possible (e.g., see the Causal Hierarchy Theorem in \citep[Section~1.3]{bareinboim:etal20}).
    
    In the real world, data scientists engage in causal modeling and leverage their background knowledge about the problem to construct a causal model (e.g., graph). 
    Celebrated results in the literature, such as Pearl's do-calculus, were designed to take advantage of this knowledge in order to generate quantitative  understanding of the system that was previously unknown to the data scientist. 
    Part of the main theme in the field is about how to infer new facts given a collection of causal assumptions. 

    Against this context, the main goal of our work is to provide a set of tools to evaluate whether the assumptions encoded in a causal model are plausible, or formally compatible with the observed data. 
    It is not easy to characterize or to list all of such assumptions, as discussed formally in Section~\ref{sec:reformulation} and empirically in  Appendix~\ref{appendix:experiments}. We provide the first algorithm for listing a small set of CI assumptions in poly-delay, using which a model can be tested in settings with non-parametric distributions and arbitrary unobserved variables. 

    Finally, the task known as causal discovery aims to a coarser representation of the causal model from data, including from observational \cite{verma:pea92,Spirtes2001,pearl:2k} and interventional data \cite{kocaoglu:etal17,kocaoglu2019characterization,jaber2020cd,li2023causal}.

    \item Can this result be used to evaluate the quality of a learned model, e.g., a partial ancestral graph (PAG) \cite{zhang:08}?
    \textbf{Answer.} Yes. If the learned model is a Markov equivalence class  (MEC) of DAGs, for e.g., a PAG, all DAGs in the MEC imply exactly the same set of CIs. Therefore, an observational dataset is consistent with the MEC if and only if it is consistent with some (or every) DAG in the MEC. To test the learned MEC, one can choose any DAG in the MEC, and apply our result to this DAG.

    \item What's the difference between (LMP,\(\prec\)) and C-LMP?
    Since they output an identical list of CIs, aren't they the same?

    \textbf{Answer.} 
    It is true that (LMP,\(\prec\)) and C-LMP invoke the same set of CIs.
    Since \textsc{ListCI} lists CIs invoked by C-LMP, it thus equivalently lists CIs invoked by (LMP,\(\prec\)). 
    There is nothing inherent in the definition of (LMP,\(\prec\)) (Def.~\ref{def:lmp}) that makes it impossible to list the CIs it invokes in poly-delay.
    However, in Def.~\ref{def:maxanc} and Def.~\ref{def:lmp}, MASs are defined non-constructively.
    Def.~\ref{def:maxanc} leaves it open whether there is exactly one MAS relative to an MB, and how to construct such an MAS.
    Therefore, each CI in (LMP,\(\prec\)) is also defined non-constructively.
    The only object with a constructive definition is the ancestral set, which is used to define MASs using universal quantifiers.
    This considerable degree of indeterminacy leads to the brute-force approach we develop in Section~\ref{subsection:lmpbruteforce}.
    In contrast, the definition of C-LMP (Def.~\ref{def:lmpplus}) is entirely constructive, and abstracts away the complexities of MASs and MBs. 
    We give an explicit one-to-one mapping between ACs and CIs (Thm.~\ref{thm:equivalence:clmpplus}) that does not need any universal quantifiers, except over the space of ACs.
    Therefore, the definition of C-LMP provides a natural path to enumerating the invoked CIs by enumerating ACs.
    Moreover, the explicit one-to-one mapping between ACs and CIs allows us to derive tight bounds on the number of CIs invoked by C-LMP (and equivalently, (LMP,\(\prec\))) by reasoning about connected components in the graph, an approach that would not be clear from a non-constructive definition of CIs.
    
    \item What happens if the total number of CIs invoked by C-LMP is exponential?
    Do we have to wait until \textsc{ListCI} outputs the full list of CIs?

    \textbf{Answer.}
    First, we note that C-LMP is an exponential improvement on the global Markov property (GMP) with respect to number of CIs invoked.
    In contrast with the \(\Theta(4^n)\) many CIs invoked by GMP (Prop.~\ref{prop:gmpsize}), C-LMP invokes \(O(n 2^s)\) number of CIs given a DAG on \(n\) variables whose largest c-component has size \(s\) (Prop.~\ref{prop:lmpsize}).
    The upshot is largest for a DAG with large \(n\) but small c-components.
    For instance, for the DAG \(\G^2\) in Fig.~\ref{fig:intro_local_semimarkov}, GMP invokes 753 CIs but C-LMP invokes only 5. For the real-world protein-signaling network in Fig.~\ref{fig:ground_truth_graph}, GMP invokes 76580 CIs but C-LMP invokes only 10.

    Even when C-LMP invokes exponentially many CIs, \textsc{ListCI} outputs all such CIs in poly-delay (Thm.~\ref{thm:listci}).
    This is the first known algorithm that runs in poly-delay where the associated Markov property is applicable to arbitrary data distributions and DAGs with latent variables.
    The poly-delay property allows researchers to test the subset of CIs listed in the available time, which enables partial testing of the model.
    This is not possible with an algorithm that takes exponential time to output one CI, or even all CIs at once.
    Please refer to Appendix~\ref{sec:related_work} for more details on related literature.

    \item How well does \textsc{ListCI} scale?
    
    \textbf{Answer.}
    \textsc{ListCI} scales well and is currently the most efficient algorithm that enumerates all CIs invoked by a Markov property which is applicable to arbitrary data distributions and DAGs with latent variables.
    The plot in Fig.~\ref{fig:experiments:results} shows that \textsc{ListCI} takes more than an hour over some graphs with \(n >= 70\) nodes.
    Here, we note that \(n\) is not the only factor in the running time of \textsc{ListCI}.
    In fact, as shown in Appendix~\ref{subsection:experimentb}, the graph topology associated with c-components plays a major role in the number of CIs invoked by C-LMP.
    Two factors related to c-components are of major interest:

    \begin{enumerate}
        \item \(s \leq n\): the size of the largest c-component, and
        \item Sparsity of c-components with respect to the number of bidirected edges
    \end{enumerate}

    Let \(\*{CI}\) be the total number of non-vacuous CIs invoked by C-LMP.
    In summary, when c-components are sparse, \(\*{CI}\) increases exponentially in term \(s\), given by the bound \(O(n 2^s)\).
    However, as c-components become denser, \(\*{CI}\) decays in exponential term.
    For illustration, please refer to the discussion on Case 1 in Appendix~\ref{subsection:experimentb} (Fig.~\ref{fig:experiments:plot:1a}).

    Graph topology may vary across different graphs with different sizes.
    For example, large graphs can have many, small c-components.
    In this case, \(s\) may be small.
    Then, an exponent \(s\) in the bound \(O(n 2^s)\) is small, and thus total number of CIs invoked by C-LMP may not be large.
    Even when large graphs have large c-components, if such c-components are dense, then total number of CIs invoked by C-LMP could be smaller in an order of magnitude, as oppose to the case where the c-components are sparse.

    Next, there may exist exponentially many CIs invoked by C-LMP (with respect to \(n\)), requiring exponential time to list them all.
    In such cases, one guarantee we can provide is the poly-delay property, which holds for \textsc{ListCI} (Thm.~\ref{thm:listci}).
\end{enumerate}

\end{document}